\newtheorem{theorem}{Theorem}[section]
\newtheorem{lemma}[theorem]{Lemma}
\newtheorem{proposition}[theorem]{Proposition}
\newtheorem{definition}[theorem]{Definition}
\newtheorem{remark}[theorem]{Remark}
\newtheorem{assumption}[theorem]{Assumption}
\newcommand\norm[1]{\left\lVert#1\right\rVert}
\newcommand{\argmin}{\mathop{\mathrm{argmin}}}
\newcommand{\blue}[1]{\textcolor{blue}{#1}}
\newcommand\explaineq[2]{\stackrel{\mathclap{\normalfont\mbox{#1}}}{#2}}
\newcommand{\sumSA}{\sum_{h=0}^{\infty}\sum_{\substack{s,a \\ s\neq g}}}
\newcommand{\sumS}{\sum_{h=0}^{\infty}\sum_{\substack{s \\ s\neq g}}}
\def\ini{\mathrm{init}}
\def\hP{\widehat{P}}
\def\tP{\widetilde{P}}
\def\tB{\widetilde{B}}
\def\hT{\widehat{\mathcal{T}}}
\def\hV{\widehat{V}}
\def\Vb{V^{\bar{\pi}}}
\def\tT{\widetilde{\mathcal{T}}}
\def\E{\mathbb{E}}
\def\P{\mathbb{P}}
\def\Var{\mathrm{Var}}
\def\half{\frac{1}{2}}
\def\R{\mathbb{R}}
\def\cT{\mathcal{T}}
\colorlet{linkequation}{blue}
\definecolor{maroon}{RGB}{192,80,77}
\definecolor{mypink3}{cmyk}{0, 0.7808, 0.4429, 0.1412}
\title{Offline Stochastic Shortest Path: Learning, Evaluation and Towards Optimality}
\author[1,2]{Ming Yin\thanks{Equal contribution.}}
\author[3]{Wenjing Chen$^*$}
\author[4]{Mengdi Wang}
\author[1]{Yu-Xiang Wang}
\affil[1]{%
	Department of Computer Science\\
	UC Santa Barbara
}
\affil[2]{%
	Department of Statistics and Applied Probability\\
	UC Santa Barbara
}
\affil[3]{%
	Electrical and Electronics Engineering Department\\
	Texas A\&M University
}
\affil[4]{%
	Department of Electrical and Computer Engineering\\
	Princeton University
}
\begin{document}
\onecolumn 	
  	
\maketitle

\begin{abstract}
	Goal-oriented Reinforcement Learning, where the agent needs to reach the goal state while simultaneously minimizing the cost, has received significant attention in real-world applications. Its theoretical formulation, \emph{stochastic shortest path} (SSP), has been intensively researched in the online setting. Nevertheless, it remains understudied when such an online interaction is prohibited and only historical data is provided. In this paper, we consider the \emph{offline stochastic shortest path} problem when the state space and the action space are finite. We design the simple \emph{value iteration}-based algorithms for tackling both \emph{offline policy evaluation (OPE)} and \emph{offline policy learning} tasks. Notably, our analysis of these simple algorithms yields strong instance-dependent bounds which can imply worst-case bounds that are near-minimax optimal. We hope our study could help illuminate the fundamental statistical limits of the offline SSP problem and motivate further studies beyond the scope of current consideration.
	
\end{abstract}


\section{Introduction}\label{sec:introduction}

Goal-oriented reinforcement learning aims at entering a goal state while minimizing its expected cumulative cost. The interplay between the agent and the environment keeps continuing when the target/goal state is not reached and this causes trajectories to have variable lengths among different trials, which makes it different from (or arguably more challenging than) the finite-horizon RL. In particular, this setting naturally subsumes the \emph{infinite-horizon $\gamma$-discounted} case as one can make up a ``ghost'' goal state $g$ and set $1-\gamma$ probability to enter $g$ at each timestep for the latter. 

The goal-oriented RL covers many popular reinforcement learning tasks, such as navigation problems (e.g., Mujoco mazes), Atari games (\emph{e.g.} breakout) and Solving Rubik's cube \citep{akkaya2019solving} (also see Figure~\ref{fig:main} for more examples). Parallel to its empirical popularity, the theoretical formulation, \emph{stochastic shortest path} (SSP), has been studied from the control perspective (\emph{i.e.} with known transition) since \cite{bertsekas1991analysis}. Recently, there is a surge of studying SSP from the data-driven aspects (\emph{i.e.} with unknown transition) and existing literatures formulate SSP into the \emph{online reinforcement learning} framework \citep{tarbouriech2020no,rosenberg2020near,cohen2021minimax,chen2021finding,tarbouriech2021stochastic}. On the other hand, there exists no literature (to the best of our knowledge) formally study the \emph{offline} behavior of stochastic shortest path problem. 

In this paper, we study the offline counterpart of the stochastic shortest path (SSP) problem. Unlike its online version, we have no access to further explore new strategies (policies) and the data provided are historical trajectories. The goal is to come up with a cost-minimizing policy that can enter the goal state (\emph{policy learning}) or to evaluate the performance of a target policy (\emph{policy evaluation}).

\begin{figure}
	\centering     
	\subfigure{\label{fig:different_n}\includegraphics[width=30mm]{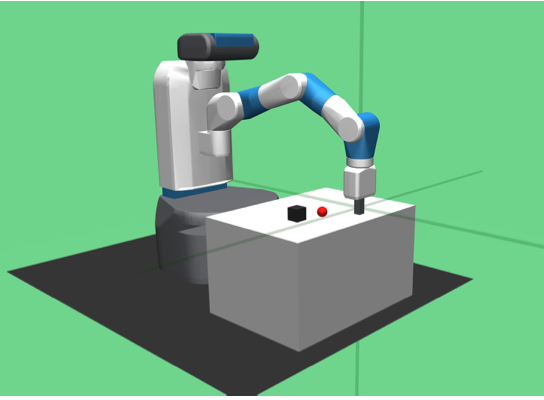}}
	\subfigure{\label{fig:different_H}\includegraphics[width=30mm]{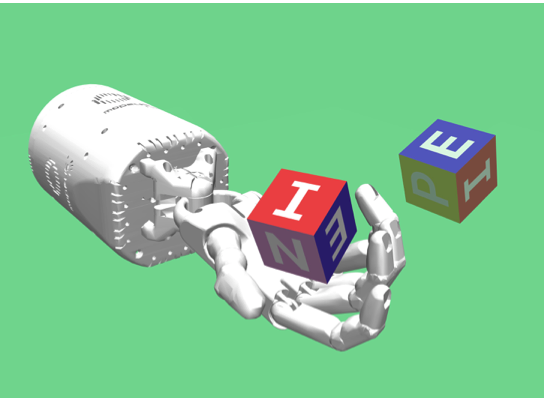}}
	\subfigure{\label{fig:different_H}\includegraphics[width=30mm]{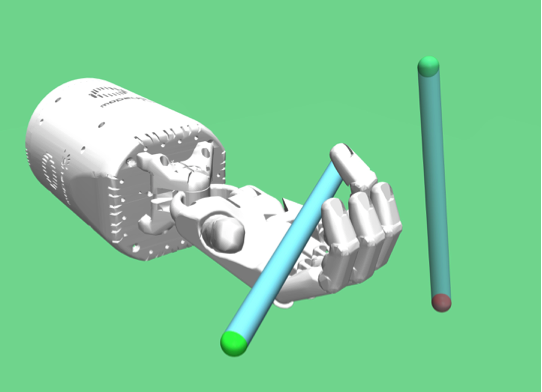}}
	\caption{Examples of Goal-oriented RL tasks in OpenAI-Gym environment. The robot can be asked to move-fetch to a position, orient a block or play with a pen.}
	\label{fig:main}
\end{figure}

\paragraph{Why should we study offline SSP?}Online SSP provides a suitable learning framework for goal-oriented tasks with cheap experiments (\emph{e.g.} Atari games). However, real-world applications usually have high-stake experiments which makes online interactions infeasible. For instance, in the application of logistic transportation, goods need to be delivered to their destinations. How to minimize the transportation cost should be decided/learned beforehand using the logged data. In the aircraft planning, changing flight routes instantaneously could be dangerous and designing routes based on history records is more appropriate for optimizing flying operation budget. In those scenarios, \emph{offline SSP} suffices for treating the practical challenges as it only learns from historical data.

\paragraph{Our contributions.} In this paper, we provide the first systematic study of the offline stochastic shortest path problem, and consider both \emph{offline policy evaluation} (OPE) and \emph{offline policy learning} tasks. As an initial attempt, we design the simple \emph{value iteration}-based algorithms to tackle the problems and obtain strong statistical guarantees. Concretely, our contributions are four folds.

\begin{itemize}
	\item For the offline policy evaluation task, we design VI-OPE algorithm (Algorithm~\ref{alg:VI_OPE}) under the coverage Assumption~\ref{assum:ope}. In particular, our algorithm is \emph{parameter-free} (requires no knowledge about $T^\pi$/$B^\pi$) and fully executed by the offline data. Theorem~\ref{thm:ope} provides the first statistical guarantee for offline SSP evaluation and nearly matches the statistical efficiency of its finite horizon counterpart (see discussion in Section~\ref{sec:discussion});
	
	\item For the offline learning task, we propose \emph{pessimism}-based algorithm PVI-SSP (Algorithm~\ref{alg:OPO}) under the Assumption~\ref{assum:opl} and \ref{assum:PC}. Our result (Theorem~\ref{thm:OPL}) has several merits: it is instance-dependent (as opposed to the worst-case guarantees in the existing online SSP works), enjoys faster $\widetilde{O}(1/n)$ convergence when the system is deterministic, and is also minimax-rate optimal. We believe Theorem~\ref{thm:OPL} is (in general) unimprovable for the current tabular setting.
	
	\item To understand the statistical limit of offline SSP, we prove the minimax lower bound $\Omega(B_\star\sqrt{\frac{SC^\star}{n}})$ (Theorem~\ref{thm:lower_main}) under the marginal coverage concentrability $\max_{s,a,s\neq g}\frac{d^{\pi^\star}(s,a)}{d^{\mu}(s,a)}\leq C^\star$. Our Theorem~\ref{thm:OPL} can match this rate (up to the logarithmic factor).
	
	\item Along the way for solving the problem, we highlight two new technical observations: Lemma~\ref{lem:T_pi} and Lemma~\ref{lem:HD}. The first one depicts the connection between the expected time $T^\pi$ and marginal coverage $d^\pi(s,a)$. As a result, we can express our result without using $T^\pi$ but the ratio-based quantity $\frac{d^\pi(s,a)}{d^\mu(s,a)}$, which matches the flavor of previous finite-horizon RL studies (also see Remark~\ref{remark:T_pi}). The second one is a general dependence improvement lemma that works with arbitrary policy $\pi$ and is the key for guaranteeing minimax optimal rate (also see Remark~\ref{remark:HD}). Both Lemmas are general and may be of independent interest.
	
\end{itemize}

\subsection{Related works.}

Stochastic shortest path itself is a broad topic and we are not aiming for the exhaustive review. Here we discuss two aspects that are most relevant to us.

\textbf{Online SSP.} Previous literatures intensively focus on the online aspect of SSP learning. Earlier works consider two types of problems: online shortest path routing problem with deterministic dynamics, which can be solved using the combinatorial bandit technique (\emph{e.g.} \cite{gyorgy2007line,talebi2017stochastic}); or SSP with stochastic transitions but adversarial feedbacks \citep{neu2012adversarial,zimin2013online,rosenberg2019online,chen2021finding,chen2021minimax}. Recently, \cite{tarbouriech2020no} starts investigating general online SSP learning problem and introduce the UC-SSP algorithm to first achieve the no-regret bound $\widetilde{O}(DS\sqrt{ADK})$.\footnote{Here the diameter of SSP is defined as $D:=\max_{s\in\mathcal{S}}\min_{\pi\in\Pi}T^\pi_s$ and by Lemma~2 of \cite{tarbouriech2020no} $B_\star:=\norm{V^\star}_\infty\leq c_{\max}D$. In this paper, we consider the dependence on $B_\star$ only since our $c_{\max}=1$ and this implies $B_\star\leq D$.} \cite{rosenberg2020near} improves this result to $\widetilde{O}(B_\star S\sqrt{AK})$ via a UCRL2-style algorithm with Bernstein-type bonus for exploration. Later, \cite{cohen2021minimax} eventually achieves the minimax rate $\widetilde{O}(B_\star\sqrt{SAK})$ by a reduction from SSP to finite-horizon MDP. However, the reduction technique requires the knowledge of $B_\star$ and $T_\star$. Most recently, \cite{tarbouriech2021stochastic} proposes EB-SSP which recovers the minimax rate but gets rid of the parameter knowledge (\emph{parameter-free}). When the parameters are known, their results can be \emph{horizon-free}.

Other than the general tabular SSP learning, there are also other threads, \emph{e.g.} Linear MDPs \citep{min2021learning,vial2021regret,chen2021improved} and posterior sampling \citep{jafarnia2021online}. Nevertheless, no analysis has been conducted for offline SSP yet.

\textbf{Offline tabular RL.} In the offline RL regime, there are fruitful results under different type of assumptions. \cite{yin2021near} first achieves the minimax rate $\widetilde{O}(\sqrt{H^3/nd_m})$ for non-stationary MDP with the strong uniform coverage assumption. \cite{ren2021nearly} improves the result to $\widetilde{O}(\sqrt{H^2/nd_m})$ for the stationary MDP setting. Later, \cite{rashidinejad2021bridging,xie2021policy,li2022settling} use the weaker single concentrability assumption and achieve the minimax rate $\widetilde{O}\sqrt{H^3SC^\star/n}$ (or $\widetilde{O}\sqrt{(1-\gamma)^{-3}SC^\star/n}$). Recently, this is further subsumed by the tighter instance-dependent result \citep{yin2021towards}. For offline policy evaluation (OPE) task, statistical efficiency has been achieved in tabular \citep{yin2020asymptotically}, linear \citep{duan2020minimax} and differentiable function approximation settings \citep{zhang2022off}.

\section{Problem setup }\label{sec:formulation}

\paragraph{Stochastic Shortest Path.} An SSP problem consists of a \emph{Markov decision process} (MDP) together with an initial state $s_{\mathrm{init}}$ and an extra goal state $g$ and it is denoted by the tuple $M:=\langle \mathcal{S},\mathcal{A},P,c,s_\ini,g\rangle$. In particular, we denote $\mathcal{S}':=\mathcal{S}\cup\{g\}$. Each state-action pair $(s,a)$ incurs a bounded random cost (within $[0,1]$) drawn i.i.d. from a distribution with expectation $c(s,a)$ and will transition to the next state $s'\in\mathcal{S}'$ according to the probability distribution $P(\cdot|s,a)$. Here $\sum_{s^{\prime} \in \mathcal{S}^{\prime}} P\left(s^{\prime} \mid s, a\right)=1$. The goal state $g$ is a termination state with absorbing property and has cost zero (\emph{i.e.} $P(g|g,a)=1,c(g,a)=0$ for all $a\in\mathcal{A}$). 

The optimal behavior of the agent is characterized by a stationary, deterministic and proper policy that minimizes the expected total cost of reaching the goal state from \emph{any} state $s$. A stationary policy $\pi:\mathcal{S}\rightarrow \Delta^\mathcal{A}$ is a mapping from state $s$ to a probability distribution over action space $\mathcal{A}$, here  $\Delta^\mathcal{A}$ is the set of probability distributions over $\mathcal{A}$. The definition of proper policy is defined as follows.

\begin{definition}[Proper policies]\label{def:proper}
	A policy $\pi$ is proper if playing $\pi$ reaches the goal state with probability $1$ when starting from any state. A policy is improper if it is not proper. Denote the set of proper policies as $\Pi_{\mathrm{prop}}$.
\end{definition}

\paragraph{Value and $Q$-functions in SSP.} Any policy $\pi$ induces a \emph{cost-to-go} value function $V^\pi:\mathcal{S}\mapsto [0,\infty]$ defined as 
\[
V^\pi(s):=\lim_{T\rightarrow\infty} \E^\pi\left[\sum_{t=0}^T c(s_t,a_t)|s_0=s\right],\;\;\forall s\in\mathcal{S}
\]
and the Q-function is defined as $\forall s,a\in\mathcal{S}\times\mathcal{A}$,
\[
Q^\pi(s,a):=\lim_{T\rightarrow\infty} \E^\pi\left[\sum_{t=0}^T c(s_t,a_t)|s_0=s,a_0=a\right],
\]
where the expectation is taking w.r.t. the random trajectory of states generated by executing $\pi$ and transitioning according to $P$. Also, we denote $T^\pi_s:=\lim_{T\rightarrow\infty}\E[\sum_{t=0}^T\mathbf{1}[s_t\neq g]|s_0=s]=\E[\sum_{t=0}^\infty\mathbf{1}[s_t\neq g]|s_0=s]$ to be the expected time that $\pi$ takes to enter $g$ starting from $s$. By Definition~\ref{def:proper}, $\pi$ is proper if $T^\pi_s<\infty$ for all $s$, and improper if $T^\pi_s=\infty$ for some state $s$. Moreover, by definition it follows $V^\pi(g)=Q^\pi(s,a)=0$ for all $\pi$ and action $a$. The next proposition is the Bellman equation for the SSP problem.

\begin{proposition}[Bellman equations for SSP problem \citep{bertsekas1991analysis}]\label{prop:Bellman}
	Suppose there exists at least one proper policy and that for every improper policy $\pi'$ there exists at least one state $s\in\mathcal{S}$ such that $V^{\pi'}(s)=+\infty$. Then the optimal policy $\pi^\star$ is stationary, deterministic, and proper. Moreover, $V^\star=V^{\pi^\star}$ is the unique solution of the equation $V^\star = \mathcal{L}V^\star$, where 
	\[
	\mathcal{L} V(s):=\min _{a \in \mathcal{A}}\left\{c(s, a)+P_{s, a} V\right\}\quad \forall V\in\R^{S'}.
	\]
	Similarly, for a proper policy $\pi$, $V^\pi$ is the unique solution of $V^\pi=\mathcal{L}^\pi V^\pi$ with $\mathcal{L}^\pi V(s):= \E_{a\sim\pi(\cdot|s)}[c(s, a)+P_{s, a} V],\;\forall V\in\R^{S'}$. Furthermore, it holds
	\begin{equation}\label{eqn:bellman}
	\begin{aligned}
	Q^{\star}(s, a)=c(s, a)+P_{s, a} V^{\star}, \; &V^{\star}(s)=\min_{a \in \mathcal{A}} Q^{\star}(s, a), \\
	Q^{\pi}(s, a)=c(s, a)+P_{s, a} V^{\pi}, \; &V^{\pi}(s)=\E_{a\sim\pi(\cdot|s)} [Q^{\pi}(s, a)].
	\end{aligned}
	\end{equation}
\end{proposition}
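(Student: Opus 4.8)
The plan is to proceed by the standard operator-theoretic route, isolating the two Bellman operators $\mathcal{L}$ and $\mathcal{L}^\pi$ and building everything from two structural facts. First I would record that both operators are \emph{monotone}: $V\le V'$ componentwise implies $\mathcal{L}V\le\mathcal{L}V'$ and $\mathcal{L}^\pi V\le\mathcal{L}^\pi V'$, which is immediate since each $P_{s,a}$ has nonnegative entries. Second, for a fixed $\pi$ the map $\mathcal{L}^\pi$ is affine, $\mathcal{L}^\pi V=c^\pi+P^\pi V$, where $c^\pi(s):=\E_{a\sim\pi(\cdot|s)}[c(s,a)]$ and $P^\pi$ is the substochastic transition matrix obtained by deleting the absorbing row/column of $g$. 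These two facts drive the whole argument.

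Next I would settle the proper-policy half of the statement. Fix a proper $\pi$. Properness means the chain reaches $g$ with probability $1$ from every state, so the transient block $P^\pi$ has spectral radius below $1$ and $T^\pi=\mathbf{1}+P^\pi T^\pi$ with every $T^\pi_s\in[1,\infty)$. Rewriting this as $P^\pi T^\pi=T^\pi-\mathbf{1}$ and choosing the weight vector $w:=T^\pi$, I obtain the weighted-sup-norm contraction $\norm{\mathcal{L}^\pi V-\mathcal{L}^\pi V'}_w\le\rho\,\norm{V-V'}_w$ with $\rho=1-1/\max_s T^\pi_s<1$. Banach's fixed point theorem then gives a unique fixed point of $\mathcal{L}^\pi$ with $(\mathcal{L}^\pi)^kV$ converging to it for every $V$; since $\sum_{k\ge0}(P^\pi)^kc^\pi$ converges, solves $V=\mathcal{L}^\pi V$, and equals $V^\pi$ by definition of the cumulative cost, this fixed point is exactly $V^\pi$. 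This yields uniqueness of $V^\pi$ and the second line of \eqref{eqn:bellman} (the $Q^\pi$ equation is its one-step unrolling, and $V^\pi(s)=\E_{a\sim\pi(\cdot|s)}[Q^\pi(s,a)]$ is just the definition of $\mathcal{L}^\pi$).

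For the optimality operator I would first reduce to proper policies: since some proper policy exists (finite value everywhere) while every improper $\pi'$ has $V^{\pi'}(s)=+\infty$ for some $s$, no improper policy is optimal, so $V^\star=\min_{\pi\in\Pi_{\mathrm{prop}}}V^\pi$ is finite. To exhibit a fixed point of $\mathcal{L}$, I run value iteration from $V_0:=V^{\pi_0}$ for a proper $\pi_0$: then $\mathcal{L}V_0\le\mathcal{L}^{\pi_0}V_0=V_0$, so $\{\mathcal{L}^kV_0\}_k$ is decreasing and bounded below by $0$, hence converges to some $J$ with $J=\mathcal{L}J$ by continuity. I then identify any fixed point $V'=\mathcal{L}V'$ with $V^\star$ by two inequalities. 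Upper bound: for each proper $\pi$, $\mathcal{L}V'\le\mathcal{L}^\pi V'$, so monotonicity and induction give $V'=\mathcal{L}^kV'\le(\mathcal{L}^\pi)^kV'\to V^\pi$, and minimizing over proper $\pi$ gives $V'\le V^\star$. Lower bound: letting $\pi'$ be the greedy policy attaining the minimum in $\mathcal{L}V'$, we have $V'=\mathcal{L}^{\pi'}V'$, and \emph{if $\pi'$ is proper} the previous step forces $V'=V^{\pi'}\ge V^\star$. Thus $J=V^\star$, giving the Bellman equation; uniqueness follows because every fixed point obeys both inequalities. The greedy policy $\pi^\star$ of $V^\star$ is stationary and deterministic by construction, proper by the next paragraph, and satisfies $V^{\pi^\star}=V^\star$ by uniqueness of $\mathcal{L}^{\pi^\star}$'s fixed point; the first line of \eqref{eqn:bellman} is then the unrolled form.

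The step I expect to be the crux is showing that the greedy policy $\pi'$ attached to a finite fixed point cannot be improper: $\mathcal{L}$ is genuinely not a contraction once improper policies are present, and nothing a priori forbids spurious finite fixed points. Here the second hypothesis does the work. Were $\pi'$ improper, its chain would contain a closed class $R\subseteq\mathcal{S}$ avoiding $g$ on which $P^{\pi'}$ is stochastic; testing the finite identity $V'|_R=c^{\pi'}|_R+P^{\pi'}|_R\,V'|_R$ against the stationary distribution of that block (which satisfies $\nu P^{\pi'}|_R=\nu$) forces $\nu\,c^{\pi'}|_R=0$, i.e. $c^{\pi'}\equiv0$ on every recurrent state. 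But zero recurrent cost makes $V^{\pi'}$ finite on the whole state space (recurrent states incur nothing and transient states are absorbed in finite expected time), contradicting the assumed existence of a state with $V^{\pi'}=+\infty$. Hence every greedy fixed-point policy is proper, which both closes the lower bound above and, together with the contraction of $\mathcal{L}^{\pi'}$, delivers uniqueness of the fixed point of $\mathcal{L}$.
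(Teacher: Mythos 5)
Your proof is correct, but it takes a genuinely different---and far more self-contained---route than the paper. The paper's Appendix~\ref{sec:gen_Bellman} explicitly defers the entire optimality half of the proposition (existence of a stationary, deterministic, proper $\pi^\star$, the equation $V^\star=\mathcal{L}V^\star$, and uniqueness of its solution) to the cited reference \citep{bertsekas1991analysis}, and proves only the fixed-policy Bellman equation, Lemma~\ref{lem:general_Bellman}, via the law of total expectation together with monotone convergence of the partial-sum values. You instead reconstruct the full Bertsekas--Tsitsiklis machinery: monotonicity of $\mathcal{L}$ and $\mathcal{L}^\pi$; the weighted-sup-norm contraction of $\mathcal{L}^\pi$ with weight $w=T^\pi$ (your verification $P^\pi w = w-\mathbf{1}\le (1-1/\max_s T^\pi_s)\,w$ is right, and properness does give $T^\pi_s<\infty$ in a finite chain); monotone value iteration from $V^{\pi_0}$; the two-sided sandwich identifying any fixed point of $\mathcal{L}$ with $V^\star$; and, at the crux, the stationary-distribution argument that a greedy policy of a finite fixed point cannot be improper---which is exactly where the standing hypothesis on improper policies is consumed, and your execution is sound ($\nu P^{\pi'}|_R=\nu$ forces $\nu c^{\pi'}|_R=0$, hence $c^{\pi'}\equiv 0$ on every recurrent class avoiding $g$; with $c\le 1$ and finite expected visit counts at transient states, $V^{\pi'}$ would then be finite everywhere, contradicting the hypothesis). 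The trade-off is this: your argument needs no external citation, while the paper's lemma is narrower in scope but stronger where it matters---it establishes the Bellman equation for \emph{arbitrary} (possibly improper) policies, which your contraction route cannot reach (as the paper's remark after Lemma~\ref{lem:general_Bellman} notes, $c^\pi+P^\pi(\cdot)$ need not be a contraction for improper $\pi$), and that general form is what the paper actually uses downstream, e.g.\ inside Lemma~\ref{lem:piProper}, where the Bellman equation is applied to $\bar{\pi}$ on the empirical SSP before $\bar{\pi}$ is known to be proper. One small gloss on your side: calling the $Q^\pi$ identity a ``one-step unrolling'' hides the limit--sum interchange that the paper proves explicitly; for proper $\pi$ it is routine (nonnegative costs, monotone partial sums), so this is presentational rather than a gap.
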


We use $T^\star_s$ to denote the expected arriving time when coupled with the optimal policy $\pi^\star$ and the proof of Proposition~\ref{prop:Bellman} can be found in Appendix~\ref{sec:gen_Bellman}.

\paragraph{The Offline SSP task.} The goal of offline SSP is to reach the goal state but also minimize the cost using offline data $\mathcal{D}:=\{(s^{(i)}_0,a^{(i)}_0,c^{(i)}_0,s^{(i)}_1,\ldots,s^{(i)}_{T_i})\}_{i=1,\ldots,n}$, which is collected by a proper (possibly stochastic) behavior policy $\mu$. The optimal policy is a proper policy $\pi^\star$ (the existence of $\pi^\star$ is guaranteed by the Proposition~\ref{prop:Bellman}) which minimizes the value function for all states, \emph{i.e.}, 
\begin{align}
\pi^\star(s)=\arg\min_{\pi\in\Pi_{\text{prop}}}V^{\pi}(s).
\end{align}
The final learning objective is to come up with a (proper) policy $\widehat{\pi}$ using $\mathcal{D}$ such that the suboptimality gap $V^{\widehat{\pi}}(s_{\text{init}})-V^\star(s_{\text{init}})<\epsilon$ for a given accuracy $\epsilon>0$.

\paragraph{Some Notations.} In the paper, we may abuse the notation $V^\star$ with $V^{\pi^\star}$, and define $B_\star:=\max_s\left\{V^\star(s)\right\}$. In addition, we denote $\xi_h^{\pi}(s,a)$ to be the marginal state-action occupancy at time step $h$ under the policy $\pi$ and $\xi_h^{\pi}(s)$ the marginal state occupancy at time $h$. Furthermore, we define the \emph{marginal coverage} $d^\pi$ as (given the initial state is $s_{\text{init}}$):
\begin{equation}\label{def:mar_cov}
d^\pi(s,a):=\sum_{h=0}^\infty \xi^\pi_h(s,a),\;\;\forall s,a\in\mathcal{S}\times\mathcal{A}.
\end{equation}
\begin{remark} 
	The notation of marginal coverage mirrors the marginal state-action occupancy in the infinite horizon $\gamma$-discounted setting but without normalization. Therefore, it is likely that $d^\pi(s,a)>1$ (or even $\infty$) for the offline SSP problem. Nevertheless, the key Lemma~\ref{lem:T_pi} guarantees $d^\pi(s,a)$ is finite when $\pi$ is a proper policy. This feature helps formalize the following assumptions in offline SSP.
\end{remark}

\subsection{Assumptions}
Offline learning/evaluation in SSP is impossible without assumptions. We now present three required assumptions.

\begin{assumption}[offline policy evaluation (OPE)]\label{assum:ope}
	We assume both the target policy $\pi$ and behavior policy $\mu$ are proper. In this case, we have $
	\Pi_{\text{prop}}\neq\emptyset$. Moreover, we assume behavior policy $\mu$ can cover the exploration (state-action) space of $\pi$, i.e. $\forall s,a\in\mathcal{S}\times\mathcal{A}$ s.t. $d^\pi_{\bar{s}}(s,a):=\sum_{h=0}^\infty \xi^\pi_{h,\bar{s}}(s,a)>0$, it implies $d^\mu_{\bar{s}}(s,a):=\sum_{h=0}^\infty \xi^\mu_{h,\bar{s}}(s,a)>0$, where $d^\pi_{\bar{s}}(s,a)$ is the marginal coverage and $\xi^\pi_{h,\bar{s}}(s,a)$ the marginal state-action occupancy given the initial state $\bar{s}$. In particular, when $\bar{s}=s_{\text{init}}$, we suppress the subscript and use $d^\pi,\xi^\pi_h$ only.
\end{assumption}

There are two remarks that are in order.

Assumption~\ref{assum:ope} requires that the behavior policy $\mu$ can explore all the state-action locations that are explored by $\pi$ and this mirrors the necessary OPE assumption made in the standard RL setting (\emph{e.g.} \cite{thomas2016data,yin2020asymptotically,uehara2020minimax}). Otherwise, policy evaluation for SSP would incur constant suboptimality gap even when \emph{infinite many} trajectories are collected. 

Moreover, instead of making assumption only on $d^\mu_{s_\text{init}}$, \ref{assum:ope} assumes $\mu$ can cover $\pi$ when starting from any state $\bar{s}$ (\emph{i.e.} $d^\pi_{\bar{s}}>0$ implies $d^\mu_{\bar{s}}>0$ for all $\bar{s}$). This extra requirement is mild since, by Definition~\ref{def:proper}, a proper policy can reach goal state $g$ with probability $1$ when starting from any state $\bar{s}$. Similarly, we need the assumptions for offline learning tasks.

\begin{assumption}[offline policy learning]\label{assum:opl}
	We assume there exists a deterministic proper policy and the behavior policy $\mu$ is (possible random) proper. Next, by Proposition~\ref{prop:Bellman}, we know there exists a deterministic optimal proper policy $\pi^\star$. We assume behavior policy $\mu$ can cover the exploration (state-action) space of $\pi^\star$, i.e. $\forall s,a\in\mathcal{S}\times\mathcal{A}$ s.t. $d^{\pi^\star}_{\bar{s}}(s,a):=\sum_{h=0}^\infty \xi^{\pi^\star}_{h,\bar{s}}(s,a)>0$, it implies $d^\mu_{\bar{s}}(s,a):=\sum_{h=0}^\infty \xi^\mu_{h,\bar{s}}(s,a)>0$, where $d^{\pi^\star}_{\bar{s}}(s,a)$ and $\xi^{\pi^\star}_{h,\bar{s}}(s,a)$ is the same notion used in Assumption~\ref{assum:ope}. In particular, when $\bar{s}=s_{\text{init}}$, we suppress the subscript and use $d^{\pi^\star},\xi^{\pi^\star}_h$ only.
\end{assumption}

\ref{assum:opl} provides the offline learning version of Assumption~\ref{assum:ope}. It echos its offline RL counterpart assumed in \cite{liu2019off,yin2021towards,uehara2021pessimistic}. Similar to the offline RL setting (\emph{e.g.} see \cite{yin2021towards} for detailed explanations), this assumption is also required for the tabular offline SSP problem.

\begin{assumption}[Positive cost \citep{rosenberg2020near}]\label{assum:PC}
	There exists $c_{\min}>0$ such that $c(s,a)\geq c_{\min}$ for every $(s,a)\in\mathcal{S}\times\mathcal{A}$.\footnote{Note this assumption only holds for $(s,a)\in\mathcal{S}\times\mathcal{A}$. For goal state $g$, it always has $c(g,a)=0$ for all $a\in\mathcal{A}$.}
\end{assumption}

This assumption guarantees there is no \emph{``free-cost''} state. With \ref{assum:PC} it holds that any policy does not reach the goal state has infinite cost, and this certifies the condition in Proposition~\ref{prop:Bellman} that for every improper policy $\pi'$ there exists at least one state $s$ such that $V^{\pi'}(s)=+\infty$. When $c_{\min}$ is $0$, a simple workaround is to solve a perturbed SSP instance with all observed costs clipped to $\epsilon$ if they are below some $\epsilon>0$, and in this case $c_{\min}=\epsilon>0$. This will cause only an additive term of order $O(\epsilon)$ (see \cite{tarbouriech2020no} for online SSP). Therefore, as the first attempt for offline SSP problem, we stick to this assumption throughout the paper. Last but not least, Assumption~\ref{assum:PC} is only used in offline learning problem (Section~\ref{sec:OPL}) and our OPE analysis (Section~\ref{sec:discussion}) can work well with zero cost.

\section{Off-Policy Evaluation in SSP} \label{sec:discussion}

\begin{algorithm}
	\caption{VI-OPE (Value Iteration for OPE problem of Stochastic Shortest Path)}
	\label{alg:VI_OPE}
	\begin{algorithmic}[1]
		\STATE {\bfseries Input:} $\epsilon_{\text{OPE}}$, $\mathcal{D}:=\{(s^{(i)}_1,a^{(i)}_1,c^{(i)}_1,s^{(i)}_2,\ldots,s^{(i)}_{T_i})\}_{i=1}^n$.
		\FOR{$(s,a,s')\in \mathcal{S}\times\mathcal{A}\times\mathcal{S}'$}
		\STATE Set $n(s,a) =  \sum_{i=1}^n\sum_{j=1}^{T_i}\mathbb{I}(s_j^{(i)}=s\text{, }a_j^{(i)}=a)$.
		\IF{$n(s,a)>0$}
		\STATE Calculate $\widehat{c}(s,a)=\frac{\sum_{i=1}^n\sum_{j=1}^{T_i}\mathbb{I}(s_j^{(i)}=s\text{, }a_j^{(i)}=a)c^{(i)}_j}{n(s,a)}$
		\STATE  $\widehat{P}(s'|s,a)=\frac{\sum_{i=1}^n\sum_{j=1}^{T_i}\mathbb{I}(s_j^{(i)}=s\text{, }a_j^{(i)}=a\text{, }s_{j+1}^{(i)}=s')}{n(s,a)}$, 
		\ELSE
		\STATE $\widehat{c}(s,a)\leftarrow c_\text{min}$, $\widehat{P}(s'|s,a)\leftarrow \mathbb{I}(s'=g)$.
		\ENDIF
		\STATE \blue{$\diamond$ Perturb the estimated transition kernel} 
		\STATE $\widetilde{P}(s'|s,a)=\frac{n(s,a)}{n(s,a)+1}\widehat{P}(s'|s,a)+\frac{\mathbb{I}[s'=g]}{n(s,a)+1}$
		\ENDFOR
		\STATE \blue{$\diamond$ Value Iteration for SSP problem} 
		\STATE {\bfseries Initialize:} ${V}^{(-1)}(\cdot)\leftarrow -\infty$, ${V}^{(0)}(\cdot)\leftarrow\mathbf{0}$, $i=0$.
		\WHILE{$\norm{V^{(i)}-V^{(i-1)}}_\infty>\epsilon_{\text{OPE}}$}
		\FOR{$(s,a)\in \mathcal{S}\times\mathcal{A}$}
		\STATE $Q^{(i+1)}(s,a)=\widehat{c}(s,a)+\widetilde{P}_{s,a}V^{(i)}$
		\STATE  $V^{(i+1)}(s)=\langle \pi(\cdot|s), Q^{(i+1)}(s,\cdot)\rangle$
		\STATE $i\leftarrow i+1$
		\ENDFOR
		\ENDWHILE
		\STATE \textbf{Output}: $V^{(i)}(\cdot)\in\R^S$, $V^{(i)}(s_{\text{init}})$.
	\end{algorithmic}
\end{algorithm}

In this section, we assume that Assumption~\ref{assum:ope} holds and consider \emph{offline policy evaluation} (OPE) for the \emph{stochastic shortest path} (SSP) problem. Our algorithmic design follows the natural idea of \emph{approximate value iteration} \citep{munos2005error} and is named \textbf{VI-OPE} (Algorithm~\ref{alg:VI_OPE}). Specifically, VI-OPE approximates \eqref{eqn:bellman} by solving the fixed point solution of the empirical Bellman equation associated with estimated cost $\widehat{c}$ and transition $\widetilde{P}$. One highlight is that we construct $\widetilde{P}$ to be the skewed version of the vanilla empirical estimation $\widehat{P}$ by injecting $\frac{1}{n(s,a)+1}$ probability to state $g$ (Line~11 of Algorithm~\ref{alg:VI_OPE}).\footnote{This treatment is also used in \cite{tarbouriech2021stochastic}.} By such a shift, the empirical Bellman operator $\widehat{\mathcal{T}}^{\pi}(\cdot):=\widehat{c}^\pi+\widetilde{P}^\pi(\cdot) $ becomes a contraction with rate $\rho:=\max_{\substack{s,a \\ s\neq g}}(\frac{n_{s,a}}{n_{s,a}+1})<1$ (see Lemma~\ref{lem:contraction} for details). Hence, \emph{contraction mapping theorem} \citep{diaz1968fixed} guarantees the loop (Line15-21) will end after $O(\log(\epsilon_{\text{OPE}})/\log(\rho))$ iterations for any $\epsilon_{\text{OPE}}>0$. We have the following main result for VI-OPE, whose proof can be found in Appendix~\ref{sec:proof_ope}.

\begin{theorem}[Offline Policy Evaluation in SSP]
	\label{thm:ope}
	Denote $d_m:=\min\{\sum_{h=0}^\infty \xi^\mu_h(s,a):s.t. \sum_{h=0}^\infty \xi^\mu_h(s,a)>0\}$, and $T^\pi_s$ to be the expected time to hit $g$ when starting from $s$. Define $\bar{T}^\pi=\max_{\bar{s}\in\mathcal{S}}T^\pi_{\bar{s}}$ and the quantity ${T}_{\max}=\max_{i\in[n]} T_i$. Then when $n\geq \max\{\frac{49 S\iota}{9d_m}, 64(\bar{T}^\pi)^2\frac{S\iota}{d_m},O(\iota/d_m),O( {T}^2_{\max}\log(SA/\delta)/d_m^2)\}$, we have with probability $1-\delta$, the output of Algorithm~\ref{alg:VI_OPE} satisfies ($\iota=O(\log(SA/\delta))$)
	\begin{align*}
	&|V^{(i)}(s_\mathrm{init})-V^\pi(s_\mathrm{init})|
	 \leq 4\sum_{s,a,s\neq g} d^\pi(s,a)\sqrt{\frac{2\Var_{P_{s,a}}[V^\pi+c]\iota}{n\cdot d^\mu(s,a)}}
	+\widetilde{O}(\frac{1}{n})+\frac{\epsilon_{\mathrm{OPE}}}{1-\rho}.
	\end{align*}
	where the $\widetilde{O}$ absorbs Polylog term and higher order terms.
\end{theorem}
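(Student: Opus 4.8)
The plan is to split the total error at the empirical fixed point $\hV^\pi$, i.e. the unique solution of $\hV^\pi=\hT^\pi\hV^\pi$ whose existence is guaranteed by the contraction property (Lemma~\ref{lem:contraction}). By the triangle inequality,
\[
|V^{(i)}(s_{\ini})-V^\pi(s_{\ini})|\le |V^{(i)}(s_{\ini})-\hV^\pi(s_{\ini})|+|\hV^\pi(s_{\ini})-V^\pi(s_{\ini})|.
\]
The first term is pure optimization error: since $\hT^\pi$ is a $\rho$-contraction and the while-loop halts once $\norm{V^{(i)}-V^{(i-1)}}_\infty\le\epsilon_{\mathrm{OPE}}$, the standard geometric estimate gives $\norm{V^{(i)}-\hV^\pi}_\infty\le \epsilon_{\mathrm{OPE}}/(1-\rho)$, which supplies the final term in the stated bound. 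It then remains to control the \emph{statistical} error $|\hV^\pi(s_{\ini})-V^\pi(s_{\ini})|$.

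For the statistical error I would invoke the value-difference (simulation) identity between the empirical model $(\widehat{c},\tP)$ and the true model $(c,P)$ under the common policy $\pi$,
\[
\hV^\pi(s_{\ini})-V^\pi(s_{\ini})=\sum_{s,a,\,s\neq g}\widehat{d}^{\,\pi}(s,a)\Big[\big(\widehat{c}(s,a)-c(s,a)\big)+\big(\tP_{s,a}-P_{s,a}\big)V^\pi\Big],
\]
where $\widehat{d}^{\,\pi}$ is the marginal coverage under $(\tP,\pi)$, which is finite because $\tP^\pi$ is a contraction. The crucial feature is that this form pairs the \emph{empirical} occupancy with the \emph{true} value $V^\pi$, so that Bernstein applies to a fixed test vector. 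Using $V^\pi(g)=0$ and the Line~11 perturbation I rewrite $(\tP_{s,a}-P_{s,a})V^\pi=\tfrac{n_{s,a}}{n_{s,a}+1}(\hP_{s,a}-P_{s,a})V^\pi-\tfrac{1}{n_{s,a}+1}P_{s,a}V^\pi$, where the last piece is $O(B_\star/n_{s,a})$ and feeds the $\widetilde{O}(1/n)$ remainder. The leading per-$(s,a)$ term merges the cost and transition estimation into the empirical mean of $c+V^\pi(s')$ against its expectation $Q^\pi(s,a)$, and Bernstein's inequality yields
\[
\big|\big(\widehat{c}(s,a)-c(s,a)\big)+\big(\hP_{s,a}-P_{s,a}\big)V^\pi\big|\lesssim\sqrt{\frac{2\,\Var_{P_{s,a}}[V^\pi+c]\,\iota}{n_{s,a}}}+\frac{B_\star\iota}{n_{s,a}}.
\]

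Two conversions then produce the population quantities. First, a multiplicative Chernoff bound on the visitation counts — each trajectory contributes $d^\mu(s,a)$ in expectation to $n_{s,a}$, with trajectory-length control entering through the $T_{\max}$ and $d_m$ conditions on $n$ — gives $n_{s,a}\gtrsim n\,d^\mu(s,a)$ uniformly over covered pairs, replacing $n_{s,a}$ by $n\,d^\mu(s,a)$ under the square root up to a universal constant. Second, I would bound $\widehat{d}^{\,\pi}(s,a)\lesssim d^\pi(s,a)$ by a perturbation argument for the occupancy measures of two nearby proper models; here Lemma~\ref{lem:T_pi} is essential, since it bounds $d^\pi$ (equivalently $\bar{T}^\pi$) and thereby both certifies finiteness of $\widehat{d}^{\,\pi}$ and controls its distortion once $n\gtrsim(\bar{T}^\pi)^2 S\iota/d_m$. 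Collecting the leading contributions gives the displayed main term $4\sum_{s,a,s\neq g}d^\pi(s,a)\sqrt{2\Var_{P_{s,a}}[V^\pi+c]\iota/(n\,d^\mu(s,a))}$, while every $O(1/n_{s,a})$ and perturbation term, weighted by the finite $d^\pi$, collapses into $\widetilde{O}(1/n)$.

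I expect the occupancy conversion $\widehat{d}^{\,\pi}\approx d^\pi$ to be the crux. Unlike the finite-horizon case, the SSP occupancy is an infinite time-summation whose value depends on the \emph{estimated} kernel, so a naive perturbation of $(I-\tP^\pi)^{-1}$ can blow up; the argument must exploit properness and Lemma~\ref{lem:T_pi} to keep $(I-\tP^\pi)^{-1}$ uniformly bounded and to turn a per-entry transition error into a controlled multiplicative distortion of the occupancy. A secondary technical point is the Bernstein step itself: one actually works with the \emph{empirical} variance and must pass to $\Var_{P_{s,a}}[V^\pi+c]$ through an empirical-Bernstein comparison, with the resulting fluctuation again deferred to the $\widetilde{O}(1/n)$ term.
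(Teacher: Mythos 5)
Your overall architecture matches the paper's at the top level: the same split $|V^{(i)}-V^\pi|\leq|V^{(i)}-\hV^\pi|+|\hV^\pi-V^\pi|$ with the contraction property (Lemmas~\ref{lem:contraction} and \ref{lem:diff_bound}) giving the $\epsilon_{\mathrm{OPE}}/(1-\rho)$ term, Bernstein concentration plus the Chernoff bound on visitation counts (Lemma~\ref{lem:Chern_SSP}) for the statistical term, and Lemma~\ref{lem:T_pi} to keep occupancies finite. But for the statistical error you take the mirror image of the paper's simulation identity: you pair the \emph{empirical} occupancy $\widehat{d}^{\pi}$ with the \emph{true} value $V^\pi$, whereas the paper's Lemma~\ref{lem:regret_decomp} pairs the true occupancy $\xi^\pi_h$ (hence $d^\pi$) with the empirical fixed point $\hV^\pi$. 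Each choice relocates the data-dependence: you get to apply Bernstein to the fixed test vector $V^\pi+c$ directly (so your closing caveat about an empirical-Bernstein comparison is actually unnecessary in your own formulation), at the price of converting $\widehat{d}^{\pi}$ back to $d^\pi$; the paper keeps $d^\pi$ for free but must split $(\tP_{s,a}-P_{s,a})\hV^\pi=(\tP_{s,a}-P_{s,a})(\hV^\pi-V^\pi)+(\tP_{s,a}-P_{s,a})V^\pi$ and control the first piece through a crude self-bounding $\infty$-norm bound on $\hV^\pi-V^\pi$, uniform over initial states (Theorem~\ref{thm:crude_ope}) --- which is precisely what the condition $n\geq 64(\bar{T}^\pi)^2 S\iota/d_m$ is consumed by. The two self-bounding steps are of comparable difficulty, and your route does produce the stated main term.

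The one step that fails as literally written is the pointwise conversion $\widehat{d}^{\pi}(s,a)\lesssim d^\pi(s,a)$. Since $\widehat{d}^{\pi}(s,a)/d^\pi(s,a)=\widehat{d}^{\pi}(s)/d^\pi(s)$, take a state whose true inflow probability is a tiny $\epsilon>0$: an estimation error of order $\sqrt{\epsilon\iota/n}$ in that inflow already makes the ratio $1+\sqrt{\iota/(n\epsilon)}$, which is unbounded as $\epsilon\to 0$, so no universal multiplicative constant exists. The repair is additive rather than multiplicative: write $\widehat{d}^{\pi}=d^\pi+(\widehat{d}^{\pi}-d^\pi)$, bound $\norm{\widehat{d}^{\pi}-d^\pi}_1\leq \widehat{T}^\pi\cdot\max_{s,a}\norm{\tP_{s,a}-P_{s,a}}_1\cdot\bar{T}^\pi$ via the resolvent identity $(I-\tP^\pi)^{-1}-(I-P^\pi)^{-1}=(I-\tP^\pi)^{-1}(\tP^\pi-P^\pi)(I-P^\pi)^{-1}$, and bootstrap $\widehat{T}^\pi\leq 2\bar{T}^\pi$ under the same sample-size condition --- the bound $\widehat{T}^\pi\leq 1/(1-\rho)\leq \max_{s,a}n(s,a)+1$ that the contraction alone gives is far too weak, since it grows with $n$. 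With this, the cross term $(\widehat{d}^{\pi}-d^\pi)$ times the Bernstein fluctuation is $\widetilde{O}\left((\bar{T}^\pi)^2\sqrt{S}\,\iota/(nd_m)\right)$ and lands in the $\widetilde{O}(1/n)$ remainder, as your final accounting implicitly assumes. With that repair your proof goes through; as stated, the multiplicative occupancy claim is the gap.
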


\textbf{On statistical efficiency.} First of all, when VI-OPE converges exactly (\emph{i.e.} $\epsilon_\text{OPE}=0$), the output $\widehat{V}^\pi:=\lim_{i\rightarrow\infty}V^{(i)}$ possesses no optimization error (\emph{i.e.} $\epsilon_{\text{OPE}}/(1-\rho)=0$) and the (non-squared) statistical rate achieved by VI-OPE is dominated by $O(\sum_{s,a} d^\pi(s,a)\sqrt{\frac{\Var_{P_{s,a}}[V^\pi+c]\iota}{n\cdot d^\mu(s,a)}})$. As a comparison, for the well-studied finite-horizon tabular MDP problem, the statistical limit $O(\sqrt{\sum_{h=1}^H\sum_{s,a} d^\pi_h(s,a)^2\frac{\Var_{P_h}[V^\pi_{h+1}+c]}{n\cdot d^\mu_h(s,a)}]})$ has been achieved by \cite{yin2020asymptotically,duan2020minimax,kallus2020double} which matches the previous proven lower bound \citep{jiang2016doubly}. Therefore, it is natural to conjecture that the statistical lower bound for SSP-OPE problem has the rate $O(\sqrt{\sum_{s,a} d^\pi(s,a)^2\frac{\Var_{P_{s,a}}[V^\pi+c]}{n\cdot d^\mu(s,a)}})$. Our simple VI-OPE algorithm nearly matches this conjectured lower bound and only has the expectation outside of the square root. How to obtain the Carmer-Rao-style lower bound for SSP OPE problem and how to close the gap are beyond this initial attempt. We leave these as the future works.

\textbf{Parameter-free.} Different from the standard MDPs (\emph{e.g.} finite-horizon, discounted), the SSP formulation generally has variable horizon length which yields no explicit bound on $\norm{V^\pi}_\infty$. Consequently, most of the previous literature that study SSP problem requires the knowledge of expected running time $T^\pi/T^\star$ or $B^\pi/B_\star$, the upper bound on $\norm{V^\pi}/\norm{V^\star}$ (\emph{e.g.} \cite{tarbouriech2020no,rosenberg2020near,cohen2021minimax,chen2021finding,chen2021improved}). In contrast, VI-OPE is fully parameter-free as it requires no prior information about neither $T^\pi$ nor $B^\pi$ and the main term of our bound does not explicitly scale with those parameters. Last but not least, VI-OPE does not reply on the positive cost Assumption~\ref{assum:PC}.

\section{Offline Learning in SSP} \label{sec:OPL}

In this section, we consider the offline policy optimization problem. Similar to previous work, we assume the knowledge of an upper bound on the $B_\star:=\norm{V^\star}_\infty$, which is denoted as $\tB$. How to deal with the case when $\tB$ is unknown is discussed in Section~\ref{sec:knowladge}. Throughout the section, we suppose Assumption~\ref{assum:opl} and Assumption~\ref{assum:PC} holds.

We introduce our algorithm in Algorithm~\ref{alg:OPO}. The main idea behind the algorithm is the pessimistic update of the value function via adding a bonus function to $V^{(i)}$. Here the \textbf{bonus function} 
$b_{s,a}(V):=\sqrt{\frac{2\hat{c}(s,a)\iota}{n(s,a)}}+\frac{7\iota}{3n(s,a)}+\frac{\tB}{n(s,a)}+\frac{16\tB\iota}{3n(s,a)}+\max\{2\sqrt{\frac{\Var(\tP',V)\iota}{n(s,a)}},4\frac{\tB\iota}{n(s,a)}\}+180\sqrt{\frac{3\widetilde{T}\tB S}{2n(s,a)n_{\min}}}(\sqrt{\tB}+1)\iota$ $\forall(s,a)\in\mathcal{S}\times\mathcal{A}$, where $n_{\text{max}}=\max_{s,a}n(s,a)$ and $n_{\text{min}}=\min_{s,a}\{n(s,a):n(s,a)>0\}$. For the goal state $b_{g,a}(V)=0$ $\forall a\in\mathcal{A}$. Here $\widetilde{T}$ is an upper bound of $T^\star$.\footnote{Here we do point the design of $b_{s,a}$ requires $\widetilde{T}$ in addition to $\widetilde{B}$. However, this is not essential as (by Assumption~\ref{assum:PC}) $\widetilde{T}$ can be bounded by $\widetilde{B}/c_\text{min}$.}

\begin{algorithm}[tbh]
	\caption{PVI-SSP (Pessimistic Value Iteration for SSP)}
	\label{alg:OPO}
	\begin{algorithmic}[1]
		\STATE {\bfseries Input:} $\epsilon_{\text{OPL}}$, $\mathcal{D}:=\{(s^{(i)}_1,a^{(i)}_1,c^{(i)}_1,s^{(i)}_2,\ldots,s^{(i)}_{T_i})\}_{i=1}^n$. $\tB$ and $\iota=O(\log(SA/\delta))$. $n_{\text{max}}$ and $b_{s,a}$ see above.
		\FOR{$(s,a,s')\in \mathcal{S}\times\mathcal{A}\times\mathcal{S}'$}
		\STATE Set $n(s,a) =  \sum_{i=1}^n\sum_{j=1}^{T_i}\mathbb{I}(s_j^{(i)}=s\text{, }a_j^{(i)}=a)$.
		\IF{$n(s,a)>0$}
		\STATE Calculate $\widehat{c}(s,a)=\frac{\sum_{i=1}^n\sum_{j=1}^{T_i}\mathbb{I}(s_j^{(i)}=s\text{, }a_j^{(i)}=a)c^{(i)}_j}{n(s,a)}$
		\STATE $\widehat{P}(s'|s,a)=\frac{\sum_{i=1}^n\sum_{j=1}^{T_i}\mathbb{I}(s_j^{(i)}=s\text{, }a_j^{(i)}=a\text{, }s_{j+1}^{(i)}=s')}{n(s,a)}$,  
		\ELSE
		\STATE $\widehat{c}(s,a)\leftarrow c_{\min}$, $\widehat{P}(s'|s,a)\leftarrow\mathbb{I}(s'=g)$.
		\ENDIF
		\STATE $\widetilde{P}'(s'|s,a)=\frac{n_{\text{max}}}{n_{\text{max}}+1}\widehat{P}(s'|s,a)+\frac{\mathbb{I}[s'=g]}{n_{\text{max}}+1}$
		\ENDFOR
		\STATE \blue{$\diamond$ Pessimistic Value Iteration for offline learning} 
		\STATE {\bfseries Initialize:} ${V}^{(-1)}(\cdot)\leftarrow \infty$, ${V}^{(0)}(\cdot)\leftarrow\tB\cdot\mathbf{1}$, $i=0$.
		\WHILE{$\norm{V^{(i)}-V^{(i-1)}}_\infty> 0 \blue{(\epsilon_{\text{OPL}})}$}
		\FOR{$(s,a)\in \mathcal{S}'\times\mathcal{A}$}
		\STATE $Q^{(i+1)}(s,a)=\min\{\widehat{c}(s,a)+\widetilde{P}'_{s,a}V^{(i)}+b_{s,a}(V^{(i)})\text{ , }\tB\}$
		\STATE  $V^{(i+1)}(s)=\min_a Q^{(i+1)}(s,a)$
		\STATE $i\leftarrow i+1$
		\ENDFOR
		\ENDWHILE
		\STATE Calculate $\bar{\pi}(\cdot)=\argmin_{a}Q^{(i)}(\cdot,a)$
		\STATE \textbf{Output}: $\bar{\pi}$, $\bar{V}(\cdot)=\min_{a}Q^{(i)}(\cdot,a)$
	\end{algorithmic}
\end{algorithm}

The use of value iteration to approximate the underlying Bellman optimality equation $V^{\star}(s)=\max_{a\in\mathcal{A}}\{c(s, a)+P_{s, a} V^{\star}\},\;\forall s\in\mathcal{S}'$ is natural when model components $P,c$ are accurately estimated by $\widetilde{P}',\widehat{c}$. Moreover, comparing to VI-OPE, there are several differences for PVI-SSP. First, $\widetilde{P}'$ is chosen according to $n_\text{max}$ (instead of $n(s,a)$), which makes $\widetilde{P}'$ ``closer'' to $\widehat{P}$ but preserves the positive one-step transition to $g$. More importantly, a pessimistic bonus $b_{s,a}$ is added to the value update differently at each state-action location which measures the uncertainty learnt so far from the offline data. Action with higher uncertainty are less likely to be chosen for the next update. Concretely, $\sqrt{\frac{\Var(\widetilde{P}',V^{(i)})}{n}}$ measures the uncertainty of $V^{(i)}$ and $\sqrt{\frac{\widehat{c}}{n}}$ measures the uncertainty of per-step cost $\widehat{c}$.\footnote{This is due to $\Var(c)\leq E[c^2]\leq E[c]$ for r.v. $c\in[0,1]$.} However, to guarantee proper pessimism, we require the knowledge of $\widetilde{B}$ in the design of $b_{s,a}$.

In addition, for analysis purpose we state our result under the regime where the iteration converges exactly and the output $\bar{V}$ (in Line 22) is fixed point of the operator $\widetilde{\mathcal{T}}$ (see Appendix~\ref{sec:converge_OPO} for details). In practice, one can stop the iteration when the update difference is smaller than $\epsilon_{\text{OPL}}$. We have the following offline learning guarantee for $\bar{\pi}$, which is our major contribution. The proof is deferred to Appendix~\ref{sec:proof_OPO}.

\begin{theorem}[Offline policy learning in SSP]\label{thm:OPL}
	Denote $d_m:=\min\{\sum_{h=0}^\infty \xi^\mu_h(s,a):s.t. \sum_{h=0}^\infty \xi^\mu_h(s,a)>0\}$, and $T^\pi_s$ to be the expected time to hit $g$ when starting from $s$. Define $\bar{T}^\pi=\max_{\bar{s}\in\mathcal{S}}T^\pi_{\bar{s}}$. Then when $n\geq n_0$, we have with probability $1-\delta$, the output $\bar{\pi}$ of Algorithm~\ref{alg:OPO} is a proper policy and satisfies ($\iota=O(\log(SA/\delta))$)
	{\begin{align*}
	0\leq &V^{\bar{\pi}}(s_\mathrm{init})-V^\star(s_\mathrm{init})
	\leq 4\sum_{s,a,s\neq g} d^\star(s,a)\sqrt{\frac{2\Var_{P_{s,a}}[V^\star+c]\iota}{n\cdot d^\mu(s,a)}}+\widetilde{O}(\frac{1}{n}),
	\end{align*}
}where the quantity $d_\text{max}=\max_{s,a}d^\mu(s,a)$, the quantity ${T}_{\max}=\max_{i\in[n]} T_i$ and we define $n_0:=\max\{\frac{4B_\star-2c_\text{min}}{c_\text{min}d_{\text{max}}}, \frac{26^2\times 2S\iota(\bar{T}^\star)^2(\sqrt{B_\star}+1)^2}{d_m}, \frac{10^6(\sqrt{\tB}+1)^4S\iota\bar{T}^\star\widetilde{T}}{B^\star(\sqrt{B^\star}+1)^2d_m}, $ $O( {T}_{\max}^2\log(SA/\delta)/d_m^2)\}$.
\end{theorem}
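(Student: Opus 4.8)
The plan is to follow the standard pessimism-based suboptimality decomposition, adapted to the random-horizon structure of SSP. Throughout, write $\bar V,\bar Q,\bar\pi$ for the fixed-point outputs of Algorithm~\ref{alg:OPO} and $\widetilde{\mathcal{T}}$ for the empirical operator $\widetilde{\mathcal{T}}Q(s,a)=\min\{\widehat c(s,a)+\widetilde{P}'_{s,a}(\min_{a'}Q(\cdot,a'))+b_{s,a},\widetilde B\}$, whose fixed point is $\bar V$. First I would establish a high-probability \emph{good event} on which the per-coordinate estimation error is dominated by the bonus: for every $(s,a)$ with $s\neq g$ and every iterate value $V$, $|(\widehat c(s,a)+\widehat P_{s,a}V)-(c(s,a)+P_{s,a}V)|\le b_{s,a}(V)$. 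This is the SSP analogue of the usual offline-RL concentration step, obtained via Bernstein's inequality applied separately to $\widehat c$ and to $(\widehat P-P)_{s,a}V$; the $\sqrt{\Var(\widetilde{P}',V)/n(s,a)}$ term captures the leading fluctuation while the $\widetilde B/n(s,a)$-type pieces absorb both the lower-order Bernstein corrections and the bias from the perturbation $\widetilde{P}'$ (which reroutes a $\frac{1}{n_{\max}+1}$ mass to $g$). On the same event I would also control the empirical counts, showing $n(s,a)\gtrsim n\,d^\mu(s,a)$ for every covered pair; this is exactly where the requirement $n\ge O(T_{\max}^2\log(SA/\delta)/d_m^2)$ in $n_0$ enters, since each trajectory contributes a random, horizon-dependent number of visits.

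Second, I would prove the two structural consequences of \emph{adding} the bonus, namely pessimism and properness. Because $b_{s,a}$ upper bounds the estimation error, the operator $\widetilde{\mathcal{T}}$ dominates the true Bellman operator $\mathcal{L}$ coordinatewise, so by monotonicity the fixed point satisfies $\bar V\ge V^\star$, and greedily taking $\bar\pi=\argmin_a\bar Q(\cdot,a)$ likewise gives $\bar V\ge V^{\bar\pi}$. Together these yield $V^{\bar\pi}(s_\ini)-V^\star(s_\ini)=[V^{\bar\pi}(s_\ini)-\bar V(s_\ini)]+[\bar V(s_\ini)-V^\star(s_\ini)]\le \bar V(s_\ini)-V^\star(s_\ini)$, reducing the task to bounding $\bar V-V^\star$; the left inequality $0\le V^{\bar\pi}-V^\star$ is then automatic from optimality of $\pi^\star$ once $\bar\pi$ is proper. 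Properness of $\bar\pi$ follows from the clipping at $\widetilde B$ together with Assumption~\ref{assum:PC}: positive cost $c_{\min}$ forces any improper policy to have infinite value, so a fixed point bounded by $\widetilde B$ cannot be generated by an improper policy, which is where the threshold $\frac{4B_\star-2c_{\min}}{c_{\min}d_{\max}}$ in $n_0$ is used, guaranteeing the perturbation bias is small enough to preserve this dichotomy.

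Third, I would telescope $\bar V-V^\star$ along the trajectory of $\pi^\star$. Using $\bar V(s)\le\widehat c(s,\pi^\star(s))+\widetilde{P}'_{s,\pi^\star}\bar V+b_{s,\pi^\star}$ and $V^\star(s)=c(s,\pi^\star(s))+P_{s,\pi^\star}V^\star$, the good event gives the one-step recursion $\bar V(s)-V^\star(s)\le \widetilde{P}'_{s,\pi^\star}(\bar V-V^\star)+2b_{s,\pi^\star}(\bar V)$. Unrolling against the occupancy of $\pi^\star$ converts the $2b$ contribution into $2\sum_{s,a,s\neq g}d^\star(s,a)\,b_{s,a}$; here the finiteness of $d^\star$, so that the sum converges despite the unbounded horizon, is exactly guaranteed by Lemma~\ref{lem:T_pi}, and the contraction supplied by the injected mass to $g$ (Lemma~\ref{lem:contraction}) controls the residual $\widetilde{P}'(\bar V-V^\star)$ tail and the discrepancy between the perturbed occupancy and the true $d^\star$. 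Substituting the explicit bonus, replacing $n(s,a)$ by $n\,d^\mu(s,a)$ from Step~1, and combining the cost-variance piece $\sqrt{\widehat c/n(s,a)}$ with the value-variance piece $\sqrt{\Var(\widetilde{P}',\bar V)/n(s,a)}$ into the single population quantity $\Var_{P_{s,a}}[V^\star+c]$ produces the stated leading term $4\sum_{s,a,s\neq g}d^\star(s,a)\sqrt{2\Var_{P_{s,a}}[V^\star+c]\iota/(n\,d^\mu(s,a))}$, while all remaining $1/n(s,a)$ bonus pieces and substitution errors collect into $\widetilde O(1/n)$.

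The main obstacle I expect is precisely the last substitution: converting the empirical, iterate-dependent variance $\Var(\widetilde{P}',\bar V)$ into the population variance $\Var_{P_{s,a}}[V^\star+c]$ of the \emph{true} value at the minimax-optimal rate. Bounding $\bar V-V^\star$ inside the variance naively reintroduces a crude $\widetilde B$ (horizon-type) factor in front of the square-root sum, which would be loose; eliminating it requires the dependence-improvement argument of Lemma~\ref{lem:HD}, applied through the random-horizon analogue of the law of total variance, to show the cross terms are genuinely lower order. This step, together with verifying that every higher-order contribution from $\widetilde{P}'$ and the Bernstein tails is uniformly $\widetilde O(1/n)$ — which is what pins down the remaining thresholds in $n_0$, notably the $(\bar T^\star)^2$ and $\widetilde T\,\bar T^\star$ terms — is where the bulk of the technical work lies.
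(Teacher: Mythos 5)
Your overall architecture coincides with the paper's: a Bernstein-based good event plus the Chernoff count bound $n(s,a)\gtrsim n\,d^\mu(s,a)$ (this is exactly where $n\geq O(T_{\max}^2\log(SA/\delta)/d_m^2)$ enters), pessimism $V^{\bar{\pi}}\leq \bar{V}$ from the bonus (Lemmas~\ref{lem:Vpi-V} and \ref{lem:pessimism}), properness of $\bar{\pi}$ from positive cost plus the injected mass to $g$ (Lemma~\ref{lem:piProper}), and a simulation-lemma telescoping of $\bar{V}-V^\star$ along $\pi^\star$ (Lemma~\ref{lem:bar_V-V*}). However, there is a genuine gap at precisely the step you flag as the main obstacle. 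The paper resolves the data-dependent variance $\Var(\tP',\bar{V})$ and the cross term $\sqrt{S\iota/n(s,a)}\,\norm{\bar{V}-V^\star}_\infty$ \emph{not} via Lemma~\ref{lem:HD}, but via a two-stage bootstrap: it first proves the crude $\infty$-norm bound $\norm{\bar{V}-V^\star}_\infty\leq 30\sqrt{\bar{T}^\star B_\star\iota/(nd_m)}(\sqrt{B_\star}+1)$ (Theorem~\ref{thm:crude_PO}), i.e.\ an a priori $\widetilde{O}(\sqrt{1/n})$ rate, and then substitutes it back so that every cross term becomes $\widetilde{O}(1/n)$, leaving only $\Var_{P_{s,a}}(V^\star+c)$ in the leading term. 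Your proposed substitute --- Lemma~\ref{lem:HD} ``through the random-horizon analogue of the law of total variance'' --- would fail as stated: Lemma~\ref{lem:HD} requires the ordering (superharmonicity) condition $V(s)\geq\sum_a\pi(a|s)P_{s,a}V$, which $V^\star$ satisfies (nonnegative costs) but $\bar{V}-V^\star$ does not, and Remark~\ref{remark:HD} stresses that total-variance arguments apply only to $V^\pi$-type functions, not to the data-dependent pessimistic iterate $\bar{V}$; in the paper Lemma~\ref{lem:HD} is instead applied to $V^\star$ inside the crude bound and in the worst-case simplification \eqref{eqn:minimax_derivation}/Proposition~\ref{prop:simplified}. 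Without the crude bound your argument stalls exactly where the $\widetilde{B}$ horizon factor would otherwise appear.

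Two smaller points also depend on that missing crude bound. First, the properness contradiction in Lemma~\ref{lem:piProper} needs $\bar{V}(e)\leq V^\star(e)+B_\star\leq 2B_\star$, which comes from Theorem~\ref{thm:crude_PO}, not from clipping alone; clipping only gives $\bar{V}\leq\tB$, which would yield a threshold scaling with $\tB$ rather than the stated $\frac{4B_\star-2c_{\min}}{c_{\min}d_{\max}}$ (your mechanism --- a closed recurrent class stays closed under $\hP$, so $\tP'$ forces empirical value $\approx c_{\min}n d_{\max}/2$, exceeding the bound on $\bar{V}$ --- is otherwise the right one). Second, you unroll the recursion under $\tP'$, producing the empirical occupancy of $\pi^\star$, and appeal to ``the contraction supplied by the injected mass'' to reconcile it with $d^\star$; that contraction rate is $1-\frac{1}{(1+n_{\max})^2}$, far too weak to serve as a quantitative mechanism. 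The paper avoids the mismatch entirely by telescoping under the \emph{true} kernel $P$ (using properness of $\pi^\star$, so $\xi^\star_{H}\to 0$), keeping $(\tP'_{s,\pi^\star(s)}-P_{s,\pi^\star(s)})\bar{V}$ as an explicit error weighted by the true $d^\star(s,a)$, and then splitting it as $(\tP'-\hP)\bar{V}+(P-\hP)(\bar{V}-V^\star)+(P-\hP)V^\star$ (Lemma~\ref{lem:(P-tP)barV}) --- which again feeds the crude bound into the middle piece. You should restructure your proof around this intermediate crude bound; with it, your Steps 1--3 go through essentially as in the paper.
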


\textbf{On guarantee for policy.} Existing online SSP works measure the algorithm performance using \emph{regret} $R^{\text{SSP}}_{K}:=\sum_{k=1}^{K} \sum_{h=1}^{I^{k}} c_{h}^{k}-K \cdot \min_{\pi \in \Pi_{\text {proper }}} V^{\pi}\left(s_{\text{init}}\right)$ (\emph{e.g.} \citep{tarbouriech2021stochastic}) and is different from policy-based regret measurement $R_K:=\sum_{k=1}^{K} V_{1}^{\star}\left(x_{k, 1}\right)-V_{1}^{\pi_{k}}\left(x_{k, 1}\right)$ (\emph{e.g.} \cite{azar2017minimax}) in standard RL. The notion of $R^{\text{SSP}}_{K}$ provides the flexibility for policy update even within the episode (since it suffices to minimize $\sum_{h=1}^{I^{k}} c_{h}^{k}$), therefore unable to output a concrete stationary policy for the policy learning purpose. In contrast, Theorem~\ref{thm:OPL} provides a policy learning result via bounding the performance of output policy $\bar{\pi}$ explicitly.

\textbf{Instance-dependent bound.} Prior online SSP studies focus on deriving better worst-case regret (\emph{e.g.} the minimax rate is of order $\Theta(B_\star\sqrt{SAK})$) where the bounds are expressed by the parameters $B_\star/D,S,A$ that lack the characterization of individual instances. In offline SSP, the main term of PVI-SSP is fully expressed by the system quantities with marginal coverage $d^\star$ and $d^\mu$, conditional variance over transition $P$ and cost function $c$. This instance-adaptive result characterizes the hardness of learning better since the magnitude of the bounds changes with the instances. It fully avoids the explicit use of worst-case parameters $B_\star,S,A$.

\textbf{Faster convergence.} When the SSP system is deterministic for both cost $c$ and transition $P$, the conditional variances $\Var_{P_{s,a}}[V^\star+c]$ are always zero. In these scenarios, Theorem~\ref{thm:OPL} automatically guarantees faster convergence rate $\widetilde{O}(1/n)$ in deterministic SSP learning. Such a feature is not enjoyed by the existing worst-case studies in online SSP as their regrets are dominated by the statistical rate $\widetilde{O}(\sqrt{K})$ even for deterministic systems.

\textbf{On optimality.} While instance-dependent, it is still of great interest to understand whether this result is optimal. We provide the affirmative answer by showing a (nearly) matching minimax lower bound under the single concentrability condition in the next section.

\section{SSP Minimax Lower Bound}\label{sec:lower}

In this section, we study the statistical limit of offline policy learning in SSP. Concretely, we consider the family of problems satisfying bounded partial coverage, \emph{i.e.} $\max_{s,a,s\neq g}\frac{d^{\pi^\star}(s,a)}{d^{\mu}(s,a)}\leq C^\star$, where $d^{\pi}(s,a)=\sum_{h=0}^\infty \xi^\pi_h(s,a)<\infty$ for all $s,a$ (excluding $g$) for any proper policy $\pi$. This $C^\star$ formally defines the maximum ratio between $\pi^\star$ and $\mu$ in Assumption~\ref{assum:opl}. Consequently, we have the following result (the full proof is in Appendix~\ref{sec:lower_proof}):

\begin{theorem}\label{thm:lower_main}
	We define the following family of SSPs:
	\[
	\mathrm{SSP}(C^\star)=\{(s_{\mathrm{init}},\mu,P,c)|\max_{s,a,s\neq g}\frac{d^{\pi^\star}(s,a)}{d^{\mu}(s,a)}\leq C^\star\},
	\]
	where $d^\pi(s,a)=\sum_{h=0}^\infty\xi^\pi_h(s,a)$. Then for any $C^\star\geq 1$, $\norm{V^\star}_\infty=B_\star>1$, it holds (for some universal constant $c$)
	\begin{align*}
	&\inf_{\widehat{\pi} \;\mathrm{proper}}\sup_{(s_{\mathrm{init}},\mu,P,c)\in\mathrm{SSP}(C^\star)}\E_{\mathcal{D}}[V^{\widehat{\pi}}(s_{\mathrm{init}})-V^\star(s_{\mathrm{init}})]
	\geq c\cdot B_\star\sqrt{\frac{SC^\star}{n}}.
	\end{align*}
\end{theorem}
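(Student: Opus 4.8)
The plan is the standard reduction from estimation to hypothesis testing, implemented through a product (``parallel bandit'') construction combined with Assouad's lemma. The target rate $B_\star\sqrt{SC^\star/n}$ is exactly the offline discounted/finite-horizon lower bound with the effective horizon $(1-\gamma)^{-1}$ (or $H$) replaced by the value scale $B_\star$, so I would adapt that template to the SSP geometry. The informative quantity to track throughout is the expected number of visits $\E[N(s,a)]$ to each critical state--action pair, since in SSP this — not a fixed sample budget — is what the concentrability constraint controls.

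\emph{Construction.} I take $\cS=\{s_{\mathrm{init}},s_1,\dots,s_S\}$ with two actions and unit per-step cost. From $s_{\mathrm{init}}$ every policy moves (at unit cost) to a uniformly random gadget state $s_i$, so each trajectory touches \emph{exactly one} $s_i$ and the data across the $s_i$ come from disjoint trajectories. Each $s_i$ carries a hidden bit $\theta_i\in\{+1,-1\}$: the safe action $0$ sends $s_i\to g$ with probability $p$ and self-loops otherwise, while the risky action $1$ sends $s_i\to g$ with probability $p(1+\theta_i\alpha)$. Under unit costs $V^\pi(s_i)$ equals the expected hitting time, so the faster-exiting action is optimal, i.e.\ action $1$ is optimal iff $\theta_i=+1$. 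Setting $p\asymp 1/B_\star$ gives $V^\star(s_i)\asymp B_\star$ (calibrating the constant so that $\|V^\star\|_\infty=B_\star>1$), and the per-state suboptimality of the wrong choice is $\Delta\asymp \alpha/p\asymp \alpha B_\star$. Since every action reaches $g$ with positive per-step probability, every policy is proper, so restricting the infimum to proper $\widehat{\pi}$ costs nothing.

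\emph{Coverage, information, and conclusion.} The fixed behavior policy $\mu$ plays the risky action with probability $1/C^\star$ and the safe action otherwise at each $s_i$. A short occupancy computation gives $d^{\pi^\star}(s_i,a^\star_i)\asymp B_\star/S$ whereas $d^\mu(s_i,\text{action }1)\asymp B_\star/(SC^\star)$, so the binding ratio is $\Theta(C^\star)$ (attained on the $\theta_i=+1$ states) and every instance lies in $\mathrm{SSP}(C^\star)$; the expected number of informative observations is $N_i\asymp n\,d^\mu(s_i,\text{action }1)\asymp nB_\star/(SC^\star)$. The hypotheses $\theta_i=\pm1$ differ only in the $\mathrm{Ber}(p(1\pm\alpha))$ transition at $(s_i,\text{action }1)$, whose per-sample KL is $\asymp p\alpha^2$; by the MDP divergence-decomposition identity the two data laws satisfy $\mathrm{KL}\asymp \E[N_i]\cdot p\alpha^2\asymp n\alpha^2/(SC^\star)$. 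Choosing $\alpha\asymp\sqrt{SC^\star/n}$ keeps each per-coordinate KL below a constant, so by Pinsker each bit is individually unrecoverable with constant probability. Finally, the loss decomposes coordinate-wise as $V^{\widehat{\pi}}(s_{\mathrm{init}})-V^\star(s_{\mathrm{init}})=\tfrac1S\sum_{i=1}^S \Delta\,\mathbf{1}\{\widehat{\pi}(s_i)\neq a^\star_i\}$, and Assouad's lemma (using that the per-state data are independent, so conditioning on the other coordinates is exact) yields $\inf_{\widehat{\pi}}\sup \E[\cdot]\geq c'\Delta\asymp B_\star\sqrt{SC^\star/n}$, which is the claim.

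\emph{Main obstacle.} I expect the delicate part to be the information-theoretic bookkeeping under random, $\theta$-dependent episode lengths: making the divergence decomposition rigorous for SSP, where the trajectory length is a stopping time and the visit count $N_i$ is itself random and weakly coupled to $\theta_i$, so that $\E[N_i]\asymp nB_\star/(SC^\star)$ holds to leading order. Alongside this, one must simultaneously calibrate $(p,\alpha)$ so that all constraints hold at once — $\|V^\star\|_\infty=B_\star$, valid probabilities $p(1\pm\alpha)\in(0,1)$ (which forces the nontrivial regime $n\gtrsim SC^\star$), concentrability $\le C^\star$, per-coordinate KL $\le$ const, and gap $\Delta=\Omega(B_\star\sqrt{SC^\star/n})$ — without any of the constants colliding.
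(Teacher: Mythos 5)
Your proposal is correct in outline and follows the same high-level template as the paper's proof --- uniform fanout from $s_{\mathrm{init}}$ to $\Theta(S)$ gadget states carrying hidden bits, a behavior policy that plays the informative action with probability $1/C^\star$, perturbation size $\asymp\sqrt{SC^\star/n}$, and per-bit value gap $\asymp B_\star$ times the perturbation --- but the implementation differs in two genuine ways. First, the gadget: you hide the bit in the exit probability $p(1\pm\alpha)$, $p\asymp 1/B_\star$, of a self-looping state, so the informative transition is observed a random, $\theta$-dependent number of times ($\E[N_i]\asymp nB_\star/(SC^\star)$, per-sample KL $\asymp p\alpha^2$), whereas the paper uses a two-state gadget: a choice state $s_1^j$ with a single $\mathrm{Ber}(1/2\pm\delta)$ branch into a costly self-loop state $s_+^j$ with exit probability $1/B_\star$, so the informative transition is observed \emph{exactly once} per trajectory (per-sample KL $\asymp\delta^2$, $\asymp n/(SC^\star)$ samples). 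The information budgets match, but the paper's geometry is precisely what lets it bound $\mathrm{KL}(\mu\circ P_i\,\|\,\mu\circ P_j)$ by direct enumeration of whole trajectories --- the geometric tail at $s_+^j$ is identical under both hypotheses and cancels in the log-likelihood ratio --- thereby sidestepping the stopping-time/random-visit-count issue you correctly flag as your main obstacle. Your route instead requires a divergence-decomposition (Wald-type) identity $\mathrm{KL}=\sum_{s,a}\E_1[N(s,a)]\,\mathrm{KL}\left(P_1(\cdot|s,a)\,\|\,P_2(\cdot|s,a)\right)$ for data stopped at the a.s.-finite hitting time of $g$; this is standard and valid here, but it is an extra lemma the paper never needs. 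Second, the testing step: you invoke Assouad, justified since each trajectory touches exactly one coordinate so the data factorize across bits; the paper instead uses a generalized Fano inequality over a Gilbert--Varshamov packing combined with a two-instance separation lemma ($V^\pi_{P_1}-V^{\pi^\star}_{P_1}+V^\pi_{P_2}-V^{\pi^\star}_{P_2}\geq\delta B_\star/2$). The two are interchangeable here and give the same rate. One shared constant-level caveat: with $\mu$ placing mass $1-1/C^\star$ on the uninformative action, the occupancy ratio at the \emph{safe} action on bits where it is optimal is $C^\star/(C^\star-1)$, which exceeds $C^\star$ when $1\leq C^\star<2$; the paper's construction has exactly the same feature, so this does not distinguish your argument from the paper's.
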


Theorem~\ref{thm:lower_main} reveals for the family with proper policy $\pi^\star$ and $\mu$ with bounded ratio $C^\star$, the minimax lower bound is $\Omega(B_\star\sqrt{\frac{SC^\star}{n}})$. In particular, the dominant term in Theorem~\ref{thm:OPL} directly implies this rate (recall $\pi^\star$ is deterministic by \ref{assum:opl}) by the following calculation (assuming $B_\star>1$ just like Theorem~\ref{thm:lower_main}):
{
\begin{equation}\label{eqn:minimax_derivation}
\begin{aligned}
&\sum_{s,a,s\neq g} d^\star(s,a)\sqrt{\frac{\Var_{P_{s,a}}[V^\star+c]}{n\cdot d^\mu(s,a)}}\\
=&\sum_{s,s\neq g} d^\star(s,\pi^\star(s))\sqrt{\frac{\Var_{P_{s,\pi^\star(s)}}[V^\star+c]}{n\cdot d^\mu(s,\pi^\star(s))}}\\
\leq&\sqrt{\sum_{s,s\neq g}\frac{d^\star(s,\pi^\star(s))}{d^\mu(s,\pi^\star(s))}\cdot\sum_{s,s\neq g}\frac{d^\star(s,\pi^\star(s))\Var_{P_{s,\pi^\star(s)}}[V^\star+c]}{n}}\\
\leq&\sqrt{\sum_{s,s\neq g}C^\star\cdot \frac{B_\star^2}{n}}=B_\star\sqrt{\frac{SC^\star}{n}} \;\;(\text{also see Proposition~\ref{prop:simplified}}),
\end{aligned}
\end{equation}
}where the first inequality uses CS inequality and the second one uses the key Lemma~\ref{lem:T_pi}.\footnote{Here since $B_\star>1$, when applying Lemma~\ref{lem:T_pi}, $B_\star$ will dominate $c\in[0,1]$.} This verifies PVI-SSP is near-optimal up to the logarithmic and higher order terms.

\section{Sketch of the analysis}\label{sec:pf_overview}

In this section, we sketch the proofs of our main theorems. In particular, we focus on describing the procedure of offline policy learning Theorem~\ref{thm:OPL}. First of all, when the condition $n\geq n_0$ holds, the output $\bar{\pi}$ is proper with high probability and following this one can conduct standard decomposition:
\[
V^{\bar{\pi}}-V^\star = (V^{\bar{\pi}}-\bar{V})+(\bar{V}-V^\star)
\]
where $V^\star$ is the solution of Bellman optimality operator $\mathcal{T}$ and $\bar{V}$ is the fixed point solution of the operator $\tT(V)(s)=\min_{a}\left\{\min\{\widehat{c}(s,a)+\widetilde{P}_{s,a}V+b_{s,a}(V)\text{ , }\tB\}\right\}$ (Lem~\ref{lem:contraction2}). Also, $V^{\bar{\pi}}$ satisfies general Bellman equation (Lemma~\ref{lem:general_Bellman}) therefore we first decompose $V^{\bar{\pi}}-\bar{V}$ using a \emph{simulation-lemma} style decomposition (Lemma~\ref{lem:Vpi-V}):
\begin{align*}
\Vb-\bar{V}=&\sumS\xi_{h}^{\bar{\pi}}(s)\bigg\{(P_{s,\bar{\pi}(s)}-\tP'_{s,\bar{\pi}(s)})\bar{V}
+c(s,\bar{\pi}(s))-\hat{c}(s,\bar{\pi}(s))-b_{s,\bar{\pi}(s)}(\bar{V})\bigg\}
\end{align*}
By the careful design of $b_{s,a}(\cdot)$, the pessimism guarantees $V^{\bar{\pi}}-\bar{V}\leq 0$ (Lemma~\ref{lem:pessimism}). For $\bar{V}-V^\star$, a similar \emph{simulation-lemma} style SSP decomposition (Lemma~\ref{lem:bar_V-V*}) follows:
\begin{equation}\label{eqn:decomp_star}
\begin{aligned}
\bar{V}-V^{\star}\leq&\sumS \xi^\star_h(s)\bigg\{(\tP'_{s,\pi^\star(s)}-P_{s,\pi^\star(s)})\bar{V}
+\widehat{c}(s,\pi^\star(s))-c(s,\pi^\star(s))+b_{s,\pi^\star(s)}(\bar{V})\bigg\}.
\end{aligned}
\end{equation}

Before we proceed to explain about how to bound the residual summations, we present two new lemmas, which help characterize the key features of stochastic shortest path problem.

\begin{lemma}[Informal version of Lemma~\ref{lem:propT}]\label{lem:T_pi}
	Let $T^{\pi}$ be the expected time of arrival to goal state $g$ when applying proper policy $\pi$ and starting from $s_\mathrm{init}$, then 
	\[
	T^{\pi}=\sum_{h=0}^{\infty}\sum_{\substack{s,a \\ s\neq g}}\xi_{h}^{\pi}(s,a)=\sum_{\substack{s,a \\ s\neq g}}d^{\pi}(s,a).
	\]
\end{lemma}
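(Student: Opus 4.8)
The plan is to unfold the definition of $T^\pi$ as an expectation of a counting functional and then interchange expectation with summation. Recall from the problem setup that, writing $T^\pi = T^\pi_{s_{\mathrm{init}}}$, we have $T^\pi = \E^\pi[\sum_{t=0}^\infty \mathbf{1}[s_t \neq g]\mid s_0 = s_{\mathrm{init}}]$, where the trajectory $(s_0,a_0,s_1,\ldots)$ is generated by executing $\pi$ and transitioning according to $P$. Since every summand $\mathbf{1}[s_t\neq g]$ is nonnegative, Tonelli's theorem (equivalently, the monotone convergence theorem applied to the partial sums) lets me move the expectation inside the infinite sum with no integrability concerns:
\[
T^\pi = \sum_{t=0}^\infty \E^\pi[\mathbf{1}[s_t\neq g]\mid s_0=s_{\mathrm{init}}] = \sum_{t=0}^\infty \P^\pi[s_t\neq g\mid s_0=s_{\mathrm{init}}].
\]

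Next I would rewrite each term as a marginal occupancy. By the definition of the marginal state occupancy $\xi_h^\pi(s)$ at step $h$ (with initial state $s_{\mathrm{init}}$), we have $\P^\pi[s_t\neq g] = \sum_{s\neq g}\xi_t^\pi(s)$. Because actions are drawn from $\pi(\cdot|s)$ with $\sum_a \pi(a|s)=1$, the state occupancy decomposes into the state-action occupancy via $\xi_t^\pi(s) = \sum_a \xi_t^\pi(s,a)$. Substituting and renaming the time index $t$ as $h$ gives
\[
T^\pi = \sum_{h=0}^\infty \sum_{s\neq g}\sum_a \xi_h^\pi(s,a) = \sum_{h=0}^\infty \sum_{\substack{s,a \\ s\neq g}} \xi_h^\pi(s,a),
\]
and then invoking the definition $d^\pi(s,a)=\sum_{h=0}^\infty \xi_h^\pi(s,a)$ (once more swapping the two nonnegative sums over $h$ and over $(s,a)$ by Tonelli) yields the claimed identity $\sum_{s,a,s\neq g} d^\pi(s,a)$.

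The computation is elementary; the only point requiring care is the interchange of expectation with the infinite sum, and later the interchange of the two sums over $h$ and over $(s,a)$. Both are justified because all quantities are nonnegative, so Tonelli applies and the equalities hold in $[0,\infty]$ irrespective of finiteness. Finiteness is then a separate, already-available guarantee: since $\pi$ is proper, Definition~\ref{def:proper} gives $T^\pi_s<\infty$ for all $s$, so the common value is finite and the marginal coverages $d^\pi(s,a)$ sum to a finite quantity. I do not anticipate a genuine obstacle; the lemma is essentially a bookkeeping identity expressing the expected hitting time as the total unnormalized occupancy mass over non-goal states.
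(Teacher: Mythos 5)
Your proof is correct and takes essentially the same route as the paper's: both reduce the claim to the identity $T^\pi=\sum_{h=0}^{\infty}\P^{\pi}(s_h\neq g)$ and then expand each probability into state-action occupancies. The only cosmetic difference is that the paper reaches this via the tail-sum formula $\E[T]=\sum_{h=0}^\infty\P(T>h)$ combined with the absorbing property of $g$ (so that $\{T>h\}=\{s_h\neq g\}$), whereas you start from the counting-form definition $T^\pi=\E^\pi[\sum_{t=0}^\infty\mathbf{1}[s_t\neq g]]$ given in the problem setup and apply Tonelli directly, so the absorbing property enters only implicitly through that definition.
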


\begin{remark}\label{remark:T_pi}
Lemma~\ref{lem:T_pi} explicitly reflects the connection between the expected arriving time $T^\pi$ and marginal coverage $d^\pi(s,a)$. Unlike the finite-horizon problem where $d^\pi_h$ are probability measures (\emph{e.g.} see \cite{yin2021towards}), for SSP $d^\pi(s,a)$ can be arbitrary large (for a general policy $\pi$) due to definition~\ref{def:mar_cov}. Lemma~\ref{lem:T_pi} guarantees $d^\pi(s,a)<\infty$ for proper policy $\pi$ since by Definition~\ref{def:proper} $T^\pi<\infty$, and, as a result, make our bound in Theorem~\ref{thm:OPL} valid. Note similar result is of less interests in the standard finite-horizon episodic RL since it holds trivially that $H=\sum_{h=1}^H\sum_{s,a}d^\pi_h(s,a)$ and, in SSP, this becomes important as we have undetermined horizon length. With Lemma~\ref{lem:T_pi}, we can get away with estimating the aggregated measure $T^\pi/T^\star$ (like previous online SSP papers did) and use sub-component $d^\pi(s,a)/d^\star(s,a)$ to reflect the behaviors of individual state-action pairs and achieve more instance-dependent results. 
\end{remark}

\begin{lemma}[Informal version of Lemma~\ref{lem:bound_sum_var}]\label{lem:HD}
	For any probability transition matrix $P$, policy $\pi$, 
	and any cost function $c\in[0,1]$ associated with ${SSP}(P,\pi)$. Suppose $V\in\R^{S+1}$ is any value function satisfying order property (where $V(g)=0$), i.e., $V(s)\geq \sum_{a}\pi(a|s)P_{s,a}V$ for all $s\in\mathcal{S}$, then we have 
	{
		\[
	\sum_{h=0}^{\infty}\sum_{\substack{s,a \\ s\neq g}}\xi_h^{\pi}(s,a)\Var_{P_{s,a}}(V)\leq2\norm{V}_{\infty}\cdot V(s_\mathrm{init})\leq2\norm{V}_{\infty}^2.
\]}
\end{lemma}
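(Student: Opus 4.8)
The plan is to adapt the standard law-of-total-variance / telescoping technique to the SSP occupancy measure. First I would collapse the state-action sum to a state sum. Writing $\xi_h^\pi(s,a)=\xi_h^\pi(s)\pi(a|s)$ and letting $P^\pi(\cdot|s):=\sum_a\pi(a|s)P(\cdot|s,a)$ denote the policy-averaged kernel, Jensen's inequality applied to $x\mapsto x^2$ shows that the policy-averaged conditional variance is dominated by the single one-step variance taken under $P^\pi$ (here $V^2$ denotes the entrywise square):
\[
\sum_a\pi(a|s)\Var_{P_{s,a}}(V)\;\leq\;(P^\pi V^2)(s)-\big((P^\pi V)(s)\big)^2\;=:\;\Var^\pi(s),
\]
so it suffices to bound $\sumS\xi_h^\pi(s)\Var^\pi(s)$.

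Next I would algebraically split $\Var^\pi(s)=\big[(P^\pi V^2)(s)-V(s)^2\big]+\big[V(s)^2-(P^\pi V)(s)^2\big]$. The first bracket is a \emph{telescoping} term; the second is controlled using the order property $V(s)\geq (P^\pi V)(s)$ together with $|V(s)+(P^\pi V)(s)|\leq 2\norm{V}_\infty$, which give $V(s)^2-(P^\pi V)(s)^2\leq 2\norm{V}_\infty\,(V(s)-(P^\pi V)(s))$.

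The heart of the argument is the telescoping, where the goal-state bookkeeping must be done carefully. Because $V(g)=0$ (hence $V^2(g)=0$) and $g$ is absorbing, the occupancy recursion yields, for any $f$ with $f(g)=0$, the identity $\sum_{s\neq g}\xi_h^\pi(s)(P^\pi f)(s)=\sum_{s\neq g}\xi_{h+1}^\pi(s)f(s)$. Applying it to the partial sum up to horizon $H$ with $f=V^2$ and with $f=V$ collapses both pieces into boundary terms: setting $u_H:=\sum_{s\neq g}\xi_{H+1}^\pi(s)V(s)^2$ and $w_H:=\sum_{s\neq g}\xi_{H+1}^\pi(s)V(s)$, the partial sum obeys
\[
S_H\;\leq\;\big(u_H-V(s_\mathrm{init})^2\big)+2\norm{V}_\infty\big(V(s_\mathrm{init})-w_H\big).
\]
Since $V$ is a value function we have $V\geq0$, so $V(s)^2\leq\norm{V}_\infty V(s)$ gives $u_H\leq\norm{V}_\infty w_H$; substituting and discarding the nonpositive leftovers $-\norm{V}_\infty w_H-V(s_\mathrm{init})^2$ leaves $S_H\leq 2\norm{V}_\infty V(s_\mathrm{init})\leq2\norm{V}_\infty^2$ uniformly in $H$.

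The main obstacle is handling the infinite horizon without assuming $\pi$ proper: I would therefore work with the finite partial sums $S_H$ throughout, observe that the full series converges in $[0,\infty]$ because every summand $\xi_h^\pi(s,a)\Var_{P_{s,a}}(V)$ is nonnegative, bound $S_H$ uniformly in $H$, and only then send $H\to\infty$. The key point is that the two surviving boundary quantities $u_H$ and $2\norm{V}_\infty w_H$ need not vanish individually — the inequality $u_H\leq\norm{V}_\infty w_H$ makes them cancel against each other, which is precisely what allows the bound to hold for an arbitrary (possibly improper) policy $\pi$.
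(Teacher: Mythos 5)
Your proof is correct and takes essentially the same route as the paper's proof of Lemma~\ref{lem:bound_sum_var}: Jensen's inequality over actions, the difference-of-squares factorization $V^2-(P^\pi V)^2\leq 2\norm{V}_\infty(V-P^\pi V)$ using the order property, and telescoping via the occupancy recursion $\sum_{s\neq g}\xi_h^\pi(s)(P^\pi f)(s)=\sum_{s\neq g}\xi_{h+1}^\pi(s)f(s)$ for $f(g)=0$. The only difference is bookkeeping: you track finite-horizon partial sums with explicit boundary terms $u_H,w_H$ (and cancel them via $u_H\leq\norm{V}_\infty w_H$), whereas the paper manipulates the infinite sums directly with an index shift, so your version is a slightly more careful rendering of the same argument for possibly improper $\pi$ rather than a different one.
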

\begin{remark}\label{remark:HD}
	Lemma~\ref{lem:HD} can be viewed as a dependence improvement result for SSP problem since it guarantees Theorem~\ref{thm:OPL} to achieve the minimax rate via \eqref{eqn:minimax_derivation}. More critically, it widely applies to arbitrary policies assuming the ordering condition holds for $V$. For instance, a direct upper bound using Lemma~\ref{lem:T_pi} would yield $T^\pi\norm{V}^2_\infty$ and $T^\pi$ could be very large or even $\infty$. In contrast, Lemma~\ref{lem:HD} always upper bounds by $2\norm{V}^2_\infty$ without extra dependence. Similar result was previously derived in RL, \emph{e.g.} Lemma~3.4 of \cite{yin2020asymptotically} and also \cite{ren2021nearly}, but their result only applies to $V^\pi$ due the analysis via law of total variances and ours applies to all $V$ (satisfying ordering condition) through only the telescoping sum.
\end{remark}

Now we go back to bounding \eqref{eqn:decomp_star}. First of all, by leveraging Lemma~\ref{lem:T_pi}, we are able to bound the $\infty$-norm of $\bar{V}-V^\star$ as (see Theorem~\ref{thm:crude_PO})
\begin{equation}\label{eqn:crude}
\norm{\bar{V}-V^{\star}}_\infty\leq30\sqrt{\frac{\bar{T}^\star B_\star\iota}{nd_m}}(\sqrt{B_\star}+1)
\end{equation}
which is a crude/suboptimal bound that serves as an intermediate step for the final bound. 

\textbf{What give rise to instance-dependencies.} Next, we apply \emph{empirical Bernstein inequality} for structure $(\tP'_{s,\pi^\star(s)}-P_{s,\pi^\star(s)})\bar{V}$ and $\widehat{c}(s,\pi^\star(s))-c(s,\pi^\star(s))$ separately. In particular, since both $\bar{V}$ and $\tP'_{s,\pi^\star(s)}$ depend on data, therefore Bernstein concentration cannot be directly applied. Informally, we can surpass this hurdle by decomposing
\[
(\tP'-P)\bar{V}=(\tP'-P)(\bar{V}-V^\star)+(\tP'-P){V}^\star.
\]
In this scenario, concentration can be readily applied to $(\tP'-P){V}^\star$ and crude bound \eqref{eqn:crude} is leveraged here for bounding $(\tP'-P)(\bar{V}-V^\star)\leq \norm{\tP'-P}_1\norm{\bar{V}-V^\star}_\infty$. As explained by \cite{zanette2019tighter}, the use of Bernstein concentration is the key for characterizing the structure of problem instance via the expression of conditional variance $\Var_{P_{s,a}}(V^\star)$.

\textbf{On the proof for VI-OPE.} At a high level, the proof for VI-OPE (Theorem~\ref{thm:ope}) shares the same flavor as that of Theorem~\ref{thm:OPL}. Ideally, in finite horizon setting the tighter analysis could be conducted by following the pipeline of Section~B.7 in \cite{duan2020minimax}, where the dominant error of $\widehat{V}^\pi-V^\pi$ (where $\widehat{V}^\pi=\lim_{i\rightarrow\infty} V^{(i)}$ in Algorithm~\ref{alg:VI_OPE}) can be decomposed as:
{
\[
\frac{1}{n}\sum_{i=1}^n\sum_{h=0}^\infty\frac{\xi^\pi_h(s^{(i)}_h,a^{(i)}_h)}{\xi^\mu_h(s^{(i)}_h,a^{(i)}_h)}\left(Q^\pi(s^{(i)}_h,a^{(i)}_h)-(c(s^{(i)}_h,a^{(i)}_h)+V^\pi(s^{(i)}_h,a^{(i)}_h))\right)
\]
}Applying Freedman's inequality for the above martingale structure, one can hope for a tighter rate $O(\sqrt{\sum_{s,a} d^\pi(s,a)^2\frac{\Var_{P_{s,a}}[V^\pi+c]}{n\cdot d^\mu(s,a)}})$. However, such a procedure will have technical issue for SSP problem since: (1) SSP has stationary transition $P$ and $n(s,a)$ is computed via collecting all the transitions that encounter $s,a$ for tighter dependence. This breaks the sequential ordering that is needed for martingale.\footnote{Note \cite{duan2020minimax} Corollary~1 considers time-inhomogeneous MDP and each $P_t$ can be estimated stage-wisely so the decomposition forms a martingale.} (2) Even if we have a martingale, the martingale difference will incorporate an infinite sum that could be arbitrary large. Both facts indicate Freedman's inequality cannot be directly applied due to the technical hurdle.

Lastly, the lower bound proof uses a generalized Fano's argument (Lemma~\ref{lem:gen_Fano}), followed by reducing estimation problem to testing. The packing set of hard MDP instances is based on the modification of \cite{rashidinejad2021bridging} so that Gilbert-Varshamov Lemma~\ref{lem:GV} can be applied.

\section{Discussions }


\subsection{The knowledge of $B_\star/\widetilde{B}$}\label{sec:knowladge}

While VI-OPE (Algorithm~\ref{alg:VI_OPE}) is parameter-free, our policy learning algorithm PVI-SSP (Algorithm~\ref{alg:OPO}) requires $\widetilde{B}$ in the pessimistic bonus design. Since $\widetilde{B}$ is an upper bound of $B_\star$, one natural idea is to use VI-OPE to provide an upper bound estimation given that a proper policy is provided. This idea is summarized as below. 

\begin{proposition}[Alternative offline learning algorithm VI-OPE+PVI-SSP]
	Suppose we are provided with an arbitrary proper policy $\pi$ (\emph{e.g.} some previously deployed strategy). In this scenario, one can equally halve the data $\mathcal{D}$ into $\mathcal{D}_1$ and $\mathcal{D}_2$, and use $\mathcal{D}_1$ to evaluate $V^\pi$. The $\infty$-norm of VI-OPE output serves as surrogate for $\widetilde{B}$ and uses as an input for computing $b_{s,a}$. Next, use $\mathcal{D}_2$ to run PVI-SSP (with calculated $b_{s,a}$).
\end{proposition}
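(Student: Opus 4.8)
The proposition asserts that the two-stage scheme VI-OPE+PVI-SSP is a legitimate substitute for a priori knowledge of $\tB$; what must be verified is that the first stage produces, with high probability, a valid upper bound $\tB\geq B_\star$ that is not too loose, so that the second stage inherits the guarantee of Theorem~\ref{thm:OPL}. The plan starts from the observation that, since $\pi$ is proper and $\pi^\star$ is optimal over $\Pi_{\text{prop}}$, we have $V^\star(s)\leq V^\pi(s)$ for every $s\in\mathcal{S}$, whence
\[
B_\star=\norm{V^\star}_\infty\leq \norm{V^\pi}_\infty.
\]
Therefore any high-probability upper bound on $\norm{V^\pi}_\infty$ is automatically a valid upper bound on $B_\star$, and the role of VI-OPE on $\mathcal{D}_1$ is precisely to furnish one.

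First I would run VI-OPE on $\mathcal{D}_1$ and argue that its output $\widehat{V}^\pi:=\lim_{i\to\infty}V^{(i)}$ estimates $V^\pi$ uniformly over all states, not merely at $s_{\text{init}}$. The key enabler is the from-any-state coverage built into Assumption~\ref{assum:ope} (namely $d^\pi_{\bar s}(s,a)>0\Rightarrow d^\mu_{\bar s}(s,a)>0$ for every $\bar s$), which lets me apply the bound of Theorem~\ref{thm:ope} with each fixed starting state $\bar s$ in turn, replacing $d^\pi,d^\mu$ by $d^\pi_{\bar s},d^\mu_{\bar s}$. A union bound over the $S$ states then yields $\norm{\widehat{V}^\pi-V^\pi}_\infty\leq \epsilon$ with probability $1-\delta/2$, for an $\epsilon$ of the statistical order appearing in Theorem~\ref{thm:ope} (the extra $\log S$ being absorbed into $\iota$). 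Setting $\tB\defeq\norm{\widehat{V}^\pi}_\infty+\epsilon$ delivers both the upper-bound guarantee $B_\star\leq\norm{V^\pi}_\infty\leq\tB$ and the two-sided control $\tB\leq\norm{V^\pi}_\infty+2\epsilon$, so that $\tB$ is valid yet not excessively loose. The additive slack $\epsilon$ is genuinely necessary, because the goal-skewed transition $\widetilde P$ biases VI-OPE toward \emph{underestimating} $V^\pi$, so $\norm{\widehat{V}^\pi}_\infty$ on its own need not dominate $B_\star$.

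Next I would invoke sample splitting to decouple the stages. Because $\tB$ is a deterministic function of $\mathcal{D}_1$ and $\mathcal{D}_2$ is independent of $\mathcal{D}_1$, I can condition on the success event $\{B_\star\leq\tB\leq\norm{V^\pi}_\infty+2\epsilon\}$ and treat $\tB$ as a fixed, valid upper-bound input when running PVI-SSP on $\mathcal{D}_2$. Theorem~\ref{thm:OPL} then applies verbatim to $\mathcal{D}_2$ (of size $n/2$, which only affects constants and the threshold $n_0$): it certifies that $\bar\pi$ is proper and that $V^{\bar\pi}(s_{\text{init}})-V^\star(s_{\text{init}})$ is bounded by the instance-dependent main term plus $\widetilde O(1/n)$, with every explicit occurrence of $\tB$ in the lower-order terms now replaced by $\norm{V^\pi}_\infty+2\epsilon$. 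A final union bound over the two independent failure events closes the argument at confidence $1-\delta$.

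The main obstacle is the uniform ($\infty$-norm) control of the first stage combined with the propagation of the inflated surrogate $\tB\approx\norm{V^\pi}_\infty$ through the lower-order and sample-complexity ($n_0$) terms of Theorem~\ref{thm:OPL}, since those terms scale polynomially with $\tB$ (and with $\widetilde T\leq\tB/c_{\min}$). Unlike the dominant statistical term, which still depends only on $B_\star$ through $\Var_{P_{s,a}}[V^\star+c]$, the higher-order terms now inherit a dependence on the quality of the provided anchor policy $\pi$; quantifying this trade-off, and verifying that the enlarged threshold $n_0$ is still met after the data is halved, is where the careful bookkeeping lies.
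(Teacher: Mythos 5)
The paper offers no formal proof of this proposition---only the one-line remark that $\widetilde{B}$ enters solely the $\widetilde{O}(1/n)$ terms and the sample-size threshold, so its estimation error cannot affect the dominant term---and your proposal follows exactly that intended route: split the data, use the $\infty$-norm of the VI-OPE output as the surrogate (valid since $B_\star\leq\norm{V^\pi}_\infty$ for any proper $\pi$), and rerun Theorem~\ref{thm:OPL} on the independent half. Your refinements---inflating by the statistical error $\epsilon$ to correct the underestimation bias of the goal-skewed $\widetilde{P}$ (the paper's literal prescription of $\norm{\widehat{V}^\pi}_\infty$ alone would not certify $\widetilde{B}\geq B_\star$), obtaining uniform control via the any-state coverage in Assumption~\ref{assum:ope} (which the paper's crude bound, Theorem~\ref{thm:crude_ope}, already supplies directly in $\infty$-norm without a per-state union bound), and tracking the inflated $\widetilde{B}$ through $n_0$ and the higher-order terms---are all consistent with, and more careful than, the paper's informal justification.
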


The above procedure will not deteriorate the theoretical guarantee since $\widetilde{B}$ is only used in $O(1/n)$ terms and the estimation error can only be higher order terms. This means we will end up with the same dominant term as Theorem~\ref{thm:OPL}.

\subsection{On higher order terms.}
In our analysis of Theorem~\ref{thm:OPL}, while the dominant $\widetilde{O}(\sqrt{1/n})$ term is near-optimal, the higher order term $\widetilde{O}(1/n)$ is not and depends on the parameters including $\widetilde{T}$, $\widetilde{B}$ and $d_m$ (\emph{e.g.} check the last line of \eqref{eqn:final_derivation}). In particular, if one can remove the polynomial dependence of $\widetilde{T}$, then the result is called \emph{horizon-free} \citep{tarbouriech2021stochastic}. One potential approach for addressing the higher order dependence could be the recent development of robust estimation in RL \citep{wagenmaker2021first}. As the initial attempt for offline SSP, this is beyond our scope and we leave it as the future work.

\subsection{Future directions}

\textbf{SSP under weaker conditions.} Following previous works, we consider stochastic shortest path problem with a discrete action space $\mathcal{A}$ and non-negative cost bounded by $c\in[0,1]$. However, the convergence of SSP can hold under much weaker conditions. For instance, \cite{bertsekas2013stochastic} shows under \emph{compactness and continuity condition}, \emph{i.e.} for each state $s$ the admissible action set $\mathcal{A}(s)$ is a  compact metric space and a subset of $\mathcal{A}$ where (for all $s'$) transition $P(s'|s,\cdot)$ are continuous functions over $\mathcal{A}(s)$ and the cost function $c(s,\cdot)$ is lower semi-continuous over $\mathcal{A}(s)$, value iteration/policy iteration will still work under mild assumptions. This extends our setting (\emph{e.g.} cost can even be negative) and how to conduct SSP learning in this case remains open. 

\textbf{Extension to linear MDP case.} Another natural and promising generalization of the current study is the offline linear MDP for SSP problem. In the study of offline RL with linear MDPs, \cite{jin2021pessimism} shows the provable efficiency, \cite{zanette2021provable} improves the result in the \emph{linear Bellman complete} setting and \cite{yin2022near} leverages variance-reweighting for least square objective to obtain the near-optimal result. Adopting their useful results in offline SSP problem is hopeful.

\section{Conclusion}\label{sec:conclusion}

In this paper, we initiate the study of \emph{offline stochastic shortest path} problem. We consider both \emph{offline policy evaluation} (OPE) and \emph{offline policy learning} tasks and propose the simple value-iteration-based algorithms (VI-OPE and PVI-SSP) that yield strong theoretical guarantees for both evaluation and learning tasks. To complement the discussion, we also provide an information-theoretical lower bound and it certifies PVI-SSP is minimax rate optimal. We hope our work can draw further attention for studying offline SSP setting.

\begin{acknowledgements} 
   Ming Yin and Yu-Xiang Wang are partially supported by NSF Awards \#2007117 and \#2003257. MY would like to thank Tongzheng Ren for helpful discussions.
\end{acknowledgements}
\bibliography{sections/stat_rl}

\appendix

\section{General Bellman equation for a fixed policy}\label{sec:gen_Bellman}
In this section, we prove Proposition~\ref{prop:Bellman}. In particular, the first part of the proposition for $V^\star$ has been covered in \cite{bertsekas1991analysis}. Therefore, we only consider the second part, which is a Bellman equation for fixed policy. Moreover, we do not constraint to proper policy and our result holds true for all the policies.
\begin{lemma}[General Bellman equation for fixed policy $\pi$]\label{lem:general_Bellman}
	Let $\pi$ be a fixed policy, proper or improper and cost $c\geq 0$ for the SSP. Then the following Bellman equations hold: 
	\begin{equation}
	\begin{aligned}
	Q^{\pi}(s, a)=c(s, a)+P_{s, a} V^{\pi}, \quad V^{\pi}(s)=\E_{a\sim\pi(\cdot|s)} [Q^{\pi}(s, a)].
	\end{aligned}
	\end{equation}
\end{lemma}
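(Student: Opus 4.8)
The plan is to establish both identities by conditioning on the first step of the trajectory and passing to the limit $T\to\infty$, exploiting the non-negativity $c\geq 0$ so that every quantity is well-defined in $[0,\infty]$ even for improper $\pi$. Concretely, for each finite horizon $T$ I would work with the truncated value $V_T^\pi(s):=\E^\pi[\sum_{t=0}^T c(s_t,a_t)\mid s_0=s]$ and its $Q$-analogue $Q_T^\pi(s,a)$. Both are finite (a sum of finitely many bounded terms) and, because $c\geq 0$, monotonically non-decreasing in $T$; hence their monotone limits exist in $[0,\infty]$ and equal $V^\pi$ and $Q^\pi$ by definition.

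For the second identity $V^\pi(s)=\E_{a\sim\pi(\cdot|s)}[Q^\pi(s,a)]$, I would apply the tower property over the first action, writing $V_T^\pi(s)=\sum_{a}\pi(a|s)\,Q_T^\pi(s,a)$, a finite sum over the finite action set $\mathcal{A}$. Letting $T\to\infty$ and exchanging the limit with this finite sum—legitimate since each summand is monotone and the weights $\pi(a|s)\geq 0$—yields $V^\pi(s)=\sum_{a}\pi(a|s)\,Q^\pi(s,a)$, which is the claim. For the first identity $Q^\pi(s,a)=c(s,a)+P_{s,a}V^\pi$, I would peel off the first cost term, $Q_T^\pi(s,a)=c(s,a)+\E^\pi[\sum_{t=1}^T c(s_t,a_t)\mid s_0=s,a_0=a]$, then condition on $s_1=s'\sim P(\cdot\mid s,a)$ and invoke the Markov property: after reindexing $t\mapsto t-1$ the tail is a fresh trajectory from $s'$, giving $Q_T^\pi(s,a)=c(s,a)+\sum_{s'\in\mathcal{S}'}P(s'|s,a)\,V_{T-1}^\pi(s')$. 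Since $\mathcal{S}'$ is finite, sending $T\to\infty$ and exchanging limit with sum produces $Q^\pi(s,a)=c(s,a)+\sum_{s'}P(s'|s,a)V^\pi(s')=c(s,a)+P_{s,a}V^\pi$. The absorbing goal state needs only a one-line check: $V^\pi(g)=0$ together with $c(g,a)=0$ and $P(g|g,a)=1$ makes both equations trivial there.

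The main obstacle—really the only delicate point—is justifying the limit/sum interchanges uniformly in the possibly-infinite-valued regime that improper policies create. The resolution is that non-negativity of $c$ makes $V_T^\pi$ and $Q_T^\pi$ monotone in $T$, so each interchange is a finite-dimensional instance of monotone convergence (Tonelli), valid verbatim in the extended reals $[0,\infty]$; no integrability or finiteness of $V^\pi$ is ever required. This is precisely what lets the lemma dispense with the properness hypothesis imposed in Proposition~\ref{prop:Bellman}, so the general Bellman equation holds for every policy, proper or improper.
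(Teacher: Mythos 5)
Your proposal is correct and follows essentially the same route as the paper's proof: both condition on the first step via the law of total expectation and homogeneity of $P$, use non-negativity of $c$ to make the truncated values $V_T^\pi, Q_T^\pi$ monotone so their limits exist in $[0,\infty]$, and then exchange the limit with the finite sums over $\mathcal{S}'$ and $\mathcal{A}$. Your explicit monotone-convergence justification of the interchange (and the one-line check at $g$) is if anything slightly more careful than the paper's brief ``the summation is a finite sum'' remark, but it is the same argument.
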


\begin{proof}[Proof of Lemma~\ref{lem:general_Bellman}]
	By definition of $Q^\pi$, we have 
	\begin{align*}
	Q^{\pi}(s,a)=\lim_{T\rightarrow\infty}\mathbb{E}_{\pi}[\sum_{h=0}^Tc(s_h,a_h)|s_0=s,a_0=a].
	\end{align*}
	We can rewrite term $\mathbb{E}_{\pi}[\sum_{h=0}^Tc(s_h,a_h)|s_0=s,a_0=a]$ as
	\begin{align*}
	\mathbb{E}_{\pi}[\sum_{h=0}^Tc(s_h,a_h)|s_0=s,a_0=a]
	&=c(s,a)+\sum_{s'}\P(s'|s,a)\mathbb{E}_{\pi}[\sum_{h=1}^Tc(s_h,a_h)|s_1=s']\\
	&=c(s,a)+\sum_{s'}\P(s'|s,a)\left\{\mathbb{E}_{\pi}[\sum_{h=0}^{T-1}c(s_h,a_h)|s_0=s']\right\},
	\end{align*}
	where the first equality is by law of total expectation. The second equality follows from the fact that the transition kernel $P$ is \textbf{homogeneous} in SSP.

	Define the sequence $V_T(s):=\left\{\mathbb{E}_{\pi}[\sum_{h=0}^{T-1}c(s_h,a_h)|s_0=s]\right\}$. Since for any state-action pair $(s,a)$, $c(s,a)\geq0$, we know that the sequence $\{V_T(s)\}_{T=1}^{\infty}$ is non-decreasing. It implies that $\lim_{T\rightarrow\infty}V_T(s)$ exists. ($\lim_{T\rightarrow\infty}V_T(s)$  either diverges to $+\infty$ or converges to a positive number.) It follows that (the following switching the order of limit and summation is valid since the summation is finite sum)

	\begin{align}
	\lim_{T\rightarrow\infty}\sum_{s'}\P(s'|s,a)\left\{\mathbb{E}_{\pi}[\sum_{h=0}^{T-1}c(s_h,a_h)|s_0=s']\right\}=\sum_{s'}\P(s'|s,a)\lim_{T\rightarrow\infty}\left\{\mathbb{E}_{\pi}[\sum_{h=0}^{T-1}c(s_h,a_h)|s_0=s']\right\}. 
	\end{align}
	
	Combine the above two equalities together, we can get
	\begin{align}
	Q^{\pi}(s,a)&=c(s,a)+\sum_{s'}\P(s'|s,a)\lim_{T\rightarrow\infty}\left\{\mathbb{E}_{\pi}[\sum_{h=0}^{T-1}c(s_h,a_h)|s_0=s']\right\}\notag\\
	&=c(s,a)+\sum_{s'}\P(s'|s,a)V^{\pi}(s').\notag\\
	\end{align}
	From the definition of value function, we have (where the second line uses law of total expectation)
	\begin{align*}
	V^{\pi}(s)&=\lim_{T\rightarrow\infty}\mathbb{E}_{\pi}[\sum_{h=0}^Tc(s_h,a_h)|s_0=s]\\
	&=\lim_{T\rightarrow\infty}\E_{a_0}[\mathbb{E}_{\pi}[\sum_{h=0}^Tc(s_h,a_h)|s_0=s,a_0]|s_0=s]\\
	&=\lim_{T\rightarrow\infty}\sum_{a}\pi(a|s)\mathbb{E}_{\pi}[\sum_{h=0}^Tc(s_h,a_h)|s_0=s,a_0=a]
	\end{align*}
	Similar to $\lim_{T\rightarrow\infty}\mathbb{E}_{\pi}[\sum_{h=0}^Tc(s_h,a_h)|s_0=s]$, we can prove that $\lim_{T\rightarrow\infty}\mathbb{E}_{\pi}[\sum_{h=0}^Tc(s_h,a_h)|s_0=s,a_0=a]$ exists. Then we have 
	\begin{align*}
	V^{\pi}(s)
	&=\lim_{T\rightarrow\infty}\sum_{a}\pi(a|s)\mathbb{E}_{\pi}[\sum_{h=0}^Tc(s_h,a_h)|s_0=s,a_0=a]\\
	&=\sum_{a}\pi(a|s)\lim_{T\rightarrow\infty}\mathbb{E}_{\pi}[\sum_{h=0}^Tc(s_h,a_h)|s_0=s,a_0=a]=\sum_{a}\pi(a|s)Q^{\pi}(s,a).
	\end{align*}
\end{proof}

\begin{remark}
	Essentially, the above proof only requires $c(s,a)\geq 0$. Moreover, even if the general Bellman equation holds, it does not imply $c^\pi+P^\pi(\cdot)$ is a contraction (\emph{i.e.} doing value iteration for general policy $\pi$ might not converge to $V^\pi$).
\end{remark}

\section{Results for general Stochastic Shortest path problem}
\begin{lemma}
	\label{lem:compare_T}
	For any two contraction mapping $T_1$ and $T_2$ that are monotone ({i.e.} for any vector greater than $V\geq V'$, it holds $T_1V\geq T_1V'$ and $T_2V\geq T_2V'$) on the metric space $\R^{\mathcal{S'}}$. Suppose $V_1$ and $V_2$ are the fixed points for $T_1$ and $T_2$ respectively. If we have $T_1(V)(s)\geq T_2(V)(s)$ for any $s\in\mathcal{S}'$, then we have $V_1(s)\geq V_2(s)$ for any $s\in\mathcal{S}'$.
\end{lemma}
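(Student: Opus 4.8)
The plan is to exploit the monotonicity and contraction properties together by iterating the operators from a common starting point and passing to the limit. First I would fix an arbitrary state and try to leverage the hypothesis $T_1(V)(s) \geq T_2(V)(s)$ for \emph{all} $V$ and $s$, not just at the fixed points. The natural idea is to start both iterations at the same vector and track the inequality through every application of the operators.

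Concretely, I would set $V_2^{(0)} \defeq V_2$ (the fixed point of $T_2$) and build the sequence $V^{(k)} \defeq T_1^k V_2$ obtained by repeatedly applying $T_1$ to $V_2$. The base case is immediate: by hypothesis, $V^{(1)}(s) = T_1(V_2)(s) \geq T_2(V_2)(s) = V_2(s)$ for every $s$, since $V_2$ is the fixed point of $T_2$. For the inductive step, suppose $V^{(k)}(s) \geq V_2(s)$ for all $s$. Then applying $T_1$ and using monotonicity of $T_1$ gives $V^{(k+1)}(s) = T_1(V^{(k)})(s) \geq T_1(V_2)(s)$, and again by hypothesis $T_1(V_2)(s) \geq T_2(V_2)(s) = V_2(s)$. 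Hence $V^{(k+1)}(s) \geq V_2(s)$, completing the induction and establishing $V^{(k)}(s) \geq V_2(s)$ for all $k$ and all $s$.

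The final step is to take the limit in $k$. Since $T_1$ is a contraction on $\R^{\mathcal{S}'}$ with fixed point $V_1$, the contraction mapping theorem guarantees that $V^{(k)} = T_1^k V_2 \to V_1$ in the metric (hence pointwise, as $\R^{\mathcal{S}'}$ is finite-dimensional). Passing to the limit in the inequality $V^{(k)}(s) \geq V_2(s)$, which is preserved under limits, yields $V_1(s) \geq V_2(s)$ for every $s \in \mathcal{S}'$, as desired.

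I do not expect any serious obstacle here; the argument is a clean interplay of monotonicity (to propagate the inequality through iterations) and contraction (to guarantee convergence to the fixed point). The only point requiring mild care is to make sure the hypothesis $T_1(V)(s) \geq T_2(V)(s)$ is used at the \emph{specific} vectors arising in the iteration — in this proof it is only ever invoked at $V_2$ itself, so the full strength ``for all $V$'' is not even needed, just $T_1(V_2) \geq T_2(V_2)$ together with the monotonicity of $T_1$. An equally valid alternative would iterate $T_2$ from $V_1$ using monotonicity of $T_2$; either direction works, and I would pick whichever makes the exposition cleanest.
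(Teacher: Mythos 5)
Your proof is correct and is essentially the paper's argument in mirror image: the paper iterates $T_2$ starting from $V_1$ (using $V_1 = T_1V_1 \geq T_2V_1$, monotonicity of $T_2$, and $(T_2)^k V_1 \to V_2$), which is precisely the ``equally valid alternative'' you flag in your last paragraph, whereas you iterate $T_1$ from $V_2$. Both directions combine the hypothesis invoked at a single fixed point, a monotone induction, and convergence of the contraction iterates, so there is nothing to fix or add.
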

\begin{proof}
	First we have $T_1V_1\geq T_2V_1$. Since $V_1$ is the fixed point of $T_1$, we know $V_1:=T_1V_1\geq T_2V_1$. By monotone property with recursion, we have that
	\begin{equation}
	V_1\geq (T_2)^kV_1.
	\end{equation}
	Since $V_2$ is the fixed point of $T_2$, we have
	\begin{align*}
	\lim_{k\rightarrow\infty}(T_2)^kV_1=V_2.
	\end{align*}
	Combine the above inequalities together we can get $V_1\geq V_2$.
\end{proof}

\section{Convergences for Algorithm~\ref{alg:VI_OPE}}
\begin{lemma}
\label{lem:contraction}
$\widehat{\mathcal{T}}^{\pi}:\mathbb{R}^\mathcal{S}\times\{0\}\rightarrow \mathbb{R}^\mathcal{S}\times\{0\}$ is a contraction mapping, i.e., $\forall V_1,V_2\in\mathbb{R}^\mathcal{S'}$, $V_1(g)=V_2(g)=0$, we have
\begin{align}
    \norm{\widehat{\mathcal{T}}^{\pi}
    V_1
    -\widehat{\mathcal{T}}^{\pi}V_2}_\infty
    \leq \rho \norm{V_1-V_2}_\infty,
\end{align}
Here $\rho:=\max_{\substack{s,a \\ s\neq g}}(\frac{n_{s,a}}{n_{s,a}+1})<1$ and $\widehat{\mathcal{T}}^{\pi}V(s)=\langle \pi(\cdot|s), \widehat{c}(s,\cdot)+\widetilde{P}_{s,\cdot}V\rangle$.
\end{lemma}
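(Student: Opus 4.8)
The plan is to show the one-step contraction factor directly from the definition of $\widehat{\mathcal{T}}^\pi$ by exploiting the fact that the perturbed kernel $\widetilde{P}_{s,a}$ places strictly positive mass on the goal state $g$, where any value function in the domain is pinned to zero. Concretely, for any $V_1,V_2\in\mathbb{R}^{\mathcal{S}'}$ with $V_1(g)=V_2(g)=0$, I would first fix a state $s\in\mathcal{S}$ (the case $s=g$ is trivial since both sides evaluate to $0$) and expand the difference
\begin{align*}
\widehat{\mathcal{T}}^\pi V_1(s)-\widehat{\mathcal{T}}^\pi V_2(s)
=\sum_a \pi(a|s)\,\widetilde{P}_{s,a}(V_1-V_2),
\end{align*}
since the estimated cost $\widehat{c}(s,a)$ cancels. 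The key observation is that, letting $W:=V_1-V_2$, we have $W(g)=0$, so $\widetilde{P}_{s,a}W$ only sums over the $S$ non-goal next-states, and the total mass $\widetilde{P}_{s,a}$ assigns to $\mathcal{S}$ (rather than $g$) is at most $\frac{n_{s,a}}{n_{s,a}+1}$.

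The main step is then the bound
\begin{align*}
\big|\widetilde{P}_{s,a}W\big|
=\Big|\sum_{s'\neq g}\widetilde{P}(s'|s,a)\,W(s')\Big|
\leq \Big(\sum_{s'\neq g}\widetilde{P}(s'|s,a)\Big)\norm{W}_\infty
\leq \frac{n_{s,a}}{n_{s,a}+1}\norm{W}_\infty.
\end{align*}
The last inequality is where the perturbation in Line~11 of Algorithm~\ref{alg:VI_OPE} is crucial: by construction $\widetilde{P}(g|s,a)=\frac{n_{s,a}}{n_{s,a}+1}\widehat{P}(g|s,a)+\frac{1}{n_{s,a}+1}\geq \frac{1}{n_{s,a}+1}$, so the mass remaining on non-goal states satisfies $\sum_{s'\neq g}\widetilde{P}(s'|s,a)=1-\widetilde{P}(g|s,a)\leq \frac{n_{s,a}}{n_{s,a}+1}$. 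Taking a convex combination over $a$ with weights $\pi(a|s)$ preserves this bound, giving $|\widehat{\mathcal{T}}^\pi V_1(s)-\widehat{\mathcal{T}}^\pi V_2(s)|\leq \rho\norm{W}_\infty$ with $\rho=\max_{s,a,\,s\neq g}\frac{n_{s,a}}{n_{s,a}+1}$.

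Finally, I would take the maximum over $s\in\mathcal{S}$ on the left-hand side to conclude $\norm{\widehat{\mathcal{T}}^\pi V_1-\widehat{\mathcal{T}}^\pi V_2}_\infty\leq \rho\norm{V_1-V_2}_\infty$, and note $\rho<1$ since $n_{s,a}<\infty$ for every $(s,a)$. I do not anticipate a serious obstacle here; the only point requiring care is verifying that the domain $\mathbb{R}^\mathcal{S}\times\{0\}$ is preserved by $\widehat{\mathcal{T}}^\pi$ (so that the fixed-point/contraction-mapping argument applies), which follows because $V(g)=0$ forces $\widehat{\mathcal{T}}^\pi V(g)=0$ by the absorbing, zero-cost property of $g$. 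The essential ingredient throughout is that $W$ vanishes at $g$, which lets the strictly-positive goal-transition probability translate into a uniform contraction factor strictly below one.
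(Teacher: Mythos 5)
Your proposal is correct and follows essentially the same route as the paper's proof: both arguments handle $s=g$ trivially, cancel the cost term, and exploit $W(g)=V_1(g)-V_2(g)=0$ so that only the non-goal mass of $\widetilde{P}_{s,a}$ matters, which is at most $\frac{n_{s,a}}{n_{s,a}+1}$ by the injected transition to $g$. The only cosmetic difference is that you bound the non-goal mass via $1-\widetilde{P}(g|s,a)\leq\frac{n_{s,a}}{n_{s,a}+1}$, whereas the paper pulls out the exact factor $\widetilde{P}(s'|s,a)=\frac{n_{s,a}}{n_{s,a}+1}\widehat{P}(s'|s,a)$ for $s'\neq g$ --- these are equivalent.
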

\begin{proof}[Proof of Lemma~\ref{lem:contraction}]
We first prove the result for state $g$. Since $g$ is a zero-cost absorbing state, we have for any $a\in\mathcal{A}$, $\widehat{c}(g,a)=0$ and $\widetilde{P}_{g,a}V=V(g)$. Then for any $V\in\mathbb{R}^\mathcal{S}\times\{0\}$, $V(g)=0$, we have 
\begin{align}
    \widehat{\mathcal{T}}^{\pi}V(g)=\langle \pi(\cdot|g), \widehat{c}(g,\cdot)+\widetilde{P}_{g,\cdot}V\rangle=0.
\end{align}
Therefore $\widehat{\cT}^{\pi}V_1(g)-\widehat{\mathcal{T}}^{\pi}V_2(g)=0\leq \rho \norm{V_1-V_2}_\infty$. Next we only need to prove for all state $\forall s\neq g$. Indeed,
\begin{align}
   |\widehat{\cT}^{\pi}V_1(s)-\widehat{\mathcal{T}}^{\pi}V_2(s)|&= |\langle \pi(\cdot|s), \widetilde{P}_{s,\cdot}(V_1-V_2)\rangle|\notag\\
   &\leq\max_a|{\widetilde{P}_{s,a}(V_1-V_2)}|\notag\\
   &=\max_a|\sum_{s'\neq g}\widetilde{P}(s'|s,a)(V_1(s')-V_2(s'))|\notag\\
   &\leq\max_a(\frac{n_{s,a}}{n_{s,a}+1})|\sum_{s'\neq g}\widehat{P}(s'|s,a)(V_1(s')-V_2(s'))|\notag\\
   &\leq\max_a(\frac{n_{s,a}}{n_{s,a}+1})\norm{V_1-V_2}_{\infty}.
\end{align}
where the second inequality is due to $V_1(g)=V_2(g)=0$ and the third inequality is by the definition of $\widetilde{P}$. Take the supremum over $s$, we get
\begin{align}
    \norm{\widehat{\cT}^{\pi}V_1-\widehat{\mathcal{T}}^{\pi}V_2}_\infty\leq\max_{\substack{s,a \\ s\neq g}}(\frac{n_{s,a}}{n_{s,a}+1})\norm{V_1-V_2}_{\infty}
\end{align}
\end{proof}

\begin{lemma}\label{lem:bound}
$\forall \pi\in\Pi_{\text{proper}}$, define $\widehat{V}^\pi:=\lim_{i\rightarrow\infty} V^{(i)}$ (Note by Lemma~\ref{lem:contraction} this limit always exists since $\widehat{V}^\pi$ is the fixed point of $\widehat{\mathcal{T}}^\pi$ and $V^{(i+1)}=\widehat{\mathcal{T}}^\pi V^{(i)}$). Then (recall $\rho:=\max_{\substack{s,a \\ s\neq g}}(\frac{n_{s,a}}{n_{s,a}+1})<1$)
\[
||\widehat{V}^{\pi}||_{\infty}\leq\max_{\substack{s,a \\ s\neq g}}n(s,a)+1.
\]
\end{lemma}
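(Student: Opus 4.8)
The plan is to bound $\|\widehat{V}^\pi\|_\infty$ by exploiting the fact that $\widehat{V}^\pi$ is the unique fixed point of the contraction $\widehat{\mathcal{T}}^\pi$ (guaranteed by Lemma~\ref{lem:contraction}) and then reading off a uniform bound from the structure of the perturbed transition kernel $\widetilde{P}$. Since $\widehat{V}^\pi = \widehat{\mathcal{T}}^\pi \widehat{V}^\pi$, at every state $s\neq g$ we have the componentwise identity $\widehat{V}^\pi(s) = \langle \pi(\cdot|s), \widehat{c}(s,\cdot)+\widetilde{P}_{s,\cdot}\widehat{V}^\pi\rangle$. First I would use this fixed-point equation to write $\widehat{V}^\pi$ in closed form as a Neumann series, or equivalently observe that $\widehat{V}^\pi$ is exactly the cost-to-go of the perturbed SSP whose transition is $\widetilde{P}$ and cost is $\widehat{c}$. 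Because $\widetilde{P}$ injects probability $\tfrac{1}{n_{s,a}+1}$ into the absorbing goal $g$ at each step, the induced chain reaches $g$ geometrically fast regardless of $\pi$.

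The cleanest route is to analyze the expected hitting time under $\widetilde{P}$. At each visited state-action pair the chain jumps to $g$ with probability at least $\tfrac{1}{\max_{s,a}n(s,a)+1}$, so the expected number of non-goal steps before absorption is at most $\max_{s,a}n(s,a)+1$ starting from any state. Since the per-step cost $\widehat{c}(s,a)$ lies in $[0,1]$ (the empirical average of costs in $[0,1]$, with the default $c_{\min}\le 1$ on unvisited pairs), the total expected cost — which is precisely $\widehat{V}^\pi(s)$ — is bounded by the expected hitting time, giving $\widehat{V}^\pi(s)\le \max_{s,a}n(s,a)+1$ uniformly. Taking the supremum over $s$ yields the claim. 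I would make the hitting-time argument rigorous either by a direct geometric-decay estimate on the survival probability $\prod$ over the perstep non-absorption probabilities, or by bounding the Neumann series $\sum_{k\ge 0}(\widetilde{P}^\pi)^k \widehat{c}^\pi$ in $\infty$-norm using that the sub-stochastic part of $\widetilde{P}^\pi$ restricted to non-goal states has every row sum at most $\rho=\max_{s,a\,:\,s\neq g}\tfrac{n_{s,a}}{n_{s,a}+1}$.

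Concretely, restricting to the non-goal coordinates, let $\widetilde{P}^\pi_{\mathcal{S}}$ denote the sub-stochastic matrix with entries $\sum_a \pi(a|s)\widetilde{P}(s'|s,a)$ for $s,s'\neq g$; each of its rows sums to at most $\rho<1$, so $\|\widetilde{P}^\pi_{\mathcal{S}}\|_\infty\le\rho$ and the Neumann series $\sum_{k\ge 0}(\widetilde{P}^\pi_{\mathcal{S}})^k$ converges with $\infty$-norm at most $\tfrac{1}{1-\rho}$. Then $\|\widehat{V}^\pi\|_\infty = \|\sum_{k\ge 0}(\widetilde{P}^\pi_{\mathcal{S}})^k \widehat{c}^\pi\|_\infty \le \tfrac{1}{1-\rho}\|\widehat{c}^\pi\|_\infty \le \tfrac{1}{1-\rho}$, and since $\tfrac{1}{1-\rho}=\max_{s,a\,:\,s\neq g}(n_{s,a}+1)\le \max_{s,a}n(s,a)+1$, the bound follows. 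The main obstacle I anticipate is purely bookkeeping: one must be careful that the goal coordinate is correctly projected out (using $\widehat{V}^\pi(g)=0$, as in the proof of Lemma~\ref{lem:contraction}) so that the effective operator on $\mathbb{R}^{\mathcal{S}}$ is genuinely sub-stochastic with row sums $\le\rho$, and that $\|\widehat{c}^\pi\|_\infty\le 1$ holds including on unvisited pairs where $\widehat{c}$ defaults to $c_{\min}\le 1$. No step requires a delicate probabilistic argument once the contraction/sub-stochasticity structure from Lemma~\ref{lem:contraction} is in hand.
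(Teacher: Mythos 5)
Your proposal is correct and takes essentially the same route as the paper: both arguments reduce to the two facts that $\norm{\widehat{c}}_\infty\le 1$ and that the non-goal restriction of $\widetilde{P}^\pi$ has row sums at most $\rho$ (using $\widehat{V}^\pi(g)=0$), yielding $\norm{\widehat{V}^\pi}_\infty\le \frac{1}{1-\rho}=\max_{s,a:\,s\neq g}(n_{s,a}+1)$. The paper simply applies the fixed-point identity $\widehat{V}^\pi=\widehat{\mathcal{T}}^\pi\widehat{V}^\pi$ once and solves the self-bounding inequality $\norm{\widehat{V}^\pi}_\infty\le 1+\rho\norm{\widehat{V}^\pi}_\infty$, which is just the closed form of your Neumann-series summation.
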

\begin{proof}[Proof of Lemma~\ref{lem:bound}]
Recall the definition, $\hV^{\pi}=\hT^{\pi}\widehat{V}^{\pi}$
\begin{align}
    &\norm{\hV^\pi}_{\infty}=\norm{\hT^{\pi}\widehat{V}^{\pi}}_{\infty}\\
    &\leq \max_{s, s\neq g}|\langle \pi(\cdot|s), \widehat{c}(s,\cdot)\rangle|+ \max_{s, s\neq g}|\langle \pi(\cdot|s), \widetilde{P}_{s,\cdot}\hV^{\pi}\rangle|\\
    &\leq\max_{s, s\neq g}|\langle \pi(\cdot|s), \widehat{c}(s,\cdot)\rangle|+\max_{\substack{s,a \\ s\neq g}}|\widetilde{P}_{s,a}\hV^{\pi}|\notag\\
    &\leq1+\max_{\substack{s,a \\ s\neq g}}|\widetilde{P}_{s,a}\hV^{\pi}|\notag\\
   &\leq1+\max_{\substack{s,a \\ s\neq g}}\{(\frac{n_{s,a}}{n_{s,a}+1})|\sum_{s'\neq g}\widehat{P}(s'|s,a)\hV^{\pi}(s')|\}\notag\\
   &\leq1+\rho\norm{\hV^{\pi}}_{\infty}.
\end{align}
\end{proof}
The first inequality follows from $\widehat{\cT}^{\pi}\hV^{\pi}(g)=0$ and triangle inequality.
Since $\rho<1$, we can get $\norm{\hV^{\pi}}_{\infty}\leq\frac{1}{1-\rho}$. From the definition of $\rho$, we can conclude the proof.

\begin{lemma}\label{lem:diff_bound}
$\norm{\widehat{V}^{\pi}-V^{(i)}}_\infty\leq\frac{\epsilon_{\text{OPE}}}{1-\rho}$, where $\rho:=\max_{\substack{s,a \\ s\neq g}}(\frac{n_{s,a}}{n_{s,a}+1})<1$ as in Lemma~\ref{lem:contraction} and $V^{(i)}$ is the output of Algorithm~\ref{alg:VI_OPE}.
\end{lemma}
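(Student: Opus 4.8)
The plan is to treat this as a standard a priori error estimate for a fixed-point iteration driven by a contraction. By Lemma~\ref{lem:contraction}, the empirical Bellman operator $\widehat{\mathcal{T}}^\pi$ is a contraction on $\mathbb{R}^{\mathcal{S}'}$ (under the convention $V(g)=0$) with modulus $\rho=\max_{s,a,s\neq g}(\frac{n_{s,a}}{n_{s,a}+1})<1$, so the contraction mapping theorem guarantees a unique fixed point, namely $\widehat{V}^\pi=\lim_{i\rightarrow\infty}V^{(i)}$, and the iterates produced by Algorithm~\ref{alg:VI_OPE} satisfy $V^{(i+1)}=\widehat{\mathcal{T}}^\pi V^{(i)}$. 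The only structural fact I need from the algorithm itself is its stopping rule: the while loop of Algorithm~\ref{alg:VI_OPE} exits precisely when $\norm{V^{(i)}-V^{(i-1)}}_\infty\leq\epsilon_{\text{OPE}}$, at which point $V^{(i)}$ is returned.

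First I would establish the generic one-step bound $\norm{\widehat{V}^\pi-V}_\infty\leq\frac{1}{1-\rho}\norm{\widehat{\mathcal{T}}^\pi V-V}_\infty$ valid for any admissible $V$. This follows from a single triangle inequality together with the contraction property:
\begin{align*}
\norm{\widehat{V}^\pi-V}_\infty\leq\norm{\widehat{\mathcal{T}}^\pi\widehat{V}^\pi-\widehat{\mathcal{T}}^\pi V}_\infty+\norm{\widehat{\mathcal{T}}^\pi V-V}_\infty\leq\rho\norm{\widehat{V}^\pi-V}_\infty+\norm{\widehat{\mathcal{T}}^\pi V-V}_\infty,
\end{align*}
and then rearranging using $\rho<1$. (Equivalently one can expand $\widehat{V}^\pi-V^{(i)}=\sum_{j\geq i}(V^{(j+1)}-V^{(j)})$ as a telescoping tail and bound the $j$-th term geometrically by $\rho^{\,j-i}\norm{V^{(i+1)}-V^{(i)}}_\infty$; both routes give the same factor.)

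Second, I would specialize this bound to $V=V^{(i)}$, the output iterate, which turns $\norm{\widehat{\mathcal{T}}^\pi V^{(i)}-V^{(i)}}_\infty$ into $\norm{V^{(i+1)}-V^{(i)}}_\infty$. Applying the contraction once more and invoking the stopping rule gives $\norm{V^{(i+1)}-V^{(i)}}_\infty\leq\rho\norm{V^{(i)}-V^{(i-1)}}_\infty\leq\rho\,\epsilon_{\text{OPE}}$. Chaining the two estimates then yields $\norm{\widehat{V}^\pi-V^{(i)}}_\infty\leq\frac{\rho}{1-\rho}\epsilon_{\text{OPE}}\leq\frac{\epsilon_{\text{OPE}}}{1-\rho}$ since $\rho<1$, which is exactly the claim.

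There is essentially no hard step here: the argument is a routine Banach-iteration estimate built entirely on the contraction factor from Lemma~\ref{lem:contraction}. The one point requiring a little care is the bookkeeping between the quantity controlled by the stopping rule, $\norm{V^{(i)}-V^{(i-1)}}_\infty$, and the quantity that naturally appears in the a priori bound, $\norm{V^{(i+1)}-V^{(i)}}_\infty$; the extra factor of $\rho$ relating them is harmless and in fact only improves the constant, so the stated bound follows without any loss.
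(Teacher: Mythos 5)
Your proof is correct and takes essentially the same approach as the paper: the paper's proof telescopes $\widehat{V}^{\pi}-V^{(i)}=\sum_{j\geq i}(V^{(j+1)}-V^{(j)})$ and bounds the tail geometrically with the contraction modulus $\rho$ from Lemma~\ref{lem:contraction} — exactly the alternative route you note parenthetically, which is the same Banach a priori estimate as your triangle-inequality version. If anything, your explicit bookkeeping relating the stopping-rule quantity $\norm{V^{(i)}-V^{(i-1)}}_\infty$ to $\norm{V^{(i+1)}-V^{(i)}}_\infty$ via one extra application of the contraction is slightly more careful than the paper, whose displayed chain uses $\norm{V^{(i+1)}-V^{(i)}}_\infty\leq\epsilon_{\text{OPE}}$ implicitly.
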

\begin{proof}[Proof of Lemma~\ref{lem:diff_bound}]
	Using definition $\widehat{V}^\pi:=\lim_{j\rightarrow\infty} V^{(j)}$ and the telescoping sum we obtain
\begin{align*}
    \norm{\widehat{V}^{\pi}-V^{(i)}}_{\infty}&\leq\sum_{j=i}^{\infty}\norm{V^{(j+1)}-V^{(j)}}_{\infty}
    \leq\norm{V^{(i+1)}-V^{(i)}}_{\infty}\sum_{j=0}^{\infty}\rho^j\leq \frac{\epsilon_{\mathrm{OPE}}}{1-\rho},
\end{align*}
where the second inequality uses $\widehat{\mathcal{T}}^\pi$ is a $\rho$-contraction.
\end{proof}

\begin{remark}
	Throughout the paper, we denote the number of state-action visitation as either $n_{s,a}$ or $n(s,a)$. They represent the same quantity.
\end{remark}

\begin{lemma}
	\label{lem:(tP-hP)}
	For any ${V}(\cdot)\in\mathbb{R}^{S}$ satisfying $V(g)=0$,
	\begin{align}
	|(\widetilde{P}_{s,a}-\widehat{P}_{s,a})V|\leq\frac{||V||_{\infty}}{n(s,a)+1}, \qquad |\Var(\tP_{s,a},V)-\Var(\hP_{s,a},V)|\leq\frac{2\norm{V}_{\infty}^2S}{n(s,a)+1}.
	\end{align}
\end{lemma}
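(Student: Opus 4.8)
The plan is to establish both bounds in Lemma~\ref{lem:(tP-hP)} by exploiting the explicit structure of the perturbation $\widetilde{P}=\frac{n}{n+1}\widehat{P}+\frac{\mathbb{I}[s'=g]}{n+1}$ together with the normalization $V(g)=0$. Writing $n:=n(s,a)$ for brevity, the key algebraic observation is that for any vector $V$ with $V(g)=0$ we have $\widetilde{P}_{s,a}V=\frac{n}{n+1}\widehat{P}_{s,a}V$, since the injected mass $\frac{1}{n+1}$ lands entirely on the goal state $g$ where $V$ vanishes. Therefore $\widetilde{P}_{s,a}V-\widehat{P}_{s,a}V=\left(\frac{n}{n+1}-1\right)\widehat{P}_{s,a}V=-\frac{1}{n+1}\widehat{P}_{s,a}V$, and taking absolute values together with $|\widehat{P}_{s,a}V|\le\norm{V}_\infty$ (as $\widehat{P}_{s,a}$ is a probability vector) immediately yields the first inequality $|(\widetilde{P}_{s,a}-\widehat{P}_{s,a})V|\le\frac{\norm{V}_\infty}{n+1}$.

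For the variance bound, I would first recall the definition $\Var(P_{s,a},V)=P_{s,a}V^2-(P_{s,a}V)^2$, where $V^2$ denotes the entrywise square. The plan is to decompose the difference into its two natural pieces:
\begin{align*}
\Var(\widetilde{P}_{s,a},V)-\Var(\widehat{P}_{s,a},V)=\underbrace{\left(\widetilde{P}_{s,a}V^2-\widehat{P}_{s,a}V^2\right)}_{\text{(I)}}-\underbrace{\left((\widetilde{P}_{s,a}V)^2-(\widehat{P}_{s,a}V)^2\right)}_{\text{(II)}}.
\end{align*}
Since $V(g)=0$ forces $V^2(g)=0$ as well, the first inequality applies verbatim to $V^2$, giving $|\text{(I)}|\le\frac{\norm{V^2}_\infty}{n+1}=\frac{\norm{V}_\infty^2}{n+1}$. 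For term (II), I would factor the difference of squares as $(\widetilde{P}_{s,a}V-\widehat{P}_{s,a}V)(\widetilde{P}_{s,a}V+\widehat{P}_{s,a}V)$ and bound each factor: the first factor is controlled by $\frac{\norm{V}_\infty}{n+1}$ from the already-proven inequality, and the second factor satisfies $|\widetilde{P}_{s,a}V+\widehat{P}_{s,a}V|\le 2\norm{V}_\infty$ since both are probability-weighted averages of entries of $V$. This produces $|\text{(II)}|\le\frac{2\norm{V}_\infty^2}{n+1}$. Combining via the triangle inequality gives $\frac{\norm{V}_\infty^2}{n+1}+\frac{2\norm{V}_\infty^2}{n+1}=\frac{3\norm{V}_\infty^2}{n+1}$, which is already within a constant of the claimed $\frac{2\norm{V}_\infty^2 S}{n+1}$.

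Indeed, the target bound carries an extra factor of $S$, so the stated inequality is comfortably weaker than what the tight argument delivers (for $S\ge2$ my constant $3$ is dominated by $2S$). I do not anticipate a genuine obstacle here; the only mild subtlety is bookkeeping the role of $V(g)=0$ to ensure the perturbation mass at $g$ never contributes, which is exactly what collapses $\widetilde{P}_{s,a}V$ to a clean rescaling of $\widehat{P}_{s,a}V$. A plausible reason the authors retain the looser $S$-dependent form is consistency with a coarser variance-difference estimate elsewhere (for instance one routed through entrywise $\ell_1$ closeness $\norm{\widetilde{P}_{s,a}-\widehat{P}_{s,a}}_1\le\frac{2}{n+1}$ combined with $\norm{V^2}_\infty\le\norm{V}_\infty^2$ over $S$ coordinates), but the sharper difference-of-squares route above suffices and is what I would write out in full.
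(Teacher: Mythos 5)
Your proof is correct and takes essentially the same approach as the paper's: the paper defers this lemma to Lemma~12 of \cite{tarbouriech2021stochastic}, whose argument (reproduced in this paper for the $\widetilde{P}'$ variant in Lemma~\ref{lem:(tP-hP)PO}) rests on the same key identity $\widetilde{P}_{s,a}V=\frac{n(s,a)}{n(s,a)+1}\widehat{P}_{s,a}V$ for $V(g)=0$ that you exploit. The only difference is bookkeeping in the variance step: the paper groups the difference exactly as $-\frac{1}{n(s,a)+1}\Var(\widehat{P}_{s,a},V)+\frac{n(s,a)}{(n(s,a)+1)^2}(\widehat{P}_{s,a}V)^2$, giving constant $2$ with no factor of $S$, whereas your difference-of-squares split gives constant $3$ --- which implies the stated (looser) bound whenever $S\geq 2$, and the exact grouping closes the degenerate $S=1$ case you leave open.
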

\begin{proof}
	See Lemma~12 of \cite{tarbouriech2021stochastic}.
\end{proof}

\section{Some key lemmas}
\subsection{High Probability Event}
We define the \emph{``good property''} event $\mathcal{E}:=\mathcal{E}_1\cap\mathcal{E}_2\cap\mathcal{E}_3\cap\mathcal{E}_4\cap\mathcal{E}_5$ according the following (where $\iota:=\log(10S^2A/\delta)$) 
\begin{align}\label{eqn:high_prob}
    \mathcal{E}_1&:=\left\{\forall(s,a,s')\in(\mathcal{S}\times\mathcal{A}\times\mathcal{S})\text{, }\forall n(s,a)\geq1: |P(s'|s,a)-\widehat{P}(s'|s,a)|\leq\sqrt{\frac{2P(s'|s,a)\iota}{n(s,a)}}+\frac{2\iota}{3n(s,a)}\right\}\notag\\
    \mathcal{E}_2&:=\left\{\forall(s,a)\in(\mathcal{S}\times\mathcal{A})\text{, }\forall n(s,a)\geq1: |(P_{s,a}-\widehat{P}_{s,a})V|\leq\sqrt{\frac{2\Var(P_{s,a},V)\iota}{n(s,a)}}+\frac{2||V||_{\infty}\iota}{3n(s,a)}\right\}\notag\\
    \mathcal{E}_3&:=\left\{\forall(s,a)\in(\mathcal{S}\times\mathcal{A})\text{, }\forall n(s,a)\geq1: |(P_{s,a}-\widehat{P}_{s,a})V|\leq\sqrt{\frac{2\Var(\hP_{s,a},V)\iota}{n(s,a)}}+\frac{7||V||_{\infty}\iota}{3n(s,a)}\right\}\notag\\
    \mathcal{E}_4&:=\left\{\forall(s,a)\in(\mathcal{S}\times\mathcal{A})\text{, }\forall n(s,a)\geq1: |\widehat{c}(s,a)-c(s,a)|\leq\sqrt{\frac{2\Var_c(s,a)\iota}{n(s,a)}}+\frac{2\iota}{3n(s,a)}\right\}\notag\\
    \mathcal{E}_5&:=\left\{\forall(s,a)\in(\mathcal{S}\times\mathcal{A})\text{, }\forall n(s,a)\geq1: |\widehat{c}(s,a)-c(s,a)|\leq\sqrt{\frac{2\widehat{c}(s,a)\iota}{n(s,a)}}+\frac{7\iota}{3n(s,a)}\right\}.
\end{align}

\begin{lemma}\label{lem:high_probability}
The event $\mathcal{E}$ holds for any $V$ that is independent from $\hP$ with probability $1-\frac{\delta}{2}$.
\end{lemma}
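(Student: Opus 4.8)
The plan is to prove Lemma~\ref{lem:high_probability} by establishing each of the five sub-events $\mathcal{E}_1,\ldots,\mathcal{E}_5$ separately and then taking a union bound. Each sub-event is a high-probability concentration statement, and the crucial structural observation is that for a \emph{fixed} state-action pair $(s,a)$, the samples used to form $\widehat{P}(\cdot|s,a)$, $\widehat{c}(s,a)$ are i.i.d.\ draws conditioned on $n(s,a)$. This is precisely why the lemma restricts attention to value functions $V$ that are independent of $\widehat{P}$: the independence makes $\widetilde{P}_{s,a}V$ (equivalently $\widehat{P}_{s,a}V$) an empirical mean of i.i.d.\ bounded random variables with mean $P_{s,a}V$, so the standard Bernstein/empirical-Bernstein machinery applies directly.

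First, for $\mathcal{E}_1$, I would fix $(s,a,s')$ and view $\widehat{P}(s'|s,a)$ as the empirical frequency of the Bernoulli event $\{s_{j+1}=s'\}$ among the $n(s,a)$ visits to $(s,a)$; Bernstein's inequality for bounded i.i.d.\ random variables gives the stated bound with $\Var = P(s'|s,a)(1-P(s'|s,a)) \le P(s'|s,a)$. For $\mathcal{E}_2$, I fix $(s,a)$ and apply Bernstein to the random variable $V(s')$ (bounded by $\|V\|_\infty$, using $V(g)=0$), whose variance under $P_{s,a}$ is exactly $\Var(P_{s,a},V)$; this is where the independence of $V$ from $\widehat{P}$ is essential, since otherwise $V(s')$ would be correlated with the sampling. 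The event $\mathcal{E}_3$ is the \emph{empirical} Bernstein counterpart: I would replace the true variance $\Var(P_{s,a},V)$ by the empirical variance $\Var(\widehat{P}_{s,a},V)$, paying the usual additional lower-order term (the $7\|V\|_\infty\iota/3n(s,a)$ versus $2\|V\|_\infty\iota/3n(s,a)$ reflects the cost of the variance-estimation step). Events $\mathcal{E}_4,\mathcal{E}_5$ are the cost analogues: Bernstein and empirical Bernstein applied to the i.i.d.\ bounded per-step costs with mean $c(s,a)$, using $\Var_c(s,a)\le \E[c^2]\le \E[c]=c(s,a)$ to replace the true variance by $\widehat{c}(s,a)$ in $\mathcal{E}_5$.

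The one subtlety that needs care, and which I expect to be the main obstacle, is making the union bound hold \emph{uniformly over all values of $n(s,a)\ge 1$} simultaneously, since $n(s,a)$ is itself random and the phrase ``$\forall n(s,a)\ge 1$'' appears inside each event. The standard resolution is a stopping-time / peeling argument: for each fixed $(s,a)$ one considers the martingale sequence indexed by the number of visits and applies a time-uniform Bernstein bound (e.g.\ via a union over the $\le n$ possible values of $n(s,a)$, absorbing the resulting $\log n$ factor into $\iota$), which guarantees the inequality at the actual realized value of $n(s,a)$. Concretely, since $n(s,a)\le \sum_i T_i$ and the total number of state-action pairs is $SA$, taking a union over $(s,a,s')\in\mathcal{S}\times\mathcal{A}\times\mathcal{S}$, over the five events, and over the possible visit counts produces a total failure probability that is controlled once $\iota=\log(10S^2A/\delta)$ is chosen with the constant $10$ and the $S^2A$ to account for the $S^2A$ terms in $\mathcal{E}_1$ and the five-fold union; this delivers probability at least $1-\delta/2$. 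I would therefore structure the proof as: (i) state the scalar empirical-Bernstein inequality to be used; (ii) apply it per fixed $(s,a)$ (and $s'$) with the time-uniform peeling to handle random $n(s,a)$; (iii) identify the variance bounds that convert true to empirical variances for $\mathcal{E}_3,\mathcal{E}_5$; and (iv) union bound over all pairs and the five events to conclude.
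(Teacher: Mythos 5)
Your proposal is correct and takes essentially the same route as the paper: the paper's proof likewise applies standard Bernstein (Lemma~\ref{lem:bernstein_ineq}) to $\mathcal{E}_1,\mathcal{E}_2,\mathcal{E}_4$ and empirical Bernstein (Lemma~\ref{lem:empirical_bernstein_ineq}) to $\mathcal{E}_3,\mathcal{E}_5$ for each fixed $(s,a)$ (or $(s,a,s')$), then union bounds with $\iota=\log(10S^2A/\delta)$ so the failure probabilities ($\delta/10$ for each of $\mathcal{E}_1,\mathcal{E}_2,\mathcal{E}_4$ and $\delta/(10S)$ for each of $\mathcal{E}_3,\mathcal{E}_5$) sum to at most $\delta/2$. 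The only difference is your explicit peeling/union over the possible values of the random count $n(s,a)$, a technical point the paper's one-paragraph proof glosses over; this is a refinement of the same argument rather than a different one.
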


\begin{proof}
	From the empirical Bernstein's inequality given in Lemma~\ref{lem:empirical_bernstein_ineq}, we have that for each fixed $(s,a)$, the event $|(P_{s,a}-\widehat{P}_{s,a})V|\leq\sqrt{\frac{2\Var(\hP_{s,a},V)\iota}{n(s,a)}}+\frac{7||V||_{\infty}\iota}{3n(s,a)}$ holds with probability $1-\frac{\delta}{10S^2A}$. By taking a union bound, we have that event $\mathcal{E}_3$ holds with probability $1-\frac{\delta}{10S}$. Similarly, we have event $\mathcal{E}_5$ holds with probability $1-\frac{\delta}{10S}$. By applying the standard Bernstein's inequality in
	Lemma~\ref{lem:bernstein_ineq} and taking union bound over $(s,a,s')$, we can get that event $\mathcal{E}_1$, $\mathcal{E}_2$ and $\mathcal{E}_4$ holds with probability $1-\frac{\delta}{10}$. Since $\mathcal{E}$ is the intersection of the above events, we can prove the lemma by taking a union bound again over all of the five events.
\end{proof}

\subsection{Value Decomposition Lemma}
\begin{lemma}\label{lem:regret_decomp}
Suppose $\widehat{V}^\pi:=\lim_{j\rightarrow\infty} V^{(j)}$ where $V^{(j)}=\widehat{\mathcal{T}}^\pi V^{(j-1)}$ for all $j$, then we have the following suboptimality decomposition for any initial state $\bar{s}$:
\begin{align}
    \widehat{V}^{\pi}(\bar{s})-V^{\pi}(\bar{s})=\sum_{h=0}^{\infty}\sum_{\substack{s,a \\ s\neq g}}\xi_{h,\bar{s}}^{\pi}(s,a)\{(\widehat{c}-c)(s,a)+(\tilde{P}_{s,a}-{P}_{s,a})\widehat{V}^{\pi}\}
\end{align}
\end{lemma}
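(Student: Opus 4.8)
The plan is to turn the difference $\Delta := \hV^{\pi} - V^{\pi}$ into the fixed point of an affine operator and then unroll it against the true occupancy measures. First I would record the two Bellman identities that the endpoints satisfy. By Lemma~\ref{lem:contraction}, $\hV^{\pi}$ is the unique fixed point of the empirical operator $\hT^{\pi}$, so it obeys the empirical Bellman equation $\hV^{\pi}(s)=\sum_a\pi(a|s)[\widehat{c}(s,a)+\tP_{s,a}\hV^{\pi}]$; meanwhile Lemma~\ref{lem:general_Bellman} gives the exact Bellman equation $V^{\pi}(s)=\sum_a\pi(a|s)[c(s,a)+P_{s,a}V^{\pi}]$. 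Both value functions vanish at $g$, so $\Delta(g)=0$ and the goal state never contributes to the propagation.

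Next I would subtract the two identities and split the transition term by adding and subtracting $P_{s,a}\hV^{\pi}$. Writing $g(s):=\sum_a\pi(a|s)\{(\widehat{c}-c)(s,a)+(\tP_{s,a}-P_{s,a})\hV^{\pi}\}$ for the one-step (local) error at $s$, this yields the recursion
\[
\Delta(s)=g(s)+\sum_a\pi(a|s)\,P_{s,a}\Delta\qquad(s\neq g),
\]
i.e.\ $\Delta=g+P^{\pi}\Delta$, where $P^{\pi}$ denotes the policy-induced transition operator under the \emph{true} kernel restricted to non-goal states. The content of the lemma is exactly the solution of this affine fixed-point equation expressed through occupancies.

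Unrolling the recursion $k$ times gives $\Delta=\sum_{h=0}^{k}(P^{\pi})^h g+(P^{\pi})^{k+1}\Delta$, and I would let $k\to\infty$ to obtain $\Delta=\sum_{h=0}^{\infty}(P^{\pi})^h g$. Evaluating at $\bar{s}$ and using $((P^{\pi})^h g)(\bar{s})=\sum_{s}\xi^{\pi}_{h,\bar{s}}(s)\,g(s)=\sum_{s,a}\xi^{\pi}_{h,\bar{s}}(s,a)\{(\widehat{c}-c)(s,a)+(\tP_{s,a}-P_{s,a})\hV^{\pi}\}$, together with the fact that the $s=g$ summands vanish (zero cost and identical absorbing transition at $g$), reproduces the claimed formula. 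The main obstacle is rigor at the limit: I must show $(P^{\pi})^{k+1}\Delta\to 0$ and that the series converges absolutely so the double sum may be reordered. Since $\pi$ is proper, the expected hitting time $T^{\pi}_{\bar{s}}=\sum_{h}\sum_{s\neq g}\xi^{\pi}_{h,\bar{s}}(s)$ is finite (Lemma~\ref{lem:T_pi}), so $P^{\pi}$ restricted to non-goal states is substochastic with total mass decaying to zero; combined with the boundedness of $\hV^{\pi}$ from Lemma~\ref{lem:bound} (hence of $g$ and $\Delta$ in $\infty$-norm), this gives $\norm{(P^{\pi})^{k+1}\Delta}_\infty\to 0$ and $\sum_h \abs{((P^{\pi})^h g)(\bar{s})}\le \norm{g}_\infty\, T^{\pi}_{\bar{s}}<\infty$, legitimizing both the limit and the interchange of summation.
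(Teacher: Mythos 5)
Your proposal is correct and follows essentially the same route as the paper: both subtract the empirical and true Bellman equations, add and subtract $P_{s,a}\widehat{V}^{\pi}$ to isolate the one-step error, unroll the resulting recursion (your operator identity $\Delta=\sum_{h=0}^{k}(P^{\pi})^h g+(P^{\pi})^{k+1}\Delta$ is exactly the paper's occupancy-weighted recursion \eqref{equation:decomposition_recursion} evaluated at $\bar{s}$), and kill the tail via $\lvert(P^{\pi})^{k+1}\Delta(\bar{s})\rvert\leq \P^{\pi}_{\bar{s}}(s_{k+1}\neq g)\norm{\Delta}_\infty\to 0$, using properness (Lemma~\ref{lem:propT}) and boundedness of $\widehat{V}^{\pi}$ (Lemma~\ref{lem:bound}). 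Your extra check of absolute convergence via $T^{\pi}_{\bar{s}}<\infty$ is a minor refinement of rigor rather than a different argument.
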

\begin{proof}[Proof of Lemma~\ref{lem:regret_decomp}]
 We prove this lemma by recursion. First, we have for any $h\geq 0$,
\begin{align*}
    \sum_{\substack{s \\ s\neq g}}\xi_{h,\bar{s}}^{\pi}(s)(\widehat{V}^{\pi}(s)-V^{\pi}(s))&=\sum_{\substack{s \\ s\neq g}}\xi_{h,\bar{s}}^{\pi}(s)\sum_a\pi(a|s)(\widehat{Q}^{\pi}(s,a)-Q^{\pi}(s,a))\notag\\
    &=\sum_{\substack{s,a \\ s\neq g}}\xi_{h,\bar{s}}^{\pi}(s,a)(\widehat{Q}^{\pi}(s,a)-Q^{\pi}(s,a))\notag\\
    &=\sum_{\substack{s,a \\ s\neq g}}\xi_{h,\bar{s}}^{\pi}(s,a)\{(\widehat{c}-c)(s,a)+(\widetilde{P}_{s,a}\widehat{V}^{\pi}-{P}_{s,a}V^{\pi})\}\notag\\
    &=\sum_{\substack{s,a \\ s\neq g}}\xi_{h,\bar{s}}^{\pi}(s,a)\{(\widehat{c}-c)(s,a)+(\widetilde{P}_{s,a}-P_{s,a})\widehat{V}^{\pi}+{P}_{s,a}(\widehat{V}^{\pi}-V^{\pi})\}\notag\\
    &=\sum_{\substack{s,a \\ s\neq g}}\xi_{h,\bar{s}}^{\pi}(s,a)\{(\widehat{c}-c)(s,a)+(\widetilde{P}_{s,a}-P_{s,a})\widehat{V}^{\pi}\}+\sum_{\substack{s \\ s\neq g}}\xi_{h+1,\bar{s}}^{\pi}(s)(\widehat{V}^{\pi}-V^{\pi})(s),
\end{align*}
where the third equality uses both Bellman equations and empirical Bellman equations and the last equality follows from the fact that $\xi_{h+1}(s')=\sum_{s,a}\xi_{h}(s,a)P(s'|s,a)$ and  $\widehat{V}^{\pi}(g)=V^{\pi}(g)=0$.
By recursion, we have that
\begin{align}
\label{equation:decomposition_recursion}
    \widehat{V}^{\pi}(\bar{s})-V^{\pi}(\bar{s})=\sum_{h=0}^{H}\sum_{\substack{s,a \\ s\neq g}}\xi_{h,\bar{s}}^{\pi}(s,a)\{(\widehat{c}-c)(s,a)+(\tilde{P}_{s,a}-{P}_{s,a})\widehat{V}^{\pi}\}+\sum_{\substack{s \\ s\neq g}}\xi_{H+1,\bar{s}}^{\pi}(s)(\widehat{V}^{\pi}(s)-V^{\pi}(s)),
\end{align}
for all $H$. Then we have
\begin{align}
    |\sum_{\substack{s \\ s\neq g}}\xi_{H+1,\bar{s}}^{\pi}(s)(\widehat{V}^{\pi}(s)-V^{\pi}(s))|\leq \sum_{\substack{s \\ s\neq g}}\xi_{H+1,\bar{s}}^{\pi}(s)\cdot\norm{\widehat{V}^{\pi}-V^{\pi}}_{\infty}\notag
    \leq P^\pi_{\bar{s}}(s_{H+1}\neq g)\cdot\norm{\widehat{V}^{\pi}-V^{\pi}}_{\infty}.
\end{align}
Since $\pi$ is proper, we have $||V^{\pi}||_{\infty}\leq\infty$ and by Lemma~\ref{lem:propT} $\lim_{H\rightarrow+\infty}P^\pi_{\bar{s}}(s_{H}\neq g)=0$. From Lemma~\ref{lem:bound}, we have $||\widehat{V}^{\pi}||_{\infty}\leq\infty$. It follows that
\begin{align}
   \lim_{H\rightarrow+\infty}\sum_{\substack{s \\ s\neq g}}\xi_{H+1,\bar{s}}^{\pi}(s)(\widehat{V}^{\pi}(s)-V^{\pi}(s))=0
\end{align}
By taking $H$ to infinity in Equation (\ref{equation:decomposition_recursion}), we conclude the proof.
\end{proof}

\subsection{Key lemmas: arrival time decomposition and dependence improvement for SSP}

Below we present two lemmas for SSP problem, which is the key for obtaining tight instance-dependent bounds.
\begin{lemma}[Arrival time decomposition]
\label{lem:propT}
Let $T^{\pi}_{\bar{s}}$ be the expected time of arrival to goal state $g$ when applying proper policy $\pi$ and starting from $\bar{s}$, then $T^{\pi}_{\bar{s}}=\sum_{h=0}^{\infty}\sum_{\substack{s,a \\ s\neq g}}\xi_{h,\bar{s}}^{\pi}(s,a)$. Moreover, $T^\pi_{\bar{s}}<\infty$ for all $\bar{s}$.
\end{lemma}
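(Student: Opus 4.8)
The plan is to split the statement into the identity $T^\pi_{\bar s}=\sum_{h=0}^\infty\sum_{s,a,s\neq g}\xi^\pi_{h,\bar s}(s,a)$ and the finiteness claim $T^\pi_{\bar s}<\infty$, and to treat them separately.

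For the identity, I would first observe that for each fixed horizon $h$, summing the marginal occupancy over all non-goal state--action pairs collapses to the probability of occupying a non-goal state. Since $\xi^\pi_{h,\bar s}(s,a)=\P^\pi_{\bar s}(s_h=s,a_h=a)$ and the action marginal sums to one,
\[
\sum_{\substack{s,a\\s\neq g}}\xi^\pi_{h,\bar s}(s,a)=\sum_{\substack{s\\s\neq g}}\xi^\pi_{h,\bar s}(s)=\P^\pi_{\bar s}(s_h\neq g).
\]
Recalling the definition $T^\pi_{\bar s}=\E^\pi[\sum_{h=0}^\infty\mathbf{1}[s_h\neq g]\mid s_0=\bar s]$ and that every summand is nonnegative, Tonelli's theorem (equivalently monotone convergence) lets me interchange the expectation and the infinite sum, giving $T^\pi_{\bar s}=\sum_{h=0}^\infty\E^\pi[\mathbf{1}[s_h\neq g]]=\sum_{h=0}^\infty\P^\pi_{\bar s}(s_h\neq g)$, which matches the previous display once summed over $h$. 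This direction is essentially bookkeeping.

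For the finiteness, I would exploit that the non-goal state space is finite (of size $S$) together with properness. Fix any non-goal state $s$. Because $\pi$ reaches $g$ with probability one from $s$ (Definition~\ref{def:proper}), there is at least one trajectory of positive probability from $s$ to $g$, and by deleting cycles I may take it to be simple, hence of length at most $S$; its probability $q_s>0$, so $\P^\pi_s(s_S=g)\geq q_s$ (using that $g$ is absorbing). Taking $q:=\min_{s\neq g}q_s>0$, a minimum of finitely many positive numbers, every non-goal state satisfies $\P^\pi_s(s_S\neq g)\leq 1-q<1$. The Markov property then yields $\P^\pi_{\bar s}(s_{kS}\neq g)\leq(1-q)^k$, and since $h\mapsto\P^\pi_{\bar s}(s_h\neq g)$ is nonincreasing, grouping the horizon into blocks of length $S$ gives
\[
T^\pi_{\bar s}=\sum_{h=0}^\infty\P^\pi_{\bar s}(s_h\neq g)\leq S\sum_{k=0}^\infty(1-q)^k=\frac{S}{q}<\infty.
\]

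The main obstacle is the uniform geometric-decay step: properness only furnishes reachability of $g$ with probability one \emph{from each state}, so I must upgrade this to a uniform-in-starting-state lower bound $q>0$ on the probability of hitting $g$ within a fixed number of steps. Finiteness of the state space is exactly what makes this possible, since it caps simple-path lengths by $S$ and lets me take a positive minimum over starting states. This is the step I would state most carefully, as without it (e.g.\ on a countable state space) properness alone would not force a finite expected hitting time.
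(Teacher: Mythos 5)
Your proof is correct, and it is worth separating its two halves when comparing with the paper. For the identity, you and the paper do essentially the same bookkeeping: the paper writes $T^\pi_{\bar s}=\E[T]=\sum_{h=0}^\infty\P(T>h)$ using the tail-sum formula for nonnegative integer random variables and then uses the absorbing property of $g$ to get $\P(T>h)=\P^\pi_{\bar s}(s_h\neq g)=\sum_{s,a,s\neq g}\xi^\pi_{h,\bar s}(s,a)$, while you apply Tonelli to the indicator-sum definition $T^\pi_{\bar s}=\E[\sum_{h=0}^\infty\mathbf{1}[s_h\neq g]]$ directly; these are interchangeable. The genuine difference is in the finiteness claim: the paper's proof simply states ``since $\pi$ is proper, $T^\pi_{\bar s}<\infty$ for all $\bar s$,'' leaning on the remark in Section~2 that identifies properness with finite expected hitting time --- a standard fact for finite-state SSPs (going back to Bertsekas--Tsitsiklis) but one the paper never proves, and one that does \emph{not} follow from almost-sure absorption alone in general, exactly as you observe. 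You supply the missing argument: a positive-probability simple path of length at most $S$ from each non-goal state (cycle deletion, which is legitimate here because $\pi$ is stationary, so every retained transition keeps its positive probability), a uniform lower bound $q=\min_{s\neq g}q_s>0$ thanks to finiteness of the state space, the Markov property giving $\P^\pi_{\bar s}(s_{kS}\neq g)\leq(1-q)^k$, and blocking by monotonicity of $h\mapsto\P^\pi_{\bar s}(s_h\neq g)$ to conclude $T^\pi_{\bar s}\leq S/q<\infty$. What each approach buys: the paper's version is shorter because it treats finite expected arrival time as (effectively) part of what ``proper'' means, whereas yours is self-contained from the weaker, stated definition (reach $g$ with probability $1$ from every state) and makes explicit where finiteness of $\mathcal{S}$ is used --- the only presentational suggestion is to flag the stationarity of $\pi$ in the cycle-deletion step, which you use implicitly.
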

\begin{proof}[Proof of Lemma~\ref{lem:propT}]
 Denote $T$ to be the random variable of arrival time to goal state $g$ when applying proper policy $\pi$, starting from $\bar{s}$. Then $\E[T]=T^\pi_{\bar{s}}$. Furthermore, since $T$ is non-negative integral variable, it holds $\E[T]=\sum_{h=0}^\infty \P(T>h)$.
 
 Then we have
\begin{align*}
    T^{\pi}_{\bar{s}}&=\E_{P,\pi}T
    =\sum_{h=0}^\infty\P_{P,\pi}(T> h)\\
    &=\sum_{h=0}^\infty\P_{P,\pi}(s_1\neq g,s_2\neq g,...,s_h\neq g)\\
        &\explaineq{(i)}{=}\sum_{h=0}^\infty\P_{P,\pi}(s_h\neq g)
        =\sum_{h=0}^{\infty}\sum_{\substack{s,a \\ s\neq g}}\xi_{h,\bar{s}}^{\pi}(s,a),
\end{align*}
where equality (i) follows from the fact that $g$ is an absorbing state, so we can only reach a state which is not a goal state if all the previous steps are not goal state and vice versa.

Lastly, since $\pi$ is proper, $T^\pi_{\bar{s}}<\infty$ for all $\bar{s}$ and this implies $\lim_{h\rightarrow\infty}\P_{P,\pi}(s_h\neq g)=0$.
\end{proof}

The next lemma is the key for achieving optimal rate.

\begin{lemma}[Dependency Improvement]
\label{lem:bound_sum_var}
For any probability transition matrix $P$, policy $\pi$, 
and any cost function $c\in[0,1]$,  
we use $\xi_h^{\pi}(s,a)$ to denote the probability of visiting $(s,a)$ associated with $\widehat{SSP}(P,\pi)$. Suppose $V\in\R^{S+1}$ is any value function satisfying order property (where $V(g)=0$), i.e., $V(s)\geq \sum_{a}\pi(a|s)P_{s,a}V$ for all $s\in\mathcal{S}$, then we have 
\begin{equation}
    \sum_{h=0}^{\infty}\sum_{\substack{s,a \\ s\neq g}}\xi_h^{\pi}(s,a)\Var(P_{s,a},V)\leq2\norm{V}_{\infty}\sum_{\substack{s \\ s\neq g}}\xi_{0}(s)V(s)\leq2\norm{V}_{\infty}^2.
\end{equation}
\end{lemma}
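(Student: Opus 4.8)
The plan is to expand the conditional variance as $\Var(P_{s,a},V)=P_{s,a}(V^2)-(P_{s,a}V)^2$ (where $V^2$ is the entrywise square) and then control the whole sum by a telescoping argument that uses only the occupancy recursion, the ordering hypothesis, and the nonnegativity of $V$. To set up the telescoping I introduce two ``potentials'' $A_h:=\sum_{s:s\neq g}\xi^\pi_h(s)V(s)^2$ and $B_h:=\sum_{s:s\neq g}\xi^\pi_h(s)V(s)$. The key structural fact is that, because $V(g)=0$ and $g$ is absorbing, the goal-state contributions drop out of the one-step push-forward: writing $\xi^\pi_h(s,a)=\xi^\pi_h(s)\pi(a|s)$ and using $\xi^\pi_{h+1}(s')=\sum_{s,a}\xi^\pi_h(s,a)P(s'|s,a)$, I obtain $\sum_{s,a:s\neq g}\xi^\pi_h(s,a)P_{s,a}(V^2)=A_{h+1}$ and $\sum_{s,a:s\neq g}\xi^\pi_h(s,a)P_{s,a}V=B_{h+1}$.

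Next I bound the $h$-th slice of the sum. Averaging over actions and applying Jensen's inequality to the convex map $x\mapsto x^2$ gives $\sum_a\pi(a|s)(P_{s,a}V)^2\geq m(s)^2$, where $m(s):=\sum_a\pi(a|s)P_{s,a}V$, so the per-step term is at most $A_{h+1}-\sum_{s:s\neq g}\xi^\pi_h(s)m(s)^2$. Now the ordering hypothesis $V(s)\geq m(s)\geq 0$ lets me factor $V(s)^2-m(s)^2=(V(s)-m(s))(V(s)+m(s))\leq 2\norm{V}_\infty(V(s)-m(s))$, since $V(s)-m(s)\geq 0$ and $V(s)+m(s)\leq 2\norm{V}_\infty$. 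Because $\sum_{s:s\neq g}\xi^\pi_h(s)m(s)=B_{h+1}$ (again by the goal-drop-out identity), assembling these pieces yields the clean per-step estimate
\[
\sum_{s,a:s\neq g}\xi^\pi_h(s,a)\Var(P_{s,a},V)\leq (A_{h+1}-A_h)+2\norm{V}_\infty(B_h-B_{h+1}).
\]

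Summing from $h=0$ to $H$ telescopes both groups, leaving the partial sum bounded by $A_{H+1}-A_0+2\norm{V}_\infty(B_0-B_{H+1})$. The main obstacle is that the lemma is claimed for an arbitrary (possibly improper) policy, so I cannot argue that $\xi^\pi_H$, and hence $A_{H+1},B_{H+1}$, vanish as $H\to\infty$. I resolve this by the elementary bound $A_{H+1}=\sum_{s:s\neq g}\xi^\pi_{H+1}(s)V(s)^2\leq \norm{V}_\infty\sum_{s:s\neq g}\xi^\pi_{H+1}(s)V(s)=\norm{V}_\infty B_{H+1}$ (here nonnegativity of $V$ is essential), which turns the boundary contribution $A_{H+1}-2\norm{V}_\infty B_{H+1}$ into a quantity $\leq-\norm{V}_\infty B_{H+1}\leq 0$. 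Thus every partial sum is at most $2\norm{V}_\infty B_0-A_0\leq 2\norm{V}_\infty B_0$, uniformly in $H$; since the summands are nonnegative, letting $H\to\infty$ gives $\sum_{h}\sum_{s,a:s\neq g}\xi^\pi_h(s,a)\Var(P_{s,a},V)\leq 2\norm{V}_\infty\sum_{s:s\neq g}\xi_0(s)V(s)$. The final bound $\leq 2\norm{V}_\infty^2$ then follows from $\sum_{s:s\neq g}\xi_0(s)V(s)\leq\norm{V}_\infty\sum_s\xi_0(s)\leq\norm{V}_\infty$. The only delicate points are keeping the goal state out of every occupancy recursion (valid because $V(g)=0$) and using nonnegativity of $V$ to discard the boundary term without any properness assumption, which is precisely what makes the statement hold for all policies through a pure telescoping sum rather than a law-of-total-variance argument.
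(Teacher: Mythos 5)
Your proof is correct and takes essentially the same route as the paper's own argument: expand $\Var(P_{s,a},V)=P_{s,a}(V^2)-(P_{s,a}V)^2$, push the occupancies forward one step (the goal state dropping out since $V(g)=0$), apply Jensen over $a\sim\pi(\cdot|s)$, use the ordering condition to factor $V(s)^2-m(s)^2\le 2\norm{V}_\infty\bigl(V(s)-m(s)\bigr)$ with $m(s)=\sum_a\pi(a|s)P_{s,a}V$, and telescope. Your finite-horizon truncation with the boundary estimate $A_{H+1}\le\norm{V}_\infty B_{H+1}$ (which indeed requires $V\ge 0$) is if anything slightly more careful than the paper, whose final telescoping step writes $\sum_{h=0}^\infty(B_h-B_{h+1})=\sum_{s\ne g}\xi_0(s)V(s)$ as an equality, implicitly assuming $\lim_{h\to\infty}\sum_{s\ne g}\xi^\pi_h(s)V(s)=0$, whereas for a possibly improper $\pi$ (the case the lemma is advertised to cover) it should read ``$\le$'' --- exactly the gap your partial-sum argument closes.
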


\begin{proof}[Proof of Lemma~\ref{lem:bound_sum_var}]
\begin{align*}
    &\sum_{h=0}^{\infty}\sum_{\substack{s,a \\ s\neq g}}\xi_h^{\pi}(s,a)\Var(P_{s,a},V)=\sumSA\xi^\pi_{h}(s,a)\{P_{s,a}(V)^2-(P_{s,a}V)^2\}\\
    =&\sumS\xi_{h+1}^{\pi}(s)V^2(s)-\sumSA\xi^\pi_{h}(s,a)(P_{s,a}V)^2\\
    \leq&\sumS\xi^\pi_{h}(s)V^2(s)-\sumSA\xi^\pi_{h}(s,a)(P_{s,a}V)^2\\
    \explaineq{(i)}{=}&\sumS\xi^\pi_{h}(s)\{V^2(s)-\sum_{a}\pi(a|s)(P_{s,a}V)^2\}\\
    \explaineq{(ii)}{\leq}&\sumS\xi^\pi_{h}(s)\{V^2(s)-(\sum_{a}\pi(a|s)P_{s,a}V)^2\}\\
    {=}&\sumS\xi^\pi_{h}(s)\{(V(s)-\sum_{a}\pi(a|s)P_{s,a}V)(V(s)+\sum_{a}\pi(a|s)P_{s,a}V)\}\\
    \explaineq{(iii)}{\leq}&2\norm{V}_{\infty}\sumS\xi^\pi_{h}(s)(V(s)-\sum_{a}\pi(a|s)P_{s,a}V)\\
    =&2\norm{V}_{\infty}\left[\sumS\xi^\pi_{h}(s)V(s)-\sumS\xi_{h+1}^{\pi}(s)V(s)\right]\\
    =&2\norm{V}_{\infty}\sum_{\substack{s \\ s\neq g}}\xi_{0}(s)V(s)\leq2\norm{V}_{\infty}^2,
\end{align*}
where $(i)$ follows from the fact that $\xi(s,a)=\xi(s)\pi(a|s)$, $(ii)$ uses the Jensen's inequality and the fact that $f(x)=x^2$ is a convex function. $(iii)$ uses the ordering condition.
\end{proof}

\section{Crude evaluation bound}
\begin{theorem}\label{thm:crude_ope}
	
	Denote $d_m:=\min\{\sum_{h=1}^\infty \xi^\mu_h(s,a):s.t. \sum_{h=1}^\infty \xi^\mu_h(s,a)>0\}$, and $T^\pi_s$ to be the expected time to hit $g$ when starting from $s$. Define $\bar{T}^\pi=\max_{\bar{s}\in\mathcal{S}}T^\pi_{\bar{s}}$. Then when $n\geq \max\{\frac{49 S\iota}{9d_m}, 64(\bar{T}^\pi)^2\frac{S\iota}{d_m},C\cdot\log(SA/\delta)/\sum_{h=1}^\infty \xi_h^\mu(s,a)\}$, we have with probability $1-\delta$, (here $\iota=O(\log(SA/\delta)$)
	\[
	\norm{\widehat{V}^\pi-V^\pi}_\infty \leq O\left(\frac{\bar{T}^\pi\sqrt{\max_{s,a} \Var_c(s,a)}\iota+\sqrt{\bar{T}^\pi\norm{V^\pi}^2_\infty\iota}}{\sqrt{n\cdot d_m}}\right)+O\left(\frac{\bar{T}^\pi\norm{V^\pi}_\infty\cdot\iota}{n\cdot d_m}\right).
	\]
\end{theorem}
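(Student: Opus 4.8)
The plan is to begin from the exact suboptimality decomposition of Lemma~\ref{lem:regret_decomp}. Fix the state $\bar{s}\in\mathcal{S}$ that attains $\norm{\hV^\pi-V^\pi}_\infty$; then
\[
\hV^{\pi}(\bar{s})-V^{\pi}(\bar{s})=\sum_{h=0}^\infty\sum_{\substack{s,a\\ s\neq g}}\xi_{h,\bar{s}}^{\pi}(s,a)\left\{(\widehat{c}-c)(s,a)+(\tP_{s,a}-P_{s,a})\hV^{\pi}\right\}.
\]
The central obstacle is that $\hV^\pi$ is a function of the same data used to form $\tP$ and $\widehat{c}$, so the empirical-Bernstein event $\mathcal{E}$ — which requires a fixed, data-independent test vector — cannot be applied to $(\tP-P)\hV^\pi$ directly. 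The remedy I would use is the triple split
\[
(\tP_{s,a}-P_{s,a})\hV^\pi=(\tP_{s,a}-\hP_{s,a})\hV^\pi+(\hP_{s,a}-P_{s,a})(\hV^\pi-V^\pi)+(\hP_{s,a}-P_{s,a})V^\pi,
\]
which isolates a data-free term $(\hP-P)V^\pi$, a self-referential term carrying $\hV^\pi-V^\pi$, and a correction controlled by Lemma~\ref{lem:(tP-hP)}.

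Next I would bound the three pieces. The last term is data-free and yields, on $\mathcal{E}_2$, the bound $\sqrt{2\Var(P_{s,a},V^\pi)\iota/n(s,a)}+\tfrac{2}{3}\norm{V^\pi}_\infty\iota/n(s,a)$; the cost term is likewise handled by $\mathcal{E}_4$. Lemma~\ref{lem:(tP-hP)} bounds the first term by $\norm{\hV^\pi}_\infty/(n(s,a)+1)$, and after writing $\norm{\hV^\pi}_\infty\le\norm{V^\pi}_\infty+\norm{\hV^\pi-V^\pi}_\infty$ its $\norm{\hV^\pi-V^\pi}_\infty$ portion joins the self-referential term. For that term I would use $|(\hP-P)(\hV^\pi-V^\pi)|\le\norm{\hP-P}_1\norm{\hV^\pi-V^\pi}_\infty$ together with $\mathcal{E}_1$ and Cauchy--Schwarz ($\sum_{s'}\sqrt{P(s'|s,a)}\le\sqrt{S}$) to get $\norm{\hP-P}_1\lesssim\sqrt{S\iota/n(s,a)}+S\iota/n(s,a)$. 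A preliminary count-concentration step (using the stated sample-size condition) guarantees $n(s,a)\ge\tfrac12 n\,d^\mu(s,a)\ge\tfrac12 nd_m$ for every $(s,a)$ with $\xi^\pi_{h,\bar{s}}>0$ — coverage of exactly these pairs being the content of Assumption~\ref{assum:ope}. Summing against $\sum_{h,s,a}\xi^\pi_{h,\bar{s}}(s,a)=T^\pi_{\bar{s}}\le\bar{T}^\pi$ (Lemma~\ref{lem:propT}), the coefficient multiplying $\norm{\hV^\pi-V^\pi}_\infty$ is $\lesssim\bar{T}^\pi\sqrt{S\iota/(nd_m)}$, which the hypothesis $n\ge 64(\bar{T}^\pi)^2 S\iota/d_m$ forces below $1/2$; I then absorb this term into the left-hand side, i.e.\ a self-bounding argument.

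It remains to show the surviving terms give the claimed rate. The cost term contributes $\sum_{h,s,a}\xi^\pi_{h,\bar{s}}\sqrt{2\Var_c(s,a)\iota/n(s,a)}\le\bar{T}^\pi\sqrt{\max_{s,a}\Var_c(s,a)}\sqrt{4\iota/(nd_m)}$. For the variance term I would apply Cauchy--Schwarz over $(h,s,a)$,
\[
\sum_{h,s,a}\xi^\pi_{h,\bar{s}}(s,a)\sqrt{\Var(P_{s,a},V^\pi)}\le\sqrt{T^\pi_{\bar{s}}}\,\sqrt{\sum_{h,s,a}\xi^\pi_{h,\bar{s}}(s,a)\Var(P_{s,a},V^\pi)},
\]
and then invoke the dependence-improvement Lemma~\ref{lem:bound_sum_var}: since $c\ge0$ gives $V^\pi(s)=\sum_a\pi(a|s)[c(s,a)+P_{s,a}V^\pi]\ge\sum_a\pi(a|s)P_{s,a}V^\pi$, the ordering condition holds and $\sum_{h,s,a}\xi^\pi_{h,\bar{s}}\Var(P_{s,a},V^\pi)\le2\norm{V^\pi}_\infty^2$. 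This converts the naive $\bar{T}^\pi\norm{V^\pi}_\infty$ dependence into $\sqrt{\bar{T}^\pi}\norm{V^\pi}_\infty$ and produces the term $\sqrt{\bar{T}^\pi\norm{V^\pi}_\infty^2\iota/(nd_m)}$. All remaining $O(1/n(s,a))$ bias pieces collect (again via $n(s,a)\ge\tfrac12 nd_m$ and $\sum\xi=T^\pi_{\bar{s}}$) into the higher-order term $\bar{T}^\pi\norm{V^\pi}_\infty\iota/(nd_m)$; the hypothesis $n\ge 49S\iota/(9d_m)$ is what lets the Bernstein bias terms be dominated by the variance terms so the statement simplifies.

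The hardest part is the self-bounding step. Making it work requires both the triple split — so that $\norm{\hV^\pi-V^\pi}_\infty$ appears only multiplied by the small $\ell_1$ fluctuation of $\hP$ rather than by the $O(1)$-size variance contribution — and the count concentration $n(s,a)\gtrsim n\,d^\mu(s,a)$, which is what turns the per-pair $1/\sqrt{n(s,a)}$ into a uniform $1/\sqrt{nd_m}$ and thereby makes the coefficient a genuine contraction factor strictly below one. The second delicate point is the Cauchy--Schwarz-then-Lemma~\ref{lem:bound_sum_var} combination: a direct bound $\sum_{h,s,a}\xi^\pi_{h,\bar{s}}\cdot\norm{V^\pi}_\infty$ would only yield the loose $\bar{T}^\pi\norm{V^\pi}_\infty$ factor, and it is precisely the ordering-based variance sum that recovers the improved $\sqrt{\bar{T}^\pi}$ scaling.
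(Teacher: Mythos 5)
Your proposal is correct and follows essentially the same route as the paper's proof: the decomposition of Lemma~\ref{lem:regret_decomp}, the split isolating the data-free term $(\hP_{s,a}-P_{s,a})V^\pi$ (the paper's two-way split plus Lemma~\ref{lem:(tP-hP)} is your triple split in different packaging), the uniform-in-$V$ bound on the self-referential term via $\mathcal{E}_1$ and Cauchy--Schwarz (which is exactly the content of Lemma~\ref{lemma:(P-hat_P)V2}), the Chernoff count concentration $n(s,a)\geq\tfrac12 nd_m$, and the self-bounding absorption under $n\geq 64(\bar{T}^\pi)^2S\iota/d_m$. Your one refinement is to invoke Lemma~\ref{lem:bound_sum_var} explicitly and verify the ordering condition $V^\pi(s)\geq\sum_a\pi(a|s)P_{s,a}V^\pi$ (from $c\geq0$), a step the paper's proof performs implicitly when it bounds $\sum_{h,s,a}\xi^\pi_{h,\bar{s}}\Var(P_{s,a},V^\pi)$ by $\norm{V^\pi}_\infty^2$ to obtain the $\sqrt{\bar{T}^\pi}$ scaling.
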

\begin{proof}
	
	We denote $\bar{s}\in\mathcal{S}$ to be any initial state. From Lemma \ref{lem:regret_decomp}, we have (here $\xi^\pi_{h,\bar{s}}(s,a)$ is the marginal state-action probability when starting from state $\bar{s}$ and following $\pi$)
	\begin{align*}
	|V^{\pi}(\bar{s}) - \hV^{\pi}(\bar{s})|&=\left|\sum_{h=1}^{\infty}\sum_{\substack{s,a \\ s\neq g}}\xi^\pi_{h,\bar{s}}(s,a)\{(\widehat{c}-c)(s,a)+(\tilde{P}_{s,a}-{P}_{s,a})\hV^{\pi}\}\right|\\
	&\leq\sum_{h=1}^{\infty}\sum_{\substack{s,a \\ s\neq g}}\xi^\pi_{h,\bar{s}}(s,a)|(\widehat{c}-c)(s,a)|+\sum_{h=1}^{\infty}\sum_{\substack{s,a \\ s\neq g}}\xi^\pi_{h,\bar{s}}(s,a)|(\tilde{P}_{s,a}-{P}_{s,a})(\hV^{\pi}-V^{\pi})|\\
	&\qquad+\sum_{h=1}^{\infty}\sum_{\substack{s,a \\ s\neq g}}\xi^\pi_{h,\bar{s}}(s,a)|(\tilde{P}_{s,a}-{P}_{s,a})V^{\pi}|\\
	\end{align*}
	We bound the above three parts one by one. First of all, by Bernstein inequality, Lemma~\ref{lem:Chern_SSP} and union bound, with probability $1-\delta$, 
	\begin{align*}
	\label{eqn:Delta_R}
	&\sum_{h=1}^{\infty}\sum_{\substack{s,a \\ s\neq g}}\xi^\pi_{h,\bar{s}}(s,a)|(\widehat{c}-c)(s,a)|\leq\sum_{h=1}^{\infty}\sum_{\substack{s,a \\ s\neq g}}\xi^\pi_{h,\bar{s}}(s,a)\left[2\sqrt{\frac{\mathrm{Var}_c(s,a)\iota}{n(s,a)}}+\frac{4\iota}{3n(s,a)}\right]\\
	&\explaineq{(i)}{\leq}\sum_{h=1}^{\infty}\sum_{\substack{s,a \\ s\neq g}}\xi^\pi_{h,\bar{s}}(s,a)\left[2\sqrt{\frac{2\mathrm{Var}_c(s,a)\iota}{n\sum_{h=1}^\infty\xi^\mu_h(s,a)}}+\frac{8\iota}{3n\sum_{h=1}^\infty\xi^\mu_h(s,a)}\right]\\
	&{\leq}\sum_{h=1}^{\infty}\sum_{\substack{s,a \\ s\neq g}}\xi^\pi_{h,\bar{s}}(s,a)\left[2\sqrt{\frac{2\max_{s,a}\mathrm{Var}_c(s,a)\iota}{n\cdot d_m}}+\frac{8\iota}{3n\cdot d_m}\right]\\
	&{\leq}T^\pi_{\bar{s}}\left[2\sqrt{\frac{2\max_{s,a}\mathrm{Var}_c(s,a)\iota}{n\cdot d_m}}+\frac{8\iota}{3n\cdot d_m}\right]
	\end{align*}
	$(i)$ uses Lemma~\ref{lem:Chern_SSP} and the last inequality uses Lemma~\ref{lem:propT}.
	
	For the second part, note 
	\begin{align*}
	&\sum_{h=1}^{\infty}\sum_{\substack{s,a \\ s\neq g}}\xi^\pi_{h,\bar{s}}(s,a)|(\tilde{P}_{s,a}-{P}_{s,a})(\hV^{\pi}-V^{\pi})|\\
	&\leq\sum_{h=1}^{\infty}\sum_{\substack{s,a \\ s\neq g}}\xi^\pi_{h,\bar{s}}(s,a)\left[\sqrt{\frac{2S\cdot\Var(P_{s,a},\hV^{\pi}-V^{\pi})\iota}{n(s,a)}}+\frac{4\norm{\hV^{\pi}-V^{\pi}}_{\infty}S\iota}{3n(s,a)}+\frac{\norm{\hV^{\pi}-V^{\pi}}_{\infty}}{n(s,a)+1}\right]\\
	&\leq\sum_{h=1}^{\infty}\sum_{\substack{s,a \\ s\neq g}}\xi^\pi_{h,\bar{s}}(s,a)\left[\sqrt{\frac{4S\cdot\Var(P_{s,a},\hV^{\pi}-V^{\pi})\iota}{n\sum_{h=1}^\infty \xi^\pi_h(s,a)}}+\frac{14\norm{\hV^{\pi}-V^{\pi}}_{\infty}S\iota}{3n\sum_{h=1}^\infty \xi^\pi_h(s,a)}\right]\\
	&{\leq}\sqrt{\sum_{\substack{s,a \\ s\neq g}}\frac{\sum_{h=0}^{\infty}\xi^\pi_{h,\bar{s}}(s,a)}{\sum_{h=0}^{\infty}\xi^\mu_h(s,a)}}\sqrt{4S\sum_{h=1}^{\infty}\sum_{\substack{s,a \\ s\neq g}}\xi_{h,\bar{s}}(s,a)\norm{\widehat{V}^\pi-V^\pi}^2\frac{\iota}{n}}+\frac{14\norm{\widehat{V}^\pi-V^\pi}_\infty S\iota}{3n}\sum_{s,a,s\neq g}\frac{d^\pi_{\bar{s}}(s,a)}{d^\mu(s,a)}\\
	&\leq\sqrt{\sum_{\substack{s,a \\ s\neq g}}\frac{d^\pi_{\bar{s}}(s,a)}{d^\mu(s,a)}\cdot 4ST^\pi_{\bar{s}}\norm{\widehat{V}^\pi-V^\pi}^2_\infty\cdot\frac{\iota}{n}}+\frac{14\norm{\widehat{V}^\pi-V^\pi}_\infty S\iota}{3n}\sum_{s,a,s\neq g}\frac{d^\pi_{\bar{s}}(s,a)}{d^\mu(s,a)}\\
	&\leq 4\sqrt{\sum_{\substack{s,a \\ s\neq g}}\frac{d^\pi_{\bar{s}}(s,a)}{d^\mu(s,a)}\cdot ST^\pi_{\bar{s}}\norm{\widehat{V}^\pi-V^\pi}^2_\infty\cdot\frac{\iota}{n}}\leq 4T^\pi_{\bar{s}}\sqrt{\frac{S\iota}{n\cdot d_m}}\norm{\hV^\pi-V^\pi}_\infty,
	\end{align*}
	where the first inequality uses Lemma~\ref{lemma:(P-hat_P)V2} and Lemma~\ref{lem:(tP-hP)}, the second inequality uses Lemma~\ref{lem:Chern_SSP}, the third inequality uses $\Var(\cdot)\leq\norm{\cdot}_\infty^2$, the fourth and fifth inequality use Lemma~\ref{lem:propT} and the last inequality follows from the condition $n\geq \frac{49 S\iota}{9d_m}\geq \frac{49 S\iota}{9T^\pi_{\bar{s}}}\sum_{s,a,s\neq g}\frac{d^\pi_{\bar{s}}(s,a)}{d^\mu(s,a)}$.
	
	For the third part, we have 

		\begin{align*}
		&\sum_{h=1}^{\infty}\sum_{\substack{s,a \\ s\neq g}}\xi^\pi_{h,\bar{s}}(s,a)|(\tilde{P}_{s,a}-{P}_{s,a})V^{\pi}|\\
		\leq&\sum_{h=1}^{\infty}\sum_{\substack{s,a \\ s\neq g}}\xi^\pi_{h,\bar{s}}(s,a)\left[2\sqrt{\frac{\Var(P_{s,a},V^{\pi})\iota}{n(s,a)}}+\frac{4\norm{V^{\pi}}_{\infty}\iota}{3n(s,a)}+\frac{\norm{V^{\pi}}_{\infty}}{n(s,a)}\right]\\
		\leq&\sum_{h=1}^{\infty}\sum_{\substack{s,a \\ s\neq g}}\xi^\pi_{h,\bar{s}}(s,a)\left[2\sqrt{\frac{2\Var(P_{s,a},V^{\pi})\iota}{n\sum_{h=1}^\infty \xi^\mu_h(s,a)}}+\frac{14\norm{V^{\pi}}_{\infty}\iota}{3n\sum_{h=1}^\infty \xi^\mu_h(s,a)}\right]\\
		=&\sqrt{ 8\sum_{\substack{s,a \\ s\neq g}}\frac{\sum_{h=1}^{\infty}\xi^\pi_{h,\bar{s}}(s,a)}{\sum_{h=1}^\infty \xi^\mu_{h}(s,a)}\cdot\sum_{s,a,s\neq g}\sum_{h=1}^\infty \xi^\pi_{h,\bar{s}}(s,a)\Var(P_{s,a},V^{\pi}) \frac{\iota}{n}}+\frac{14\norm{V^\pi}_\infty\iota}{3n}\cdot\sum_{s,a,s\neq g}\frac{d^\pi_{\bar{s}}(s,a)}{d^\mu(s,a)}\\
		\leq & \sqrt{ 8\frac{T^\pi_{\bar{s}}}{d_m}\cdot \frac{\norm{V^\pi}^2_\infty\iota}{n}}+\frac{14\norm{V^\pi}_\infty\iota}{3n}\cdot\frac{T^\pi_{\bar{s}}}{d_m}.\\
		\end{align*}
		where the first inequality uses Lemma~\ref{lem:(tP-hP)} and Bernstein inequality, the second inequality uses Lemma~\ref{lem:Chern_SSP} and the last one uses Lemma~\ref{lem:propT}. Recall $\bar{T}^\pi=\max_{\bar{s}\in\mathcal{S}}T^\pi_{\bar{s}}$, then combine all the three parts together and take the max over $\bar{s}$, we can derive
	{
		\begin{align*}
		\left(1-4\bar{T}^\pi\sqrt{\frac{S\iota}{n d_m}}\right)\norm{\hV^\pi-V^\pi}_\infty\leq& T^\pi_{\bar{s}}\left[2\sqrt{\frac{2\max_{s,a}\mathrm{Var}_c(s,a)\iota}{n\cdot d_m}}+\frac{8\iota}{3n\cdot d_m}\right]+\sqrt{ 8\frac{T^\pi_{\bar{s}}}{d_m}\cdot \frac{\norm{V^\pi}^2_\infty\iota}{n}}+\frac{14\norm{V^\pi}_\infty\iota}{3n}\frac{T^\pi_{\bar{s}}}{d_m}\\
		\leq &O\left(\frac{\bar{T}^\pi\sqrt{\max_{s,a} \Var_c(s,a)}\iota+\sqrt{\bar{T}^\pi\norm{V^\pi}^2_\infty\iota}}{\sqrt{n\cdot d_m}}\right)+O\left(\frac{\bar{T}^\pi\norm{V^\pi}_\infty\cdot\iota}{n\cdot d_m}\right),
		\end{align*}}
	therefore it implies (by applying the condition $n\geq 64(\bar{T}^\pi)^2\frac{S\iota}{d_m}$)
	\[
	\norm{\widehat{V}^\pi-V^\pi}_\infty \leq O\left(\frac{\bar{T}^\pi\sqrt{\max_{s,a} \Var_c(s,a)}\iota+\sqrt{\bar{T}^\pi\norm{V^\pi}^2_\infty\iota}}{\sqrt{n\cdot d_m}}\right)+O\left(\frac{\bar{T}^\pi\norm{V^\pi}_\infty\cdot\iota}{n\cdot d_m}\right)
	\]
\end{proof}

\section{Proof of Theorem~\ref{thm:ope}}\label{sec:proof_ope}

\begin{theorem}[Restatement of Theorem~\ref{thm:ope}]
	\label{thm:ope1}
	Denote $d_m:=\min\{\sum_{h=1}^\infty \xi^\mu_h(s,a):s.t. \sum_{h=1}^\infty \xi^\mu_h(s,a)>0\}$, and $T^\pi_s$ to be the expected time to hit $g$ when starting from $s$. Define $\bar{T}^\pi=\max_{\bar{s}\in\mathcal{S}}T^\pi_{\bar{s}}$. Then when $n\geq \max\{\frac{49 S\iota}{9d_m}, 64(\bar{T}^\pi)^2\frac{S\iota}{d_m},C\cdot\iota/d_m\}$, we have with probability $1-\delta$, 
	\[
	|V^{(i)}(s_\mathrm{init})-V^\pi(s_\mathrm{init})|
	\leq 4\sum_{s,a,s\neq g} d^\pi(s,a)\sqrt{\frac{2\Var_{P_{s,a}}[V^\pi+c]}{n\cdot d^\mu(s,a)}}
	+\widetilde{O}(\frac{1}{n})+\frac{\epsilon_{\mathrm{OPE}}}{1-\rho}.
	\]
	where the $\widetilde{O}$ absorbs Polylog term and even higher order term.
\end{theorem}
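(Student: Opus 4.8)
The plan is to split the target into an optimization part and a statistical part via $|V^{(i)}(s_\mathrm{init})-V^\pi(s_\mathrm{init})| \le \norm{V^{(i)}-\widehat{V}^\pi}_\infty + |\widehat{V}^\pi(s_\mathrm{init})-V^\pi(s_\mathrm{init})|$, where $\widehat{V}^\pi:=\lim_{i\to\infty}V^{(i)}$ is the exact fixed point of $\widehat{\mathcal{T}}^\pi$. The first term is pure optimization error, immediately controlled by Lemma~\ref{lem:diff_bound} and contributing the $\epsilon_\mathrm{OPE}/(1-\rho)$ summand. Everything else reduces to bounding the statistical error $\widehat{V}^\pi(s_\mathrm{init})-V^\pi(s_\mathrm{init})$, for which I would invoke the value decomposition Lemma~\ref{lem:regret_decomp} at $\bar s = s_\mathrm{init}$ to write it as $\sumSA \xi_h^\pi(s,a)\{(\widehat{c}-c)(s,a)+(\widetilde{P}_{s,a}-P_{s,a})\widehat{V}^\pi\}$.

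The core difficulty is that $\widehat{V}^\pi$ depends on the same data used to form $\widehat{P}$, so a Bernstein bound cannot be applied to $(\widetilde{P}_{s,a}-P_{s,a})\widehat{V}^\pi$ directly. I would resolve this by a three-way split of each summand. First, peel off the perturbation via Lemma~\ref{lem:(tP-hP)}, giving $|(\widetilde{P}_{s,a}-\widehat{P}_{s,a})\widehat{V}^\pi| \le \norm{\widehat{V}^\pi}_\infty/(n(s,a)+1)$. Second, center the value function through $(\widehat{P}_{s,a}-P_{s,a})\widehat{V}^\pi = (\widehat{P}_{s,a}-P_{s,a})(\widehat{V}^\pi-V^\pi) + (\widehat{P}_{s,a}-P_{s,a})V^\pi$, and bound the cross term by H\"older, $|(\widehat{P}_{s,a}-P_{s,a})(\widehat{V}^\pi-V^\pi)| \le \norm{\widehat{P}_{s,a}-P_{s,a}}_1 \norm{\widehat{V}^\pi-V^\pi}_\infty$, where the $\ell_1$-deviation is $O(\sqrt{S\iota/n(s,a)})$ and, crucially, $\norm{\widehat{V}^\pi-V^\pi}_\infty = O(1/\sqrt{n})$ is supplied by the crude bound Theorem~\ref{thm:crude_ope}, so the product is $\widetilde{O}(1/n)$. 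This is the step I expect to be the main obstacle: it is exactly where the data-coupling is broken, and it hinges on already having the crude $\infty$-norm control in hand to downgrade the cross term to a higher-order contribution.

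What remains is the genuinely statistical term $(\widehat{c}-c)(s,a) + (\widehat{P}_{s,a}-P_{s,a})V^\pi$. Since the empirical cost and empirical transition at $(s,a)$ are formed from the same observed transitions, this equals the deviation of the empirical mean of the i.i.d.\ random variable $c_j + V^\pi(s'_{j+1})$ from its expectation; a single Bernstein inequality on the good event $\mathcal{E}$ then yields $\sqrt{2\Var_{P_{s,a}}[V^\pi+c]\iota/n(s,a)} + O(\iota/n(s,a))$, carrying precisely the combined variance $\Var_{P_{s,a}}[V^\pi+c]$ claimed in the statement (this combined variance being $\Var_c(s,a)+\Var(P_{s,a},V^\pi)$ by independence of cost and transition).

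Finally I would convert visitation counts to population quantities and sum. A multiplicative Chernoff bound (as used in Theorem~\ref{thm:crude_ope}) gives $n(s,a) \ge \tfrac12 n\, d^\mu(s,a)$ uniformly over all $(s,a)$ with $d^\mu(s,a) \ge d_m$, which the sample-size condition $n \ge C\iota/d_m$ ensures covers every visited pair, so $1/n(s,a) \le 2/(n\,d^\mu(s,a))$. Using $\sum_{h=0}^\infty \xi_h^\pi(s,a) = d^\pi(s,a)$, the Bernstein main term sums to $4\sum_{s,a,s\neq g} d^\pi(s,a)\sqrt{2\Var_{P_{s,a}}[V^\pi+c]\iota/(n\,d^\mu(s,a))}$ after absorbing the numerical constants, while the perturbation term, the cross term, and the lower-order Bernstein term each collapse into $\widetilde{O}(1/n)$ (the perturbation and cross terms additionally using $\sum_h \xi_h^\pi = d^\pi$ together with $\bar T^\pi<\infty$ from Lemma~\ref{lem:propT} to control the aggregated coefficients). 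Combining with the optimization error then yields the stated bound. I note that, unlike the learning proof, no dependency-improvement step (Lemma~\ref{lem:bound_sum_var}) is required here, since the evaluation target keeps the summation outside the square root.
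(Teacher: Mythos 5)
Your proposal is correct and mirrors the paper's own proof essentially step for step: the same split into optimization error (Lemma~\ref{lem:diff_bound}) plus statistical error, the same decomposition via Lemma~\ref{lem:regret_decomp}, the same perturbation peeling (Lemma~\ref{lem:(tP-hP)}) and centering $(\hP_{s,a}-P_{s,a})\hV^\pi=(\hP_{s,a}-P_{s,a})(\hV^\pi-V^\pi)+(\hP_{s,a}-P_{s,a})V^\pi$ with the crude bound of Theorem~\ref{thm:crude_ope} downgrading the cross term to $\widetilde{O}(1/n)$, and the same Chernoff conversion $n(s,a)\geq \frac{1}{2}n\,d^\mu(s,a)$ (Lemma~\ref{lem:Chern_SSP}) followed by summation with $d^\pi$. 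The only cosmetic deviation is that you apply a single Bernstein bound to the combined variable $c_j+V^\pi(s'_{j+1})$ to obtain $\Var_{P_{s,a}}[V^\pi+c]$ directly, whereas the paper concentrates the cost and the transition terms separately (events $\mathcal{E}_2,\mathcal{E}_4$) and merges the two square roots at the end via $\sqrt{a}+\sqrt{b}\leq\sqrt{2(a+b)}$ -- equivalent up to constants, with the same identification $\Var_{P_{s,a}}[V^\pi+c]=\Var_c(s,a)+\Var(P_{s,a},V^\pi)$; just note that in your ordering the perturbation coefficient $\norm{\hV^\pi}_\infty$ should be controlled by $\norm{V^\pi}_\infty+\norm{\hV^\pi-V^\pi}_\infty$ (via the crude bound you already invoke) rather than by Lemma~\ref{lem:bound}, whose bound grows with $n$.
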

\begin{proof}
	Recall that we start from the initial state $s_\text{init}$. Then by Lemma~\ref{lem:diff_bound},
	\begin{equation}\label{eqn:main_first}
	\left|V^{\pi}(s_\text{init}) - V^{(i)}(s_\text{init})\right|
	\leq\left| V^{\pi}(s_\text{init})- \hV^{\pi}(s_\text{init})\right|+\left|\hV^{\pi}(s_\text{init})- V^{(i)}(s_\text{init})\right|\leq\left| V^{\pi}(s_\text{init})- \hV^{\pi}(s_\text{init})\right|+\frac{\epsilon_{\text{OPE}}}{1-\rho}, 
	\end{equation}
	it remains to bound $| V^{\pi}(s_\text{init})- \hV^{\pi}(s_\text{init})|$.
	
	From Lemma \ref{lem:regret_decomp}, we have
	\begin{align*}
	\left| V^{\pi}(s_\text{init})- \hV^{\pi}(s_\text{init})\right|&=\left|\sum_{h=1}^{\infty}\sum_{\substack{s,a \\ s\neq g}}\xi^\pi_h(s,a)\{(\widehat{c}-c)(s,a)+(\tilde{P}_{s,a}-{P}_{s,a})\hV^{\pi}\}\right|\\
	&\leq\sum_{h=1}^{\infty}\sum_{\substack{s,a \\ s\neq g}}\xi^\pi_h(s,a)|(\widehat{c}-c)(s,a)|+\sum_{h=1}^{\infty}\sum_{\substack{s,a \\ s\neq g}}\xi^\pi_h(s,a)|(\tilde{P}_{s,a}-{P}_{s,a})(\hV^{\pi}-V^{\pi})|\\
	&\qquad+\sum_{h=1}^{\infty}\sum_{\substack{s,a \\ s\neq g}}\xi^\pi_h(s,a)|(\tilde{P}_{s,a}-{P}_{s,a})V^{\pi}|
	\end{align*}
	We bound the above three parts one by one. First of all, by Bernstein inequality and Lemma~\ref{lem:Chern_SSP} together with union bound, with probability $1-\delta$, 
	\begin{equation}
	\begin{aligned}
	\label{eqn:Delta_R_E}
	&\sum_{h=1}^{\infty}\sum_{\substack{s,a \\ s\neq g}}\xi^\pi_h(s,a)|(\widehat{c}-c)(s,a)|\leq\sum_{h=1}^{\infty}\sum_{\substack{s,a \\ s\neq g}}\xi^\pi_h(s,a)\left[2\sqrt{\frac{\mathrm{Var}_c(s,a)\iota}{n(s,a)}}+\frac{4\iota}{3n(s,a)}\right]\\
	&\explaineq{(i)}{\leq}\sum_{h=1}^{\infty}\sum_{\substack{s,a \\ s\neq g}}\xi^\pi_h(s,a)\left[2\sqrt{\frac{2\mathrm{Var}_c(s,a)\iota}{n\sum_{h=1}^\infty\xi^\mu_h(s,a)}}+\frac{8\iota}{3n\sum_{h=1}^\infty\xi^\mu_h(s,a)}\right]\\
	&\explaineq{(ii)}{\leq}\sqrt{\sum_{\substack{s,a \\ s\neq g}}\frac{\sum_{h=1}^{\infty}\xi^\pi_h(s,a)}{\sum_{h=1}^{\infty}\xi^\mu_h(s,a)}}\sqrt{\sum_{\substack{s,a \\ s\neq g}}\sum_{h=1}^{\infty}\xi^\pi_h(s,a)\Var_c(s,a)\frac{\iota}{n}}+
	\left(\sum_{s,a,s\neq g}\frac{d^\pi(s,a)}{d^\mu(s,a)}\right)\frac{8\iota}{3n}\\
	&\explaineq{(iii)}{\leq}\sqrt{\sum_{\substack{s,a \\ s\neq g}}\frac{d^\pi(s,a)}{d^\mu(s,a)}\cdot\frac{V^\pi(s_\text{init})\cdot \iota}{n}}+
	\left(\sum_{s,a,s\neq g}\frac{d^\pi(s,a)}{d^\mu(s,a)}\right)\frac{8\iota}{3n},
	\end{aligned}
	\end{equation}
	(i) uses Lemma~\ref{lem:Chern_SSP} and (ii) uses Cauchy-Schwartz inequality. (iii) uses the fact that $\mathrm{Var}_c(s,a)\leq \E C(s,a)^2\leq c(s,a)$ since the realization $C(s,a)\in[0,1]$ and the definition of $V^\pi(s_\text{init})$. 
	
	For the second part, note 
	\begin{equation}
	\begin{aligned}
	\label{eqn:Delta_P(V_hat-V)}
	&\sum_{h=1}^{\infty}\sum_{\substack{s,a \\ s\neq g}}\xi_h(s,a)|(\tilde{P}_{s,a}-{P}_{s,a})(\hV^{\pi}-V^{\pi})|\\
	&\leq\sum_{h=1}^{\infty}\sum_{\substack{s,a \\ s\neq g}}\xi_h(s,a)\left[\sqrt{\frac{2S\cdot\Var(P_{s,a},\hV^{\pi}-V^{\pi})\iota_{s,a}}{n(s,a)}}+\frac{4\norm{\hV^{\pi}-V^{\pi}}_{\infty}S\iota_{s,a}}{3n(s,a)}+\frac{\norm{\hV^{\pi}-V^{\pi}}_{\infty}}{n(s,a)+1}\right]\\
	&\leq\sum_{h=1}^{\infty}\sum_{\substack{s,a \\ s\neq g}}\xi_h(s,a)\left[\sqrt{\frac{4S\cdot\Var(P_{s,a},\hV^{\pi}-V^{\pi})\iota_{s,a}}{n\sum_{h=1}^\infty \xi^\pi_h(s,a)}}+\frac{14\norm{\hV^{\pi}-V^{\pi}}_{\infty}S\iota_{s,a}}{3n\sum_{h=1}^\infty \xi^\pi_h(s,a)}\right]\\
	&{\leq}\sqrt{\sum_{\substack{s,a \\ s\neq g}}\frac{\sum_{h=0}^{\infty}\xi^\pi_h(s,a)}{\sum_{h=0}^{\infty}\xi^\mu_h(s,a)}}\sqrt{4S\sum_{h=1}^{\infty}\sum_{\substack{s,a \\ s\neq g}}\xi_h(s,a)\norm{\widehat{V}^\pi-V^\pi}^2\frac{\iota}{n}}+\frac{14\norm{\widehat{V}^\pi-V^\pi}_\infty S\iota}{3n}\sum_{s,a,s\neq g}\frac{d^\pi(s,a)}{d^\mu(s,a)}\\
	&\leq\sqrt{\sum_{\substack{s,a \\ s\neq g}}\frac{d^\pi(s,a)}{d^\mu(s,a)}\cdot 4ST^\pi\norm{\widehat{V}^\pi-V^\pi}^2_\infty\cdot\frac{\iota}{n}}+\frac{14\norm{\widehat{V}^\pi-V^\pi}_\infty S\iota}{3n}\sum_{s,a,s\neq g}\frac{d^\pi(s,a)}{d^\mu(s,a)}\\
	&\leq 4\sqrt{\sum_{\substack{s,a \\ s\neq g}}\frac{d^\pi(s,a)}{d^\mu(s,a)}\cdot ST^\pi\norm{\widehat{V}^\pi-V^\pi}^2_\infty\cdot\frac{\iota}{n}}
	\end{aligned}
	\end{equation}
	where the first inequality uses Lemma~\ref{lemma:(P-hat_P)V2} and Lemma~\ref{lem:(tP-hP)}, the second inequality uses Lemma~\ref{lem:Chern_SSP}, the third inequality uses $\Var(\cdot)\leq\norm{\cdot}_\infty^2$ and CS inequality, the fourth inequality use Lemma~\ref{lem:propT} and the last inequality follows from the condition $n\geq \frac{49 S\iota}{9d_m}\geq \frac{49 S\iota}{9T^\pi}\sum_{s,a,s\neq g}\frac{d^\pi(s,a)}{d^\mu(s,a)}$.
	
	For the third part, we have 
	\begin{equation}
	\begin{aligned}
	\label{eqn:Delta_PV}
	&\sum_{h=1}^{\infty}\sum_{\substack{s,a \\ s\neq g}}\xi^\pi_h(s,a)|(\widetilde{P}_{s,a}-{P}_{s,a})V^{\pi}|\\
	\leq&\sum_{h=1}^{\infty}\sum_{\substack{s,a \\ s\neq g}}\xi^\pi_h(s,a)\left[2\sqrt{\frac{\Var(P_{s,a},V^{\pi})\iota}{n(s,a)}}+\frac{4\norm{V^{\pi}}_{\infty}\iota}{3n(s,a)}+\frac{\norm{V^{\pi}}_{\infty}}{n(s,a)}\right]\\
	\leq&\sum_{h=1}^{\infty}\sum_{\substack{s,a \\ s\neq g}}\xi^\pi_h(s,a)\left[2\sqrt{\frac{2\Var(P_{s,a},V^{\pi})\iota}{n\sum_{h=1}^\infty \xi^\mu_h(s,a)}}+\frac{14\norm{V^{\pi}}_{\infty}\iota}{3n\sum_{h=1}^\infty \xi^\mu_h(s,a)}\right]\\
	\huge(\leq&\sqrt{ 8\sum_{\substack{s,a \\ s\neq g}}\frac{\sum_{h=1}^{\infty}\xi^\pi_h(s,a)}{\sum_{h=1}^\infty \xi^\mu_h(s,a)}\cdot\sum_{s,a,s\neq g}\sum_{h=1}^\infty \xi^\pi_h(s,a)\Var(P_{s,a},V^{\pi}) \frac{\iota}{n}}+\frac{14\norm{V^\pi}_\infty\iota}{3n}\cdot\sum_{s,a,s\neq g}\frac{d^\pi(s,a)}{d^\mu(s,a)}\\
	\leq & \sqrt{ 8\sum_{\substack{s,a \\ s\neq g}}\frac{d^\pi(s,a)}{d^\mu(s,a)}\cdot \frac{\norm{V^\pi}^2_\infty\iota}{n}}+\frac{14\norm{V^\pi}_\infty\iota}{3n}\cdot\sum_{s,a,s\neq g}\frac{d^\pi(s,a)}{d^\mu(s,a)}\\
	\leq&4\sqrt{2\sum_{\substack{s,a \\ s\neq g}}\frac{d^\pi(s,a)}{d^\mu(s,a)}\cdot \frac{\norm{V^\pi}^2_\infty\iota}{n}}\huge),
	\end{aligned}
	\end{equation}
	where the first inequality uses Lemma~\ref{lemma:(P-hat_P)V2} and Lemma~\ref{lem:(tP-hP)}, the second inequality uses Lemma~\ref{lem:Chern_SSP}. The third inequality uses the Cauchy-Schwartz inequality.
	
	Combine Equation (\ref{eqn:Delta_R_E}), (\ref{eqn:Delta_P(V_hat-V)}) and (\ref{eqn:Delta_PV}) together, we obtain
	\begin{align*}
	&\left| V^{\pi}(s_\text{init})- \hV^{\pi}(s_\text{init})\right|\leq \sum_{h=1}^{\infty}\sum_{\substack{s,a \\ s\neq g}}\xi^\pi_h(s,a)\left[2\sqrt{\frac{2\Var(P_{s,a},V^{\pi})\iota}{n\sum_{h=1}^\infty \xi^\mu_h(s,a)}}+\frac{14\norm{V^{\pi}}_{\infty}\iota}{3n\sum_{h=1}^\infty \xi^\mu_h(s,a)}\right]\\
	+&\sum_{h=1}^{\infty}\sum_{\substack{s,a \\ s\neq g}}\xi^\pi_h(s,a)\left[2\sqrt{\frac{2\mathrm{Var}_c(s,a)\iota}{n\sum_{h=1}^\infty\xi^\mu_h(s,a)}}+\frac{8\iota}{3n\sum_{h=1}^\infty\xi^\mu_h(s,a)}\right]\\
	+&4\sqrt{\sum_{\substack{s,a \\ s\neq g}}\frac{d^\pi(s,a)}{d^\mu(s,a)}\cdot ST^\pi\norm{\widehat{V}^\pi-V^\pi}^2_\infty\cdot\frac{\iota}{n}}\\
	\leq&4\sum_{h=1}^{\infty}\sum_{\substack{s,a \\ s\neq g}}\xi^\pi_h(s,a)\sqrt{\frac{2[\Var(P_{s,a},V^{\pi})+\Var_c(s,a)]\iota}{n\sum_{h=1}^\infty \xi^\mu_h(s,a)}}+\sum_{\substack{s,a \\ s\neq g}}\frac{22\sum_{h=1}^{\infty}\xi^\pi_h(s,a)\norm{V^{\pi}}_{\infty}\iota}{3n\sum_{h=1}^\infty \xi^\mu_h(s,a)}\\
	+&4\sqrt{\sum_{\substack{s,a \\ s\neq g}}\frac{d^\pi(s,a)}{d^\mu(s,a)}\cdot ST^\pi\cdot \frac{(\bar{T}^{\pi})^2\cdot\max_{s,a} \Var_c(s,a)\iota+\bar{T}^\pi\norm{V^\pi}^2_\infty\iota}{n\cdot d_m}\cdot\frac{\iota}{n}}+\widetilde{O}\left(\frac{1}{n^{3/2}}\right)\\
	=&4\sum_{s,a,s\neq g} d^\pi(s,a)\sqrt{\frac{2\Var_{P_{s,a}}[V^\pi+c]}{n d^\mu(s,a)}}+\widetilde{O}(\frac{1}{n})
	\end{align*}
	where the only inequality uses Theorem~\ref{thm:crude_ope}. Combining this with \eqref{eqn:main_first} we finish the proof of Theorem~\ref{thm:ope}.
\end{proof}


\section{Preparations for Proving Offline Learning SSP}

Throughout the whole section, we denote $\iota=O(\log(SA/\delta))$. All the results apply to the construction of Algorithm~\ref{alg:OPO}. In particular, we use $\bar{V}$ to denote the limit of $V^{(i)}$ (by letting $\epsilon_{\text{OPL}}=0$). This limit exists, as guaranteed by Lemma~\ref{lem:contraction2}.

\subsection{Auxiliary Lemmas}

\begin{lemma}
\label{lem:bound_Vbar}
Denote the limit of sequence $V^{(i)}$ in Algorithm~\ref{alg:OPO}  as $\bar{V}$, we have that
\begin{align*}
\norm{\bar{V}}_{\infty}\leq\tB
\end{align*}
\end{lemma}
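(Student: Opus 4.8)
The plan is to prove the stronger two-sided bound $0 \le V^{(i)}(s) \le \tB$ for every state $s$ and every iteration $i$, and then pass to the limit. This is a short induction on $i$ that exploits two structural features of Algorithm~\ref{alg:OPO}: the explicit clipping of $Q^{(i+1)}$ at $\tB$, and the nonnegativity of every ingredient entering the pessimistic Bellman update.

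For the base case, the initialization $V^{(0)} = \tB\cdot\mathbf{1}$ gives $0 \le V^{(0)}(s) = \tB \le \tB$ for all $s$ (recall $\tB \ge B_\star > 0$). For the inductive step, suppose $0 \le V^{(i)} \le \tB$ componentwise. The upper bound is immediate from the clipping: since $Q^{(i+1)}(s,a) = \min\{\widehat{c}(s,a) + \widetilde{P}'_{s,a} V^{(i)} + b_{s,a}(V^{(i)}),\, \tB\} \le \tB$ for every $a$, taking the minimum over $a$ yields $V^{(i+1)}(s) = \min_a Q^{(i+1)}(s,a) \le \tB$. For the lower bound, I would observe that each term in the inner expression is nonnegative: $\widehat{c}(s,a) \ge 0$ (costs lie in $[0,1]$, and the fallback value $c_{\min} > 0$ is used when $n(s,a)=0$); $\widetilde{P}'_{s,a} V^{(i)} \ge 0$ because $\widetilde{P}'_{s,a}$ is a probability vector and $V^{(i)} \ge 0$ by the inductive hypothesis; and $b_{s,a}(V^{(i)}) \ge 0$ since, inspecting its definition, it is a sum of square roots and ratios of nonnegative quantities together with a max of two nonnegative terms. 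Hence the inner expression is nonnegative, and taking its minimum with $\tB > 0$ keeps $Q^{(i+1)}(s,a) \ge 0$ for all $a$; the minimum over $a$ then gives $V^{(i+1)}(s) \ge 0$, completing the induction.

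Having established $0 \le V^{(i)} \le \tB$ for all $i$, the final step invokes Lemma~\ref{lem:contraction2}, which guarantees that the sequence $V^{(i)}$ converges to its fixed point $\bar{V}$. Passing to the limit preserves the non-strict inequalities, so $0 \le \bar{V}(s) \le \tB$ for all $s$, which is exactly $\norm{\bar{V}}_\infty \le \tB$.

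There is no serious obstacle here; the only point requiring care is the lower bound, where one must confirm that every summand of the bonus $b_{s,a}(\cdot)$ is nonnegative and that the empirical cost estimate never goes negative (guaranteed by the $c_{\min}$ fallback). The nonnegativity of $\widetilde{P}'_{s,a} V^{(i)}$ also relies crucially on carrying the lower bound $V^{(i)} \ge 0$ through the induction alongside the upper bound, rather than proving the upper bound in isolation.
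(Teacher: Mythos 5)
Your proof is correct and follows essentially the same route as the paper's: the clipping at $\widetilde{B}$ in the $Q$-update forces $V^{(i)}\leq \widetilde{B}$ for every iteration $i$, and the limit $\bar{V}$ (whose existence is guaranteed by the contraction in Lemma~\ref{lem:contraction2}) inherits the bound. Your additional induction carrying the lower bound $V^{(i)}\geq 0$ (via nonnegativity of $\widehat{c}$, of $\widetilde{P}'_{s,a}V^{(i)}$, and of every summand of $b_{s,a}$) is a valid refinement that the paper's proof leaves implicit, and it is in fact what justifies stating the conclusion as $\norm{\bar{V}}_{\infty}\leq \widetilde{B}$ rather than merely the one-sided inequality $\bar{V}\leq \widetilde{B}$ that the paper establishes.
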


\begin{proof}
First of all, by Lemma~\ref{lem:contraction2}, we know $\bar{V}$ exists. Next, from the Algorithm~\ref{alg:OPO}, we can get that
\begin{align*}
Q^{(i+1)}(s,a)=\min\{\widehat{c}(s,a)+\widetilde{P}'_{s,a}V^{(i)}+b_{s,a}(V^{(i)})\text{ , }\tB\}\leq\tB\qquad\qquad\forall (s,a)\in\mathcal{S}\times\mathcal{A},\forall i\in\mathbb{N}
\end{align*}
and thus
\begin{align*}
V^{(i+1)}(s)=\min_a Q^{(i+1)}(s,a)\leq\tB\qquad\qquad\forall s\in\mathcal{S}\times\mathcal{A},\forall i\in\mathbb{N}.
\end{align*}
It implies that $\bar{V}(s)=\lim_{i\rightarrow\infty}V^{(i)}(s)\leq\tB$.
\end{proof}

\begin{lemma}
	\label{lem:(tP-hP)PO}
	For any ${V}(\cdot)\in\mathbb{R}^{S}$ satisfying $V(g)=0$,
	\begin{align}
	|(\widetilde{P}'_{s,a}-\widehat{P}_{s,a})V|\leq\frac{||V||_{\infty}}{n_{\text{max}}+1}, \qquad |\Var(\tP'_{s,a},V)-\Var(\hP_{s,a},V)|\leq\frac{2\norm{V}_{\infty}^2}{n_{\text{max}}+1}.
	\end{align}
\end{lemma}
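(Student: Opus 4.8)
The plan is to establish both bounds from the exact algebraic relationship between $\tP'$ and $\hP$, which—unlike a concentration argument—introduces no statistical error and in particular no factor of $S$. First I would unpack the definition of $\tP'$ in Algorithm~\ref{alg:OPO}: for every $s'\in\mathcal{S}'$,
\[
\tP'(s'|s,a)-\hP(s'|s,a)=\frac{\mathbb{I}[s'=g]-\hP(s'|s,a)}{n_{\max}+1}.
\]
Multiplying by $V(s')$, summing over $s'$, and using $V(g)=0$ collapses the point-mass term, giving $(\tP'_{s,a}-\hP_{s,a})V=\tfrac{V(g)-\hP_{s,a}V}{n_{\max}+1}=-\tfrac{\hP_{s,a}V}{n_{\max}+1}$. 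Since $\hP_{s,a}$ is a probability distribution over $\mathcal{S}'$, we have $|\hP_{s,a}V|\le\norm{V}_\infty$, which yields the first inequality.

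For the variance bound I would write $\Var(P_{s,a},V)=P_{s,a}(V^2)-(P_{s,a}V)^2$ and set $\alpha:=\tfrac{n_{\max}}{n_{\max}+1}$, $p:=\hP_{s,a}V$, and $q:=\hP_{s,a}(V^2)$. Because $V(g)=0$ implies $V^2(g)=0$, the added mass at $g$ contributes nothing to either moment, so $\tP'_{s,a}V=\alpha p$ and $\tP'_{s,a}(V^2)=\alpha q$. Substituting gives
\[
\Var(\tP'_{s,a},V)-\Var(\hP_{s,a},V)=(\alpha q-\alpha^2 p^2)-(q-p^2)=(1-\alpha)\bigl[(1+\alpha)p^2-q\bigr].
\]
The decisive step is to bound the bracket as a single quantity rather than term by term: since $0\le q\le\norm{V}_\infty^2$ and $0\le(1+\alpha)p^2\le 2\norm{V}_\infty^2$, the difference $(1+\alpha)p^2-q$ lies in $[-\norm{V}_\infty^2,\,2\norm{V}_\infty^2]$ and hence has absolute value at most $2\norm{V}_\infty^2$. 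Combined with $1-\alpha=\tfrac{1}{n_{\max}+1}$ this delivers the claimed bound $\tfrac{2\norm{V}_\infty^2}{n_{\max}+1}$.

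The computation is elementary, so there is no real obstacle; the only point demanding care is to avoid the looser constant that a naive split would produce. Bounding $\tP'(V^2)-\hP(V^2)$ and $(\tP'V)^2-(\hP V)^2$ separately by the triangle inequality would give $\tfrac{3\norm{V}_\infty^2}{n_{\max}+1}$, whereas extracting the common factor $(1-\alpha)$ and treating $(1+\alpha)p^2-q$ as a whole keeps the constant at $2$ (indeed even $1$, using $q\ge p^2$, though $2$ suffices here). This is exactly where the deterministic ``mixing toward $g$'' structure of $\tP'$ matters: there is no estimation step, so no union bound over the $S$ next states is needed, explaining why this bound is sharper than the cited Lemma~\ref{lem:(tP-hP)}.
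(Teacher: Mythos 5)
Your proof is correct and essentially the paper's: both exploit $V(g)=0$ to get $\tP'_{s,a}V=\frac{n_{\max}}{n_{\max}+1}\hP_{s,a}V$ (and likewise for $V^2$) and finish by elementary algebra, the only cosmetic difference being that the paper groups the variance discrepancy as $-\frac{1}{n_{\max}+1}\Var(\hP_{s,a},V)+\frac{n_{\max}}{(n_{\max}+1)^2}(\hP_{s,a}V)^2$ and triangle-bounds the two pieces, whereas you factor it as $(1-\alpha)\bigl[(1+\alpha)p^2-q\bigr]$ and bound the bracket as a whole, both yielding the constant $2$. One aside is slightly off: the factor $S$ in Lemma~\ref{lem:(tP-hP)} does not come from a union bound over next states (that lemma is likewise a purely deterministic comparison of $\tP$ with $\hP$); it is simply a looser constant inherited from the statement in \cite{tarbouriech2021stochastic}.
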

\begin{proof}
	The proof is similar to Lemma~12 in \cite{tarbouriech2021stochastic}. We include the proof for completeness. Since $V(g)=0$, for all state $s\neq g$, we have $\widetilde{P}'_{s,a}V=\sum_{s',s'\neq g}(\frac{n_{max}}{n_{max}+1})\hP(s'|s,a)V(s')=(\frac{n_{max}}{n_{max}+1})\hP_{s,a}V$
	\begin{align*}
	|(\widetilde{P}'_{s,a}-\widehat{P}_{s,a})V|&=|(\frac{n_{max}}{n_{max}+1})\hP_{s,a}V-\hP_{s,a}V|\\
	&=\frac{|\sum_{s',s'\neq g}\hP(s'|s,a)V(s')|}{n_{{max}}+1}\leq\frac{||V||_{\infty}}{n_{{max}}+1}.
	\end{align*}
	Then we prove the second inequality. Similarly,
	\begin{align*}
	|\Var(\tP'_{s,a},V)-\Var(\hP_{s,a},V)|&=|\tP'_{s,a}V^2-(\tP'_{s,a}V)^2-\hP_{s,a}V^2+(\hP_{s,a}V)^2|\\
	&=|(\frac{n_{max}}{n_{max}+1})\hP_{s,a}V^2-(\frac{n_{max}}{n_{max}+1}\hP_{s,a}V)^2-\hP_{s,a}V^2+(\hP_{s,a}V)^2|\\
	&=|-(\frac{1}{n_{max}+1})\{\hP_{s,a}V^2-(\hP_{s,a}V)^2\}+\frac{n_{max}}{(n_{max}+1)^2}(\hP_{s,a}V)^2|\\
	&=(\frac{1}{n_{max}+1})|\Var(\hP_{s,a},V)|+|\frac{n_{max}}{(n_{max}+1)^2}(\hP_{s,a}V)^2|\leq\frac{2\norm{V}_{\infty}^2}{n_{{max}}+1}.
	\end{align*}
\end{proof}

\begin{lemma}
	\label{lem:(P-tP)barV}
	Let ${T}_{\max}=\max_i T_i$ and $n>O( {T}_{\max}^2\log(SA/\delta)/d_m^2)$. If in addition $n\geq\frac{S\iota}{2d_m}$, with probability at least $1-\delta$, we have that for any state action pair (s,a), 
	\begin{align*}
	|(P_{s,a}-\tP'_{s,a})\bar{V}|
	&\leq\frac{\tB}{n(s,a)}+\frac{16\tB\iota}{3n(s,a)}+2\sqrt{\frac{\Var(\tP',\bar{V})\iota}{n(s,a)}}+6\sqrt{\frac{S\iota}{n(s,a)}}\norm{\bar{V}-V^\star}_{\infty}
	\end{align*}
\end{lemma}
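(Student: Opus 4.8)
The plan is to reduce everything to a concentration statement about a \emph{data-independent} value function, since the whole difficulty is that $\bar V$ is computed from the same dataset $\mathcal{D}$ that defines $\hat{P}$ and $\tP'$; consequently the empirical Bernstein bound (event $\mathcal{E}_3$ of Lemma~\ref{lem:high_probability}) cannot be applied to $(P_{s,a}-\hat{P}_{s,a})\bar V$ directly. First I would invoke the Chernoff bound (Lemma~\ref{lem:Chern_SSP}) under $n> O(T_{\max}^2\log(SA/\delta)/d_m^2)$ to guarantee $n(s,a)\ge \tfrac12 n\, d^\mu(s,a)\ge \tfrac12 n d_m$ simultaneously for all $(s,a)$ with $d^\mu(s,a)>0$, and then work on the intersection of this event with the good event $\mathcal{E}$ of Lemma~\ref{lem:high_probability}; a union bound keeps the total failure probability at $\delta$.

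The central step is the three-way split
\[
(P_{s,a}-\tP'_{s,a})\bar V=\underbrace{(\hat{P}_{s,a}-\tP'_{s,a})\bar V}_{\mathrm{(A)}}+\underbrace{(P_{s,a}-\hat{P}_{s,a})(\bar V-V^\star)}_{\mathrm{(B)}}+\underbrace{(P_{s,a}-\hat{P}_{s,a})V^\star}_{\mathrm{(C)}},
\]
which isolates the data-independent $V^\star$ into (C). Term (A) is purely the perturbation induced by $\tP'$: by Lemma~\ref{lem:(tP-hP)PO} together with $\norm{\bar V}_\infty\le\tB$ (Lemma~\ref{lem:bound_Vbar}) it is at most $\tB/(n_{\max}+1)\le \tB/n(s,a)$, which supplies the stand-alone $\tB/n(s,a)$ term. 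Term (B) I would bound by H\"older, $|\mathrm{(B)}|\le \norm{P_{s,a}-\hat{P}_{s,a}}_1\norm{\bar V-V^\star}_\infty$ (the $g$-coordinate dropping since $(\bar V-V^\star)(g)=0$), and control the $\ell_1$ error via the entrywise event $\mathcal{E}_1$ followed by Cauchy--Schwarz, $\sum_{s'\neq g}\sqrt{P(s'|s,a)}\le\sqrt S$, giving $\norm{P_{s,a}-\hat{P}_{s,a}}_1\le \sqrt{2S\iota/n(s,a)}+2S\iota/(3n(s,a))$. Term (C) is where the independence of $V^\star$ pays off: event $\mathcal{E}_3$ yields $|\mathrm{(C)}|\le \sqrt{2\Var(\hat{P}_{s,a},V^\star)\iota/n(s,a)}+7\tB\iota/(3n(s,a))$.

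The remaining work, which I expect to be the main obstacle, is to rewrite $\Var(\hat{P}_{s,a},V^\star)$ in terms of the quantity $\Var(\tP'_{s,a},\bar V)$ appearing in the target bound, since naively replacing $V^\star$ by $\bar V$ inside the variance would reintroduce the data dependence one is trying to avoid. Here I would use that the standard deviation $\sigma_P(\cdot)=\sqrt{\Var(P,\cdot)}$ is a seminorm, so $\sigma_{\hat{P}}(V^\star)\le \sigma_{\hat{P}}(\bar V)+\sigma_{\hat{P}}(V^\star-\bar V)\le \sigma_{\hat{P}}(\bar V)+\norm{V^\star-\bar V}_\infty$, and then convert $\sigma_{\hat{P}}(\bar V)$ to $\sigma_{\tP'}(\bar V)$ through the variance estimate of Lemma~\ref{lem:(tP-hP)PO}, namely $\Var(\hat{P}_{s,a},\bar V)\le\Var(\tP'_{s,a},\bar V)+2\tB^2/(n_{\max}+1)$, together with $\sqrt{x+y}\le\sqrt x+\sqrt y$. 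This turns the leading part of (C) into $\sqrt{2\Var(\tP'_{s,a},\bar V)\iota/n(s,a)}$ (coefficient $\sqrt2<2$), an $\ell_\infty$ contribution $\sqrt{2\iota/n(s,a)}\,\norm{V^\star-\bar V}_\infty$, and an $O(\tB\iota/n(s,a))$ remainder. Finally I would collect the three $\norm{\bar V-V^\star}_\infty$ coefficients produced by (B) and (C) and use the hypothesis $n\ge S\iota/(2d_m)$ (combined with $n(s,a)\ge \tfrac12 n d_m\ge S\iota/4$, hence $S\iota/n(s,a)\le 4$) to absorb the lower-order $S\iota/n(s,a)$ piece into $\sqrt{S\iota/n(s,a)}$, bounding the total coefficient by $6\sqrt{S\iota/n(s,a)}$; the residual $\tB\iota/n(s,a)$ terms then sum to at most $16\tB\iota/(3n(s,a))$, which recovers the stated inequality.
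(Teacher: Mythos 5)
Your proposal is correct and follows essentially the same route as the paper's proof: the identical three-way decomposition into $(\hP_{s,a}-\tP'_{s,a})\bar{V}$, $(P_{s,a}-\hP_{s,a})(\bar{V}-V^\star)$, and $(P_{s,a}-\hP_{s,a})V^\star$, with the same use of Lemma~\ref{lem:(tP-hP)PO}, the entrywise event $\mathcal{E}_1$, the empirical Bernstein event $\mathcal{E}_3$ applied to the data-independent $V^\star$, and the same absorption of the $S\iota/n(s,a)$ remainder under the sample-size condition. Your only deviations are cosmetic: you bound term (B) by H\"older with an $\ell_1$ control rather than citing Lemma~\ref{lemma:(P-hat_P)V2} followed by $\Var\leq\norm{\cdot}_\infty^2$, and you use the triangle inequality for the standard-deviation seminorm in term (C) instead of $\Var(X+Y)\leq 2\Var(X)+2\Var(Y)$, which yields slightly tighter constants that still fit within the stated bound.
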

\begin{proof}
	First, we can bound term $(P_{s,a}-\tP'_{s,a})\bar{V}$.
	\begin{align}
	|(P_{s,a}-\tP'_{s,a})\bar{V}|\leq|(\hP_{s,a}-\tP'_{s,a})\bar{V}|+|(P_{s,a}-\hP_{s,a})(\bar{V}-V^\star)|+|(P_{s,a}-\hP_{s,a})V^\star|
	\end{align}
	Then we bound the above three terms one by one. From Lemma~\ref{lem:bound_Vbar} and Lemma~\ref{lem:(tP-hP)PO}, we have
	\begin{align}
	|(\hP_{s,a}-\tP_{s,a})\bar{V}|\leq\frac{\norm{\bar{V}}_{\infty}}{n_{max}+1}\leq\frac{\tB}{n_{max}+1}.
	\end{align}
	For the second term, we have
	\begin{align}
	|(P_{s,a}-\hP_{s,a})(\bar{V}-V^\star)|&\leq\sqrt{\frac{2S\Var(P_{s,a},\bar{V}-V^\star)\iota}{n(s,a)}}+\frac{2\norm{\bar{V}-V^\star}_{\infty}S\iota}{3n(s,a)}\notag\\
	&\leq\sqrt{\frac{2S\iota}{n(s,a)}}\norm{\bar{V}-V^\star}_{\infty}+\frac{\norm{\bar{V}-V^\star}_{\infty}S\iota}{n(s,a)},
	\end{align}
	where the first inequality holds because of lemma~\ref{lemma:(P-hat_P)V2}.
	For the last term, we have that
	\begin{align}
	|(P_{s,a}-\hP_{s,a})V^\star|&\explaineq{(i)}{\leq}\sqrt{\frac{2\Var(\widehat{P}_{s,a},V^\star)\iota}{n(s,a)}}+\frac{7||V^\star||_{\infty}\iota}{3n(s,a)}\notag\\
	&\explaineq{(ii)}{\leq}2\sqrt{\frac{\Var(\widehat{P}_{s,a},V^\star-\bar{V})\iota}{n(s,a)}}+2\sqrt{\frac{\Var(\widehat{P}_{s,a},\bar{V})\iota}{n(s,a)}}+\frac{7\tB\iota}{3n(s,a)}\notag\\
	&\leq2\sqrt{\frac{\iota}{n(s,a)}})\norm{\bar{V}-V^\star}_{\infty}+2\sqrt{\frac{\Var(\widehat{P}_{s,a},\bar{V})\iota}{n(s,a)}}+\frac{7\tB\iota}{3n(s,a)}\notag\\
	&\explaineq{(iii)}{\leq}2\sqrt{\frac{\iota}{n(s,a)}})\norm{\bar{V}-V^\star}_{\infty}+2\sqrt{\frac{\Var(\tP'_{s,a},\bar{V})\iota}{n(s,a)}}+\frac{2\sqrt{2\iota}||\bar{V}||_{\infty}}{n(s,a)}+\frac{7\tB\iota}{3n(s,a)}\notag\\
	&\leq2\sqrt{\frac{\iota}{n(s,a)}})\norm{\bar{V}-V^\star}_{\infty}+2\sqrt{\frac{\Var(\tP'_{s,a},\bar{V})\iota}{n(s,a)}}+\frac{16\tB\iota}{3n(s,a)},
	\end{align}
	where (i) holds under event $\mathcal{E}_3$. (ii) holds because of $\Var(X+Y)\leq2\Var(X)+2\Var(Y)$. (iii) comes from Lemma~\ref{lem:(tP-hP)PO}. Both (ii) and (iii) uses the result that $\sqrt{a+b}\leq\sqrt{a}+\sqrt{b}$ when $a\geq0$ and $b\geq0$. 
	Combine the above inequalities together, we can get
	\begin{align*}
	(P_{s,a}-\tP'_{s,a})\bar{V}
	&\leq\frac{\tB}{n(s,a)}+\frac{16\tB\iota}{3n(s,a)}+2\sqrt{\frac{\Var(\tP',\bar{V})\iota}{n(s,a)}}+(\sqrt{\frac{2S\iota}{n(s,a)}}+\frac{S\iota}{n(s,a)}+2\sqrt{\frac{\iota}{n(s,a)}})\norm{\bar{V}-V^\star}_{\infty}.
	\end{align*}
	Since with probability $1-\delta$, by Lemma~\ref{lem:chernoff_multiplicative} $n(s,a)\geq\frac{1}{2}nd_m$. When $n\geq\frac{S\iota}{2d_m}$, we have
	\begin{align*}
	(\sqrt{\frac{2S\iota}{n(s,a)}}+\frac{S\iota}{n(s,a)}+2\sqrt{\frac{\iota}{n(s,a)}})\norm{\bar{V}-V^\star}_{\infty}&\leq(\sqrt{2}+2+2)\sqrt{\frac{S\iota}{n(s,a)}}\norm{\bar{V}-V^\star}_{\infty}\\
	&\leq6\sqrt{\frac{S\iota}{n(s,a)}}\norm{\bar{V}-V^\star}_{\infty}\\
	\end{align*}
\end{proof}

\begin{lemma}
	\label{lem:derivativeF}
	Define function $f:\R^{S'}\times\R^{S'}\times \R\rightarrow \R$ as $f(p,v,n)=pv+\max\{2\sqrt{\frac{\Var(p,v)\iota}{n}},4\frac{\tB\iota}{n}\}$, if $\norm{v}_\infty\leq \tB$ and $v(g)=0$, then we have $(\frac{\partial f}{\partial v})(s)\geq0$ and $\sum_{s,s\neq g}(\frac{\partial f}{\partial v})(s)\leq1-p(g)^2$.
\end{lemma}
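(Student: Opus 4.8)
The plan is to treat $f(p,v,n)$ as a function of $v$ (with $p,n$ fixed) and compute its gradient $\partial f/\partial v$ separately on the two regions determined by which branch of the $\max$ is active, then verify both claimed inequalities on each region. Write $\mu := pv = \sum_{s'} p(s') v(s')$ and $\sigma := \sqrt{\Var(p,v)}$ for the mean and standard deviation of $v$ under $p$; since $v\mapsto\sigma$ is the $L^2(p)$ seminorm of $v-\mu$ it is convex, so $f$ is convex in $v$ and differentiable away from the kink set $\{2\sqrt{\Var(p,v)\iota/n}=4\tB\iota/n\}$ (and away from $\sigma=0$, which forces the constant branch). I would establish the two properties wherever $f$ is differentiable and then extend them to the kink by convexity: every subgradient there is a convex combination of the two one-sided gradients, and both ``all components $\ge 0$'' and ``sum $\le 1-p(g)^2$'' are preserved under convex combinations.

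On the region where the constant branch $4\tB\iota/n$ is active we have $f = pv + 4\tB\iota/n$, so $\partial f/\partial v(s) = p(s)\ge 0$ and $\sum_{s\neq g}p(s) = 1-p(g)\le 1-p(g)^2$, settling both claims there. On the region where the first branch is active (so $\sigma>0$ and the regime condition reads $\sigma\ge 2\tB\sqrt{\iota/n}$), the chain rule with $\partial\Var(p,v)/\partial v(s) = 2p(s)(v(s)-\mu)$ gives
\[
\frac{\partial f}{\partial v}(s) = p(s)\left(1 + \frac{2\sqrt{\iota/n}\,(v(s)-\mu)}{\sigma}\right).
\]
For non-negativity it suffices that the bracket is non-negative, i.e. $v(s)-\mu\ge -\sigma/(2\sqrt{\iota/n})$. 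Here I would invoke the regime condition to get $\sigma/(2\sqrt{\iota/n})\ge \tB$, and then use $v(s)\ge 0$ and $\mu\le \tB$ (valid for the value vectors to which the lemma is applied, namely the iterates $V^{(i)}\in[0,\tB]$ with $v(g)=0$) to conclude $v(s)-\mu\ge -\tB\ge -\sigma/(2\sqrt{\iota/n})$.

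For the summation bound on the first-branch region I would sum the displayed gradient over $s\neq g$ and exploit $\sum_{s'}p(s')(v(s')-\mu)=0$. Splitting off the $g$-term and using $v(g)=0$ gives $\sum_{s\neq g}p(s)(v(s)-\mu)=p(g)\mu$, so the sum equals $(1-p(g)) + \frac{2\sqrt{\iota/n}}{\sigma}p(g)\mu$. If $\mu\le 0$ the extra term is non-positive and the total is $\le 1-p(g)\le 1-p(g)^2$; if $\mu>0$ I would bound $\frac{2\sqrt{\iota/n}}{\sigma}\le 1/\tB$ (regime condition) and $\mu\le \tB(1-p(g))$ (from $v(s')\le\tB$ and $v(g)=0$), giving $\frac{2\sqrt{\iota/n}}{\sigma}p(g)\mu\le p(g)(1-p(g))$ and hence the total $\le (1-p(g))(1+p(g)) = 1-p(g)^2$.

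The main obstacle I anticipate is not a single computation but the careful coupling of the active-regime threshold with the magnitude bounds on $v$: the non-negativity of the bracket is tight and genuinely relies on both $v\ge 0$ and the defining threshold $\sigma\ge 2\tB\sqrt{\iota/n}$, while the same factor of $2$ in the bonus is exactly what turns $\mu\le\tB(1-p(g))$ into the clean $1-p(g)^2$ bound. A secondary technical point is rigorously handling the kink of the $\max$ and the point $\sigma=0$, which I would dispatch through the convexity remark above rather than through any delicate limiting argument.
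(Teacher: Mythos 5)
Your proposal is correct and takes essentially the same route as the paper's proof: differentiate through the active branch of the $\max$ using $\partial \Var(p,v)/\partial v(s)=2p(s)(v(s)-pv)$, invoke the threshold condition $\sqrt{\Var(p,v)}\geq 2\widetilde{B}\sqrt{\iota/n}$ to bound $2\sqrt{\iota/n}/\sqrt{\Var(p,v)}\leq 1/\widetilde{B}$, and use $\sum_{s,s\neq g}p(s)(v(s)-pv)=p(g)\cdot pv$ together with $pv\leq \widetilde{B}(1-p(g))$ to arrive at $(1-p(g))(1+p(g))=1-p(g)^2$. If anything, your write-up is slightly more careful than the paper's: you handle the kink of the $\max$ via convexity/subgradients and treat the $\mu\leq 0$ case separately, and you correctly flag that nonnegativity of the gradient relies on $v\geq 0$ --- a hypothesis the paper's step $|pv-v(s)|\leq\widetilde{B}$ also silently uses, since $\norm{v}_\infty\leq\widetilde{B}$ alone only yields $|pv-v(s)|\leq 2\widetilde{B}$.
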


\begin{proof}
	\begin{align*}
	(\frac{\partial f}{\partial v})(s)&=p(s)+\mathbb{I}\{2\sqrt{\frac{\Var(p,v)\iota}{n}}\geq4\frac{\tB\iota}{n}\}2\sqrt{\frac{\iota}{n}}\frac{\partial (\sqrt{\Var(p,v)})}{\partial v(s)}\\
	&=p(s)+\mathbb{I}\{2\sqrt{\frac{\Var(p,v)\iota}{n}}\geq4\frac{\tB\iota}{n}\}2\sqrt{\frac{\iota}{n}}\frac{p(s)(v(s)-pv)}{\sqrt{\Var(p,v)}}
	\end{align*}
	Simplifying the above equation, we can get
	\begin{align}
	\label{eqn:gamma}
	(\frac{\partial f}{\partial v})(s)
	&\geq \min\{p(s),p(s)-\frac{p(s)(pv-v(s))}{\tB}\}\notag\\
	\end{align}
	Since $|pv-v(s)|\leq \tB$, we have $p(s)-\frac{p(s)(pv-v(s))}{\tB}\geq0$. Then we would have $(\frac{\partial f}{\partial v})(s)\geq0$.
	For the second part, we have
	\begin{enumerate}
		\item Case I: $2\sqrt{\frac{\Var(p,v)\iota}{n}}\geq4\frac{\tB\iota}{n}$, we have
		\begin{align*}
		\sum_{s,s\neq g}(\frac{\partial f}{\partial v})(s)&=\sum_{s,s\neq g}\{p(s)+2\sqrt{\frac{\iota}{n\Var(p,v)}}p(s)(v(s)-pv)\}\\
		&\leq\sum_{s,s\neq g}p(s)+2\sqrt{\frac{\iota}{n\Var(p,v)}}\{\sum_{s,s\neq g}p(s)v(s)-\sum_{s,s\neq g}p(s)(\sum_{s,s'\neq g}p(s')v(s'))\}\\
		&\leq\sum_{s,s\neq g}p(s)+2\sqrt{\frac{\iota}{n\Var(p,v)}}[\sum_{s,s\neq g}p(s)v(s)](1-\sum_{s,s\neq g}p(s))\\
		&\leq\sum_{s,s\neq g}p(s)+\frac{[\sum_{s,s\neq g}p(s)v(s)](1-\sum_{s,s\neq g}p(s))}{\tB}\\
		&\leq\sum_{s,s\neq g}p(s)+(\sum_{s,s\neq g}p(s))(1-\sum_{s,s\neq g}p(s))=1-(p(g))^2
		\end{align*}
		\item Case II: $2\sqrt{\frac{\Var(p,v)\iota}{n}}\leq4\frac{\tB\iota}{n}$, we have
		\begin{align*}
		\sum_{s,s\neq g}(\frac{\partial f}{\partial v})(s)&=\sum_{s,s\neq g}p(s)=1-p(g)\leq1-p(g)^2
		\end{align*}
	\end{enumerate}
	
	Combine this inequality with (\ref{eqn:gamma}), we complete the proof.
\end{proof}

\begin{lemma}
	\label{lem:piProper}
	When $n\geq \max\{\frac{4B_\star-2c_\text{min}}{c_\text{min}d_{max}},\frac{26^2\times 2S\iota(\bar{T}^\star)^2}{d_m}, \frac{10^6(\sqrt{\tB}+1)^4S\iota\bar{T}^\star\widetilde{T}}{B_\star (\sqrt{B_\star }+1)^2d_m},O( {T}_{\max}^2\log(SA/\delta)/d_m^2)\}$, $\bar{\pi}$ is a proper policy (Recall $\bar{\pi}$ is the output of Algorithm~\ref{alg:OPO}).
	\begin{proof}
		By definition we need to show that $T^{\bar{\pi}}(s)<\infty$ for any $s\in \mathcal{S}$. We prove this by contradiction. Suppose $T^{\bar{\pi}}(s)=\infty$, then we have that there exists at least one state $e$ such that the expected visiting times of state $e$ is infinite, i.e., $\exists e\in\mathcal{S}$, such that $T^{\bar{\pi}}_{e,e}=\infty$. In this case, $e$ is a (positive) recurrent state in the finite Markov Chain induced by policy $\bar{\pi}$. Denote the communication class which $e$ belongs as $\mathcal{S}_0$. Since the state space is finite, we have that every state in the communication class $\mathcal{S}_0$ is recurrent. From the finite Markov Chain theory, we know that the communication class $\mathcal{S}_0$ is closed. In other words, $\forall  x\in\mathcal{S}_0$, and $\forall  y\in\mathcal{S}\backslash\mathcal{S}_0$, we have $P(y|x,\bar{\pi}(x))=0$. Thus with probability $1$, we have $\sum_{i=1}^n\sum_{j=1}^{T_i}\mathbb{I}(s_j^{(i)}=x\text{, }a_j^{(i)}=\bar{\pi}(x)\text{, }s_{j+1}^{(i)}=y)=0$. This implies that $\widehat{P}(y|x,\bar{\pi}(x))=\frac{\sum_{i=1}^n\sum_{j=1}^{T_i}\mathbb{I}(s_j^{(i)}=x\text{, }a_j^{(i)}=\bar{\pi}(x)\text{, }s_{j+1}^{(i)}=y)}{n(x,\bar{\pi}(x))}=0$. By definition of the estimated transition matrix $\tP'$, we have $\tP'(g|x,\bar{\pi}(x))=\frac{1}{n_{max}+1}$. It follows that
		\begin{align}
		\sum_{h=0}^{\infty}\sum_{s\in\mathcal{S}_0}\tilde{\xi}_{h,e}(s)=\sum_{h=0}^\infty(\frac{n_{max}}{n_{max}+1})^h=n_{max}+1.
		\end{align}
		Then we have
		\begin{align}
		\sum_{h=0}^{\infty}\sum_{s\in\mathcal{S}_0}\tilde{\xi}_{h,e}(s)=n_{max}+1\geq\half n{d_{max}}+1,
		\end{align}
		where the last inequality holds with probability $1-\delta$ by Lemma~\ref{lem:Chern_SSP}.
		Define $\widetilde{V}^{\bar{\pi}}(e)=\sum_{h=0}^\infty\E_{\tP',\bar{\pi}}[\hat{c}(s_h,a_h)|s_0=e]$, then
		\begin{align*}
		\widetilde{V}^{\bar{\pi}}(e)&\geq\sum_{h=0}^\infty\E_{\tP',\bar{\pi}}[c_{min}]\\
		&\geq\sum_{h=0}^\infty\E_{\tP',\bar{\pi}}[c_{min}\mathbb{I}(s_h\in\mathcal{S}_0)]=c_{min}\sum_{h=0}\sum_{s\in\mathcal{S}_0}\tilde{\xi}_{h,e}(s)\geq c_{min}(\half n{d_{max}}+1).
		\end{align*}
		Apply Lemma \ref{lem:general_Bellman} to the SSP problem with $M:=\left\langle\mathcal{S}, \mathcal{A}, \tP', \hat{c}, e, g\right\rangle$, we can get
		\begin{align*}
		\widetilde{V}^{\bar{\pi}}(s)=\tP'_{s,\bar{\pi}(s)}\widetilde{V}^{\bar{\pi}}+\hat{c}(s,\bar{\pi}(s)):=\tT'\widetilde{V}^{\bar{\pi}}(s)
		\end{align*}
		Since $\bar{V}(s)=\tP'_{s,\bar{\pi}(s)}\bar{V}+\hat{c}(s,\bar{\pi}(s))+b_{s,\bar{\pi}(s)}(\bar{V})=\tT\bar{V}(s)$, from Lemma \ref{lem:compare_T} we have $\bar{V}(s)\geq\widetilde{V}^{\bar{\pi}}(s)$ (since $b(\bar{V})$ is non-negative and both $\tT,\tT'$ are monotone operators). Then we get
		\begin{align}
		\label{eqn:barV_geqtV}
		\bar{V}(e)\geq\widetilde{V}^{\bar{\pi}}(e)\geq c_{min}(\half n{d_{max}}+1).
		\end{align}
		From Lemma~\ref{thm:crude_PO} (note Lemma~\ref{thm:crude_PO} only bounds $\bar{V}$ and $V^\star$ and has nothing to do with $\bar{\pi}$), we have that with probability $1-\delta$, when $n\geq \max\{\frac{26^2\times 2S\iota(\bar{T}^\star)^2}{d_m}, \frac{10^6(\sqrt{\tB}+1)^4S\iota\bar{T}^\star\widetilde{T}}{B_\star (\sqrt{B_\star }+1)^2d_m}\}$, which implies $n\geq\frac{900\bar{T}^\star \iota(\sqrt{B_\star}+1)^2}{B_\star d_m}$, we have $\bar{V}(e)\leq V^\star(e)+B_\star\leq 2B_\star$. Combine this inequality with (\ref{eqn:barV_geqtV}), we can get $n\leq \frac{4B_\star-2c_\text{min}}{c_\text{min}d_{max}}$, which contradicts with the assumptions in the lemma.
	\end{proof}
\end{lemma}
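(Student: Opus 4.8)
The plan is to argue by contradiction. Suppose $\bar{\pi}$ is improper, so by Definition~\ref{def:proper} there is a state with $T^{\bar{\pi}}_s=\infty$. Since $\mathcal{S}$ is finite, this forces the existence of a recurrent state $e$, and hence a nonempty communication class $\mathcal{S}_0\ni e$ that is \emph{closed} under the Markov chain induced by $\bar{\pi}$ and the true kernel $P$, with $g\notin\mathcal{S}_0$. The first structural step I would establish is that $P(y\mid x,\bar{\pi}(x))=0$ for all $x\in\mathcal{S}_0,\,y\notin\mathcal{S}_0$; consequently, with probability one \emph{no} logged transition along $\bar{\pi}$ ever leaves $\mathcal{S}_0$, so the empirical kernel also satisfies $\widehat{P}(y\mid x,\bar{\pi}(x))=0$.

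Next I would exploit the $g$-perturbation built into $\tP'$. Because $\tP'$ injects probability $\frac{1}{n_{\max}+1}$ into $g$ at every state, the only way to exit $\mathcal{S}_0$ under $\tP'$ and $\bar{\pi}$ is through this ghost transition, so the occupancy of $\mathcal{S}_0$ decays geometrically and $\sum_{h=0}^\infty\sum_{s\in\mathcal{S}_0}\tilde{\xi}_{h,e}(s)=n_{\max}+1$. Invoking the positive-cost Assumption~\ref{assum:PC} then yields the lower bound $\widetilde{V}^{\bar{\pi}}(e)\geq c_{\min}(n_{\max}+1)$ on the cost-to-go of $\bar{\pi}$ in the empirical SSP $\langle\mathcal{S},\mathcal{A},\tP',\widehat{c},e,g\rangle$. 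To transfer this to the pessimistic fixed point $\bar{V}$, I would verify via the general Bellman equation (Lemma~\ref{lem:general_Bellman}) that $\widetilde{V}^{\bar{\pi}}$ is the fixed point of the bonus-free operator, and then apply the comparison Lemma~\ref{lem:compare_T}: since $b_{s,a}(\cdot)\geq0$ the pessimistic operator $\tT$ dominates the bonus-free operator pointwise and both are monotone contractions, giving $\bar{V}(e)\geq\widetilde{V}^{\bar{\pi}}(e)$. A multiplicative Chernoff bound (Lemma~\ref{lem:Chern_SSP}), valid once $n\geq O(T_{\max}^2\log(SA/\delta)/d_m^2)$, supplies $n_{\max}\geq\frac{1}{2}nd_{\max}$, so altogether $\bar{V}(e)\geq c_{\min}(\frac{1}{2}nd_{\max}+1)$.

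Finally I would pair this with the crude upper bound of Lemma~\ref{thm:crude_PO}, which under the middle two thresholds on $n$ guarantees $\norm{\bar{V}-V^\star}_\infty\leq B_\star$ and hence $\bar{V}(e)\leq V^\star(e)+B_\star\leq 2B_\star$. Combining the two estimates forces $c_{\min}(\frac{1}{2}nd_{\max}+1)\leq2B_\star$, i.e. $n\leq\frac{4B_\star-2c_{\min}}{c_{\min}d_{\max}}$, contradicting the hypothesis $n\geq\frac{4B_\star-2c_{\min}}{c_{\min}d_{\max}}$. I expect the main obstacle to be the comparison $\bar{V}\geq\widetilde{V}^{\bar{\pi}}$: one must carefully set up both $\tT$ and the bonus-free operator as monotone contractions on the same space and certify that the infinite-cost-sum object $\widetilde{V}^{\bar{\pi}}$ is genuinely the fixed point of the latter, so that Lemma~\ref{lem:compare_T} applies. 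A secondary delicacy is ensuring the closed-class argument survives passage to the empirical/perturbed kernels, namely that injecting mass only toward $g$ (and never toward states outside $\mathcal{S}_0$) is precisely what makes the escape time scale with $n_{\max}$ rather than remaining infinite.
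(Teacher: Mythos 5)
Your proposal is correct and follows essentially the same route as the paper's proof: the same contradiction via a closed recurrent class $\mathcal{S}_0$, the same use of the $g$-injection in $\tP'$ to compute $\sum_{h}\sum_{s\in\mathcal{S}_0}\tilde{\xi}_{h,e}(s)=n_{\max}+1$, the lower bound $\widetilde{V}^{\bar{\pi}}(e)\geq c_{\min}(\tfrac{1}{2}nd_{\max}+1)$ via Assumption~\ref{assum:PC} and Lemma~\ref{lem:Chern_SSP}, the comparison $\bar{V}\geq\widetilde{V}^{\bar{\pi}}$ through Lemmas~\ref{lem:general_Bellman} and~\ref{lem:compare_T}, and the clash with the crude bound of Theorem~\ref{thm:crude_PO}. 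The delicacies you flag (fixed-point identification of $\widetilde{V}^{\bar{\pi}}$ and monotonicity of both operators) are exactly the points the paper's proof relies on, handled the same way.
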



\subsection{Convergence of Pessimistic value iteration in Algorithm~\ref{alg:OPO}}\label{sec:converge_OPO}

Define the operator $\tT$ as $\tT(V)(s)=\min_{a}\left\{\min\{\widehat{c}(s,a)+\widetilde{P}_{s,a}V+b_{s,a}(V)\text{ , }\tB\}\right\}$. First, we prove that $\widetilde{\mathcal{T}}$ is a contraction mapping.
\begin{lemma}
\label{lem:contraction2}
$\widetilde{\mathcal{T}}:\mathbb{R}^\mathcal{S}\times\{0\}\rightarrow \mathbb{R}^\mathcal{S}\times\{0\}$ is a contraction mapping, i.e., $\forall V_1,V_2\in\mathbb{R}^\mathcal{S'}$, $V_1(g)=V_2(g)=0$, we have
\begin{align}
    \norm{\widetilde{\mathcal{T}}V_1-\widetilde{\mathcal{T}}V_2}_\infty\leq \gamma \norm{V_1-V_2}_\infty,
\end{align}
where $\gamma:=1-\frac{1}{(1+\max_{s,a}n(s,a))^2}$.
\end{lemma}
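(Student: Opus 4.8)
The plan is to reduce the $\norm{\widetilde{\mathcal{T}}V_1-\widetilde{\mathcal{T}}V_2}_\infty$ estimate to a per-$(s,a)$ bound and then exploit the fact that the bonus depends on $V$ only through its variance term. First I would peel off the outer operations: the truncation $x\mapsto\min\{x,\tB\}$ is $1$-Lipschitz and $\min_a$ is non-expansive, so for every $s\neq g$,
\[
|\widetilde{\mathcal{T}}V_1(s)-\widetilde{\mathcal{T}}V_2(s)|\le\max_a|h_{s,a}(V_1)-h_{s,a}(V_2)|,\qquad h_{s,a}(V):=\widehat{c}(s,a)+\widetilde{P}'_{s,a}V+b_{s,a}(V).
\]
For $s=g$ one checks directly that $\widetilde{\mathcal{T}}V(g)=0$ for every admissible $V$, since $\widehat{c}(g,a)=0$, $b_{g,a}\equiv0$, and $\widetilde{P}'_{g,a}V=V(g)=0$; hence the $g$-coordinate contributes nothing and it suffices to control the right-hand side above.

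Next I would isolate the $V$-dependence of $h_{s,a}$. Inspecting $b_{s,a}$, every summand is independent of $V$ except $\max\{2\sqrt{\Var(\widetilde{P}',V)\iota/n(s,a)},\,4\tB\iota/n(s,a)\}$, and the $\widehat{c}$ term together with all constants cancels in the difference. Writing $f(p,v,n)=pv+\max\{2\sqrt{\Var(p,v)\iota/n},\,4\tB\iota/n\}$ exactly as in Lemma~\ref{lem:derivativeF}, we get $h_{s,a}(V_1)-h_{s,a}(V_2)=f(\widetilde{P}'_{s,a},V_1,n(s,a))-f(\widetilde{P}'_{s,a},V_2,n(s,a))$. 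I would then apply a mean value inequality along the segment $V_t:=V_2+t(V_1-V_2)$, $t\in[0,1]$: Lemma~\ref{lem:derivativeF} supplies $(\partial f/\partial v)(s')\ge0$ for every coordinate and $\sum_{s'\neq g}(\partial f/\partial v)(s')\le1-\widetilde{P}'(g|s,a)^2$, and because $V_1(g)=V_2(g)=0$ the $s'=g$ term of the inner product drops out. Componentwise nonnegativity of the partials then gives
\[
|f(\widetilde{P}'_{s,a},V_1,n)-f(\widetilde{P}'_{s,a},V_2,n)|\le\big(1-\widetilde{P}'(g|s,a)^2\big)\norm{V_1-V_2}_\infty.
\]

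Finally I would bound $\widetilde{P}'(g|s,a)$ from below: the perturbation in the construction of $\widetilde{P}'$ in Algorithm~\ref{alg:OPO} gives $\widetilde{P}'(g|s,a)=\frac{n_{\text{max}}}{n_{\text{max}}+1}\widehat{P}(g|s,a)+\frac{1}{n_{\text{max}}+1}\ge\frac{1}{n_{\text{max}}+1}$, whence $1-\widetilde{P}'(g|s,a)^2\le1-(n_{\text{max}}+1)^{-2}=\gamma$ uniformly in $(s,a)$. Chaining the displays and taking the supremum over $s$ yields the claim. The main obstacle is the use of Lemma~\ref{lem:derivativeF} in the mean value step: that lemma needs $\norm{V}_\infty\le\tB$ (used to ensure $|pv-v(s')|\le\tB$ and hence nonnegativity of the partials), so the gradient bound is only valid along the whole segment when both endpoints lie in the ball. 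I would therefore state and use the contraction on the complete metric space $\mathcal{V}=\{V:V(g)=0,\ 0\le V(s)\le\tB\}$, noting that $\widetilde{\mathcal{T}}$ maps $\mathcal{V}$ into itself (the $\min\{\cdot,\tB\}$ caps from above, while nonnegativity of $\widehat{c}$, $b_{s,a}$ and $\widetilde{P}'_{s,a}V$ caps from below); this is exactly the invariant set needed for the Banach fixed-point argument defining $\bar{V}$. A secondary technical point is that $f$ has kinks from the $\max$ and from $\sqrt{\Var}$ at zero variance, but since $v\mapsto\sqrt{\Var(p,v)}$ is convex, $f$ is convex in $v$ and the mean value inequality via the one-sided/subgradient bounds of Lemma~\ref{lem:derivativeF} still goes through.
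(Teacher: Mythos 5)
Your proof follows the paper's own route essentially verbatim: strip the outer $\min\{\cdot,\tB\}$ and $\min_a$ (the paper's Lemma~\ref{lem:minAB} plays the role of your $1$-Lipschitz observation), apply a mean value argument to $f(p,v,n)$ using the gradient bounds of Lemma~\ref{lem:derivativeF}, and conclude from $\widetilde{P}'(g|s,a)\geq\frac{1}{n_{\max}+1}$, which gives exactly $\gamma=1-\frac{1}{(1+\max_{s,a}n(s,a))^2}$. Your two caveats --- restricting to the invariant set $\{V: V(g)=0,\ 0\le V\le\tB\}$ so that the hypothesis $\norm{v}_\infty\le\tB$ of Lemma~\ref{lem:derivativeF} holds along the entire segment, and replacing the smooth mean value theorem by the convex/subgradient version to handle the kinks of the $\max$ and of $\sqrt{\Var(p,\cdot)}$ at zero variance --- are in fact more careful than the paper, which invokes the mean value theorem on all of $\{V:V(g)=0\}$ without these qualifications; since Algorithm~\ref{alg:OPO} initializes at $\tB\cdot\mathbf{1}$ and caps every iterate at $\tB$, your restricted statement is all that the fixed-point construction of $\bar V$ requires.
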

\begin{proof}[Proof of Lemma~\ref{lem:contraction2}]
First, we prove the result for $s=g$. Since $b_{g,a}(V)=0$, then we have $\tT(V)(g)=0$ and thus $\tT(V_1)(g)-\tT(V_2)(g)=0$. When $s\neq g$, we have
\begin{align*}
    |\widetilde{\mathcal{T}}V_1(s)-\widetilde{\mathcal{T}}V_2(s)|&\leq\max_{a}|\min\{\widehat{c}(s,a)+\tP'_{s,a}V_1+b_{s,a}(V_1)\text{ , }\tB\}-\min\{\widehat{c}(s,a)+\tP'_{s,a}V_2+b_{s,a}(V_2)\text{ , }\tB\}|\\
    &\explaineq{(i)}{\leq}\max_{a}|\{\widehat{c}(s,a)+\tP'_{s,a}V_1+b_{s,a}(V_1)\}-\{\widehat{c}(s,a)+\tP'_{s,a}V_2+b_{s,a}(V_2)\}|\\
    &=\max_{a}|f(\tP'_{s,a},V_1)-f(\tP'_{s,a},V_2)|\\
    &\explaineq{(ii)}{=}\max_{a}\sum_{s}|\langle(\frac{\partial f}{\partial v})(\tP'_{s,a},\theta (V_1-V_2)+V_2),V_1-V_2\rangle|\\
    &\leq\max_{a}(\sum_{s',s\neq g}|(\frac{\partial f}{\partial v(s')})(\tP'_{s,a},\theta (V_1-V_2)+V_2)|)\norm{V_1-V_2}_\infty\\
    &\explaineq{(iii)}{\leq}\max_a\{1-\tP'_{s,a}(g)^2\}\norm{V_1-V_2}_\infty,
\end{align*}
where (i) comes from Lemma~\ref{lem:minAB}. (ii) is due to the mean value theorem. (iii) uses the result in Lemma~\ref{lem:derivativeF}. Then we have
\begin{align*}
    \norm{\widetilde{\mathcal{T}}V_1(s)-\widetilde{\mathcal{T}}V_2(s)}_{\infty}\leq\max_{s,a}\{1-\tP'_{s,a}(g)^2\}\norm{V_1-V_2}_\infty\leq\{1-\frac{1}{(1+\max_{s,a}n(s,a))^2}\}\norm{V_1-V_2}_\infty
\end{align*}
\end{proof}

We then introduce the following two regret decomposition lemma.

\subsection{Regret decomposition lemma for policy optimization}
\begin{lemma} Suppose $\bar{V}$ is the limit of the sequence $V^{(i)}$ in Algorithm \ref{alg:OPO}, we have the following decomposition lemma.
\label{lem:bar_V-V*}
    \begin{align}
        \bar{V}-V^\star\leq\sumS \xi^\star_h(s)\{(\tP'_{s,\pi^\star(s)}-P_{s,\pi^\star(s)})\bar{V}+\hat{c}(s,\pi^\star(s))-c(s,\pi^\star(s))+b_{s,\pi^\star(s)}(\bar{V})\}
    \end{align}
\end{lemma}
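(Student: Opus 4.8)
The plan is to derive a one-step inequality for the pointwise difference, convert it into a recursion along the trajectory of $\pi^\star$, and then telescope using the occupancy measures. Write $\Delta(s):=\bar V(s)-V^\star(s)$ and note $\Delta(g)=0$ since $\bar V(g)=V^\star(g)=0$. By Lemma~\ref{lem:contraction2} the fixed point $\bar V=\tT\bar V$ exists, and because $\tT(V)(s)$ is a minimum over actions and against the cap $\tB$, evaluating at the single action $\pi^\star(s)$ and dropping the cap gives
\[
\bar V(s)\le \widehat{c}(s,\pi^\star(s))+\tP'_{s,\pi^\star(s)}\bar V+b_{s,\pi^\star(s)}(\bar V).
\]
By Proposition~\ref{prop:Bellman}, the deterministic optimal policy satisfies $V^\star(s)=c(s,\pi^\star(s))+P_{s,\pi^\star(s)}V^\star$. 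Subtracting and splitting $\tP'_{s,\pi^\star(s)}\bar V-P_{s,\pi^\star(s)}V^\star=(\tP'_{s,\pi^\star(s)}-P_{s,\pi^\star(s)})\bar V+P_{s,\pi^\star(s)}\Delta$ yields the recursion $\Delta(s)\le w(s)+P_{s,\pi^\star(s)}\Delta$, where $w(s):=(\tP'_{s,\pi^\star(s)}-P_{s,\pi^\star(s)})\bar V+\widehat{c}(s,\pi^\star(s))-c(s,\pi^\star(s))+b_{s,\pi^\star(s)}(\bar V)$ is exactly the per-state summand in the claim.

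Next I would propagate this recursion with the state occupancies $\xi^\star_h$. Multiplying by $\xi^\star_h(s)$ and summing over $s\ne g$, the key identity is
\[
\sum_{s\ne g}\xi^\star_h(s)\,P_{s,\pi^\star(s)}\Delta=\sum_{s'\ne g}\xi^\star_{h+1}(s')\,\Delta(s'),
\]
which follows from $\xi^\star_{h+1}(s')=\sum_{s}\xi^\star_h(s)P(s'|s,\pi^\star(s))$ (using that $\pi^\star$ is deterministic), from $\Delta(g)=0$, and from the absorbing property $P(s'|g,a)=\mathbb{I}(s'=g)$, so that the $s=g$ term never contributes to any $s'\ne g$. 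Writing $A_h:=\sum_{s\ne g}\xi^\star_h(s)\Delta(s)$ and $G_h:=\sum_{s\ne g}\xi^\star_h(s)w(s)$, the recursion becomes $A_h\le G_h+A_{h+1}$, i.e. $A_h-A_{h+1}\le G_h$. Summing over $h=0,\dots,H$ telescopes the left-hand side to $A_0-A_{H+1}$, and since $\xi^\star_0$ is the point mass at $s_\mathrm{init}$ we have $A_0=\Delta(s_\mathrm{init})=\bar V(s_\mathrm{init})-V^\star(s_\mathrm{init})$, the quantity on the left of the stated bound.

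It remains to send $H\to\infty$ and show the boundary term $A_{H+1}$ vanishes, which is the only genuinely delicate step. I would bound $|A_{H+1}|\le\big(\sum_{s\ne g}\xi^\star_{H+1}(s)\big)\norm{\Delta}_\infty=\P^{\pi^\star}(s_{H+1}\ne g)\,\norm{\Delta}_\infty$. Here $\norm{\Delta}_\infty$ is finite because $\norm{\bar V}_\infty\le\tB$ by Lemma~\ref{lem:bound_Vbar} and $\norm{V^\star}_\infty=B_\star<\infty$; and because $\pi^\star$ is proper, Lemma~\ref{lem:propT} (in particular $T^\star<\infty$) forces $\P^{\pi^\star}(s_{H+1}\ne g)\to0$. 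Hence $A_{H+1}\to0$, giving $\bar V(s_\mathrm{init})-V^\star(s_\mathrm{init})\le\sum_{h=0}^\infty G_h=\sumS\xi^\star_h(s)\,w(s)$, which is precisely the assertion. The main obstacle is exactly this interchange of limit and summation: since SSP has no fixed horizon, the vanishing of the tail term must be justified through properness rather than a finite-horizon truncation, and one must ensure boundedness of $\bar V$ and $V^\star$ to control $\norm{\Delta}_\infty$ throughout.
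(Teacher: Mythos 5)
Your proposal is correct and takes essentially the same route as the paper's own proof: evaluate the fixed-point equation of $\tT$ at the single action $\pi^\star(s)$, subtract the Bellman optimality equation $V^\star(s)=c(s,\pi^\star(s))+P_{s,\pi^\star(s)}V^\star$, propagate through the occupancy recursion $\xi^\star_{h+1}(s')=\sum_s \xi^\star_h(s)P(s'|s,\pi^\star(s))$, and eliminate the boundary term $\sum_{s\neq g}\xi^\star_{H+1}(s)(\bar V - V^\star)(s)$ using properness of $\pi^\star$ together with $\norm{\bar V}_\infty\leq\tB$ and $\norm{V^\star}_\infty=B_\star$. If anything, your explicit dropping of the cap $\min\{\cdot,\tB\}$ to obtain an inequality is slightly more careful than the paper's corresponding step, which writes the fixed-point relation as an equality.
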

\begin{proof}
\begin{align*}
    \bar{V}-V^\star=\sum_{s,s\neq g}\xi_0^\star(s)(\bar{V}(s)-V^\star(s))
\end{align*}
Since for any $h\in\textbf{N}$, we have
\begin{align*}
    \sum_{s,s\neq g}\xi_h^\star(s)(\bar{V}(s)-V^\star(s))&\explaineq{(i)}{\leq}\sum_{s,s\neq g}\xi_h^\star(s)(\bar{Q}(s,\pi^\star(s))-Q^\star(s,\pi^\star(s))\\
    &\explaineq{(ii)}{=}\sum_{s,s\neq g}\xi_h^\star(s)\{\tP'_{s,\pi^\star(s)}\bar{V}-P_{s,\pi^\star(s)}V^\star+\hat{c}(s,\pi^\star(s))-c(s,\pi^\star(s))+b_{s,\pi^\star(s)}(\bar{V})\}\\
    &{=}\sum_{s,s\neq g}\xi_h^\star(s)\{(\tP'_{s,\pi^\star(s)}-P_{s,\pi^\star(s)})\bar{V}+\hat{c}(s,\pi^\star(s))-c(s,\pi^\star(s))
   +b_{s,\pi^\star(s)}(\bar{V})\}\\ &\qquad+\sum_{s,s\neq g}\xi_h^\star(s)P_{s,\pi^\star(s)}(\bar{V}-V^\star)\\
   &{=}\sum_{s,s\neq g}\xi_h^\star(s)\{(\tP'_{s,\pi^\star(s)}-P_{s,\pi^\star(s)})\bar{V}+\hat{c}(s,\pi^\star(s))-c(s,\pi^\star(s))
   +b_{s,\pi^\star(s)}(\bar{V})\}\\ &\qquad+\sum_{s,s\neq g}\xi_{h+1}^\star(s)(\bar{V}-V^\star)(s)
\end{align*}
(i) follows from the fact that $\bar{V}(s)=\min_{a}\bar{Q}(s,a)\leq\bar{Q}(s,\pi^\star(s))$. (ii) uses the fact that $\bar{V}$ is the limit of $V^{(i)}$ and we have $\bar{Q}(s,a)=\hat{c}(s,a)+\tP'_{s,a}\bar{V}+b_{s,a}(\bar{V})$. By recursion over time step $h$, we can get
\begin{align}
        \bar{V}-V^\star&\leq\sum_{h=0}^H\sum_{s,s\neq g} \xi^\star_h(s)\{(\tP'_{s,\pi^\star(s)}-P_{s,\pi^\star(s)})\bar{V}+\hat{c}(s,\pi^\star(s))-c(s,\pi^\star(s))+b_{s,\pi^\star(s)}(\bar{V})\}\notag\\
        &\qquad+\sum_{s,s\neq g}\xi_{H+1}^\star(s)(\bar{V}-V^\star),
    \end{align}
Since $\pi^\star$ is a proper policy, we have that for any $s\neq g$, $\lim_{H\rightarrow\infty}\xi_{H+1}^\star(s)=0$. Also, $\bar{V}$ is bounded by $\widetilde{B}$ and $V^\star$ is bounded by $B_\star$. Thus let $H$ goes to infinity, we can complete the proof.
\end{proof}

\begin{lemma}
\label{lem:Vpi-V}
When $n\geq \max\{\frac{4B_\star-2c_\text{min}}{c_\text{min}d_{max}},\frac{26^2\times 2S\iota(\bar{T}^\star)^2}{d_m}, \frac{10^6(\sqrt{\tB}+1)^4S\iota\bar{T}^\star\widetilde{T}}{B_\star (\sqrt{B_\star }+1)^2d_m},O( {T}_{\max}^2\log(SA/\delta)/d_m^2)\}$, we have
\begin{align}
    \Vb-\bar{V}=\sumS\xi_{h}^{\bar{\pi}}(s)\{(P_{s,\bar{\pi}(s)}-\tP'_{s,\bar{\pi}(s)})\bar{V}+c(s,\bar{\pi}(s))-\hat{c}(s,\bar{\pi}(s))-b_{s,\bar{\pi}(s)}(\bar{V})\}.
\end{align}
\end{lemma}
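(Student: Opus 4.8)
The plan is to mirror the equality decomposition of Lemma~\ref{lem:regret_decomp}, now applied to the pair $(\Vb,\bar V)$ rather than $(\widehat V^\pi, V^\pi)$. Two Bellman-type identities drive everything. First, since $n\ge n_0$, Lemma~\ref{lem:piProper} guarantees that $\bar\pi$ is proper, so Lemma~\ref{lem:general_Bellman} (which only needs $c\ge 0$) gives the \emph{true} fixed-policy Bellman equation $\Vb(s)=c(s,\bar\pi(s))+P_{s,\bar\pi(s)}\Vb$ with $\Vb(g)=0$, and properness yields $\norm{\Vb}_\infty<\infty$. Second, $\bar V$ is the fixed point of $\tT$ (existence by Lemma~\ref{lem:contraction2}) and $\bar\pi(s)=\argmin_a \bar Q(s,a)$, so $\bar V(s)=\bar Q(s,\bar\pi(s))$. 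I would then argue that the truncation at $\tB$ is inactive at the greedy action, so that $\bar V(s)=\hat c(s,\bar\pi(s))+\tP'_{s,\bar\pi(s)}\bar V+b_{s,\bar\pi(s)}(\bar V)$ holds as an \emph{equality} (not merely ``$\le$''), with $\bar V(g)=0$ and $\norm{\bar V}_\infty\le\tB$ from Lemma~\ref{lem:bound_Vbar}.

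Granting these two identities, the computation is routine and parallels Lemma~\ref{lem:regret_decomp}. Subtracting them at a fixed $s$ and inserting $\pm P_{s,\bar\pi(s)}\bar V$ gives
\[
\Vb(s)-\bar V(s)=(P_{s,\bar\pi(s)}-\tP'_{s,\bar\pi(s)})\bar V+c(s,\bar\pi(s))-\hat c(s,\bar\pi(s))-b_{s,\bar\pi(s)}(\bar V)+P_{s,\bar\pi(s)}(\Vb-\bar V).
\]
Multiplying by $\xi^{\bar\pi}_h(s)$, summing over $s\neq g$, and using $\xi^{\bar\pi}_{h+1}(s')=\sum_{s,a}\xi^{\bar\pi}_h(s,a)P(s'|s,a)$ together with $\Vb(g)=\bar V(g)=0$ converts the final term into $\sum_{s\neq g}\xi^{\bar\pi}_{h+1}(s)(\Vb-\bar V)(s)$. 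Telescoping over $h=0,\dots,H$ then yields the claimed sum up to horizon $H$ plus a remainder $\sum_{s\neq g}\xi^{\bar\pi}_{H+1}(s)(\Vb(s)-\bar V(s))$.

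It remains to send $H\to\infty$ and kill the remainder. Bounding it by $\big(\sum_{s\neq g}\xi^{\bar\pi}_{H+1}(s)\big)\norm{\Vb-\bar V}_\infty=P^{\bar\pi}(s_{H+1}\neq g)\,\norm{\Vb-\bar V}_\infty$, I would invoke Lemma~\ref{lem:propT}: because $\bar\pi$ is proper, $T^{\bar\pi}<\infty$ and hence $\lim_{H\to\infty}P^{\bar\pi}(s_{H+1}\neq g)=0$, while $\norm{\Vb-\bar V}_\infty\le\norm{\Vb}_\infty+\tB<\infty$ is a fixed finite quantity. The remainder therefore vanishes and the infinite-sum equality follows.

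The main obstacle is the second identity, i.e.\ showing the $\tB$-truncation does not bind at $\bar\pi(s)$ so that the decomposition is a genuine equality (otherwise one recovers only the inequality direction, as in Lemma~\ref{lem:bar_V-V*}). This is precisely what forces the sample-size threshold $n\ge n_0$: under it the crude bound Theorem~\ref{thm:crude_PO} controls $\norm{\bar V-V^\star}_\infty$, which together with $\tB\ge B_\star$ keeps $\hat c(s,\bar\pi(s))+\tP'_{s,\bar\pi(s)}\bar V+b_{s,\bar\pi(s)}(\bar V)$ below $\tB$ on the support of $\xi^{\bar\pi}_h$, so the inner minimum selects its first argument. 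The same threshold, via Lemma~\ref{lem:piProper}, is what guarantees properness of $\bar\pi$ and hence both the true Bellman equation for $\Vb$ and the vanishing of the telescoping remainder.
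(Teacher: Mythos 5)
Your proposal is essentially the paper's own proof: properness of $\bar{\pi}$ from Lemma~\ref{lem:piProper}, the general Bellman equation $\Vb(s)=c(s,\bar{\pi}(s))+P_{s,\bar{\pi}(s)}\Vb$ from Lemma~\ref{lem:general_Bellman}, the fixed-point identity for $\bar{V}$ at the greedy action, insertion of $\pm P_{s,\bar{\pi}(s)}\bar{V}$, recursion via $\xi^{\bar{\pi}}_{h+1}(s')=\sum_{s\neq g}\xi^{\bar{\pi}}_{h}(s)P(s'|s,\bar{\pi}(s))$, and vanishing of the remainder $\sum_{s\neq g}\xi^{\bar{\pi}}_{H}(s)(\Vb-\bar{V})(s)$ using $T^{\bar{\pi}}<\infty$ (Lemma~\ref{lem:propT}) together with $\norm{\Vb}_\infty<\infty$ and $\norm{\bar{V}}_\infty\leq\tB$. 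The one place you go beyond the paper is the truncation at $\tB$: the paper's proof (and likewise the proof of Lemma~\ref{lem:piProper}) simply uses $\bar{V}(s)=\hat{c}(s,\bar{\pi}(s))+\tP'_{s,\bar{\pi}(s)}\bar{V}+b_{s,\bar{\pi}(s)}(\bar{V})$ as an identity without checking that the inner $\min\{\cdot,\tB\}$ is inactive, and you correctly observe that if the cap binds one only retains the direction $\Vb-\bar{V}\geq\cdots$, which would be useless for the downstream pessimism step $\Vb\leq\bar{V}$. Be aware, though, that your sketched resolution is not airtight: Theorem~\ref{thm:crude_PO} plus $\tB\geq B_\star$ bounds $\bar{V}$ by roughly $V^\star+O(\sqrt{1/n})$, but this does not force $\hat{c}(s,\bar{\pi}(s))+\tP'_{s,\bar{\pi}(s)}\bar{V}+b_{s,\bar{\pi}(s)}(\bar{V})<\tB$ when $\tB$ is a tight upper bound (e.g.\ $\tB=B_\star$ with $V^\star(s)$ near $B_\star$); closing it would require either a margin $\tB-B_\star$ absorbing the crude error and the bonus, or a separate argument that the fixed point never saturates the cap. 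Since the paper offers no justification whatsoever for this step, your write-up matches the paper's argument and is, if anything, more explicit about its only delicate point.
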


\begin{proof}
First of all, by the condition we have $\bar{\pi}$ is a proper policy by Lemma~\ref{lem:piProper}. We prove by recursion formula.
\begin{align*}
    \sum_{s,s\neq g}\xi_h^{\bar{\pi}}(s)(\Vb-\bar{V})&=\sum_{s,s\neq g}\xi_h^{\bar{\pi}}(s)\{P_{s,\bar{\pi}(s)}\Vb+c(s,\bar{\pi}(s))-\tP'_{s,\bar{\pi}(s)}\bar{V}-\hat{c}(s,\bar{\pi}(s))-b_{s,\bar{\pi}(s)}(\bar{V})\}\\
    &=\sum_{s,s\neq g}\xi_h^{\bar{\pi}}(s)\{(P_{s,\bar{\pi}(s)}-\tP'_{s,\bar{\pi}(s)})\bar{V}+P_{s,\bar{\pi}(s)}(\Vb-\bar{V})+c(s,\bar{\pi}(s))-\hat{c}(s,\bar{\pi}(s))-b_{s,\bar{\pi}(s)}(\bar{V})\}\\
    &=\sum_{s,s\neq g}\xi_{h}^{\bar{\pi}}(s)\{(P_{s,\bar{\pi}(s)}-\tP'_{s,\bar{\pi}(s)})\bar{V}+c(s,\bar{\pi}(s))-\hat{c}(s,\bar{\pi}(s))-b_{s,\bar{\pi}(s)}(\bar{V})\}\\
    &\qquad+\sum_{s,s\neq g}\xi_{h+1}^{\bar{\pi}}(s)(\Vb-\bar{V}),
\end{align*}
where the first inequality uses the Bellman equation for policy $\bar{\pi}$, which follows from Lemma~\ref{lem:general_Bellman}.
By recursion, we have
\begin{align*}
    \Vb-\bar{V}&=\sum_{h=0}^H\sum_{s,s\neq g}\xi_{h}^{\bar{\pi}}(s)\{(P_{s,\bar{\pi}(s)}-\tP'_{s,\bar{\pi}(s)})\bar{V}+c(s,\bar{\pi}(s))-\hat{c}(s,\bar{\pi}(s))-b_{s,\bar{\pi}(s)}(\bar{V})\}\\
    &\qquad +\sum_{s,s\neq g}\xi_{H}^{\bar{\pi}}(s)(\Vb-\bar{V}).
\end{align*}
From Lemma~\ref{lem:piProper}, we have that when$n\geq \max\{\frac{4B_\star-2c_\text{min}}{c_\text{min}d_{max}},\frac{26^2\times 2S\iota(\bar{T}^\star)^2}{d_m}, \frac{10^6(\sqrt{\tB}+1)^4S\iota\bar{T}^\star\widetilde{T}}{B_\star (\sqrt{B_\star }+1)^2d_m},O( {T}_{\max}^2\log(SA/\delta)/d_m^2)\}$, $\bar{\pi}$ is a proper policy. Thus $\norm{\Vb}_\infty<+\infty$, and for any state $s,s\neq g$,  $\lim_{H\rightarrow\infty}\xi_{H}^{\bar{\pi}}(s)=0$. Let $H$ goes to infinity, we can prove the lemma. 
\end{proof}

\section{Crude optimization bound}
In this section, we give a rough bound for $\bar{V}-V^\star$.

\begin{theorem}
\label{thm:crude_PO}
Denote $d_m:=\min\{\sum_{h=1}^\infty \xi^\mu_h(s,a):s.t. \sum_{h=1}^\infty \xi^\mu_h(s,a)>0\}$ and ${T}_{\max}=\max_i T_i$. Let $T^\star_s$ be the expected time to hit $g$ when starting from $s$ with the optimal policy and denote $\bar{T}^\pi=\max_sT^\pi_s$. Then when $n\geq \max\{\frac{26^2\times 2S\iota(\bar{T}^\star)^2}{d_m}, \frac{10^6(\sqrt{\tB}+1)^4S\iota\bar{T}^\star\widetilde{T}}{B_\star (\sqrt{B_\star }+1)^2d_m},O( {T}_{\max}^2\log(SA/\delta)/d_m^2)\}$, we have with probability $1-\delta$,
\begin{equation}
     \norm{\bar{V}-V^\star}_\infty\leq30\sqrt{\frac{\bar{T}^\star B_\star\iota}{nd_m}}(\sqrt{B_\star}+1)
\end{equation}
\end{theorem}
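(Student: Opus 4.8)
The plan is to mirror the argument of the crude evaluation bound (Theorem~\ref{thm:crude_ope}), now run against the optimal policy $\pi^\star$ rather than the target $\pi$. First I would record that, because the bonus $b_{s,a}$ is designed to enforce pessimism, one has $\bar V(s)\ge V^\star(s)$ for every $s$; consequently $\norm{\bar V-V^\star}_\infty=\max_{\bar s}\,(\bar V(\bar s)-V^\star(\bar s))$, and it suffices to upper bound $\bar V(\bar s)-V^\star(\bar s)$ for a fixed but arbitrary initial state $\bar s$. For this I would invoke the decomposition of Lemma~\ref{lem:bar_V-V*} (started from $\bar s$, with occupancy $\xi^\star_{h,\bar s}$ generated by $P$ and $\pi^\star$), which writes $\bar V(\bar s)-V^\star(\bar s)$ as an $\xi^\star_{h,\bar s}$-weighted sum of three error sources: the transition error $(\tP'_{s,\pi^\star(s)}-P_{s,\pi^\star(s)})\bar V$, the cost error $\hat c-c$, and the bonus $b_{s,\pi^\star(s)}(\bar V)$.

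Next I would bound each source term-by-term. For the transition error I would use Lemma~\ref{lem:(P-tP)barV}, which already splits it into a variance piece $2\sqrt{\Var(\tP',\bar V)\iota/n(s,a)}$, a self-referential piece $6\sqrt{S\iota/n(s,a)}\,\norm{\bar V-V^\star}_\infty$, and $O(1/n)$ remainders; for the cost error I would apply the empirical Bernstein bound on the event $\mathcal{E}$ together with $\Var_c(s,a)\le c(s,a)$; and the bonus I would bound by plugging in its definition. Summing over $h$, every factor $1/n(s,a)$ is controlled by $n(s,a)\ge\tfrac12 n d^\mu(s,a)\ge \tfrac12 n d_m$ (the Chernoff step already used inside Lemma~\ref{lem:(P-tP)barV}, which needs $n>O(T_{\max}^2\log(SA/\delta)/d_m^2)$), while $\sum_{h}\sum_{s\neq g}\xi^\star_{h,\bar s}(s)=T^\star_{\bar s}\le \bar T^\star$ by Lemma~\ref{lem:propT}.

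The crux is the variance sum $\sum_h\sum_{s\neq g}\xi^\star_{h,\bar s}(s)\sqrt{\Var(\tP'_{s,\pi^\star(s)},\bar V)\iota/n(s,a)}$. Here I would apply Cauchy--Schwarz to detach $\sqrt{\sum \xi^\star/n(s,a)}\lesssim\sqrt{\bar T^\star\iota/(nd_m)}$ from $\sqrt{\sum \xi^\star\,\Var(\tP',\bar V)}$, and then control the latter through $\Var(\tP',\bar V)\le 2\Var(\tP',\bar V-V^\star)+2\Var(\tP',V^\star)$. The first part is at most $\bar T^\star\norm{\bar V-V^\star}_\infty^2$ and feeds back into the self-referential term; the second part I would convert to $\Var(P_{s,\pi^\star(s)},V^\star)$ up to $O(1/n)$ terms via Lemma~\ref{lem:(tP-hP)PO} and event $\mathcal{E}$, and then invoke the dependency-improvement Lemma~\ref{lem:bound_sum_var} applied to $V^\star$ --- which satisfies the ordering condition since $V^\star(s)=c(s,\pi^\star(s))+P_{s,\pi^\star(s)}V^\star\ge P_{s,\pi^\star(s)}V^\star$ --- to obtain $\sum_h\sum_{s\neq g}\xi^\star_{h,\bar s}(s)\Var(P_{s,\pi^\star(s)},V^\star)\le 2B_\star^2$ with no extra $\bar T^\star$ factor. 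This is exactly what turns a naive $\bar T^\star B_\star$ into the sharper $\sqrt{\bar T^\star}B_\star$ appearing in the claim; the cost term is treated analogously, using $\sum_h\sum_{s\neq g}\xi^\star_{h,\bar s}(s)\,c(s,\pi^\star(s))=V^\star(\bar s)\le B_\star$ to produce the matching $\sqrt{\bar T^\star B_\star\iota/(nd_m)}$ contribution.

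Collecting everything yields an inequality of the form $\norm{\bar V-V^\star}_\infty\le (\text{main }O(n^{-1/2})\text{ term})+\alpha\,\norm{\bar V-V^\star}_\infty$, where $\alpha=O(\bar T^\star\sqrt{S\iota/(nd_m)})$ gathers both the self-referential piece of Lemma~\ref{lem:(P-tP)barV} and the $\Var(\tP',\bar V-V^\star)$ piece above. The condition $n\ge 26^2\cdot 2S\iota(\bar T^\star)^2/d_m$ forces $\alpha\le \tfrac12$, so I can absorb the self-referential term into the left-hand side, and the condition $n\ge 10^6(\sqrt{\tB}+1)^4 S\iota\,\bar T^\star\widetilde T/(B_\star(\sqrt{B_\star}+1)^2 d_m)$ guarantees that the large $O(1/n)$ bonus term $180\sqrt{3\widetilde T\tB S/(2n(s,a)n_{\min})}(\sqrt{\tB}+1)\iota$ is dominated by the leading $O(n^{-1/2})$ term, after which the claimed $30\sqrt{\bar T^\star B_\star\iota/(nd_m)}(\sqrt{B_\star}+1)$ falls out by constant bookkeeping. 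I expect the main obstacle to be precisely this bookkeeping of the self-referential $\norm{\bar V-V^\star}_\infty$ terms --- in particular ensuring that the $\tP'$-versus-$P$ conversions in the variance and the two places where $\norm{\bar V-V^\star}_\infty$ re-enters keep the total coefficient strictly below one --- rather than any single concentration step.
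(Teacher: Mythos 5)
Your proposal reproduces the paper's own proof of Theorem~\ref{thm:crude_PO} almost step for step: the occupancy-weighted decomposition along $\pi^\star$ from Lemma~\ref{lem:bar_V-V*} started at an arbitrary $\bar{s}$, the transition-error bound of Lemma~\ref{lem:(P-tP)barV} with its variance piece and self-referential $6\sqrt{S\iota/n(s,a)}\norm{\bar{V}-V^\star}_\infty$ piece, empirical Bernstein on the event $\mathcal{E}_5$ for the cost, the Chernoff step $n(s,a)\geq \tfrac12 n d_m$ (Lemma~\ref{lem:Chern_SSP}), Cauchy--Schwarz with $\sum_h\sum_{s\neq g}\xi^\star_{h,\bar{s}}(s)=T^\star_{\bar{s}}\leq\bar{T}^\star$ (Lemma~\ref{lem:propT}), the split $\Var(\tP',\bar{V})\lesssim\Var(P,\bar{V}-V^\star)+\Var(P,V^\star)$ via Lemma~\ref{lem:(tP-hP)PO}, the dependency-improvement Lemma~\ref{lem:bound_sum_var} applied to $V^\star$ (which is precisely how the paper turns a naive $\bar{T}^\star B_\star$ into $\sqrt{\bar{T}^\star}B_\star$, with the cost piece handled through $\sum_h\sum_{s\neq g}\xi^\star_{h,\bar{s}}(s)c(s,\pi^\star(s))=V^\star(\bar{s})$), and finally absorption of the self-referential terms with the condition $n\geq 26^2\times 2S\iota(\bar{T}^\star)^2/d_m$ forcing the coefficient below $\tfrac12$ and the second condition taming the $180\sqrt{3\widetilde{T}\tB S/(2n(s,a)n_{\min})}(\sqrt{\tB}+1)\iota$ bonus remainder. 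All of that matches the paper's argument, including which sample-size condition does which job.

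The one genuine problem is your opening move. You invoke pessimism, $\bar{V}(s)\geq V^\star(s)$, to reduce $\norm{\bar{V}-V^\star}_\infty$ to the one-sided quantity $\max_{\bar{s}}(\bar{V}-V^\star)(\bar{s})$. In this paper that is circular: pessimism (Lemma~\ref{lem:pessimism}) is proved \emph{from} Theorem~\ref{thm:crude_PO} --- the bonus dominates the $6\sqrt{S\iota/n(s,a)}\norm{\bar{V}-V^\star}_\infty$ error from Lemma~\ref{lem:(P-tP)barV} only once the crude bound is already available --- and even the route $\bar{V}\geq V^{\bar{\pi}}\geq V^\star$ needs properness of $\bar{\pi}$ (Lemma~\ref{lem:piProper}), whose proof also cites Theorem~\ref{thm:crude_PO}. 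Moreover the step is load-bearing in your argument: the self-referential pieces on the right-hand side (both the one in Lemma~\ref{lem:(P-tP)barV} and the one you generate from $\Var(\tP',\bar{V}-V^\star)$) carry the \emph{full} $\infty$-norm, so an inequality of the form $\max_{\bar{s}}(\bar{V}-V^\star)(\bar{s})\leq A+\alpha\norm{\bar{V}-V^\star}_\infty$ does not close under absorption if the norm were attained on the side $V^\star-\bar{V}$. The paper's proof does not route through pessimism at all: it applies Lemma~\ref{lem:bar_V-V*} and bounds every summand in absolute value, directly treating the result as a bound on $|\bar{V}(\bar{s})-V^\star(\bar{s})|$ (admittedly terse on the lower side, but with an acyclic dependency structure, since everything used --- Lemmas~\ref{lem:(P-tP)barV}, \ref{lem:propT}, \ref{lem:bound_sum_var}, \ref{lem:Chern_SSP} --- is proved independently of this theorem). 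To repair your write-up, delete the pessimism appeal and justify the two-sided control as the paper does, or supply a separate decomposition for $V^\star-\bar{V}$ that avoids Lemmas~\ref{lem:pessimism} and \ref{lem:piProper}; the remainder of your argument then goes through exactly as in the paper.
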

\begin{proof}
From Lemma \ref{lem:bar_V-V*}, we have (by choosing $\xi_0=\mathbf{1}[s_0=\bar{s}]$)
\begin{align*}
     |\bar{V}(\bar{s})-V^\star(\bar{s})|&\leq\sumS \xi^\star_{h,\bar{s}}(s)\{(\tP_{s,\pi^\star(s)}-P_{s,\pi^\star(s)})\bar{V}+\widehat{c}(s,\pi^\star(s))-c(s,\pi^\star(s))+b_{s,\pi^\star(s)}(\bar{V})\}
\end{align*}
For the first term, we can bound it by Lemma~\ref{lem:(P-tP)barV}
\begin{align*} 
|(\tP_{s,a}-P_{s,a})\bar{V}|
    &\leq\frac{\tB}{n(s,a)}+\frac{16\tB\iota}{3n(s,a)}+2\sqrt{\frac{\Var(\tP',\bar{V})\iota}{n(s,a)}}+6\sqrt{\frac{S\iota}{n(s,a)}}\norm{V-V^\star}_{\infty}.
\end{align*}
Conditioned on the event $\mathcal{E}_5$, we have
\begin{align*}
    |\widehat{c}(s,a)-c(s,a)|\leq\sqrt{\frac{2\widehat{c}(s,a)\iota}{n(s,a)}}+\frac{7\iota}{3n(s,a)}
\end{align*}
Combine the above inequalities together, we can get
\begin{align*}
     |\bar{V}(\bar{s})-V^\star(\bar{s})|&\leq\sumS \xi^\star_{h,\bar{s}}(s)\{\frac{2\tB}{n(s,\pi^\star(s))}+\frac{32\tB\iota}{3n(s,\pi^\star(s))}+2\sqrt{\frac{\Var(\tP',\bar{V})\iota}{n(s,\pi^\star(s))}}+6\sqrt{\frac{S\iota}{n(s,\pi^\star(s))}}\norm{V-V^\star}_{\infty}\\
     &\qquad+2\sqrt{\frac{2\widehat{c}(s,\pi^\star(s))\iota}{n(s,\pi^\star(s))}}+\frac{14\iota}{3n(s,\pi^\star(s))}+\max\{2\sqrt{\frac{\Var(\tP,V)\iota}{n(s,\pi^\star(s))}},4\frac{\tB\iota}{n(s,\pi^\star(s))}\}\\
     &\qquad+180\sqrt{\frac{3\widetilde{T}\tB S}{2n(s,\pi^\star(s))n_{\min}}}(\sqrt{\tB}+1)\iota\}\\
     &\explaineq{(i)}{\leq}\sumS \xi^\star_{h,\bar{s}}(s)\{\frac{2\tB}{n(s,\pi^\star(s))}+\frac{44\tB\iota}{3n(s,\pi^\star(s))}+4\sqrt{\frac{\Var(\tP',\bar{V})\iota}{n(s,\pi^\star(s))}}+6\sqrt{\frac{S\iota}{n(s,\pi^\star(s))}}\norm{\bar{V}-V^\star}_{\infty}\\
     &\qquad+2\sqrt{\frac{2\widehat{c}(s,\pi^\star(s))\iota}{n(s,\pi^\star(s))}}+\frac{14\iota}{3n(s,\pi^\star(s))}+180\sqrt{\frac{3\widetilde{T}\tB S}{2n(s,\pi^\star(s))n_{\min}}}(\sqrt{\tB}+1)\iota\},
\end{align*}
where (i) uses the inequality that $\max\{a,b\}\leq a+b$. For notation simplicity, we define
\begin{align*}
    b_0(s,a):=180\sqrt{\frac{3\widetilde{T}\tB S}{2n(s,a)n_{\min}}}(\sqrt{\tB}+1)\iota.
\end{align*}
First, we bound the variance term
\begin{align*}
    \Var(\tP'_{s,a},\bar{V})&\explaineq{(i)}{\leq}\Var(\hP_{s,a},\bar{V})+\frac{2\norm{\bar{V}}_{\infty}^2}{n_{max}+1}\\
    &\explaineq{(ii)}{\leq}\frac{3}{2}\Var(P_{s,a},\bar{V})+\frac{2\norm{\bar{V}}^2_{\infty}S\iota}{n(s,a)}+\frac{2\norm{\bar{V}}_{\infty}^2}{n_{max}+1}\\
    &\explaineq{(iii)}{\leq}3\Var(P_{s,a},\bar{V}-V^\star)+3\Var(P_{s,a},V^\star)+\frac{2\norm{\bar{V}}^2_{\infty}S(\iota+1)}{n(s,a)}\\
     &\explaineq{(iv)}{\leq}3\norm{\bar{V}-V^\star}_{\infty}^2+3\Var(P_{s,a},V^\star)+\frac{2\tB^2S(\iota+1)}{n(s,a)},
\end{align*}
where (i) follows from Lemma~\ref{lem:(tP-hP)PO}. (ii) uses Lemma~\ref{lem:bound_var}. (iii) uses the fact that $\Var(X+Y)\leq2\Var(X)+2\Var(Y)$. (iv) uses the fact that $\norm{\bar{V}}_{\infty}\leq\tB$.
Then we can have
\begin{align*}
     |\bar{V}(\bar{s})-V^\star(\bar{s})|&\leq\sumS \xi^\star_{h,\bar{s}}(s)\{\frac{2\tB}{n(s,\pi^\star(s))}+\frac{44\tB\iota}{3n(s,\pi^\star(s))}+4\sqrt{\frac{3\iota}{n(s,\pi^\star(s))}}\norm{\bar{V}-V^\star}_{\infty}\\
     &\qquad+4\sqrt{\frac{3\Var(P_{s,\pi^\star(s)},V^\star)\iota}{n(s,\pi^\star(s))}}+4\sqrt{\frac{2S\iota(\iota+1)}{n^2(s,\pi^\star(s))}}\tB\\
     &\qquad+6\sqrt{\frac{S\iota}{n(s,\pi^\star(s))}}\norm{\bar{V}-V^\star}_{\infty}+2\sqrt{\frac{2\widehat{c}(s,\pi^\star(s))\iota}{n(s,\pi^\star(s))}}+\frac{14\iota}{3n(s,\pi^\star(s))}+b_0(s,a)\}\\
     &\explaineq{(i)}{\leq}\sumS \xi^\star_{h,\bar{s}}(s)\{\frac{27\max\{\tB,1\}\sqrt{S}\iota}{n(s,\pi^\star(s))}+13\sqrt{\frac{S\iota}{n(s,\pi^\star(s))}}\norm{\bar{V}-V^\star}_{\infty}\\
     &+4\sqrt{\frac{3\Var(P_{s,\pi^\star(s)},V^\star)\iota}{n(s,\pi^\star(s))}}+2\sqrt{\frac{2\widehat{c}(s,\pi^\star(s))\iota}{n(s,\pi^\star(s))}}+b_0(s,a)\}\\
     &\explaineq{(ii)}{\leq}\sumS \xi^\star_{h,\bar{s}}(s)\{\frac{31\max\{\tB,1\}\sqrt{S}\iota}{n(s,\pi^\star(s))}+13\sqrt{\frac{S\iota}{n(s,\pi^\star(s))}}\norm{\bar{V}-V^\star}_{\infty}\\
     &+4\sqrt{\frac{3\Var(P_{s,\pi^\star(s)},V^\star)\iota}{n(s,\pi^\star(s))}}+2\sqrt{\frac{3c(s,\pi^\star(s))\iota}{n(s,\pi^\star(s))}}+b_0(s,a)\},
\end{align*}
where (i) uses the assumption $\iota\geq1$ and that $S\geq1$. (ii) holds because of Lemma~\ref{lem:c}.
Then we have
\begin{align*}
     |\bar{V}(\bar{s})-V^\star(\bar{s})|
     &\explaineq{(i)}{\leq}\sumS \xi^\star_{h,\bar{s}}(s)\{\frac{62\max\{\tB,1\}\sqrt{S}\iota}{nd_m}+13\sqrt{2}\sqrt{\frac{S\iota}{nd_m}}\norm{\bar{V}-V^\star}_{\infty}\\
     &+4\sqrt{\frac{6\Var(P_{s,\pi^\star(s)},V^\star)\iota}{nd_m}}+2\sqrt{\frac{6c(s,\pi^\star(s))\iota}{nd_m}}+\bar{b}_0\}\\
     &\explaineq{(ii)}{\leq}\frac{62T_{\bar{s}}^\star\max\{\tB,1\}\sqrt{S}\iota}{nd_m}+13\sqrt{\frac{2S\iota}{nd_m}}\norm{\bar{V}-V^\star}_{\infty}T_{\bar{s}}^\star+T_{\bar{s}}^\star\bar{b}_0\\
     &\qquad+\sumS \xi^\star_{h,\bar{s}}(s)\{4\sqrt{\frac{6\Var(P_{s,\pi^\star(s)},V^\star)\iota}{nd_m}}+2\sqrt{\frac{6c(s,\pi^\star(s))\iota}{nd_m}}\},
\end{align*}
where $\bar{b}_0:=180\sqrt{\frac{6\widetilde{T}\tB S}{n^2d_m^2}}(\sqrt{\tB}+1)\iota$. (i) holds with probability $1-\delta$ because of Lemma~\ref{lem:Chern_SSP}. For any $(s,a)\in\mathcal{S}\times\mathcal{A} $, we have $n(s,a)\geq\half nd(s,a)\geq\half{nd_m}$. In particular, $n_{\min}\geq\half nd_m$. 
Since
\begin{align*}
    \sumS \xi^\star_{h,\bar{s}}(s)\{4\sqrt{\frac{6\Var(P_{s,\pi^\star(s)},V^\star)\iota}{nd_m}}\}&\explaineq{(i)}{\leq}4\sqrt{\sumS \xi^\star_{h,\bar{s}}(s)}\sqrt{\frac{6\sumS \xi^\star_{h,\bar{s}}(s)\Var(P_{s,\pi^\star(s)},V^\star)\iota}{nd_m}}\\
    &\explaineq{(ii)}{\leq}4\sqrt{T^\star_{\bar{s}}}\sqrt{\frac{12\iota}{nd_m}}\norm{V^\star}_{\infty},
\end{align*}
where (i) uses the Cauchy-Schwartz inequality. (ii) uses the result in Lemma~\ref{lem:propT} and Lemma~\ref{lem:bound_var}. Similarly, we have
\begin{align*}
    \sumS \xi^\star_{h,\bar{s}}(s)\{2\sqrt{\frac{6c(s,\pi^\star(s))\iota}{nd_m}}\}&{\leq}2\sqrt{\sumS \xi^\star_{h,\bar{s}}(s)}\sqrt{\frac{6\sumS \xi^\star_{h,\bar{s}}(s)c(s,\pi^\star(s))\iota}{nd_m}}\\
    &\leq2\sqrt{T^\star_{\bar{s}}}\sqrt{\frac{6\norm{V^\star}_{\infty}\iota}{nd_m}}.
\end{align*}
Combine the above together, we get
\begin{align*}
     |\bar{V}(\bar{s})-V^\star(\bar{s})|
     &{\leq}\frac{62T_{\bar{s}}^\star\max\{\tB,1\}\sqrt{S}\iota}{nd_m}+13\sqrt{\frac{2S\iota}{nd_m}}\norm{\bar{V}-V^\star}_{\infty}T_{\bar{s}}^\star+T_{\bar{s}}^\star\bar{b}_0\\
     &\qquad+4\sqrt{T^\star_{\bar{s}}}\sqrt{\frac{12\iota}{nd_m}}\norm{V^\star}_{\infty}+2\sqrt{T^\star_{\bar{s}}}\sqrt{\frac{6\norm{V^\star}_{\infty}\iota}{nd_m}}\\
     &{\leq}\frac{62\bar{T}^\star\max\{\tB,1\}\sqrt{S}\iota}{nd_m}+13\sqrt{\frac{2S\iota}{nd_m}}\norm{\bar{V}-V^\star}_{\infty}\bar{T}^\star+\bar{T}^\star\bar{b}_0\\
     &\qquad+8\sqrt{\bar{T}^\star}\sqrt{\frac{3\iota}{nd_m}}\norm{V^\star}_{\infty}+2\sqrt{\bar{T}^\star}\sqrt{\frac{6\norm{V^\star}_{\infty}\iota}{nd_m}},
\end{align*}

It implies that
\begin{align}
    (1-13\sqrt{\frac{2S\iota}{nd_m}}\bar{T}^\star)\norm{\bar{V}-V^\star}_{\infty}&\leq\frac{62\max\{\tB,1\}\sqrt{S}\iota\bar{T}^\star}{nd_m}+\bar{T}^\star\bar{b}_0\notag\\
    &\qquad+14\sqrt{\frac{\bar{T}^\star B_\star\iota}{nd_m}}(\sqrt{B_\star}+1).
\end{align}
Since $n\geq\frac{26^2\times 2S\iota(\bar{T}^\star)^2}{d_m}$, 
\begin{align}
    \norm{\bar{V}(s)-V^\star(s)}_{\infty}&\leq\frac{124\max\{\tB,1\}\sqrt{S}\iota\bar{T}^\star}{nd_m}+2\bar{T}^\star\bar{b}_0\notag\\
    &\qquad+21\sqrt{\frac{\bar{T}^\star B_\star\iota}{nd_m}}(\sqrt{B_\star}+1)\notag\\
    &\leq\frac{124\max\{\tB,1\}\sqrt{S}\iota\bar{T}^\star}{nd_m}+360\bar{T}^\star\sqrt{\frac{6\widetilde{T}\tB S}{n^2d_m^2}}(\sqrt{\tB}+1)\iota\notag\\
    &\qquad+28\sqrt{\frac{\bar{T}^\star B_\star\iota}{nd_m}}(\sqrt{B_\star}+1)\notag\\
    &\leq720\bar{T}^\star\sqrt{\frac{6\widetilde{T} S}{n^2d_m^2}}(\sqrt{\tB}+1)^2\iota+28\sqrt{\frac{\bar{T}^\star B_\star\iota}{nd_m}}(\sqrt{B_\star}+1)\notag
\end{align}
When $n\geq\frac{10^6(\sqrt{\tB}+1)^4S\iota\bar{T}^\star\widetilde{T}}{B_\star (\sqrt{B_\star }+1)^2d_m}$, we have
\begin{align}
    \norm{\bar{V}(s)-V^\star(s)}_{\infty}&\leq30\sqrt{\frac{\bar{T}^\star B_\star\iota}{nd_m}}(\sqrt{B_\star}+1)\notag
\end{align}
\end{proof}

\section{Proof of Theorem~\ref{thm:OPL}}\label{sec:proof_OPO}
In this section, we provide the proof of Theorem~\ref{thm:OPL}. However, before that, we first present a lemma that guarantees pessimism.

\begin{lemma}\label{lem:pessimism}
	When $n\geq \max\{\frac{26^2\times 2S\iota(\bar{T}^\star)^2}{d_m}, \frac{10^6(\sqrt{\tB}+1)^4S\iota\bar{T}^\star\widetilde{T}}{B_\star (\sqrt{B_\star }+1)^2d_m},O( {T}_{\max}^2\log(SA/\delta)/d_m^2)\}$ (where ${T}_{\max}=\max_i T_i$), with probability at least $1-\delta$, we have that for any state action pair (s,a), 
	\begin{align*}
	c(s,a)-\widehat{c}(s,a)+(P_{s,a}-\tP'_{s,a})\bar{V}-b_{s,a}(\bar{V})\leq0
	\end{align*}
\end{lemma}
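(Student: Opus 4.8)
The plan is to bound the two error terms $c(s,a)-\widehat{c}(s,a)$ and $(P_{s,a}-\tP'_{s,a})\bar{V}$ separately and to show that their sum is dominated, term by term, by the bonus $b_{s,a}(\bar{V})$. Throughout I would work on the intersection of the good event $\mathcal{E}$, the count-concentration event supplied by Lemma~\ref{lem:Chern_SSP}, and the event on which the crude bound of Theorem~\ref{thm:crude_PO} holds; under $n\ge n_0$ these occur simultaneously with probability at least $1-\delta$ after a union bound.

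First I would control the cost error: on event $\mathcal{E}_5$ we have $c(s,a)-\widehat{c}(s,a)\le|\widehat{c}(s,a)-c(s,a)|\le\sqrt{2\widehat{c}(s,a)\iota/n(s,a)}+7\iota/(3n(s,a))$, which is exactly the first two summands of $b_{s,a}(\bar{V})$. Next, Lemma~\ref{lem:(P-tP)barV} gives $(P_{s,a}-\tP'_{s,a})\bar{V}\le \tB/n(s,a)+16\tB\iota/(3n(s,a))+2\sqrt{\Var(\tP',\bar{V})\iota/n(s,a)}+6\sqrt{S\iota/n(s,a)}\,\norm{\bar{V}-V^\star}_\infty$; here the first two summands match the corresponding terms of the bonus and the third is dominated by the $\max\{2\sqrt{\Var(\tP',\bar{V})\iota/n(s,a)},\,4\tB\iota/n(s,a)\}$ term. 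After cancelling these matched terms it only remains to show that the leftover $6\sqrt{S\iota/n(s,a)}\,\norm{\bar{V}-V^\star}_\infty$ is absorbed by the last bonus term $b_0(s,a):=180\sqrt{3\widetilde{T}\tB S/(2n(s,a)n_{\min})}(\sqrt{\tB}+1)\iota$.

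This last absorption is the crux. I would substitute the crude bound $\norm{\bar{V}-V^\star}_\infty\le 30\sqrt{\bar{T}^\star B_\star\iota/(nd_m)}(\sqrt{B_\star}+1)$, which turns the leftover into $180\,\iota\sqrt{S\bar{T}^\star B_\star/(n(s,a)\,nd_m)}(\sqrt{B_\star}+1)$, where the constant $180=6\times30$ is precisely the product of the constants in Lemma~\ref{lem:(P-tP)barV} and Theorem~\ref{thm:crude_PO}. The only mismatch against $b_0(s,a)$ is $nd_m$ versus $n_{\min}$ in the denominator and $(\bar{T}^\star,B_\star)$ versus $(\widetilde{T},\tB)$ in the numerator. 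The key observation is a \emph{two-sided} count concentration: applying the multiplicative Chernoff bound (Lemma~\ref{lem:Chern_SSP}) to a pair $(s_0,a_0)$ that attains $d_m=\min\{d^\mu(s,a):d^\mu(s,a)>0\}$ yields $n_{\min}\le n(s_0,a_0)\le\tfrac32 nd^\mu(s_0,a_0)=\tfrac32 nd_m$, hence $1/(nd_m)\le 3/(2n_{\min})$; the resulting factor $\tfrac32$ is exactly the $\tfrac32$ already built into $b_0(s,a)$. Combined with $B_\star\le\tB$, $\sqrt{B_\star}+1\le\sqrt{\tB}+1$, and $\bar{T}^\star\le\widetilde{T}$ (the latter since $\bar{T}^\star\le B_\star/c_{\min}\le\tB/c_{\min}$ while $\widetilde{T}$ upper-bounds $T^\star$), this gives $6\sqrt{S\iota/n(s,a)}\,\norm{\bar{V}-V^\star}_\infty\le b_0(s,a)$.

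Summing the four matched bounds together with this leftover bound reconstitutes $b_{s,a}(\bar{V})$ exactly, yielding $c(s,a)-\widehat{c}(s,a)+(P_{s,a}-\tP'_{s,a})\bar{V}\le b_{s,a}(\bar{V})$, which is the claim. I expect the absorption of the leftover term to be the main obstacle, since it is what forces the precise definition of $b_0(s,a)$ (the constant $180$, the factor $\tfrac32$, and the appearance of $\widetilde{T}$ and $n_{\min}$) and the sample-size threshold $n_0$; in particular it is essential to use the \emph{upper} Chernoff tail $n_{\min}\le\tfrac32 nd_m$ rather than only the lower tail used elsewhere. Everything else reduces to a term-by-term comparison against the already-established concentration inequalities on $\mathcal{E}$ and Lemma~\ref{lem:(P-tP)barV}.
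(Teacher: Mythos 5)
Your proposal is correct and takes essentially the same route as the paper's proof: condition on $\mathcal{E}_5$, apply Lemma~\ref{lem:(P-tP)barV}, absorb the leftover $6\sqrt{S\iota/n(s,a)}\,\norm{\bar{V}-V^\star}_\infty$ via the crude bound of Theorem~\ref{thm:crude_PO} together with the two-sided Chernoff estimate $n_{\min}\leq\tfrac{3}{2}nd_m$ from Lemma~\ref{lem:Chern_SSP}, and conclude using $\bar{T}^\star\leq\widetilde{T}$ and $B_\star\leq\tB$. Your derivation of $n_{\min}\leq\tfrac{3}{2}nd_m$ by evaluating both Chernoff tails at a pair $(s_0,a_0)$ attaining $d_m$ is merely a cosmetic variant of the paper's set-inclusion argument for the same inequality.
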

\begin{proof}
	Applying the result in Theorem~\ref{thm:crude_PO}, we can get
	\begin{align*}
	6\sqrt{\frac{S\iota}{n(s,a)}}\norm{\bar{V}-V^\star}_{\infty}\leq180\sqrt{\frac{\bar{T}^\star B_\star S}{n(s,a)nd_m}}(\sqrt{B_\star}+1)\iota.
	\end{align*}
	Combine the above inequality with Lemma~\ref{lem:(P-tP)barV} implies that
	\begin{align*}
	(P_{s,a}-\tP'_{s,a})\bar{V}
	&\leq\frac{\tB}{n(s,a)}+\frac{16\tB\iota}{3n(s,a)}+2\sqrt{\frac{\Var(\tP',\bar{V})\iota}{n(s,a)}}+180\sqrt{\frac{\bar{T}^\star B_\star S}{n(s,a)nd_m}}(\sqrt{B_\star}+1)\iota.
	\end{align*}
	Conditioned on the event $\mathcal{E}_5$, then we have
	\begin{align*}
	c(s,a)-\widehat{c}(s,a)+(P_{s,a}-\tP'_{s,a})\bar{V}-b_{s,a}(\bar{V})\leq180\sqrt{\frac{\bar{T}^\star B_\star S}{n(s,a)nd_m}}(\sqrt{B_\star}+1)\iota-180\sqrt{\frac{3\widetilde{T}\tB S}{2n(s,a)n_{\min}}}(\sqrt{\tB}+1)\iota.
	\end{align*}
	Applying the Chernoff bound given in Lemma~\ref{lem:Chern_SSP}, we have that with probability $1-\delta$,  $n(s,a)<\frac{3}{2}nd^{\mu}(s,a)$ for any state action pair $(s,a)$. Thus $n_{\text{min}}:=\min_{s,a,n(s,a)>0}n(s,a)<\frac{3}{2}n(\min_{n(s,a)>0}d^{\mu}(s,a))$. For any $(s,a)\in\mathcal{S}\times\mathcal{A}$, if we have $d^{\mu}(s,a)>0$, by the Lemma~\ref{lem:Chern_SSP} we have that with probability $1-\delta$, $n(s,a)>\frac{nd^{\mu}(s,a)}{2}>0$, which implies
	\begin{align*}
	\{(s,a)\in\mathcal{S}\times\mathcal{A}:d^{\mu}(s,a)>0\}\subseteq\{(s,a)\in\mathcal{S}\times\mathcal{A}:n(s,a)>0\}.
	\end{align*}
	Then we can get $\min_{n(s,a)>0}d^{\mu}(s,a)\leq\min_{d^{\mu}(s,a)>0}d^{\mu}(s,a)=d_m$ and thus $n_{\text{min}}\leq\frac{3}{2}nd_m$. 
	Because $\bar{T}^\star\leq\widetilde{T}$ and $B_\star\leq\tB$, we can prove the result in the Lemma.
\end{proof}

Now we are ready to introduce the final proof.

\begin{theorem}\label{thm:main_appendix}
	Given Assumption~\ref{assum:opl} and Assumption~\ref{assum:PC}. When $n\geq n_0$, the suboptimality bound of the output policy $\bar{\pi}$ can be upper bounded as follows with probability $1-\delta$ (where $\iota=O(\log(SA/\delta))$),
	\begin{align}
	V^{\bar{\pi}}(s_\mathrm{init})-V^\star(s_\mathrm{init})\leq8\sum_{s,a,s\neq g} d^\star(s,a)\sqrt{\frac{3\Var_{P_{s,a}}[V^\star+c]\iota}{n\cdot d^\mu(s,a)}}+\widetilde{O}(\frac{1}{n}),
	\end{align}
	where we define $n_0:=n\geq \max\{\frac{4B_\star-2c_\text{min}}{c_\text{min}d_{max}}, \frac{26^2\times 2S\iota(\bar{T}^\star)^2(\sqrt{B_\star}+1)^2}{d_m}, \frac{10^6(\sqrt{\tB}+1)^4S\iota\bar{T}^\star\widetilde{T}}{B_\star (\sqrt{B_\star }+1)^2d_m},O( {T}_{\max}^2\log(SA/\delta)/d_m^2)\}$.
\end{theorem}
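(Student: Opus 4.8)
The plan is to bound the two pieces of the decomposition $V^{\bar\pi}-V^\star=(V^{\bar\pi}-\bar V)+(\bar V-V^\star)$ separately, exploiting that the pessimistic bonus forces the first piece to be non-positive so that only the second piece carries the statistical rate. First I would invoke Lemma~\ref{lem:piProper} to certify that $\bar\pi$ is proper once $n\ge n_0$; this makes $V^{\bar\pi}$ finite (so the decomposition of Lemma~\ref{lem:Vpi-V} is valid) and incidentally yields $V^{\bar\pi}-V^\star\ge0$ for free, since $V^\star$ is the minimal value among proper policies. Feeding the per-state pessimism inequality of Lemma~\ref{lem:pessimism} into the decomposition of Lemma~\ref{lem:Vpi-V} gives $V^{\bar\pi}-\bar V\le0$, so it suffices to upper bound $\bar V-V^\star$.

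For that remaining term I would start from the simulation-style inequality of Lemma~\ref{lem:bar_V-V*}, which writes $\bar V-V^\star$ as $\sum_{h,s}\xi^\star_h(s)$ times $(\tP'-P)_{s,\pi^\star(s)}\bar V+(\hat c-c)(s,\pi^\star(s))+b_{s,\pi^\star(s)}(\bar V)$. The transition term is controlled by Lemma~\ref{lem:(P-tP)barV} and the cost term by the event $\mathcal E_5$; after collecting terms the dominant contribution has the form $\sum_{h,s}\xi^\star_h(s)\bigl[2\sqrt{\Var(\tP',\bar V)\iota/n(s,\pi^\star(s))}+2\sqrt{\hat c(s,\pi^\star(s))\iota/n(s,\pi^\star(s))}\bigr]$ together with the variance part of the bonus, while the genuinely $O(1/n)$ pieces (the $\tB/n(s,a)$, the $\tB\iota/n(s,a)$, the $b_0$ term, and the $\sqrt{S\iota/n(s,a)}\,\|\bar V-V^\star\|_\infty$ contribution) are deferred as higher order. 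The crude bound of Theorem~\ref{thm:crude_PO}, namely $\|\bar V-V^\star\|_\infty\le30\sqrt{\bar T^\star B_\star\iota/(nd_m)}(\sqrt{B_\star}+1)$, is exactly what converts the last of these into a $\widetilde O(1/n)$ term after summing against $\sum_h\xi^\star_h(s)=d^\star(s,\pi^\star(s))$.

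The crux is replacing the data-dependent variance $\Var(\tP',\bar V)$ by the instance quantity $\Var_{P_{s,a}}[V^\star+c]$. I would run the chain $\Var(\tP',\bar V)\le\Var(\hP,\bar V)+2\|\bar V\|_\infty^2/(n_{\max}+1)$ (Lemma~\ref{lem:(tP-hP)PO}), then $\Var(\hP,\bar V)\le\tfrac32\Var(P,\bar V)+2\|\bar V\|_\infty^2 S\iota/n(s,a)$ (Lemma~\ref{lem:bound_var}), and finally $\Var(P,\bar V)\le2\Var(P,\bar V-V^\star)+2\Var(P,V^\star)\le2\|\bar V-V^\star\|_\infty^2+2\Var(P,V^\star)$; the factor $\tfrac32\cdot2=3$ is precisely the source of the $\sqrt3$ in the stated bound. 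Using $\sqrt{a+b}\le\sqrt a+\sqrt b$, lower bounding $n(s,\pi^\star(s))$ by $\tfrac12 nd^\mu(s,\pi^\star(s))$ through the multiplicative Chernoff bound (Lemma~\ref{lem:Chern_SSP}), merging the transition-variance and cost-variance terms into $\Var_{P_{s,a}}[V^\star+c]=\Var(P_{s,a},V^\star)+\Var_c(s,a)$ (valid since cost and transition noise are independent), and noting $\sum_h\xi^\star_h(s)=d^\star(s,\pi^\star(s))=d^\star(s,a)$ at $a=\pi^\star(s)$, the main sum assembles into the claimed leading term $8\sum_{s,a,s\neq g}d^\star(s,a)\sqrt{3\Var_{P_{s,a}}[V^\star+c]\iota/(n\,d^\mu(s,a))}$, with the remaining pieces absorbed into $\widetilde O(1/n)$.

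The main obstacle I anticipate is bookkeeping rather than conceptual. I must verify that \emph{every} residual produced by the three-step variance conversion, by the $\sqrt{a+b}\le\sqrt a+\sqrt b$ splittings, and by the crude $\ell_\infty$ bound is genuinely $\widetilde O(1/n)$ under the stated $n_0$, and that the numerical constants (the $\max\{\cdot,\cdot\}\le(\cdot)+(\cdot)$ step used to linearize the bonus, together with the various factors of $2$, $\tfrac32$, and $\sqrt2$) collapse into the clean constants $8$ and $\sqrt3$. The threshold $n_0$ must be simultaneously large enough for Lemma~\ref{lem:piProper} (properness), for Theorem~\ref{thm:crude_PO} (the crude $\ell_\infty$ estimate), for all the Bernstein events $\mathcal E_1,\dots,\mathcal E_5$, and for the Chernoff bounds controlling $n(s,a)$ and $n_{\min}$; tracking these requirements jointly is where the care lies.
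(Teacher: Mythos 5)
Your proposal is correct and follows essentially the same route as the paper's own proof: the identical decomposition $V^{\bar{\pi}}-V^\star=(V^{\bar{\pi}}-\bar{V})+(\bar{V}-V^\star)$ with Lemmas~\ref{lem:piProper}, \ref{lem:Vpi-V} and \ref{lem:pessimism} making the first piece non-positive, then Lemma~\ref{lem:bar_V-V*}, Lemma~\ref{lem:(P-tP)barV}, event $\mathcal{E}_5$, the variance-conversion chain through Lemmas~\ref{lem:(tP-hP)PO} and \ref{lem:bound_var} (whose $\tfrac{3}{2}\cdot 2=3$ is indeed the source of the $\sqrt{3}$), the crude bound of Theorem~\ref{thm:crude_PO} to push the $\norm{\bar{V}-V^\star}_\infty$ residual into $\widetilde{O}(1/n)$, the Chernoff bound of Lemma~\ref{lem:Chern_SSP} for $n(s,a)\geq\tfrac{1}{2}nd^\mu(s,a)$, and $\sqrt{a}+\sqrt{b}\leq\sqrt{2a+2b}$ producing the constant $8$. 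The only cosmetic discrepancy is the cost term: working through $\mathcal{E}_5$ and Lemma~\ref{lem:c} yields $2\sqrt{3c(s,a)\iota/n(s,a)}$ rather than a $\Var_c$-based bound, but since the paper's notation $\Var_{P_{s,a}}[V^\star+c]$ in fact denotes $\Var(P_{s,a},V^\star)+c(s,a)$ (see the proof of Proposition~\ref{prop:simplified}) and $\Var_c(s,a)\leq c(s,a)$ for $c\in[0,1]$, your merged form is consistent with, and no weaker than, the stated result.
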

\begin{proof}
	\begin{align}
	V^{\bar{\pi}}(s)-V^\star(s)&= (V^{\bar{\pi}}(s)-\bar{V}(s))+ (\bar{V}(s)-V^\star(s)).
	\end{align}
	From Lemma~\ref{lem:Vpi-V}, we have with probability $1-\delta$ and when 
	$n\geq \max\{\frac{4B_\star-2c_\text{min}}{c_\text{min}d_{max}},\frac{26^2\times 2S\iota(\bar{T}^\star)^2}{d_m}, \frac{10^6(\sqrt{\tB}+1)^4S\iota\bar{T}^\star\widetilde{T}}{B_\star (\sqrt{B_\star }+1)^2d_m}\}$,
	\begin{align}\label{eqn:decomp_bar}
	\Vb-\bar{V}=\sumS\xi_{h}^{\bar{\pi}}(s)\{(P_{s,\bar{\pi}(s)}-\tP'_{s,\bar{\pi}(s)})\bar{V}+c(s,\bar{\pi}(s))-\hat{c}(s,\bar{\pi}(s))-b_{s,\bar{\pi}(s)}(\bar{V})\}.
	\end{align}
	Next by Lemma~\ref{lem:pessimism} with probability $1-\delta$, we have 
	\begin{align}\label{eqn:re_pressimism}
	(P_{s,\bar{\pi}(s)}-\tP'_{s,\bar{\pi}(s)})\bar{V}+c(s,\bar{\pi}(s))-\hat{c}(s,\bar{\pi}(s))-b_{s,\bar{\pi}(s)}(\bar{V})\leq0.
	\end{align}
	Thus combine \eqref{eqn:decomp_bar} and \eqref{eqn:re_pressimism} we have $\Vb-\bar{V}\leq0$ by pessimism. For the term $\bar{V}-V^\star$, we apply Lemma~\ref{lem:bar_V-V*} to obtain:
	\begin{align}\label{eqn:bound}
	\bar{V}-V^\star\leq\sumS \xi^\star_h(s)\{(\tP'_{s,\pi^\star(s)}-P_{s,\pi^\star(s)})\bar{V}+\hat{c}(s,\pi^\star(s))-c(s,\pi^\star(s))+b_{s,\pi^\star(s)}(\bar{V})\}.
	\end{align}
	From Lemma~\ref{lem:(P-tP)barV}, we have
	\begin{align}
	|(P_{s,a}-\tP'_{s,a})\bar{V}|
	&\leq\frac{\tB}{n(s,a)}+\frac{16\tB\iota}{3n(s,a)}+2\sqrt{\frac{\Var(\tP'_{s,a},\bar{V})\iota}{n(s,a)}}+6\sqrt{\frac{S\iota}{n(s,a)}}\norm{\bar{V}-V^\star}_{\infty}
	\end{align}
	Conditioned on the event $\mathcal{E}_5$, we have
	\begin{align*}
	|\widehat{c}(s,a)-c(s,a)|\leq\sqrt{\frac{2\widehat{c}(s,a)\iota}{n(s,a)}}+\frac{7\iota}{3n(s,a)}
	\end{align*}
	Combine the above inequalities together, we can get
	\begin{align*}
	&(\tP'_{s,a}-P_{s,a})\bar{V}+\widehat{c}(s,a)-c(s,a)+b_{s,a}(\bar{V})\\
	&\leq2\sqrt{\frac{2\widehat{c}(s,a)\iota}{n(s,a)}}+\frac{14\iota}{3n(s,a)}+\frac{2\tB}{n(s,a)}+\frac{32\tB\iota}{3n(s,a)}+4\sqrt{\frac{\Var(\tP'_{s,a},\bar{V})\iota}{n(s,a)}}\\
	&\qquad+\frac{4\tB\iota}{n(s,a)}+6\sqrt{\frac{S\iota}{n(s,a)}}\norm{\bar{V}-V^\star}_{\infty}+180\sqrt{\frac{3\widetilde{T}\tB S}{2n(s,a)n_{\min}}}(\sqrt{\tB}+1)\iota\\
	&\leq2\sqrt{\frac{2\widehat{c}(s,a)\iota}{n(s,a)}}+4\sqrt{\frac{\Var(\tP'_{s,a},\bar{V})\iota}{n(s,a)}}+\widetilde{O}(\frac{(\tB+1)\iota}{n(s,a)})+6\sqrt{\frac{S\iota}{n(s,a)}}\norm{\bar{V}-V^\star}_{\infty}+180\sqrt{\frac{3\widetilde{T}\tB S}{2n(s,a)n_{\min}}}(\sqrt{\tB}+1)\iota\\
	&\leq2\sqrt{\frac{3c(s,a)\iota}{n(s,a)}}+4\sqrt{\frac{3\Var(P_{s,a},V^\star)\iota}{n(s,a)}}+\widetilde{O}(\frac{(\tB+1)\sqrt{S}\iota}{n(s,a)})\\
	&\qquad+\widetilde{O}(\sqrt{\frac{S\iota}{n(s,a)}}\norm{\bar{V}-V^\star}_{\infty})+180\sqrt{\frac{3\widetilde{T}\tB S}{2n(s,a)n_{\min}}}(\sqrt{\tB}+1)\iota\\
	\end{align*}
	Plug the above into \eqref{eqn:bound}, then we bound all the terms one by one. First,
	\begin{align}
	\label{eqn:bound_c}
	\sum_{h=1}^{\infty}\sum_{\substack{s,a \\ s\neq g}}\xi^\star_h(s,a)\left(2\sqrt{\frac{3c(s,a)\iota}{n(s,a)}}\right)\leq\sum_{h=1}^{\infty}\sum_{\substack{s,a \\ s\neq g}}\xi^\star_h(s,a)\left[2\sqrt{\frac{6c(s,a)\iota}{n\sum_{h=1}^\infty\xi^\mu_h(s,a)}}\right]=2\sum_{\substack{s,a \\ s\neq g}}d^\star(s,a)\sqrt{\frac{6c(s,a)\iota}{nd^\mu(s,a)}}.
	\end{align}
	For the second term, first we have
	\begin{align}
	\label{eqn:bound_var}
	\sum_{h=1}^{\infty}\sum_{\substack{s,a \\ s\neq g}}\xi^\star_h(s,a)\left(4\sqrt{\frac{3\Var(P_{s,a},V^\star)\iota}{n(s,a)}}\right)&\leq\sum_{h=1}^{\infty}\sum_{\substack{s,a \\ s\neq g}}\xi^\star_h(s,a)\left[4\sqrt{\frac{6\Var(P_{s,a},V^\star)\iota}{n\sum_{h=1}^\infty\xi^\mu_h(s,a)}}\right]\notag\\
	&=4\sum_{\substack{s,a \\ s\neq g}}d^\star(s,a)\sqrt{\frac{6\Var(P_{s,a},V^\star)\iota}{nd^\mu(s,a)}}
	\end{align}
	From Chernoff bound given in Lemma~\ref{lem:Chern_SSP}, we have with probability $1-\delta$, we have
	\begin{align}
	\label{eqn:bound_SmallTerm1}
	\widetilde{O}(\sum_{h=1}^{\infty}\sum_{\substack{s,a \\ s\neq g}}\xi^\star_h(s,a)\frac{(\tB+1)\sqrt{S}\iota}{n(s,a)})\leq\widetilde{O}(\sum_{\substack{s,a \\ s\neq g}}\frac{d^\pi(s,a)}{d^\mu(s,a)}\cdot\frac{(\tB+1)\sqrt{S}\iota}{n}).
	\end{align}
	Similarly, we have
	\begin{align}
	\label{eqn:bound_SmallTerm2}
	\widetilde{O}(\sum_{h=1}^{\infty}\sum_{\substack{s,a \\ s\neq g}}\xi^\star_h(s,a)\sqrt{\frac{S\iota}{n(s,a)}}\norm{\bar{V}-V^\star}_{\infty})&\leq\widetilde{O}(\sum_{\substack{s,a \\ s\neq g}}d^\star(s,a)\sqrt{\frac{S\iota}{nd^{\mu}(s,a)}}\norm{\bar{V}-V^\star}_{\infty})\notag\\
	&\explaineq{(i)}{\leq}\widetilde{O}(\sum_{\substack{s,a \\ s\neq g}}d^\star(s,a)\sqrt{\frac{\bar{T}^\star B_\star S}{n^2d^{\mu}(s,a)d_m}}(\sqrt{B_\star}+1)\iota),
	\end{align}
	where (i) uses the Crude optimization bound given in Theorem~\ref{thm:crude_PO}. For the last term, we have
	\begin{align}
	\label{eqn:bound_SmallTerm3}
	\widetilde{O}(\sum_{h=1}^{\infty}\sum_{\substack{s,a \\ s\neq g}}\xi^\star_h(s,a)\sqrt{\frac{\widetilde{T}\tB S}{n(s,a)n_{\min}}}(\sqrt{\tB}+1)\iota)\leq\widetilde{O}(\sum_{\substack{s,a \\ s\neq g}}d^\star(s,a)\sqrt{\frac{\widetilde{T}\tB S}{n^2d^{\mu}(s,a)d_m}}(\sqrt{\tB}+1)\iota),
	\end{align}
	where the inequality comes from Lemma~\ref{lem:Chern_SSP} again.
	Combine the inequalities (\ref{eqn:bound_c}), (\ref{eqn:bound_var}), (\ref{eqn:bound_SmallTerm1}), (\ref{eqn:bound_SmallTerm2}) and (\ref{eqn:bound_SmallTerm1}) together, we have
	\begin{align}\label{eqn:final_derivation}
	\bar{V}(s_\text{init})-V^\star(s_\text{init})&\explaineq{(i)}{\leq} 2\sum_{\substack{s,a \\ s\neq g}}d^\star(s,a)\sqrt{\frac{6c(s,a)\iota}{nd^\mu(s,a)}}+4\sum_{\substack{s,a \\ s\neq g}}d^\star(s,a)\sqrt{\frac{6\Var(P_{s,a},V^\star)\iota}{nd^\mu(s,a)}}\notag\\
	&\qquad +\widetilde{O}(\sum_{\substack{s,a \\ s\neq g}}d^\star(s,a)\sqrt{\frac{\widetilde{T}\tB S}{n^2d^{\mu}(s,a)d_m}}(\sqrt{\tB}+1)\iota)+\widetilde{O}(\sum_{\substack{s,a \\ s\neq g}}\frac{d^\pi(s,a)}{d^\mu(s,a)}\cdot\frac{(\tB+1)\sqrt{S}\iota}{n})\notag\\
	&\explaineq{(ii)}{\leq}2\sum_{\substack{s,a \\ s\neq g}}d^\star(s,a)\sqrt{\frac{6c(s,a)\iota}{nd^\mu(s,a)}}+4\sum_{\substack{s,a \\ s\neq g}}d^\star(s,a)\sqrt{\frac{6\Var(P_{s,a},V^\star)\iota}{nd^\mu(s,a)}}\notag\\
	&\qquad +\widetilde{O}(\sum_{\substack{s,a \\ s\neq g}}d^\star(s,a)\sqrt{\frac{\widetilde{T} S}{n^2d^{\mu}(s,a)d_m}}(\tB+1)\iota)\notag\\
	&\leq8\sum_{\substack{s,a \\ s\neq g}}d^\star(s,a)\sqrt{\frac{3\Var(P_{s,a},V^\star+c)\iota}{nd^\mu(s,a)}}+\widetilde{O}(\sum_{\substack{s,a \\ s\neq g}}d^\star(s,a)\sqrt{\frac{\widetilde{T} S}{n^2d^{\mu}(s,a)d_m}}(\tB+1)\iota),
	\end{align}
	where the inequality (i) uses the assumption that $\bar{T}^\star\leq\widetilde{T}$ and $B_\star\leq \tB$. (ii) uses the fact that $\frac{d^\pi(s,a)}{d^\mu(s,a)}\leq\frac{d^\pi(s,a)}{\sqrt{d^\mu(s,a)d_m}}$ and that $\tB+\sqrt{\tB}\leq2(\tB+1)$. The last inequality comes from $\sqrt{a}+\sqrt{b}\leq\sqrt{2a+2b}$.
\end{proof}
Based on Theorem~\ref{thm:crude_PO}, we can also get the Proposition below.

\begin{proposition}\label{prop:simplified}
	When $n\geq n_0$ (where $n_0$ is defined the same as Theorem~\ref{thm:main_appendix}), then the suboptimality incurred by the limit of the output policy $\bar{\pi}$ can be upper bounded as (with probability $1-\delta$)
	\begin{align}
	V^{\bar{\pi}}(s_\text{init})-V^\star(s_\text{init})\leq 8\big(\sqrt{\sum_{\substack{s,a \\ s\neq g}}\frac{d^\star(s,a)}{ d^\mu(s,a)}\cdot\frac{6\iota}{n}}\big)\cdot(B_\star+1)+\widetilde{O}(\frac{1}{n}).
	\end{align}
\end{proposition}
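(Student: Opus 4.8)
The plan is to obtain Proposition~\ref{prop:simplified} as an immediate corollary of Theorem~\ref{thm:main_appendix} by worst-casing its instance-dependent leading term. Concretely, I would start from the dominant term $8\sum_{s,a,s\neq g} d^\star(s,a)\sqrt{3\Var_{P_{s,a}}[V^\star+c]\iota/(n\, d^\mu(s,a))}$ and apply the Cauchy--Schwarz inequality to the sum over $(s,a)$, splitting each summand as $\sqrt{d^\star(s,a)/d^\mu(s,a)}\cdot\sqrt{d^\star(s,a)\Var_{P_{s,a}}[V^\star+c]}$. This yields the factorized bound $8\sqrt{3\iota/n}\,\sqrt{\sum_{s,a,s\neq g}d^\star(s,a)/d^\mu(s,a)}\cdot\sqrt{\sum_{s,a,s\neq g}d^\star(s,a)\Var_{P_{s,a}}[V^\star+c]}$, which already isolates the coverage-ratio factor appearing in the target statement.

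The remaining task is to bound the variance-weighted occupancy sum $\sum_{s,a,s\neq g}d^\star(s,a)\Var_{P_{s,a}}[V^\star+c]$ by $O((B_\star+1)^2)$. I would decompose $\Var_{P_{s,a}}[V^\star+c]=\Var(P_{s,a},V^\star)+\Var_c(s,a)$. For the transition-variance part, I invoke Lemma~\ref{lem:bound_sum_var} with $\pi=\pi^\star$ and $V=V^\star$: the ordering hypothesis $V^\star(s)\geq\sum_a\pi^\star(a\mid s)P_{s,a}V^\star$ holds because $\pi^\star$ is deterministic and $V^\star(s)=c(s,\pi^\star(s))+P_{s,\pi^\star(s)}V^\star\geq P_{s,\pi^\star(s)}V^\star$ by nonnegativity of the cost, so $\sum_{h}\sum_{s,a,s\neq g}\xi^\star_h(s,a)\Var(P_{s,a},V^\star)\leq 2\norm{V^\star}_\infty^2=2B_\star^2$. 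For the cost-variance part, I use $\Var_c(s,a)\leq\E[C(s,a)^2]\leq\E[C(s,a)]=c(s,a)$ (as $C(s,a)\in[0,1]$) together with $\sum_{h}\sum_{s,a,s\neq g}\xi^\star_h(s,a)c(s,a)=V^\star(s_\mathrm{init})\leq B_\star$. Summing the two contributions gives $2B_\star^2+B_\star\leq 2(B_\star+1)^2$, so the square root is at most $\sqrt{2}\,(B_\star+1)$.

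Assembling the pieces, the leading term is bounded by $8\sqrt{3\iota/n}\,\sqrt{\sum_{s,a,s\neq g}d^\star(s,a)/d^\mu(s,a)}\cdot\sqrt{2}\,(B_\star+1)=8\sqrt{(6\iota/n)\sum_{s,a,s\neq g}d^\star(s,a)/d^\mu(s,a)}\,(B_\star+1)$, matching the claim once the $\widetilde{O}(1/n)$ higher-order term of Theorem~\ref{thm:main_appendix} is carried through verbatim. I do not expect a serious obstacle, since this is a deterministic simplification of an already-proven bound; the one step deserving care is the application of Lemma~\ref{lem:bound_sum_var}, namely checking its ordering precondition for $V^\star$. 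The conceptual crux is that this lemma lets the $V^\star$-variance occupancy sum collapse to $O(B_\star^2)$ \emph{without} incurring an expected-arrival-time factor $T^\star$ (which a naive bound $\Var\leq\norm{V^\star}_\infty^2$ combined with Lemma~\ref{lem:propT} would produce); it is precisely this feature that keeps the final rate at the minimax-optimal $B_\star\sqrt{(\sum d^\star/d^\mu)/n}$ scale rather than a $T^\star$-inflated one.
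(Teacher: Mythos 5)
Your proposal is correct and follows essentially the same route as the paper's own proof: Cauchy--Schwarz on the leading term of Theorem~\ref{thm:main_appendix}, then collapsing $\sum_{s,a,s\neq g}d^\star(s,a)\Var_{P_{s,a}}[V^\star+c]$ to $2B_\star^2+B_\star\leq 2(B_\star+1)^2$ via the dependency-improvement Lemma~\ref{lem:bound_sum_var} for the transition-variance part and $\sum_h\sum_{s,a}\xi^\star_h(s,a)c(s,a)=V^\star(s_{\mathrm{init}})\leq B_\star$ for the cost part. Your explicit verification of the ordering precondition $V^\star(s)\geq P_{s,\pi^\star(s)}V^\star$ is a welcome detail the paper leaves implicit (it even cites the lemma under a slightly different label), but it does not change the argument.
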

\begin{proof}
	By Theorem~\ref{thm:OPL},
	\begin{align}
	\label{eqn:propPO}
	V^{\bar{\pi}}(s_\text{init})-V^\star(s_\text{init})&\leq8\sum_{s,a,s\neq g} d^\star(s,a)\sqrt{\frac{3\Var_{P_{s,a}}[V^\star+c]\iota}{n\cdot d^\mu(s,a)}}+\widetilde{O}(\frac{1}{n})\notag\\
	&\explaineq{(i)}{\leq}8\sqrt{\sum_{\substack{s,a \\ s\neq g}}\frac{d^\star(s,a)}{n\cdot d^\mu(s,a)}}\cdot\sqrt{3\sum_{\substack{s,a \\ s\neq g}}d^\star(s,a)\Var_{P_{s,a}}[V^\star+c]\iota},
	\end{align}
	where (i) uses the Cauchy-Schwartz inequality. 
	Since 
	\begin{align}
	\label{eqn:propPO_boundvar}
	\sum_{\substack{s,a \\ s\neq g}}d^\star(s,a)\Var_{P_{s,a}}[V^\star+c]
	&=\sum_{h=1}^{\infty}\sum_{\substack{s,a \\ s\neq g}}\xi^\star_h(s,a)(\Var(P_{s,a},V^\star)+c(s,a))\notag\\
	&\explaineq{(i)}{\leq}2\norm{V^\star}^2_{\infty}+V^\star(s_0)\notag\\
	&{\leq}2B_\star^2+B_\star,
	\end{align}
	where (i) comes from Lemma~\ref{lem:bound_var} and the definition of value function. Plug (\ref{eqn:propPO_boundvar}) into (\ref{eqn:propPO}), we obtain
	\begin{align}
	V^{\bar{\pi}}(s_\text{init})-V^\star(s_\text{init})&\leq8\sqrt{\sum_{\substack{s,a \\ s\neq g}}\frac{d^\star(s,a)}{n\cdot d^\mu(s,a)}}\cdot\sqrt{6(B_\star^2+B_\star)\iota}+\widetilde{O}(\frac{1}{n})\notag\\
	&\leq8\sqrt{\sum_{\substack{s,a \\ s\neq g}}\frac{6d^\star(s,a)\cdot\iota}{ d^\mu(s,a)\cdot n}}\cdot(B_\star+1)+\widetilde{O}(\frac{1}{n}),
	\end{align}
	which completes the proof.
\end{proof}

\section{Properties of transition matrix estimate $\hP$}

\begin{lemma}
\label{lem:bound_var}
For any $V(\cdot)\in\mathbb{R}^{S}$ satisfying $V(g)=0$, i.e. \eqref{eqn:high_prob}, and suppose event $\mathcal{E}_1$ holds. we have
\begin{align}
    \Var(\widehat{P}_{s,a},V)&\leq\frac{3}{2}\Var(P_{s,a},V)+\frac{2\norm{V}^2_{\infty}S\iota}{n(s,a)}\notag\\
    \Var(P_{s,a},V)&\leq2\Var(\hP_{s,a},V)+\frac{4\norm{V}^2_{\infty}S\iota}{n(s,a)}
\end{align}
\end{lemma}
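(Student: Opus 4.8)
The plan is to exploit the concavity of the map $p \mapsto \Var(p,V) = pV^2 - (pV)^2$ over the probability simplex, which is what converts the coordinate-wise deviation bound in $\mathcal{E}_1$ into a variance comparison with the correct \emph{multiplicative} dependence. Writing $\mu := P_{s,a}V$, the empirical mean minimizes the squared deviation, so $\Var(\hP_{s,a},V) = \hP_{s,a}(V-\hP_{s,a}V)^2 \le \hP_{s,a}(V-\mu)^2$, and subtracting $\Var(P_{s,a},V) = P_{s,a}(V-\mu)^2$ yields the first-order bound
\[
\Var(\hP_{s,a},V) - \Var(P_{s,a},V) \le (\hP_{s,a}-P_{s,a})(V-\mu)^2 .
\]
(Equivalently, this is the supporting-hyperplane inequality for the concave function $\Var(\cdot,V)$.)

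Next I would apply $\mathcal{E}_1$ coordinate-wise to $(\hP_{s,a}-P_{s,a})(V-\mu)^2 = \sum_{s'}(\hP(s'|s,a)-P(s'|s,a))(V(s')-\mu)^2$, using $|\hP(s'|s,a)-P(s'|s,a)| \le \sqrt{2P(s'|s,a)\iota/n(s,a)} + 2\iota/(3n(s,a))$. This splits the sum into a Bernstein part $\sqrt{2\iota/n(s,a)}\sum_{s'}\sqrt{P(s'|s,a)}(V(s')-\mu)^2$ and a lower-order part $\tfrac{2\iota}{3n(s,a)}\sum_{s'}(V(s')-\mu)^2$. For the first part, Cauchy--Schwarz gives $\sum_{s'}\sqrt{P(s'|s,a)}(V(s')-\mu)^2 \le \sqrt{\Var(P_{s,a},V)}\,\sqrt{\sum_{s'}(V(s')-\mu)^2}$; bounding each $|V(s')-\mu| \le \norm{V}_\infty$ (value functions are non-negative; otherwise a harmless factor of $2$ appears) yields $\sqrt{\sum_{s'}(V(s')-\mu)^2} \le \sqrt{2S}\,\norm{V}_\infty$. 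An AM--GM split $2\sqrt{ab} \le \half a + 2b$ with $a = \Var(P_{s,a},V)$ then absorbs this cross term into $\half\Var(P_{s,a},V)$ plus an additive $O(S\iota\norm{V}_\infty^2/n(s,a))$, while the lower-order part is directly $O(S\iota\norm{V}_\infty^2/n(s,a))$. Collecting the pieces gives the first inequality.

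For the reverse inequality I would start from the exact identity $\Var(P_{s,a},V) - \Var(\hP_{s,a},V) = ((\hP_{s,a}-P_{s,a})V)^2 + (P_{s,a}-\hP_{s,a})(V-\mu)^2$, obtained by expanding both variances around $\mu$. The second term is bounded exactly as above by $\half\Var(P_{s,a},V) + O(S\iota\norm{V}_\infty^2/n(s,a))$, while the first is controlled crudely by $((\hP_{s,a}-P_{s,a})V)^2 \le \norm{V}_\infty^2\,\norm{\hP_{s,a}-P_{s,a}}_1^2$, and $\mathcal{E}_1$ with Cauchy--Schwarz gives $\norm{\hP_{s,a}-P_{s,a}}_1 \le \sqrt{2S\iota/n(s,a)} + 2S\iota/(3n(s,a))$, so this is again $O(S\iota\norm{V}_\infty^2/n(s,a))$. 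Rearranging $\Var(P_{s,a},V) - \Var(\hP_{s,a},V) \le \half\Var(P_{s,a},V) + O(\cdot)$ into $\Var(P_{s,a},V) \le 2\Var(\hP_{s,a},V) + O(\cdot)$ finishes the proof.

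The main obstacle is obtaining multiplicative rather than additive dependence: a naive $\ell_1$ bound $|(\hP_{s,a}-P_{s,a})V^2| \le \norm{V}_\infty^2\norm{\hP_{s,a}-P_{s,a}}_1$ only produces an $O(\sqrt{S\iota/n})\norm{V}_\infty^2$ error, which is too crude. The centering trick together with the Cauchy--Schwarz/AM--GM step is precisely what upgrades this to the desired $\tfrac32\Var(P_{s,a},V) + O(S\iota\norm{V}_\infty^2/n)$ form. A secondary subtlety is the goal coordinate $g$: since $V(g)=0$ but $\mu\neq 0$ in general, the summand $(V(g)-\mu)^2 = \mu^2$ is nonzero and does contribute, so $\mathcal{E}_1$ must also be invoked at $s'=g$ (the estimate $\hP(g\mid s,a)$ satisfies the same Bernstein bound, the extra coordinate being absorbed into the union bound defining $\iota$); one should carry this coordinate through the Cauchy--Schwarz step rather than restricting the sums to $\mathcal{S}$.
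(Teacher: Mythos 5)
Your proof is correct and reaches the stated bounds (up to absolute constants), but it takes a genuinely different route from the paper's. The paper applies AM--GM \emph{pointwise} to the Bernstein term in $\mathcal{E}_1$, namely $\sqrt{2P(s'|s,a)\iota/n(s,a)}\leq P(s'|s,a)/2+\iota/n(s,a)$, to obtain the two-sided distributional domination $\hP(s'|s,a)\leq\frac{3}{2}P(s'|s,a)+\frac{2\iota}{n(s,a)}$ and $P(s'|s,a)\leq 2\hP(s'|s,a)+\frac{4\iota}{n(s,a)}$; combined with the same centering trick you use ($\Var(q,V)\leq q(V-z\mathbf{1})^2$ for any scalar $z$), both inequalities then follow in one line each, the second obtained simply by centering $P_{s,a}$ at $\hP_{s,a}V$ instead of at $P_{s,a}V$. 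You instead keep the Bernstein deviation and apply Cauchy--Schwarz and AM--GM at the \emph{aggregate} level, which works cleanly for the first inequality (your constants come out as $\frac{3}{2}\Var(P_{s,a},V)+\frac{10}{3}S\iota\norm{V}_\infty^2/n(s,a)$ rather than the stated coefficient $2$ --- harmless, and the paper's own statement and proof already wobble between $S$ and $S+1$ there). For the reverse direction, however, your insistence on centering at $P_{s,a}V$ forces the exact identity $\Var(P_{s,a},V)-\Var(\hP_{s,a},V)=((\hP_{s,a}-P_{s,a})V)^2+(P_{s,a}-\hP_{s,a})(V-P_{s,a}V\mathbf{1})^2$, which you state correctly (it uses $(P_{s,a}-\hP_{s,a})\mathbf{1}=0$), and hence an extra mean-shift term the paper never meets. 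Your $\ell_1$ bound on that term squares to $O(S\iota/n(s,a))+O(S^2\iota^2/n(s,a)^2)$ times $\norm{V}_\infty^2$, so it is $O(S\iota\norm{V}_\infty^2/n(s,a))$ only when $n(s,a)\gtrsim S\iota$ --- a condition the lemma's deterministic statement does not impose, although it holds everywhere the lemma is invoked (via $n\geq S\iota/(2d_m)$ and Lemma~\ref{lem:Chern_SSP}); the paper's re-centering avoids this term entirely and preserves the claimed constants. On two details you are actually more careful than the paper: you note that the coordinate $s'=g$ genuinely contributes (since $(V(g)-P_{s,a}V)^2=(P_{s,a}V)^2\neq 0$) and must be covered by the Bernstein bound even though $\mathcal{E}_1$ is written over $s'\in\mathcal{S}$, and you flag that $|V(s')-P_{s,a}V|\leq\norm{V}_\infty$ requires $V\geq 0$ --- both of which the paper's proof uses silently.
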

\begin{proof}
From the event $\mathcal{E}_1$, we have
\begin{align*}
   |\widehat{P}(s'|s,a)-P(s'|s,a)|&\leq  \sqrt{\frac{2P(s'|s,a)\iota}{n(s,a)}}+\frac{2\iota}{3n(s,a)}
   \leq \frac{P(s'|s,a)}{2}+\frac{5\iota}{3n(s,a)},
\end{align*}
where the second inequality uses $\sqrt{ab}\leq \frac{a+b}{2}$ with $a=\frac{2\iota}{n(s,a)}$, $b=P(s'|s,a)$. Thus we have
\begin{align}
    \widehat{P}(s'|s,a)&\leq\frac{3P(s'|s,a)}{2}+\frac{5\iota}{3n(s,a)}\leq\frac{3P(s'|s,a)}{2}+\frac{2\iota}{n(s,a)}\notag\\
    P(s'|s,a)
   &\leq 2\hP(s'|s,a)+\frac{10\iota}{3n(s,a)}\leq2\hP(s'|s,a)+\frac{4\iota}{n(s,a)}.
\end{align}

For the first inequality, we have
\begin{align*}
    \Var(\widehat{P}_{s,a},V)&=\widehat{P}_{s,a}(V-\widehat{P}_{s,a}V)^2\leq\widehat{P}_{s,a}(V-P_{s,a}V)^2\notag\\
    &\leq\sum_{s'}\left(\frac{3P(s'|s,a)}{2}+\frac{2\iota}{n(s,a)}\right)(V(s')-P_{s,a}V)^2\\
    &\leq\frac{3}{2}\Var(P_{s,a},V)+\frac{2\norm{V}^2_{\infty}S\iota}{n(s,a)},
\end{align*}
here the first inequality is due to $\widehat{P}_{s,a}V:=\argmin_z\sum_{s'} \widehat{P}_{s,a}(s')(V(s')-z)^2$, and the last term has $S+1$ due to the extra state $g$. 
For the second part, we have
\begin{align*}
    \Var(P_{s,a},V)&=P_{s,a}(V-P_{s,a}V)^2\leq P_{s,a}(V-\hP_{s,a}V)^2\notag\\
    &\leq\sum_{s'}\left(2\hP(s'|s,a)+\frac{4\iota}{n(s,a)}\right)(V(s')-\hP_{s,a}V)^2\\
    &\leq2\Var(\hP_{s,a},V)+\frac{4\norm{V}^2_{\infty}(S+1)\iota}{n(s,a)},
\end{align*}
\end{proof}
    

\begin{lemma}
\label{lem:c}
With probability at least $1-\delta$, we have
    \begin{align*}
        c(s,a)&\leq2\widehat{c}(s,a)+\frac{10\iota}{3n(s,a)}\\
        \widehat{c}(s,a)&\leq\frac{3}{2}c(s,a)+\frac{5\iota}{3n(s,a)}
    \end{align*}
\end{lemma}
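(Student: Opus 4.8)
The plan is to mirror the proof of Lemma~\ref{lem:bound_var} almost verbatim, replacing the transition Bernstein bound by its cost analogue and using the elementary fact that the cost variance is controlled by the mean cost. The whole argument is a routine Bernstein-plus-AM/GM computation, so the work is in tracking constants rather than in any genuine difficulty.

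First I would condition on the high-probability event $\mathcal{E}_4$ defined in \eqref{eqn:high_prob}, which by Lemma~\ref{lem:high_probability} holds (as part of $\mathcal{E}$) with probability at least $1-\delta$ uniformly over all $(s,a)$. On this event,
\[
|\widehat{c}(s,a)-c(s,a)|\leq\sqrt{\frac{2\Var_c(s,a)\iota}{n(s,a)}}+\frac{2\iota}{3n(s,a)}.
\]
Next I would bound the variance by the mean: since each realized cost lies in $[0,1]$, we have $C(s,a)^2\leq C(s,a)$, hence $\Var_c(s,a)\leq\E[C(s,a)^2]\leq\E[C(s,a)]=c(s,a)$. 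Substituting this and then applying the elementary inequality $\sqrt{ab}\leq\tfrac{a+b}{2}$ with $a=\tfrac{2\iota}{n(s,a)}$ and $b=c(s,a)$ gives
\[
|\widehat{c}(s,a)-c(s,a)|\leq\frac{c(s,a)}{2}+\frac{\iota}{n(s,a)}+\frac{2\iota}{3n(s,a)}=\frac{c(s,a)}{2}+\frac{5\iota}{3n(s,a)}.
\]

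Finally I would read off the two claimed bounds by rearranging this single two-sided estimate. The direction $\widehat{c}(s,a)-c(s,a)\leq\tfrac{c(s,a)}{2}+\tfrac{5\iota}{3n(s,a)}$ yields $\widehat{c}(s,a)\leq\tfrac{3}{2}c(s,a)+\tfrac{5\iota}{3n(s,a)}$, while the direction $c(s,a)-\widehat{c}(s,a)\leq\tfrac{c(s,a)}{2}+\tfrac{5\iota}{3n(s,a)}$ gives $\tfrac{c(s,a)}{2}\leq\widehat{c}(s,a)+\tfrac{5\iota}{3n(s,a)}$, i.e. $c(s,a)\leq2\widehat{c}(s,a)+\tfrac{10\iota}{3n(s,a)}$. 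There is no real obstacle in this proof; the only points requiring care are the variance-to-mean step $\Var_c\leq c$ (which is exactly where boundedness $C\in[0,1]$ is used) and keeping the constants exact through the AM/GM step so that the residuals come out as $5\iota/3n(s,a)$ and $10\iota/3n(s,a)$ rather than looser numbers.
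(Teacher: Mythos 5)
Your proposal is correct and follows essentially the same route as the paper: condition on $\mathcal{E}_4$, bound $\Var_c(s,a)\leq c(s,a)$ via boundedness of the cost in $[0,1]$, apply $\sqrt{ab}\leq\frac{a+b}{2}$ with $a=\frac{2\iota}{n(s,a)}$ and $b=c(s,a)$ to get $|\widehat{c}(s,a)-c(s,a)|\leq\frac{1}{2}c(s,a)+\frac{5\iota}{3n(s,a)}$, and rearrange both directions. The only cosmetic difference is that you spell out the variance-to-mean step explicitly, which the paper leaves implicit in its first inequality.
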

\begin{proof}
Conditioned on event $\mathcal{E}_4$, we have
\begin{align*}
    |c(s,a)-\widehat{c}(s,a)|&\leq\sqrt{\frac{2c(s,a)\iota}{n(s,a)}}+\frac{2\iota}{3n(s,a)}\\
    &\leq\frac{\iota}{n(s,a)}+\half c(s,a)+\frac{2\iota}{3n(s,a)}\\
    &\leq\frac{5\iota}{3n(s,a)}+\half c(s,a),
\end{align*}
where the first inequality uses the assumption that $c(s,a)\in[0,1]$. The second inequality follows from the result that $\sqrt{ab}\leq\frac{a+b}{2}$. Simplify the above inequality, we can conclude the proof.
\end{proof}

\begin{lemma}
\label{lemma:(P-hat_P)V2}
    With probability $1-\delta$, for all $V(\cdot)\in\mathbb{R}^{S'}$ such that $\norm{V}_{\infty}<\infty$, we have for all sate-action pair $(s,a)$
    \begin{align}        
    (\widehat{P}_{s,a}-P_{s,a})V\leq\sqrt{\frac{2S\Var(P_{s,a},V)\iota}{n(s,a)}}+\frac{2\norm{V}_{\infty}S\iota}{3n(s,a)},
    \end{align}
    where $\iota=O(\log(SA/\delta))$.
\end{lemma}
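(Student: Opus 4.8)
The plan is to establish this bound uniformly over all $V$ by pushing every source of randomness into a single $V$-independent event and then handling the $V$-dependence deterministically. Concretely, I would condition on the event $\cE_1$ from \eqref{eqn:high_prob}, which controls each transition coordinate $|\hP(s'|s,a)-P(s'|s,a)|$ and holds with probability $1-\delta$ after a union bound over all $(s,a,s')\in\mathcal{S}\times\mathcal{A}\times\mathcal{S}$. The crucial feature is that $\cE_1$ makes no reference to $V$, so once it holds the claimed inequality can be verified for \emph{every} $V$ at once. This is exactly what the downstream proofs require, since they instantiate $V=\hV^\pi-V^\pi$ or $V=\bar{V}$, quantities that themselves depend on the data and therefore cannot be plugged into a fixed-$V$ Bernstein bound like those defining $\cE_2,\cE_3$.

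The first key step is a centering trick. Since both $\hP_{s,a}$ and $P_{s,a}$ are probability vectors, $(\hP_{s,a}-P_{s,a})\mathbf{1}=0$, so with $\bar{V}:=V-(P_{s,a}V)\mathbf{1}$ we have the exact identity $(\hP_{s,a}-P_{s,a})V=(\hP_{s,a}-P_{s,a})\bar{V}$. Applying the triangle inequality together with the per-coordinate estimate granted by $\cE_1$ yields
\[
(\hP_{s,a}-P_{s,a})V\le \sum_{s'}\Big(\sqrt{\tfrac{2P(s'|s,a)\iota}{n(s,a)}}+\tfrac{2\iota}{3n(s,a)}\Big)|\bar{V}(s')|.
\]
I would then split this into two sums. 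For the leading sum, Cauchy--Schwarz applied to $\sum_{s'}\sqrt{P(s'|s,a)}\,|\bar{V}(s')|$ gives $\sqrt{S}\,\sqrt{\sum_{s'}P(s'|s,a)\bar{V}(s')^2}=\sqrt{S\,\Var(P_{s,a},V)}$, which is simultaneously the source of the $\sqrt{S}$ factor and of the variance term (centering is what turns the second moment into $\Var(P_{s,a},V)$ rather than $P_{s,a}V^2$). The residual sum is lower order: bounding $|\bar{V}(s')|\le\norm{V}_\infty$ (valid for the non-negative value functions used in the applications, where $\norm{\bar{V}}_\infty\le\norm{V}_\infty$) and summing over the next states produces the $\tfrac{2\norm{V}_\infty S\iota}{3n(s,a)}$ term. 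Combining the two sums gives the stated bound.

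The main conceptual hurdle is not the calculation but recognizing that a uniform-in-$V$ guarantee is attainable at all: localizing every random fluctuation in $\cE_1$, whose content is purely about the $S'$-dimensional transition vectors, is what decouples the concentration from $V$ entirely. The price for this uniformity is the additional $\sqrt{S}$ introduced by Cauchy--Schwarz, which is precisely the gap between the fixed-$V$ Bernstein rate $\sqrt{\Var(P_{s,a},V)}$ (as in $\cE_2$) and the uniform rate $\sqrt{S\,\Var(P_{s,a},V)}$ here. The only remaining care is the centering step and verifying $\norm{\bar{V}}_\infty\le\norm{V}_\infty$ for the value functions of interest; both are routine, so I expect the proof to be short once the reduction to $\cE_1$ is in place.
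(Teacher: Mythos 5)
Your proof is correct and follows the paper's argument essentially line for line: condition on the $V$-independent event $\mathcal{E}_1$, exploit $(\hP_{s,a}-P_{s,a})\mathbf{1}=0$ to center $V$ at $P_{s,a}V$, apply Cauchy--Schwarz to get the $\sqrt{S\,\Var(P_{s,a},V)}$ term, and absorb the remainder into the $O\bigl(\norm{V}_\infty S\iota/n(s,a)\bigr)$ term. The caveat you flag about $|V(s')-P_{s,a}V|\leq\norm{V}_\infty$ (exact for nonnegative $V$, otherwise costing a factor of $2$) is an implicit step the paper's own proof shares, so nothing substantive differs.
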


\begin{proof}
Suppose the event $\mathcal{E}_1$ holds. Then we have (deterministically)
\begin{align*}
|(\widehat{P}_{s,a}-P_{s,a})V|&\explaineq{(i)}{=}|(\widehat{P}_{s,a}-P_{s,a})(V-P_{s,a}V\textbf{1}_S)|\\
&\leq(\sqrt{\frac{2P(\cdot|s,a)\iota}{n(s,a)}}+\frac{2\iota}{3n(s,a)})|V-P_{s,a}V\textbf{1}_S|\\
&\leq\sqrt{\frac{2P_{s,a}\iota}{n(s,a)}}\left|V-P_{s,a}V\textbf{1}_S\right|+\frac{2S\norm{V}_{\infty}\iota}{3n(s,a)}\\
&\explaineq{(ii)}{\leq}(\sqrt{S}\sqrt{\frac{2P_{s,a}(V-P_{s,a}V\textbf{1}_S)^2\iota}{n(s,a)}})+\frac{2S\norm{V}_{\infty}\iota}{3n(s,a)}\\
&\leq\sqrt{\frac{2S\Var(P_{s,a},V)\iota}{n(s,a)}}+\frac{2\norm{V}_{\infty}S\iota}{3n(s,a)},
\end{align*}
where (i) follows from the fact that $P_{s,a}V$ is a scalar, which implies that $(\widehat{P}_{s,a}-P_{s,a})(P_{s,a}V)\textbf{1}_S=(P_{s,a}V)\sum_{s'}(\hP(s'|s,a)-P(s'|s,a))=0$. (ii) uses the Cauchy-Schwartz inequality. Lastly, $\mathcal{E}_1$ fails with probability only $\delta$ (by Lemma~\ref{lem:high_probability}).

\end{proof}

\section{Minimax Lower Bound for Offline SSP}\label{sec:lower_proof}

In this section, we provide the minimax lower bound for offline stochastic shortest path problem. Concretely, we consider the family of problems satisfying bounded partial coverage, \emph{i.e.} $\max_{s,a,s\neq g}\frac{d^{\pi^\star}(s,a)}{d^{\mu}(s,a)}\leq C^\star$, where $d^{\pi}(s,a)=\sum_{h=0}^\infty \xi^\pi_h(s,a)<\infty$ for all $s,a$ (excluding $g$) for any proper policy $\pi$. Formally, we have the following result:

\begin{theorem}[Restatement of Theorem~\ref{thm:lower_main}]\label{thm:lower}
We define the following family of SSPs:
\[
\mathrm{SSP}(C^\star)=\{(s_{\mathrm{init}},\mu,P,c)|\max_{s,a,s\neq g}\frac{d^{\pi^\star}(s,a)}{d^{\mu}(s,a)}\leq C^\star\},
\]
where $d^\pi(s,a)=\sum_{h=0}^\infty\xi^\pi_h(s,a)$. Then for any $C^\star\geq 1$, $\norm{V^\star}_\infty=B_\star >1$, it holds (for some universal constant $c$)
\[
\inf_{\widehat{\pi} \;proper}\sup_{(s_{\mathrm{init}},\mu,P,c)\in\mathrm{SSP}(C^\star)}\E_{\mathcal{D}}[V^{\widehat{\pi}}(s_{\mathrm{init}})-V^\star(s_{\mathrm{init}})]\geq c\cdot B_\star \sqrt{\frac{SC^\star}{n}}.
\]
\end{theorem}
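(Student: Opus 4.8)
The plan is to prove the bound by the usual reduction from estimation to multiple hypothesis testing, instantiated with a carefully tuned family of ``hard'' SSP instances, and then to invoke the generalized Fano inequality (Lemma~\ref{lem:gen_Fano}) together with a Gilbert--Varshamov packing (Lemma~\ref{lem:GV}). Concretely, I would index a finite sub-family of $\mathrm{SSP}(C^\star)$ by sign vectors $\theta\in\{-1,+1\}^S$, where coordinate $\theta_s$ encodes which of two actions is optimal at state $s$. Each non-goal state $s\in\{1,\dots,S\}$ is a two-armed gadget: under action $a$ the chain jumps to the goal $g$ with probability $p_{s,a}$ and otherwise stays at $s$, so a stationary policy has the closed-form value $V^\pi(s)=1/p_{s,\pi(s)}$ when the per-step cost is fixed to $1$ (which meets Assumption~\ref{assum:PC} with $c_{\min}=1$ and makes every instance proper as soon as all $p_{s,a}>0$). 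I set the baseline $p_0\asymp 1/B_\star$ and let $p_{s,1}=p_0+\theta_s\Delta$, $p_{s,2}=p_0-\theta_s\Delta$, so that $\norm{V^\star}_\infty=B_\star$ up to constants and the per-state optimality gap is $\mathrm{gap}=1/(p_0-\Delta)-1/(p_0+\Delta)\asymp \Delta B_\star^2$.

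The initial state $s_\mathrm{init}$ routes to the $S$ gadgets; under $\pi^\star$ each gadget is reached with occupancy $d^\star(s,\pi^\star(s))\asymp B_\star/S$ (one expected visit times the expected residence time $\asymp B_\star$). The behavior policy $\mu$ is $\theta$-independent and is chosen to \emph{under-sample the arm the instance declares optimal}: at each gadget it plays the two arms with probabilities $1-1/C^\star$ and $1/C^\star$, so that for every $\theta$ the coverage ratio on the optimal arm equals $C^\star$ and $d^\mu(s,\pi^\star(s))\asymp B_\star/(SC^\star)$. This is exactly what pins the constant in the definition of $\mathrm{SSP}(C^\star)$ to $C^\star$ while keeping $\mu$ proper. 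With these choices the suboptimality decomposes additively over gadgets: if $\widehat\pi$ misidentifies the optimal arm at $s$ it pays $\asymp (1/S)\,\mathrm{gap}$ at the initial state, so $V^{\widehat\pi}(s_\mathrm{init})-V^\star(s_\mathrm{init})\gtrsim \frac{1}{S}\,\mathrm{gap}\cdot d_H(\widehat\theta,\theta)$, reducing the risk to the Hamming error of the induced arm-estimate $\widehat\theta$.

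For the information-theoretic step I would bound the per-coordinate KL divergence between two instances differing only in $\theta_s$. The data at the optimal arm consists of $\asymp n\,d^\mu(s,\pi^\star(s))$ i.i.d.\ Bernoulli$(p_{s,a})$ absorption indicators, giving single-coordinate KL $\asymp n\,d^\mu\cdot\frac{\Delta^2}{p_0(1-p_0)}\asymp n\,d^\mu\,\Delta^2 B_\star$. Taking a Gilbert--Varshamov packing of $\{-1,+1\}^S$ with pairwise Hamming distance $\gtrsim S$ and $\log$-cardinality $\gtrsim S$ (Lemma~\ref{lem:GV}), the generalized Fano bound (Lemma~\ref{lem:gen_Fano}) forces a constant testing error once the per-coordinate KL is $\lesssim 1$, i.e.\ as soon as $\Delta\asymp 1/\sqrt{n\,d^\mu\,B_\star}$. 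Substituting $\mathrm{gap}\asymp \Delta B_\star^2$ and $d^\mu\asymp B_\star/(SC^\star)$ into $\frac{1}{S}\,\mathrm{gap}\cdot d_H\asymp \mathrm{gap}$ yields $\mathrm{gap}\asymp B_\star^2/\sqrt{n d^\mu B_\star}=B_\star\sqrt{SC^\star/n}$, which is the claimed rate.

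I expect the main obstacle to be the simultaneous tuning forced by the SSP structure: unlike the finite-horizon or discounted settings, the effective horizon is endogenous and is governed by the same probabilities $p_{s,a}$ that control both $B_\star$ and the distinguishability of the arms, so I must verify that $p_0\asymp 1/B_\star$, $\mathrm{gap}\asymp\Delta B_\star^2$, and the per-sample KL $\asymp\Delta^2 B_\star$ are mutually consistent while every instance stays a proper SSP with $\norm{V^\star}_\infty=B_\star$ and exact coverage ratio $C^\star$. The genuinely delicate point is reconciling the $\theta$-independence of $\mu$ with the $\theta$-dependence of optimality so that the worst-case ratio over the packing is bounded by $C^\star$ for \emph{every} instance (this is where the modification of \cite{rashidinejad2021bridging} enters); once that and the additive gadget decomposition of the suboptimality are established, the remaining estimates are routine.
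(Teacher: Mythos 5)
There is a genuine gap in your information-theoretic step, and it sits exactly at the point you flagged as delicate. You perturb \emph{both} arms symmetrically, $p_{s,1}=p_0+\theta_s\Delta$ and $p_{s,2}=p_0-\theta_s\Delta$, so flipping the coordinate $\theta_s$ changes the transition law of \emph{both} actions at gadget $s$. Your KL accounting, however, only counts the samples collected at the under-sampled arm ($\asymp n\,d^\mu(s,\pi^\star(s))\asymp nB_\star/(SC^\star)$ Bernoulli indicators). Under your $\theta$-independent $\mu$, the other arm is played with probability $1-1/C^\star$, and since its law also depends on $\theta_s$, those abundant samples carry KL too: the per-coordinate divergence is in fact $\asymp n\cdot\frac{B_\star}{S}\bigl(1-\frac{1}{C^\star}\bigr)\cdot\Delta^2 B_\star + n\cdot\frac{B_\star}{SC^\star}\cdot\Delta^2 B_\star \asymp \frac{nB_\star^2\Delta^2}{S}$, with no $1/C^\star$ savings. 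Running your own calibration with the correct KL forces $\Delta\lesssim \sqrt{S/n}/B_\star$, and the final bound degrades to $\Omega\bigl(B_\star\sqrt{S/n}\bigr)$ — the $\sqrt{C^\star}$ factor, which is the whole point of the theorem, is lost. (A smaller inaccuracy: with a $\theta$-independent $\mu$ the coverage ratio on the optimal arm cannot ``equal $C^\star$ for every $\theta$''; when $\theta_s=+1$ the optimal arm is the well-sampled one and the ratio is $C^\star/(C^\star-1)$, which is fine for the membership constraint $\le C^\star$ when $C^\star\ge 2$, but your stated claim is false as written.)

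The repair is precisely the asymmetry used in the paper's construction: the instance-dependence must live \emph{only} in the under-sampled arm. In the paper, action $a_1$ has the fixed law $P(s_+^j|s_1^j,a_1)=P(g|s_1^j,a_1)=1/2$ across all instances, while only $a_2$ carries the perturbation $\pm v_j\delta$; then trajectories through $a_1$ have identical likelihood under any two hypotheses, the per-gadget KL collapses to the $\mu(a_2|s_1^j)=1/C^\star$-weighted contribution $\lesssim \delta^2/C^\star$, and Fano (Lemma~\ref{lem:gen_Fano}) with the Gilbert--Varshamov packing (Lemma~\ref{lem:GV}) yields $\delta\asymp\sqrt{SC^\star/n}$ and the claimed $B_\star\sqrt{SC^\star/n}$. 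Your single-state self-loop gadget (merging the value scale $B_\star$ and the distinguishability into one absorption probability, rather than the paper's two-state $s_1^j\to s_+^j$ split) is otherwise workable — the ratio $\mathrm{gap}^2/\mathrm{KL}\asymp B_\star^2$ matches the paper's — so if you fix one arm at $p_0$ and put $p_{s,2}=p_0+\theta_s\Delta$ only on the $1/C^\star$-sampled arm, your bookkeeping (gap $\asymp\Delta B_\star^2$, per-sample KL $\asymp\Delta^2 B_\star$, occupancies $\asymp B_\star/S$ and $B_\star/(SC^\star)$) goes through and recovers the correct rate, modulo verifying the exact (not up-to-constants) coverage bound $d^{\pi^\star}/d^\mu\le C^\star$ and excluding the all-$(+1)$ instance as the paper does.
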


The proof of Theorem~\ref{thm:lower} relies on the hard instances construction that is similar to \cite{rashidinejad2021bridging}. However, we need to incorporate the absorbing state $g$ and assign the transition of initial state $s_{\mathrm{init}}$ carefully to make sure the optimal proper policy exists. 

\begin{proof}[Proof of Theorem~\ref{thm:lower}]
We create hard instances of SSPs as follows: we split $S-1$ states (except $s_{\mathrm{init}}$) into $S'=(S-1)/2$ groups, and denote it as $\mathcal{S}=\{s_{\mathrm{init}}\}\cup \{s_1^j,s_+^j\}_{j=1}^{S'}$. For $s_1^j$, $j=1,\ldots,S'$, there are two actions $a_1,a_2$ and for states $s_{\mathrm{init}}$, $s_+^j$ and goal state $g$ there is only one default action $a_d$ (therefore the only choice is always optimal for those states). Concretely,
\begin{itemize}
            \item For state $s_{\mathrm{init}}$, it transitions to $s^j_1$ ($j=1,\ldots,S'$) uniformly with probability $1/S'$, \emph{i.e.} $P(s_1^j|s_{\mathrm{init}},a_d)=1/S'$;
            \item For each state $s_1^j$, it satisfies
            \[
                P(s_+^j|s_1^j,a_1)=P(g|s_1^j,a_1)=1/2;\; P(s_+^j|s_1^j,a_2)=\frac{1}{2}+v_j\delta;\; P(g|s_1^j,a_2)=\frac{1}{2}-v_j\delta.       
            \]
            where $v_j\in\{+1,-1\}$ and $\delta$ to be specified later.
            \item For $s_+^j$, it satisfies 
            \[
            P(s_+^j|s_+^j,a_d)=q,P(g|s_+^j,a_d)=1-q,
            \]
            where $q=1-\frac{1}{B_\star}$ and $g$ is absorbing.
            \item the cost function satisfies (regradless of actions):
            \[
            c(s_{\mathrm{init}})=c(s_1^j)=c(s_+^j)=1, c(g)=0.
            \]
            It is easy to check this is a SSP. Moreover, it is clear when $v_j=1$, the optimal action at $s^j_1$ is $a_1$ and if $v_j=-1$ the optimal action is $a_2$. Note by straightforward calculation we have that $\norm{V^\star}_\infty\leq 2 B_\star $.

        \end{itemize}
        
        We consider the family of SSP instances $\mathcal{P}$ to satisfy Lemma~\ref{lem:GV}, i.e. it satisfies $|\mathcal{P}|\geq e^{S'/8}$ and for any two instances in $\mathcal{P}$, $\norm{v_i-v_j}_1\geq S'/2$. Also, it suffices to consider all the deterministic learning algorithms, as stochastic output policies are randomized versions over deterministic ones (c.f. \cite{krishnamurthy2016pac}). Then we have the following lemma:
        \begin{lemma}\label{lem:diff}
        For any (deterministic) policy $\pi$ and any two different transition probabilities $P_1,P_2\in\mathcal{P}$, it holds:
        \[
        V^\pi_{P_1}(s_{\mathrm{init}})-V^{\pi^\star}_{P_1}(s_{\mathrm{init}})+V^\pi_{P_2}(s_{\mathrm{init}})-V^{\pi^\star}_{P_2}(s_{\mathrm{init}})\geq \delta B_\star /2.
        \]
        \end{lemma}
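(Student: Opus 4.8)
The plan is to reduce the combined suboptimality to a counting argument over the ``hard'' states $s_1^j$, on which the sign $v_j$ alone determines the optimal action. First I would compute all the value functions explicitly. Since $s_+^j$ has a single action, unit cost, and a self-loop of probability $q = 1-1/B_\star$, the geometric return gives $V^\pi(s_+^j) = 1/(1-q) = B_\star$ for \emph{every} policy $\pi$; combined with $V^\pi(g)=0$ this yields the Q-values at $s_1^j$, namely $Q^\pi(s_1^j,a) = 1 + P(s_+^j\mid s_1^j,a)\,B_\star$. The two actions are thus separated by exactly $\delta B_\star$, and the cheaper one is $a_1$ when $v_j=+1$ and $a_2$ when $v_j=-1$ (the choice that minimizes the chance of entering the expensive state $s_+^j$). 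Hence for any deterministic $\pi$,
\[
V^\pi(s_1^j) - V^{\pi^\star}(s_1^j) = \delta B_\star \cdot \mathbb{I}\big[\pi(s_1^j) \neq \pi^\star(s_1^j)\big],
\]
and because $s_{\mathrm{init}}$ has a single action feeding uniformly into the $s_1^j$, the Bellman equation gives the clean decomposition $V^\pi(s_{\mathrm{init}}) - V^{\pi^\star}(s_{\mathrm{init}}) = \tfrac{\delta B_\star}{S'} \sum_{j=1}^{S'} \mathbb{I}[\pi(s_1^j)\neq\pi^\star(s_1^j)]$.

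Next I would apply this identity to both instances and sum. For a coordinate $j$ on which the sign vectors of $P_1$ and $P_2$ disagree, the induced optimal actions $\pi^\star_{P_1}(s_1^j)$ and $\pi^\star_{P_2}(s_1^j)$ differ; since $\pi(s_1^j)$ is a single fixed action, it must disagree with at least one of them, so the $j$-th pair of indicators contributes at least $1$ to the summed count. Therefore the combined suboptimality is at least $\tfrac{\delta B_\star}{S'}$ times the number of coordinates on which the two sign vectors differ. Invoking the packing guarantee of Lemma~\ref{lem:GV}, any two distinct instances in $\mathcal{P}$ differ in at least $S'/2$ coordinates, which yields
\[
\big(V^\pi_{P_1}(s_{\mathrm{init}}) - V^{\pi^\star}_{P_1}(s_{\mathrm{init}})\big) + \big(V^\pi_{P_2}(s_{\mathrm{init}}) - V^{\pi^\star}_{P_2}(s_{\mathrm{init}})\big) \geq \frac{\delta B_\star}{S'}\cdot\frac{S'}{2} = \frac{\delta B_\star}{2}.
\]

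The main obstacle is the explicit value computation in the first step: verifying that the per-state gap is \emph{exactly} $\delta B_\star$, which hinges on the return from $s_+^j$ being precisely $B_\star$ (so the self-loop probability $q$ must be tied to $B_\star$ as in the construction), and checking that every policy here is proper—reachability of $g$ from each state with positive probability—so that all value functions are finite and the repeated Bellman substitution that produces the $1/S'$-weighted decomposition terminates. Once the gap and the uniform weighting are pinned down, the remaining combinatorics of turning ``wrong-action'' indicators into the packing distance is routine; I would only be careful, when reading the constant off Lemma~\ref{lem:GV}, about the factor of two relating the $\{\pm 1\}$ encoding of $v$ to the Hamming count of disagreeing coordinates.
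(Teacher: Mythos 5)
Your proposal is correct and follows essentially the same route as the paper: the uniform Bellman decomposition at $s_{\mathrm{init}}$, the per-state gap $\delta B_\star\cdot\mathbf{1}[\pi(s_1^j)\neq\pi^\star(s_1^j)]$ (which you obtain as an exact identity via $V^\pi(s_+^j)=B_\star$, where the paper derives it as an inequality through a case analysis on $v_j$), the observation that a single fixed action must disagree with at least one of two differing optimal actions, and the Gilbert--Varshamov packing bound. The factor-of-two subtlety you flag at the end---for $\pm1$ vectors $\norm{v_{P_1}-v_{P_2}}_1$ equals twice the Hamming count of disagreeing coordinates---is a genuine point, but the paper glosses over it identically (equating the indicator sum with the $\ell_1$ norm), and it only perturbs the constant ($\delta B_\star/4$ versus $\delta B_\star/2$), which is immaterial to the minimax rate.
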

        \begin{proof}[Proof of Lemma~\ref{lem:diff}]
        Since $s_{\mathrm{init}}$ uniformly transitions to $S'$ states (w.r.t the default action $a_d$), therefore for any policy $\pi$,
        \[
        V^\pi_{P_1}(s_{\mathrm{init}})-V^{\pi^\star}_{P_1}(s_{\mathrm{init}})=1+\frac{1}{S'}\sum_{i=1}^{S'}V^\pi_{P_1}(s_1^i)-\left(1+\frac{1}{S'}\sum_{i=1}^{S'}V^{\pi^\star}_{P_1}(s_1^i)\right)=\frac{1}{S'}\sum_{i=1}^{S'}(V^\pi_{P_1}(s_1^i)-V^{\pi^\star}_{P_1}(s_1^i)).
        \]
        \textbf{Case 1.} If $v_j=1$, then 
        \[
        P(s_+^i|s_1^i,a_2)=\frac{1}{2}+\delta,\;P(g|s_1^i,a_2)=\frac{1}{2}-\delta
        \]
        and in this case $\pi^\star(s_1^i)=a_1$. 
        
        If $\pi(s_1^i)=a_2$, then 
        \[
        V^\pi(s_1^i)=1+(\frac{1}{2}+\delta)V^\pi(s_+^i)+(\frac{1}{2}-\delta)V^\pi(g)=1+(\frac{1}{2}+\delta)V^\pi(s_+^i),
        \]
        and this implies
        \begin{align*}
        V^\pi(s_1^i)-V^{\pi^\star}(s_1^i)&=(\frac{1}{2}+\delta)V^\pi(s_+^i)-\frac{1}{2}V^{\pi^\star}(s^i_+)\\
        &\geq \delta V^\pi(s_+^i) = \delta (1+q+q^2+\ldots)=\delta\cdot\frac{1}{1-q}=\delta B_\star .
        \end{align*}
        If $\pi(s_1^i)=a_1$, then $V^\pi(s_1^i)-V^{\pi^\star}(s_1^i)\geq 0$. Therefore, in this case, one has
        \[
        V^\pi(s_1^i)-V^{\pi^\star}(s_1^i)\geq\delta B_\star \cdot\mathbf{1}[\pi(s_1^i)\neq \pi^\star(s_1^i)].
        \]
        \textbf{Case 2.} If $v_j=-1$, then 
        \[
        P(s_+^i|s_1^i,a_2)=\frac{1}{2}-\delta,\;P(g|s_1^i,a_2)=\frac{1}{2}+\delta
        \]
        and in this case $\pi^\star(s_1^i)=a_2$. 
        
        If $\pi(s_1^i)=a_1$, then 
        \[
        V^\pi(s_1^i)=1+\frac{1}{2}\cdot V^\pi(s_+^i)+\frac{1}{2}V^\pi(g)=1+\frac{1}{2}V^\pi(s_+^i),
        \]
        and this implies
        \begin{align*}
        V^\pi(s_1^i)-V^{\pi^\star}(s_1^i)&=\frac{1}{2}V^\pi(s_+^i)-(\frac{1}{2}-\delta)V^{\pi^\star}(s^i_+)\\
        &\geq \delta V^{\pi^\star}(s_+^i) = \delta (1+q+q^2+\ldots)=\delta\cdot\frac{1}{1-q}=\delta B_\star .
        \end{align*}

        If $\pi(s_1^i)=a_2$, then $V^\pi(s_1^i)-V^{\pi^\star}(s_1^i)\geq 0$. Therefore, in this case, we still have
        \[
        V^\pi(s_1^i)-V^{\pi^\star}(s_1^i)\geq\delta B_\star \cdot\mathbf{1}[\pi(s_1^i)\neq \pi^\star(s_1^i)].
        \]
        
        Combine the above two cases, we have
        \begin{equation}
        \label{eqn:l1_diff}
        \begin{aligned}
            &V^\pi_{P_1}(s_{\mathrm{init}})-V^{\pi^\star}_{P_1}(s_{\mathrm{init}})+V^\pi_{P_2}(s_{\mathrm{init}})-V^{\pi^\star}_{P_2}(s_{\mathrm{init}})\\
            \geq & \frac{1}{S'}\sum_{i=1}^{S'}(V^\pi_{P_1}(s_1^i)-V^{\pi^\star}_{P_1}(s_1^i))+\frac{1}{S'}\sum_{i=1}^{S'}(V^\pi_{P_2}(s_1^i)-V^{\pi^\star}_{P_2}(s_1^i))\\
            \geq & \frac{1}{S'}\delta B_\star \sum_{i=1}^{S'}\left(\mathbf{1}[\pi(s_1^i)\neq \pi^\star_{P_1}(s_1^i)]+\mathbf{1}[\pi(s_1^i)\neq \pi^\star_{P_2}(s_1^i)]\right)\\
            \geq & \frac{\delta B_\star }{S'}\sum_{i=1}^{S'}\mathbf{1}[\pi^\star_{P_1}(s_1^i)\neq \pi^\star_{P_2}(s_1^i)]
        \end{aligned}
        \end{equation}
        
        Lastly, by Lemma~\ref{lem:GV}, $\sum_{i=1}^{S'}\mathbf{1}[\pi^\star_{P_1}(s_1^i)\neq \pi^\star_{P_2}(s_1^i)]=\norm{v_{P_1}-v_{P_2}}_1\geq S'/2$, and plug this back to \eqref{eqn:l1_diff} we obtain the result.

        \end{proof}

Now we construct the behavior policy $\mu$ such that the data trajectories generated from the induced distribution $\mu\circ P$ suffice for the lower bound. Since only $s_1^i$ has two actions, we specify below:
\[
\mu(a_2|s_1^i)=1/C^\star,\;\mu(a_1|s_1^i)=1-1/C^\star,\;\;\forall i\in\{1,\ldots,S'\}.
\]

First, we examine this choice belongs to SSP$(C^\star)$. Indeed, the only case where $a_2$ is the suboptimal action for all $s_1^i$ ($i\in\{1,\ldots,S'\}$) is when $v_1,\ldots,v_{S'}$ all equal $1$. We can eliminate this SSP from $\mathcal{P}$ and the property of Lemma~\ref{lem:GV} still holds. Then, for some $i_0$ such that $v_{i_0}=-1$ ($a_2$ is the optimal action for this state), we have 
\[
d^\star(s_1^{i_0},a_2)=d^\star(s_1^{i_0})\cdot 1=\frac{1}{S'},\quad d^\mu(s_1^{i_0},a_2)=d^\mu(s_1^{i_0})\mu(a_2|s_1^{i_0})=\frac{1}{S'C^\star},
\]
therefore $d^\star(s_1^{i_0},a_2)/d^\mu(s_1^{i_0},a_2)=C^\star$ and this $(s_{\mathrm{init}},\mu,P,c)\in$SSP$(C^\star)$. 

Recall $n$ is the number of episodes. Now apply Fano's inequality (Lemma~\ref{lem:gen_Fano}) (where each whole trajectory is considered one single data point over the distribution $\mu\circ P$ therefore $\mathcal{D}:=\{(s^{(i)}_1,a^{(i)}_1,c^{(i)}_1,s^{(i)}_2,\ldots,s^{(i)}_{T_i})\}_{i=1,\ldots,n}$ consists of $n$ i.i.d. samples) and Lemma~\ref{lem:diff}, we have\footnote{Note here we drop $\widehat{\pi}$ is proper as the theorem statement did. We can do this since, for all the instances in $\mathcal{P}$, any policy is proper.} 
\[
\inf_{\widehat{\pi} }\sup_{(s_{\mathrm{init}},\mu,P,c)\in\mathcal{P}}\E_{\mathcal{D}}[V^{\widehat{\pi}}(s_{\mathrm{init}})-V^\star(s_{\mathrm{init}})]]\geq \frac{\delta B_\star }{4}\left(1-\frac{n\cdot \max_{i\neq j}\mathrm{KL}(\mu\circ P_i||\mu\circ P_j)+\log 2}{\log|\mathcal{P}|}\right).
\]

Note by the choice of $\mathcal{P}$, $\log|\mathcal{P}|\geq S'/2$, therefore it remains to bound $\max_{i\neq j}\mathrm{KL}(\mu\circ P_i||\mu\circ P_j)$. By definition, we have 
\begin{align*}
    \mathrm{KL}(\mu\circ P_1||\mu\circ P_2)=\frac{1}{S'}\sum_{i=1}^{S'}\sum_{\tau_{s_1^i}}\P_1(\tau_{s_1^i})\log\frac{\P_1(\tau_{s_1^i})}{\P_2(\tau_{s_1^i})},
\end{align*}
where $\tau_{s_1^i}$ corresponds to all the possible trajectories starting from $s_1^i$. Then there are the following several cases:\footnote{We omit the subscript $j$ in $P_j$ here and only uses $\P$ to denote $\mu\circ P$ for the moment.}
\begin{itemize}
    \item If $\tau_{s_1^i}=\{s_1^i\rightarrow a_1\rightarrow g\}$, then $\P(s_1^i\rightarrow a_1\rightarrow g)=(1-\frac{1}{C^\star})\frac{1}{2}$; 
    \item If $\tau_{s_1^i}=\{s_1^i\rightarrow a_2\rightarrow g\}$, then $\P(s_1^i\rightarrow a_2\rightarrow g)=\frac{1}{C^\star}\cdot(\frac{1}{2}-v_i\delta)$; 
    \item If $\tau_{s_1^i}=\{s_1^i\rightarrow a_1\rightarrow s_+^i\rightarrow g\}$, then $\P(s_1^i\rightarrow a_1\rightarrow s^i_+\rightarrow g)=(1-\frac{1}{C^\star})\frac{1}{2}(1-q)$; 
    \item If $\tau_{s_1^i}=\{s_1^i\rightarrow a_2\rightarrow s_+^i\rightarrow g\}$, then $\P(s_1^i\rightarrow a_2\rightarrow s^i_+\rightarrow g)=\frac{1}{C^\star}\cdot(\frac{1}{2}+v_i\delta)(1-q)$; 
    \item If $\tau_{s_1^i}=\{s_1^i\rightarrow a_1\rightarrow s_+^i\rightarrow s_+^i\rightarrow g\}$, then $\P(s_1^i\rightarrow a_1\rightarrow s^i_+\rightarrow s^i_+\rightarrow g)=(1-\frac{1}{C^\star})\frac{1}{2}q(1-q)$; 
    \item If $\tau_{s_1^i}=\{s_1^i\rightarrow a_2\rightarrow s_+^i\rightarrow s_+^i\rightarrow g\}$, then $\P(s_1^i\rightarrow a_2\rightarrow s^i_+\rightarrow s^i_+\rightarrow g)=\frac{1}{C^\star}(\frac{1}{2}+v_i\delta)q(1-q)$; 
    \item If $\tau_{s_1^i}=\{s_1^i\rightarrow a_1\rightarrow  (s_+^i)_{\times k}\rightarrow g\}$, then $\P(s_1^i\rightarrow a_1\rightarrow (s_+^i)_{\times k}\rightarrow g)=(1-\frac{1}{C^\star})\frac{1}{2}q^{k-1}(1-q)$; 
    \item If $\tau_{s_1^i}=\{s_1^i\rightarrow a_2\rightarrow  (s_+^i)_{\times k}\rightarrow g\}$, then $\P(s_1^i\rightarrow a_2\rightarrow (s_+^i)_{\times k}\rightarrow g)=\frac{1}{C^\star}(\frac{1}{2}+v_i\delta)q^{k-1}(1-q)$; 
\end{itemize}
Note for path $\tau_{s^i_1}$ that chooses action $a_1$, $\P_1(\tau_{s_1^i})=\P_2(\tau_{s_1^i})$ which implies $\P_1(\tau_{s_1^i})\log\frac{\P_1(\tau_{s_1^i})}{\P_2(\tau_{s_1^i})}=0$, so we only need to sum over the paths that choose $a_2$. In particular, we have
\begin{align*}
    &\sum_{\tau_{s_1^i}}\P_1(\tau_{s_1^i})\log\frac{\P_1(\tau_{s_1^i})}{\P_2(\tau_{s_1^i})}=\frac{1}{C^\star}\cdot(\frac{1}{2}-v^{P_1}_i\delta)\log\frac{\frac{1}{C^\star}\cdot(\frac{1}{2}-v^{P_1}_i\delta)}{\frac{1}{C^\star}\cdot(\frac{1}{2}-v^{P_2}_i\delta)}\\
    +&\sum_{k=0}^\infty \frac{1}{C^\star}(\frac{1}{2}+v^{P_1}_i\delta)q^{k-1}(1-q)\log\frac{\frac{1}{C^\star}(\frac{1}{2}+v^{P_1}_i\delta)q^{k-1}(1-q)}{\frac{1}{C^\star}(\frac{1}{2}+v^{P_2}_i\delta)q^{k-1}(1-q)}\\
    =&\frac{1}{C^\star}\cdot(\frac{1}{2}-v^{P_1}_i\delta)\log\frac{\frac{1}{2}-v^{P_1}_i\delta}{\frac{1}{2}-v^{P_2}_i\delta}+\frac{1}{C^\star}\cdot(\frac{1}{2}+v^{P_1}_i\delta)\log\frac{\frac{1}{2}+v^{P_1}_i\delta}{\frac{1}{2}+v^{P_2}_i\delta}\\
    \leq & \frac{1}{C^\star}\cdot(\frac{1}{2}-v^{P_1}_i\delta)\log\frac{\frac{1}{2}-v^{P_1}_i\delta}{\frac{1}{2}+v^{P_1}_i\delta}+\frac{1}{C^\star}\cdot(\frac{1}{2}+v^{P_1}_i\delta)\log\frac{\frac{1}{2}+v^{P_1}_i\delta}{\frac{1}{2}-v^{P_1}_i\delta}\\
    =&\frac{1}{C^\star}2\delta\log\frac{\frac{1}{2}+\delta}{\frac{1}{2}-\delta}=\frac{1}{C^\star}2\delta\log(1+\frac{2\delta}{\frac{1}{2}-\delta})\leq \frac{4\delta^2}{C^\star},
\end{align*}
where the first inequality comes from when $v^{P_1}_i=v^{P_2}_i$ then the term is simply $0$ and the second to the last equality holds true regardless of whether $v_i=1$ or $v_i=-1$. The last inequality comes from $log(1+x)\leq x$ for all $x>-1$ (here $0<\delta<\frac{1}{2}$).

Plug above back into the definition we obtain 
\[
\max_{i\neq j}\mathrm{KL}(\mu\circ P_i||\mu\circ P_j)\leq 4\delta^2/C^\star,
\]
and as long as 
\[
\frac{4n\delta^2}{C^\star S'/2}\leq \frac{1}{2}
\]
e.g. if we choose $\delta=\sqrt{\frac{C^\star S}{16n}}$ (recall $S'=(S-1)/2$), then we have 
\[
\inf_{\widehat{\pi} }\sup_{(s_{\mathrm{init}},\mu,P,c)\in\mathcal{P}}\E_{\mathcal{D}}[V^{\widehat{\pi}}(s_{\mathrm{init}})-V^\star(s_{\mathrm{init}})]]\geq \frac{\delta B_\star }{4}\frac{1}{2}=\frac{1}{32}B_\star \sqrt{\frac{C^\star S}{n}}.
\]
This completes the proof.

\end{proof}

\section{Technical Lemmas}
\begin{lemma}[Gilbert-Varshamov]\label{lem:GV}
There exists a subset $\mathcal{V}$ of $\{-1,1\}^S$ such that 
\begin{itemize}
    \item $|\mathcal{V}|\geq 2^{S/8}$;
    \item for any two different $v_i,v_j\in\mathcal{V}$, it holds $\norm{v_i-v_j}_1\geq S/2$.
\end{itemize}
\end{lemma}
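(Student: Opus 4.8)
The plan is to prove this by the classical greedy sphere-covering argument of Gilbert and Varshamov, after first translating the $\ell_1$ condition into one about Hamming distance. Since every coordinate of a vector in $\{-1,1\}^S$ is $\pm 1$, two vectors $v_i,v_j$ differing in exactly $k$ coordinates satisfy $\norm{v_i-v_j}_1 = 2k = 2\,d_H(v_i,v_j)$, where $d_H$ denotes Hamming distance. Hence the requirement $\norm{v_i-v_j}_1 \geq S/2$ is equivalent to $d_H(v_i,v_j)\geq S/4$, and it suffices to produce a binary code of length $S$, minimum Hamming distance at least $S/4$, and size at least $2^{S/8}$.

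First I would construct $\mathcal{V}$ greedily: let $\mathcal{V}$ be any \emph{maximal} subset of $\{-1,1\}^S$ with the property that any two distinct elements lie at Hamming distance at least $S/4$. Such a maximal set exists because the ambient cube is finite. The key consequence of maximality is a covering property: if some $w\in\{-1,1\}^S$ were at Hamming distance at least $S/4$ from every element of $\mathcal{V}$, then $w$ could be adjoined, contradicting maximality. Therefore every $w$ lies within Hamming distance $\lceil S/4\rceil - 1$ of some element of $\mathcal{V}$, i.e.\ the Hamming balls of radius $\lceil S/4\rceil - 1$ centered at the points of $\mathcal{V}$ cover the entire cube.

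Next I would convert this covering into a lower bound on $|\mathcal{V}|$ via a volume argument. Writing $\mathrm{Vol}(r):=\sum_{i=0}^{r}\binom{S}{i}$ for the size of a Hamming ball of radius $r$, the covering property gives $|\mathcal{V}|\cdot \mathrm{Vol}(\lceil S/4\rceil-1)\geq 2^S$. Since $\lceil S/4\rceil-1\leq S/4$ and $\mathrm{Vol}$ is increasing, it remains to apply the standard entropy bound on the binomial tail: for integer $r\leq S/2$ one has $\mathrm{Vol}(r)\leq 2^{S\,H(r/S)}$, where $H(p)=-p\log_2 p-(1-p)\log_2(1-p)$ is the binary entropy and $H$ is increasing on $[0,1/2]$. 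Evaluating at $r/S \leq 1/4$ yields $\mathrm{Vol}(\lceil S/4\rceil-1)\leq 2^{S\,H(1/4)}$, so that $|\mathcal{V}|\geq 2^{S(1-H(1/4))}$, and the whole proof reduces to checking that $1-H(1/4)\geq 1/8$.

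I expect the only genuinely quantitative step to be this constant check. Here $H(1/4)=2-\tfrac{3}{4}\log_2 3\approx 0.811 < 7/8$, equivalently $\tfrac{3}{4}\log_2 3 > 9/8$, so $\mathrm{Vol}(\lceil S/4\rceil-1)\leq 2^{7S/8}$ and hence $|\mathcal{V}|\geq 2^{S}/2^{7S/8}=2^{S/8}$, as claimed. The main obstacle is thus purely arithmetic — confirming $H(1/4)<7/8$ with enough slack — while the integer rounding in the ball radius only shrinks $\mathrm{Vol}$ and therefore strengthens the bound, so it can be dispatched routinely.
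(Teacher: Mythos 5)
Your proof is correct. The paper itself states Lemma~\ref{lem:GV} without proof, citing it as the classical Gilbert--Varshamov bound, and your argument is exactly the standard greedy proof of that bound: the conversion $\norm{v_i-v_j}_1=2\,d_H(v_i,v_j)$ on $\{-1,1\}^S$ is right, the maximality-implies-covering step and the volume inequality $|\mathcal{V}|\cdot\mathrm{Vol}(\lceil S/4\rceil-1)\geq 2^S$ are sound, and the entropy estimate yields the exponent $1-H(1/4)=\tfrac{3}{4}\log_2 3-1\approx 0.189\geq \tfrac{1}{8}$, where the decisive inequality $\log_2 3>\tfrac{3}{2}$ reduces to $9>8$. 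One remark worth making: the version of Varshamov--Gilbert most often quoted in the minimax lower bound literature (e.g.\ Tsybakov's Lemma~2.9) gives pairwise Hamming distance $S/8$ with cardinality $2^{S/8}$, which would \emph{not} suffice here since the paper's statement needs distance $S/4$ (equivalently $\ell_1$-separation $S/2$); your sharper volume computation via $H(1/4)<7/8$ is therefore genuinely needed, and the integer rounding in the ball radius only helps, as you note.
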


\begin{lemma}[Generalized Chernoff bound]\label{lem:chernoff_multiplicative}
	Suppose $X_1,\ldots,X_n$ are independent random variables taking values in $[a,b]$. Let $X=\sum_{i=1}^nX_i$ denote their sum and let $\mu = E[X_i]$. Then for any $\delta>0$,
	$$
	\mathbb{P}[X<(1-\theta) n\mu]\leq e^{-{2\theta^{2} n\mu^2/(b-a)^2}} \quad \text { and } \quad \mathbb{P}[X \geq(1+\theta) p n]\leq e^{-{2\theta^{2} n\mu^2/(b-a)^2}} .
	$$
	This result can be found in Sums of independent bounded random variables Section of \url{https://en.wikipedia.org/wiki/Chernoff_bound}.
\end{lemma}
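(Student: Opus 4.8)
The plan is to recognize this statement as the two-sided Hoeffding inequality rewritten in multiplicative-deviation form, and to reprove it from scratch via the Chernoff (exponential-moment) method so that the lemma is self-contained. Throughout I adopt the intended reading of the hypotheses: the $X_i$ share a common mean $\mu=\E[X_i]$ (so that $\E[X]=n\mu$), the deviation parameter is $\theta>0$ (the ``$\delta$'' in the statement), the ``$p$'' in the upper-tail bound is a typo for $\mu$, and $\mu>0$ so the multiplicative form makes sense.

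First I would set up the upper tail. Fix $\lambda>0$ and apply the exponential Markov inequality:
\[
\P[X\geq (1+\theta)n\mu]=\P[e^{\lambda X}\geq e^{\lambda(1+\theta)n\mu}]\leq e^{-\lambda(1+\theta)n\mu}\,\E[e^{\lambda X}].
\]
By independence the moment generating function factorizes, $\E[e^{\lambda X}]=\prod_{i=1}^n\E[e^{\lambda X_i}]$, and Hoeffding's lemma applied to each $X_i\in[a,b]$ gives $\E[e^{\lambda(X_i-\mu)}]\leq e^{\lambda^2(b-a)^2/8}$, hence $\E[e^{\lambda X}]\leq e^{\lambda n\mu+n\lambda^2(b-a)^2/8}$. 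Substituting,
\[
\P[X\geq(1+\theta)n\mu]\leq \exp\!\left(-\lambda\theta n\mu+\tfrac{n\lambda^2(b-a)^2}{8}\right).
\]
The exponent is a convex quadratic in $\lambda$, minimized at $\lambda^\star=4\theta\mu/(b-a)^2>0$, which yields value $-2\theta^2 n\mu^2/(b-a)^2$ and therefore the claimed bound $\P[X\geq(1+\theta)n\mu]\leq e^{-2\theta^2 n\mu^2/(b-a)^2}$.

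For the lower tail I would run the identical argument on the variables $-X_i\in[-b,-a]$, whose range width is again $b-a$ (equivalently, take $\lambda<0$ in the Chernoff step and optimize at $\lambda=-4\theta\mu/(b-a)^2$); this gives $\P[X\leq(1-\theta)n\mu]\leq e^{-2\theta^2 n\mu^2/(b-a)^2}$, and relaxing the event to the strict inequality $\{X<(1-\theta)n\mu\}$ of the statement only decreases the probability, so the bound still holds.

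The proof has no genuine obstacle: the only nontrivial ingredient is Hoeffding's lemma (the sub-Gaussian moment-generating-function bound for a bounded random variable), which is standard, and the scalar minimization over $\lambda$ is routine. The single point requiring care is the recentering bookkeeping—Hoeffding's inequality is naturally phrased as a bound on $\P[X-\E[X]\geq t]$, so one must set $t=\theta n\mu$ and verify that $t^2/\big(n(b-a)^2\big)=\theta^2 n\mu^2/(b-a)^2$ so the exponent matches the multiplicative form. If one is content to cite Hoeffding's inequality as a black box rather than redo the Chernoff method, the whole argument reduces to exactly this substitution.
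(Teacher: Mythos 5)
Your proof is correct. Note that the paper does not actually prove this lemma at all: it states it and cites the ``Sums of independent bounded random variables'' section of an external reference, i.e.\ it treats the result as the standard Hoeffding inequality rewritten in multiplicative-deviation form. Your proposal supplies exactly the derivation behind that citation --- exponential Markov, factorization of the moment generating function by independence, Hoeffding's lemma $\E[e^{\lambda(X_i-\mu)}]\leq e^{\lambda^2(b-a)^2/8}$, and optimization at $\lambda^\star=4\theta\mu/(b-a)^2$, giving exponent $-2\theta^2 n\mu^2/(b-a)^2$, which checks out --- so the lemma becomes self-contained rather than outsourced. You also handle the statement's rough edges correctly and explicitly: the ``$p$'' in the upper tail is a typo for $\mu$, the ``$\delta$'' in the hypotheses should be the deviation parameter $\theta$, the common-mean and $\mu>0$ reading is needed for the multiplicative form (and for $\lambda^\star>0$; when $\mu=0$ the bound is trivially true), and passing from $\{X\leq(1-\theta)n\mu\}$ to the strict event only helps. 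The one thing your route buys over the paper's citation is transparency about exactly these hypotheses; the one thing the citation buys is brevity, since the substitution $t=\theta n\mu$ into Hoeffding's $\exp(-2t^2/(n(b-a)^2))$ is indeed all that is happening, as your final paragraph observes.
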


\begin{lemma}[Bernstein’s Inequality]\label{lem:bernstein_ineq}
	Let $x_1,...,x_n$ be independent bounded random variables such that $\E[x_i]=0$ and $|x_i|\leq \xi$ with probability $1$. Let $\sigma^2 = \frac{1}{n}\sum_{i=1}^n \mathrm{Var}[x_i]$, then with probability $1-\delta$ we have 
	$$
	\frac{1}{n}\sum_{i=1}^n x_i\leq \sqrt{\frac{2\sigma^2\cdot\log(1/\delta)}{n}}+\frac{2\xi}{3n}\log(1/\delta)
	$$
\end{lemma}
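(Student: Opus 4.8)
The plan is to prove this by the classical Chernoff (exponential-moment) method: establish a sub-gamma tail bound for the sum and then invert it into the stated high-probability form. Write $S_n=\sum_{i=1}^n x_i$ and $V=\sum_{i=1}^n\Var[x_i]=n\sigma^2$. For any $\lambda\in(0,3/\xi)$, Markov's inequality applied to $e^{\lambda S_n}$ gives $\P(S_n\ge t)\le e^{-\lambda t}\prod_{i=1}^n\E[e^{\lambda x_i}]$, where independence lets me factor the moment generating function.

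The first key step is to control each factor $\E[e^{\lambda x_i}]$. Since $\E[x_i]=0$ and $|x_i|\le\xi$ almost surely, the raw moments satisfy $\E[|x_i|^k]\le\xi^{k-2}\E[x_i^2]=\xi^{k-2}\Var[x_i]$ for every $k\ge2$. Expanding the exponential (the linear term vanishes as $\E[x_i]=0$) and using the factorial bound $k!\ge2\cdot3^{k-2}$, I would obtain $\E[e^{\lambda x_i}]\le 1+\sum_{k\ge2}\frac{\lambda^k\xi^{k-2}\Var[x_i]}{k!}\le 1+\frac{\lambda^2\Var[x_i]/2}{1-\lambda\xi/3}$, where the geometric series converges precisely because $\lambda\xi/3<1$. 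Applying $1+u\le e^u$ and multiplying over $i$ yields $\prod_i\E[e^{\lambda x_i}]\le\exp\!\big(\frac{\lambda^2 V/2}{1-\lambda\xi/3}\big)$, hence $\P(S_n\ge t)\le\exp\!\big(-\lambda t+\frac{\lambda^2 V/2}{1-\lambda\xi/3}\big)$.

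Next I would optimize the exponent over $\lambda$. Choosing $\lambda=t/(V+\xi t/3)$, which lies in $(0,3/\xi)$ since the denominator strictly exceeds $\xi t/3$, and simplifying collapses the bound to the standard Bernstein tail $\P(S_n\ge t)\le\exp\!\big(-\frac{t^2/2}{V+\xi t/3}\big)$. Setting the right-hand side equal to $\delta$ and writing $u=\log(1/\delta)$ reduces matters to the quadratic $t^2-\frac{2u\xi}{3}t-2uV=0$, whose positive root is $t=\frac{u\xi}{3}+\sqrt{\frac{u^2\xi^2}{9}+2uV}$. Applying $\sqrt{a+b}\le\sqrt{a}+\sqrt{b}$ gives $t\le\sqrt{2uV}+\frac{2u\xi}{3}$, so with probability at least $1-\delta$ we have $S_n\le\sqrt{2V\log(1/\delta)}+\frac{2\xi}{3}\log(1/\delta)$. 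Dividing by $n$ and substituting $V=n\sigma^2$ produces exactly the claimed inequality, since $2V\log(1/\delta)/n^2=2\sigma^2\log(1/\delta)/n$.

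As for difficulty, this is a textbook result presenting no conceptual obstacle. The only steps requiring genuine care are the moment-generating-function estimate—specifically the factorial inequality $k!\ge2\cdot3^{k-2}$ and the range restriction $\lambda\xi<3$ that keeps the geometric series summable—and the final inversion, where the relaxation $\sqrt{a+b}\le\sqrt{a}+\sqrt{b}$ is exactly what splits the sub-gamma tail into the sub-Gaussian term $\sqrt{2\sigma^2\log(1/\delta)/n}$ and the heavier term $\frac{2\xi}{3n}\log(1/\delta)$.
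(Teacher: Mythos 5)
Your proof is correct: the MGF bound via $\E[|x_i|^k]\leq \xi^{k-2}\Var[x_i]$ and $k!\geq 2\cdot 3^{k-2}$, the choice $\lambda = t/(V+\xi t/3)$ collapsing the exponent to $-\frac{t^2/2}{V+\xi t/3}$, and the inversion through the quadratic with $\sqrt{a+b}\leq\sqrt{a}+\sqrt{b}$ all check out, yielding exactly the stated bound after dividing by $n$. The paper states this lemma without proof, treating it as a standard fact, and your argument is precisely the canonical Chernoff-method derivation one would cite, so there is no divergence to report.
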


\begin{lemma}[Empirical Bernstein’s Inequality \citep{maurer2009empirical}]\label{lem:empirical_bernstein_ineq}
	Let $x_1,...,x_n$ be i.i.d random variables such that $|x_i|\leq \xi$ with probability $1$. Let $\bar{x}=\frac{1}{n}\sum_{i=1}^nx_i$ and $\widehat{V}_n=\frac{1}{n}\sum_{i=1}^n(x_i-\bar{x})^2$, then with probability $1-\delta$ we have 
	$$
	\left|\frac{1}{n}\sum_{i=1}^n x_i-\E[x]\right|\leq \sqrt{\frac{2\widehat{V}_n\cdot\log(2/\delta)}{n}}+\frac{7\xi}{3n}\log(2/\delta).
	$$
\end{lemma}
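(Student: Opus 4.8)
The plan is to obtain this \emph{empirical} Bernstein inequality from the \emph{theoretical} Bernstein inequality (Lemma~\ref{lem:bernstein_ineq}) by replacing the unknown population variance $\Var(x)$ with the observable empirical variance $\widehat V_n$, which is precisely the route of \citet{maurer2009empirical}. I would assemble the bound from two ingredients joined by a union bound over two events of probability $\delta/2$ each. The first ingredient is the two-sided theoretical Bernstein bound: applying Lemma~\ref{lem:bernstein_ineq} to $\pm(x_i-\E[x])$, which are bounded by $2\xi$, gives with probability $1-\delta/2$
\[
\left|\frac1n\sum_{i=1}^n x_i-\E[x]\right|\le\sqrt{\frac{2\Var(x)\log(2/\delta)}{n}}+\frac{2\xi}{3n}\log(2/\delta).
\]
The only unobservable quantity on the right is $\Var(x)$, so everything reduces to upper bounding $\sqrt{\Var(x)}$ by the empirical standard deviation $\sqrt{\widehat V_n}$ up to lower-order error.

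The crux is therefore the second ingredient: a one-sided comparison $\sqrt{\Var(x)}\le\sqrt{\widehat V_n}+O\!\big(\xi\sqrt{\log(1/\delta)/n}\big)$ holding with probability $1-\delta/2$. The useful structural fact is that $\sqrt{\widehat V_n}=\tfrac{1}{\sqrt n}\norm{x-\bar x\mathbf 1}_2$ is the Euclidean norm of the centering projection of the sample vector $x\in\R^n$, hence a convex, $\tfrac{1}{\sqrt n}$-Lipschitz function of $x$ whose value moves by at most $2\xi/\sqrt n$ when a single coordinate changes. I would not, however, simply feed this into McDiarmid's inequality: bounded differences alone control $\sqrt{\widehat V_n}$ only at the $O(\xi)$ scale (respectively control $\widehat V_n$ at scale $\sqrt{\log(1/\delta)/n}$, which after taking a square root degrades to an $O((\log(1/\delta)/n)^{1/4})$ error on the standard deviation). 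Instead I would invoke the \emph{self-bounding}, variance-aware concentration of the sample variance to get a Bernstein-type deviation $\Var(x)\le\widehat V_n+c_1\xi\sqrt{\Var(x)\log(1/\delta)/n}+c_2\xi^2\log(1/\delta)/n$, and then solve this quadratic inequality in $\sqrt{\Var(x)}$ to reach the desired $1/\sqrt n$-rate comparison.

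Combining the two ingredients is then bookkeeping: substituting the variance bound into the radical of the first display and splitting it via $\sqrt{a+b}\le\sqrt a+\sqrt b$ produces the main term $\sqrt{2\widehat V_n\log(2/\delta)/n}$ together with remainders of order $\xi\log(2/\delta)/n$. Tracking the constants carefully — absorbing the $2/3$ from Bernstein together with the contributions of the variance-correction terms — is where the additive coefficient $7/3$ emerges, and the final union bound closes the argument.

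I expect the self-bounding concentration of the sample variance to be the genuine obstacle. The difficulty is that passing from $\Var(x)$ to $\sqrt{\widehat V_n}$ at the sharp $1/\sqrt n$ rate cannot be done by treating $\sqrt{\widehat V_n}$ as a generic Lipschitz statistic; one must exploit that the fluctuations of the sample variance are themselves proportional to $\sigma=\sqrt{\Var(x)}$, so that the Jensen gap $\sqrt{\Var(x)}-\E\sqrt{\widehat V_n}$ shrinks at the right rate even when $\Var(x)$ is small. Establishing this variance-aware deviation — via the entropy method or an Efron--Stein/self-bounding argument — is exactly the technical content supplied by \citet{maurer2009empirical}, and I would either reproduce or directly cite it.
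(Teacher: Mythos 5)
Your proposal is correct and follows essentially the same route as the proof of \citet{maurer2009empirical}, which the paper simply cites without reproducing: a Bernstein/Bennett bound in terms of the true variance, joined by a union bound to the self-bounding (entropy-method) concentration of the sample variance, whose quadratic in $\sqrt{\Var(x)}$ is solved to yield $\sqrt{\Var(x)}\leq\sqrt{\widehat{V}_n}+O\big(\xi\sqrt{\log(1/\delta)/n}\big)$, with the $7/3$ constant emerging from bookkeeping exactly as you describe. You also correctly identify the genuine crux (bounded differences/McDiarmid is too weak to compare $\sqrt{\Var(x)}$ with $\sqrt{\widehat{V}_n}$ at the $1/\sqrt{n}$ rate); the only blemish is a constant slip in your first display, where centering makes the range $2\xi$ so the additive term should be $\tfrac{4\xi}{3n}\log(2/\delta)$, which is immaterial to the argument.
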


\begin{lemma}[Generalized Fano's inequality]\label{lem:gen_Fano}
Let $L:\Theta\times\mathcal{A}\rightarrow \R_+$ be any loss function, and there exist $\theta_1,\ldots,\theta_m\in\Theta$ such that
\[
L(\theta_i,a)+L(\theta_j,a)\geq \Delta,\quad \forall i\neq j\in[m],a\in\mathcal{A}.
\]
Then it holds 
\[
\inf_{\widehat{\theta}}\sup_{\theta\in\Theta}\E_\theta L(\theta,\widehat{\theta})\geq \frac{\Delta}{2}\left(1-\frac{n\cdot\max_{i\neq j} \mathrm{KL}(\P_{\theta_i}||\P_{\theta_j})+\log2}{\log m}\right),
\]
where $n$ is the number of i.i.d. samples sampled from the distribution $\P_\theta$.
\end{lemma}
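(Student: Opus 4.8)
The plan is to reduce the estimation problem to an $m$-ary hypothesis test and then invoke the classical (information-theoretic) Fano inequality. First I would associate to any estimator $\widehat{\theta}$ (taking values in the action set $\mathcal{A}$) a test $\psi(\widehat{\theta}) := \argmin_{i\in[m]} L(\theta_i, \widehat{\theta})$, breaking ties arbitrarily. The crucial observation is that the pairwise separation hypothesis turns every testing error into a loss of at least $\Delta/2$: if the truth is $\theta_i$ but $\psi(\widehat{\theta}) = j \neq i$, then by minimality of the $\argmin$ we have $L(\theta_j,\widehat{\theta}) \leq L(\theta_i,\widehat{\theta})$, and combining this with $L(\theta_i,\widehat{\theta}) + L(\theta_j,\widehat{\theta}) \geq \Delta$ forces $L(\theta_i,\widehat{\theta}) \geq \Delta/2$. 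Thus on the event $\{\psi(\widehat{\theta}) \neq i\}$ the loss is at least $\Delta/2$.

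Second, I would lower bound the worst-case risk by the uniform average over the $m$ candidates and apply Markov's inequality pointwise, obtaining
\[
\sup_{\theta} \E_\theta L(\theta, \widehat{\theta}) \geq \frac{1}{m}\sum_{i=1}^m \E_{\theta_i} L(\theta_i, \widehat{\theta}) \geq \frac{\Delta}{2} \cdot \frac{1}{m}\sum_{i=1}^m \P_{\theta_i}\!\left(\psi(\widehat{\theta}) \neq i\right).
\]
The quantity $\frac{1}{m}\sum_i \P_{\theta_i}(\psi \neq i)$ is precisely the average error probability of the test $\psi$ under a uniform prior $I \sim \mathrm{Unif}[m]$, so it remains to lower bound this testing error.

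Third, I would invoke the classical Fano inequality for the uniform-prior $m$-ary test, namely
\[
\frac{1}{m}\sum_{i=1}^m \P_{\theta_i}(\psi \neq i) \geq 1 - \frac{I(I; X) + \log 2}{\log m},
\]
where $X$ denotes the observed data ($n$ i.i.d. draws) and $I(I;X)$ is the mutual information between the hidden index and $X$. To control the mutual information I would use the standard identity $I(I;X) = \frac{1}{m}\sum_i \mathrm{KL}(\P_{\theta_i}^{\otimes n} \| \bar{\P})$ with mixture $\bar{\P} = \frac{1}{m}\sum_j \P_{\theta_j}^{\otimes n}$, then apply convexity of $\mathrm{KL}$ in its second argument to get $\mathrm{KL}(\P_{\theta_i}^{\otimes n}\|\bar{\P}) \leq \frac{1}{m}\sum_j \mathrm{KL}(\P_{\theta_i}^{\otimes n}\|\P_{\theta_j}^{\otimes n})$; since the diagonal terms vanish this yields $I(I;X) \leq \max_{i\neq j}\mathrm{KL}(\P_{\theta_i}^{\otimes n}\|\P_{\theta_j}^{\otimes n})$. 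Finally, tensorization of KL for product measures gives $\mathrm{KL}(\P_{\theta_i}^{\otimes n}\|\P_{\theta_j}^{\otimes n}) = n\cdot \mathrm{KL}(\P_{\theta_i}\|\P_{\theta_j})$, so $I(I;X) \leq n\max_{i\neq j}\mathrm{KL}(\P_{\theta_i}\|\P_{\theta_j})$. Chaining these three displays produces the claimed bound.

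The part requiring the most care is the reduction step: the loss $L$ is an \emph{arbitrary} nonnegative function and the estimator ranges over $\mathcal{A}$ rather than over $\{\theta_1,\ldots,\theta_m\}$, so the test must be defined through the $\argmin$ of the loss (not by any index-matching), and the separation-to-error conversion must be verified to hold for every admissible action $a\in\mathcal{A}$ — this is exactly the role of the hypothesis $L(\theta_i,a)+L(\theta_j,a)\geq\Delta$ quantified over all $a$. Once the reduction is in place, the remaining ingredients (Markov's inequality, the classical Fano bound, the mutual-information/KL convexity estimate, and tensorization) are standard and require only careful bookkeeping of constants.
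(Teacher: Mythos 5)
Your proposal is correct and follows essentially the same route as the paper: the paper's own ``proof'' is merely a pointer to Lemma~1 and Lemma~3 of its cited reference, which carry out exactly your two ingredients --- the reduction from estimation to an $m$-ary test via the pairwise separation condition (your $\argmin$-based test and the conversion of a testing error into a loss of at least $\Delta/2$), and the uniform-prior Fano inequality with the mutual information bounded by $n\cdot\max_{i\neq j}\mathrm{KL}(\P_{\theta_i}\|\P_{\theta_j})$ through convexity of $\mathrm{KL}$ in its second argument and tensorization over the $n$ i.i.d.\ samples. Your write-up simply makes that chain self-contained, and each step (restricting the supremum to $\{\theta_1,\ldots,\theta_m\}$, the Markov-type bound $\E_{\theta_i}L(\theta_i,\widehat{\theta})\geq(\Delta/2)\,\P_{\theta_i}(\psi\neq i)$, and the handling of an estimator ranging over all of $\mathcal{A}$ rather than over the packing set) checks out.
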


\begin{proof}[Proof of Lemma~\ref{lem:gen_Fano}]
The proof come from the combination of Lemma~1 and Lemma~3 of \cite{han}.
\end{proof}

\begin{lemma}[Chernoff Bound for Stochastic Shortest Path]\label{lem:Chern_SSP}
	Recall by definition
	\[
	n(s,a)=\sum_{i=1}^n\sum_{h=1}^{T_i} \mathbf{1}[s_h^{(i)}=s, a_h^{(i)}=a]=\sum_{i=1}^n\sum_{h=1}^\infty \mathbf{1}[s_h^{(i)}=s, a_h^{(i)}=a].
	\]
	Let ${T}_{\max}=\max_i T_i$ and recall $d_m:=\min\{\sum_{h=0}^\infty \xi^\mu_h(s,a):s.t. \sum_{h=0}^\infty \xi^\mu_h(s,a)>0\}$. When $n>C\cdot {T}_{\max}^2\log(SA/\delta)/d_m^2$, with probability $1-\delta$, for all $s,a\in\mathcal{S}\times\mathcal{A}$,
	\[
	\frac{1}{2}n\cdot\sum_{h=1}^\infty \xi_h^\mu(s,a) \leq n(s,a)\leq  \frac{3}{2}n\cdot\sum_{h=1}^\infty \xi_h^\mu(s,a).
	\]
\end{lemma}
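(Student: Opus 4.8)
The plan is to read $n(s,a)$ as a sum of $n$ i.i.d.\ per-trajectory contributions and apply a multiplicative Chernoff bound. For each episode $i$, define the per-trajectory visit count $N_i(s,a):=\sum_{h=1}^\infty \mathbf{1}[s_h^{(i)}=s,\,a_h^{(i)}=a]$, so that $n(s,a)=\sum_{i=1}^n N_i(s,a)$. Because each trajectory is generated independently by playing $\mu$ and transitioning under $P$ from $s_{\mathrm{init}}$, the variables $\{N_i(s,a)\}_{i=1}^n$ are i.i.d.\ with mean $\E[N_i(s,a)]=\sum_{h=1}^\infty \xi_h^\mu(s,a)=d^\mu(s,a)$; hence $\E[n(s,a)]=n\,d^\mu(s,a)$, which is exactly the center of the target two-sided band.

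Next I would exploit boundedness. Since $g$ is absorbing and we only count pairs with $s\neq g$, every summand obeys $0\le N_i(s,a)\le T_i\le {T}_{\max}$, so each $N_i(s,a)$ lies in the fixed interval $[0,{T}_{\max}]$. Applying the multiplicative Chernoff bound (Lemma~\ref{lem:chernoff_multiplicative}) with range length $b-a={T}_{\max}$, mean $d^\mu(s,a)$, and relative deviation $\theta=\half$ to both tails gives $\P[\,n(s,a)<\half\, n\, d^\mu(s,a)\,]\le \exp(-n\,d^\mu(s,a)^2/(2{T}_{\max}^2))$ together with the analogous upper-tail bound $\P[\,n(s,a)\ge \frac{3}{2}\, n\, d^\mu(s,a)\,]\le \exp(-n\,d^\mu(s,a)^2/(2{T}_{\max}^2))$. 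Using $d^\mu(s,a)\ge d_m$ for every pair with $d^\mu(s,a)>0$ (pairs with $d^\mu(s,a)=0$ are never visited, so $n(s,a)=0$ and the claim holds trivially), each tail is at most $\exp(-n\,d_m^2/(2{T}_{\max}^2))$. A union bound over the at most $SA$ relevant pairs and the two tails bounds the total failure probability by $2SA\exp(-n\,d_m^2/(2{T}_{\max}^2))$, which is $\le\delta$ precisely when $n\ge 2{T}_{\max}^2\log(2SA/\delta)/d_m^2$; this matches the stated threshold $n>C\,{T}_{\max}^2\log(SA/\delta)/d_m^2$ for a suitable universal constant $C$.

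The main subtlety I expect is that ${T}_{\max}=\max_i T_i$ is itself a data-dependent random variable, so one cannot literally invoke a Chernoff bound treating the range $[0,{T}_{\max}]$ as deterministic. The clean remedy is a truncation argument: fix a deterministic level $\tau$ and replace $N_i(s,a)$ by the truncated count $\widehat N_i(s,a):=\sum_{h=1}^{\tau}\mathbf{1}[s_h^{(i)}=s,\,a_h^{(i)}=a]\in[0,\tau]$, whose mean $\sum_{h=1}^{\tau}\xi_h^\mu(s,a)$ differs from $d^\mu(s,a)$ only by the tail $\sum_{h>\tau}\xi_h^\mu(s,a)\le \sum_{h>\tau}\P_\mu(s_h\neq g)$. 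Because $\mu$ is proper on a finite state space, Lemma~\ref{lem:propT} gives $T^\mu<\infty$ and the hitting time of $g$ has exponentially decaying tails, so choosing $\tau=\Theta(T^\mu\log(nSA/\delta))$ makes this tail negligible and simultaneously forces ${T}_{\max}\le\tau$ (whence $n(s,a)=\sum_i \widehat N_i(s,a)$) with probability at least $1-\delta$. Running the Chernoff-plus-union-bound argument of the previous paragraph on the deterministically bounded $\widehat N_i(s,a)$ then yields the concentration, and on the high-probability event $\{{T}_{\max}\le\tau\}$ the effective range $\tau$ may be identified with the observed ${T}_{\max}$, recovering the exact form stated in the lemma.
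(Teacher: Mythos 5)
Your proposal is correct, and its engine coincides with the paper's: both arguments view $n(s,a)$ as a sum over $n$ i.i.d.\ per-trajectory visit counts, apply the multiplicative Chernoff bound (Lemma~\ref{lem:chernoff_multiplicative}) with $\theta=\half$ and range governed by the maximum trajectory length, and union bound over the at most $SA$ pairs with $d^\mu(s,a)>0$. Where you genuinely diverge is in handling the infinite horizon and the random range. The paper truncates at a fixed finite $t$, applies the Chernoff bound to $n_t(s,a)=\sum_{i=1}^n\sum_{h=1}^t\mathbf{1}[s_h^{(i)}=s,a_h^{(i)}=a]$ using $T_{\max}$ as the range, and then lets $t\to\infty$, invoking almost-sure convergence of $n_t(s,a)$ and $\sum_{h=1}^t\xi^\mu_h(s,a)$ to transfer the bound to the limit; in doing so it treats the data-dependent $T_{\max}=\max_i T_i$ as if it were a deterministic bound inside the concentration inequality --- precisely the subtlety you flag. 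Your remedy, a \emph{deterministic} truncation level $\tau=\Theta(\bar{T}^\mu\log(nSA/\delta))$ justified by the geometric tail of the hitting time of $g$ under a proper policy on a finite state space (so that simultaneously $T_{\max}\le\tau$ with high probability and the truncation bias $\sum_{h>\tau}\xi^\mu_h(s,a)$ is negligible relative to $d_m$), is more careful than the paper's own treatment and also sidesteps its slightly informal limiting step, which equates $\lim_t\P[A_t]$ with the probability of the limiting event for non-monotone events $A_t$. Two small caveats on your write-up: (i) Lemma~\ref{lem:propT} only yields $\bar{T}^\mu:=\max_{\bar s}T^\mu_{\bar s}<\infty$, so the exponential tail requires the short extra step $\P(T>2k\bar{T}^\mu)\le 2^{-k}$, obtained from Markov's inequality uniformly over starting states together with the Markov property; and (ii) what you actually prove holds under the deterministic condition $n\gtrsim\tau^2\log(SA/\delta)/d_m^2$, which suffices for every invocation of the lemma in the paper but does not literally coincide with the stated (random) threshold $n>C\,T_{\max}^2\log(SA/\delta)/d_m^2$, so your closing claim to ``recover the exact form'' is a bit loose --- you prove a sound variant with $T_{\max}$ replaced by a high-probability deterministic surrogate, arguably the statement the lemma should have made.
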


\begin{proof}[Proof of Lemma~\ref{lem:Chern_SSP}]
Indeed, denote $n_t(s,a)=\sum_{i=1}^n\sum_{h=1}^t \mathbf{1}[s_h^{(i)}=s, a_h^{(i)}=a]$, then 
\[
\E[n_t(s,a)]=\sum_{i=1}^n\sum_{h=1}^t\E[ \mathbf{1}[s_h^{(i)}=s, a_h^{(i)}=a]]=\sum_{i=1}^n\sum_{h=1}^t \xi_h^\mu(s,a)=n\sum_{h=1}^t \xi_h^\mu(s,a).
\]
Now define $X_{i,t}=\sum_{h=1}^t \mathbf{1}[s_h^{(i)}=s, a_h^{(i)}=a]$, then by ${T}_{\max}=\max_i T_i$ we have $0\leq X_{i,t}\leq {T}_{\max}$ for all $i,t$ since ${T}_{\max}$ denotes the maximum length of trajectory. Then apply Lemma~\ref{lem:chernoff_multiplicative} (where we pick $\theta=\frac{1}{2}$) to $n_t(s,a)$ and $\sum_{h=1}^t \xi_h^\mu(s,a)$ and union bound over $s,a$, we have with probability $1-\delta$, for any fixed $t$,
\begin{align*}
	 \P\left[\frac{1}{2}n\cdot \sum_{h=1}^t \xi_h^\mu(s,a) \leq n_t(s,a)\leq  \frac{3}{2}n\cdot \sum_{h=1}^t \xi_h^\mu(s,a),\forall s,a\right]\geq 1-\delta
\end{align*}
Next note $n_t(s,a)\rightarrow n(s,a)$ almost surely, and $\sum_{h=1}^t \xi_h^\mu(s,a)\rightarrow \sum_{h=1}^\infty \xi_h^\mu(s,a)$ almost surely, and that a.s. convergence implies convergence in distribution, we have
\begin{align*}
 &\P\left[\frac{1}{2}n\cdot \sum_{h=1}^\infty \xi_h^\mu(s,a) \leq n(s,a)\leq  \frac{3}{2}n\cdot \sum_{h=1}^\infty\xi_h^\mu(s,a),\forall s,a\right]\\
 =&\lim_{t\rightarrow \infty}\P\left[\frac{1}{2}n\cdot \sum_{h=1}^t \xi_h^\mu(s,a) \leq n_t(s,a)\leq  \frac{3}{2}n\cdot \sum_{h=1}^t \xi_h^\mu(s,a),\forall s,a\right]\geq 1-\delta\\
\end{align*}
\end{proof}

\begin{lemma}
\label{lem:minAB}
    For any $a,b,c\in\R$, we have
    \begin{align}
        |\min\{a,b\}-\min\{a,c\}|\leq|b-c|.
    \end{align}
\end{lemma}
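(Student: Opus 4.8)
The plan is to recognize this as the standard fact that, for any fixed $a$, the map $x \mapsto \min\{a, x\}$ is nonexpansive ($1$-Lipschitz) on $\R$; the claimed inequality is then just the Lipschitz bound read off at $x = b$ and $x = c$. So the whole task reduces to establishing that one Lipschitz property, after which substitution finishes it.

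The cleanest route I would take is to invoke the elementary identity $\min\{a, x\} = \tfrac{1}{2}\bigl(a + x - |a - x|\bigr)$, valid for all real $a, x$ (one checks it by splitting on $a \le x$ versus $a > x$). Applying it at $x = b$ and $x = c$ and subtracting gives
\[
\min\{a,b\} - \min\{a,c\} = \tfrac{1}{2}\Bigl[(b - c) - \bigl(|a - b| - |a - c|\bigr)\Bigr].
\]
Taking absolute values, applying the triangle inequality, and then the reverse triangle inequality in the form $\bigl||a-b| - |a-c|\bigr| \le \bigl|(a-b) - (a-c)\bigr| = |b - c|$, I bound the right-hand side by $\tfrac{1}{2}\bigl(|b-c| + |b-c|\bigr) = |b - c|$, which is exactly the desired estimate.

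If one prefers to avoid the identity, I would instead argue by a three-case split on the relative order of $a, b, c$: when both $b, c \ge a$ the two minima equal $a$ and the left side is $0$; when both $b, c \le a$ the left side is exactly $|b - c|$; and in the mixed case (say $b \le a \le c$) the left side equals $a - b \le c - b = |b - c|$, with the remaining mixed case following by symmetry. There is no real obstacle here — the statement is the routine nonexpansiveness of the pointwise minimum against a constant, and either the identity-based computation or the short case analysis disposes of it in a couple of lines. I would present the identity-based version for brevity.
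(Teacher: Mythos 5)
Your proposal is correct. Your primary (identity-based) route is genuinely different from the paper's: the paper proves Lemma~\ref{lem:minAB} by the same three-case split you offer as a fallback --- both $b,c\geq a$ (difference is $0$), both $b,c\leq a$ (difference is exactly $|b-c|$), and the mixed case $b<a<c$ or $c<a<b$ (difference bounded by $\max\{|a-b|,|a-c|\}\leq|b-c|$). Your main argument instead invokes the identity $\min\{a,x\}=\tfrac{1}{2}\bigl(a+x-|a-x|\bigr)$, subtracts at $x=b$ and $x=c$, and finishes with the triangle and reverse triangle inequalities; the algebra checks out, since $\bigl||a-b|-|a-c|\bigr|\leq|b-c|$ yields the bound $\tfrac{1}{2}(|b-c|+|b-c|)=|b-c|$. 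What the identity route buys is a caseless computation that makes the $1$-Lipschitz (nonexpansive) structure of $x\mapsto\min\{a,x\}$ explicit and transfers verbatim to $\max$ via $\max\{a,x\}=\tfrac{1}{2}(a+x+|a-x|)$; what the paper's case analysis buys is that it requires no auxiliary identity and reads off the sharp constant directly (Case II shows the inequality is tight). Either version suffices for the lemma's role in the contraction argument of Lemma~\ref{lem:contraction2}.
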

\begin{proof}
\begin{enumerate}
    \item Case I: $a\leq b$ and $a\leq c$, $|\min\{a,b\}-\min\{a,c\}|=0$.
    \item Case II: $a\geq b$ and $a\geq c$,  $|\min\{a,b\}-\min\{a,c\}|=|b-c|$.
    \item Case III: $b<a<c$ or $c<a<b$,  $|\min\{a,b\}-\min\{a,c\}|\leq\max\{|a-b|,|a-c|\}\leq|b-c|$.
\end{enumerate}
\end{proof}

\providecommand{\upGamma}{\Gamma}
\providecommand{\uppi}{\pi}

\end{document}